\documentclass[11pt]{article}
\pdfoutput=1

\usepackage[colorlinks=true,linkcolor=black,citecolor=black,urlcolor=black,breaklinks]{hyperref}

\hypersetup{
colorlinks = true,
linkcolor=black
}

\usepackage{custom_tex}

\begin{document}

% \title{For Journal: Optimal Multitask 
% Linear Regression and Contextual  Bandits under Sparse Heterogeneity} 
% \title{For Arxiv: Optimal Heterogeneous Collaborative Linear Regression and Contextual  Bandits} 

\title{Optimal Multitask Linear Regression and Contextual  Bandits under Sparse Heterogeneity}

 \author{
 Xinmeng Huang\footnote{
    Graduate Group in Applied Math and Computational Science, Univ. of Pennsylvania. \texttt{xinmengh@sas.upenn.edu}.} \hfill
    Kan Xu\footnote{Department of Information Systems, Arizona State University. \texttt{kanxu1@asu.edu}.} \hfill 
    Donghwan Lee\footnote{Graduate Group in Applied Math and Computational Science, Univ. of Pennsylvania. \texttt{dh7401@sas.upenn.edu}.}\vspace{4mm}\\
 Hamed Hassani\footnote{Department of Electrical and Systems Engineering, Univ. of Pennsylvania. \texttt{hassani@seas.upenn.edu}.} \hspace{6mm}
 Hamsa Bastani\footnote{Department of Operations, Information, and Decisions, Univ. of Pennsylvania. \texttt{hamsab@wharton.upenn.edu}.}\hspace{6mm}
 Edgar Dobriban\footnote{Department of Statistics and Data Science, Univ. of Pennsylvania. \texttt{dobriban@wharton.upenn.edu}.}
 }

\date{\today}

\maketitle

\begin{abstract}
Large and complex datasets are often collected from several, possibly heterogeneous sources. 
Multitask learning methods improve efficiency by leveraging commonalities across datasets while accounting for possible differences among them. Here, we study multitask linear regression and contextual bandits under \emph{sparse heterogeneity}, where the source/task-associated parameters are equal to a global parameter plus a sparse task-specific term.  
We propose a novel two-stage estimator called \ours that leverages this structure by first constructing a  covariate-wise weighted median of the task-wise linear regression estimates and then shrinking the task-wise estimates towards the weighted median. 
Compared to task-wise least squares estimates,
\ours improves the dependence of the estimation error on the data dimension. Extensions of \ours to generalized linear models and constructing confidence intervals are discussed in the paper.
We then apply \ours to develop methods for sparsely heterogeneous multitask contextual bandits, obtaining improved regret guarantees over single-task bandit methods.
We further show that our methods are
minimax optimal by providing a number of lower bounds.
Finally, we support the efficiency of our methods by performing experiments on both synthetic data
and the PISA dataset on student educational outcomes from heterogeneous countries.
\end{abstract}

\tableofcontents
\medskip

\section{Introduction}
% \vspace{-3mm}
Large and complex datasets are often collected from multiple sources---such as from several locations---and with possibly varying data collection methods~\citep{gupta2018distributed,yuan2022revisiting}. This can result in both similarities and differences among the source-specific datasets. While some covariates have consistent effects on the response across all sources, others may have different effects on the response depending on the source. For instance, when predicting students' academic performance, the effects of socioeconomic status, language, and education policies may vary across regions. See Figure \ref{fig:coef_plt} for an illustration of the multi-country PISA educational attainment dataset \citep{oecd2019teaching}, described in more detail in Section~\ref{pisa}, where the regression coefficients of a few features vary strongly across countries \citep{oecd2019teaching}.
Ignoring this heterogeneity may introduce bias and lead to incorrect predictions and decisions. Similar situations arise in healthcare \citep{quinonero2008dataset}, demand prediction \citep{baardman2020detecting, van2012relationship}, and others. 
Therefore, instead of learning a shared model for all the data, it may help to develop models for each task.

\begin{figure}[!h]
    \centering
    % \vspace{-3mm}
    \includegraphics[width=0.8\textwidth]{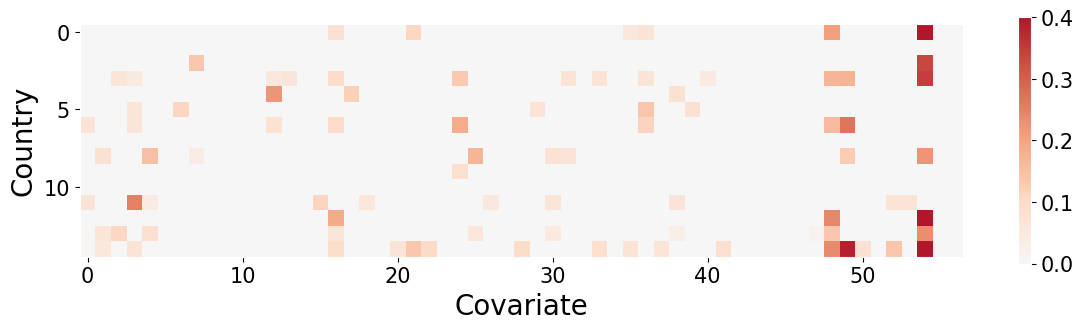}
    % \vspace{-5mm}
    \caption{The differences in the least squares estimates of a measure of educational attainment in selected countries for the PISA dataset. See details in Appendix \ref{app:experi}.}
    \label{fig:coef_plt}
    % \vspace{-5mm}
\end{figure}

% Although complex datasets are often heterogeneous, they are considered as a single dataset because their sources have similarities. 
Although complex datasets are often heterogeneous, commonalities also exist across data of the same type. 
% Therefore, it may increase efficiency if we analyze them jointly, via {\em multitask} or collaboration/pooling methods. 
Therefore, it may increase efficiency if we analyze them jointly.
This is often referred to as {\em multitask} analysis. 
Common multitask methods regularize the task-associated parameters via penalties \citep{evgeniou2004regularized,evgeniou2005learning,duan2022adaptive}, as well as cluster or pool datasets based on their similarity \citep{ben2010theory,crammer2008learning,Dobriban2018DistributedLR}. 
While these methods have demonstrated
improvements by factors equal to the number of tasks, these methods can perform significantly worse than single-task learning when the heterogeneity across tasks is severe.
% usually do not improve theoretical guarantees by more than constant factors over single-task methods. 
To achieve larger theoretical improvements,  \cite{Tripuraneni2021ProvableMO,Du2021FewShotLV,Collins2021exploitingSR} 
consider  low-dimensional shared representations of task-specific models, 
while \cite{lounici2009taking,singh2020online} constrain the task-specific parameters to be sparse with a common support set.

Often, only a small subset of covariates have different effects in different data sources or in predicting different responses.
For instance, 
in the Expedia personalized recommendation dataset,
Figure 2 of \citet{bastani2021predicting} shows that 
out of $15$ customer- and hotel-specific features,
only the price has significantly different effects on predicting bookings and clicks.
To capture this phenomenon in a broader multitask regime, 
we consider the following sparse heterogeneity structure.
Given $M$ tasks, each task $m$ is associated with a parameter $\beta^{(m)}$, and $\{\beta^{(m)}\}_{m=1}^M$ differ only in a small number of coordinates.
In particular, they have the form $\beta^{(m)}=\beta^\star +\delta^{(m)}$,
for some unknown global parameter $\beta^\star$ representing the part of the parameters shared across tasks; and for unknown sparse parameters $\delta^{(m)}$---with few nonzero covariates---representing task-specific adjustments. 
% Then, we assume that $\delta^{(m)}$ is
% sparse, \ie, only a few of its covariates are nonzero. 

This structure has garnered interest in a number of prior works: \cite{bastani2021predicting} combined a large proxy dataset and a small target dataset when the associated parameters differed in a sparse vector; 
% \cite{huang2022collaborative} study estimating multiple discrete distributions with sparse heterogeneity under communication constraints; 
\cite{xu2021group} considered group-wise sparse heterogeneity in matrix factorization of word embeddings;
while \cite{xu2021multitask} proposed methods for multiple sparsely heterogeneous contextual bandits. 

However, prior work leaves open a number of important problems.
In particular, it is not clear what the statistically optimal methods are under sparse heterogeneity, even in linear regression.
Prior work has shown that heterogeneity-aware methods improve the dimension-dependence of the estimation error rate, compared to heterogeneity-unaware or single-task methods.
However, it remains unclear whether better methods exist.  
In this paper, we resolve this problem by establishing---to our knowledge---a new lower bound for estimating several models with sparse heterogeneity, in both linear regression and contextual bandits. 
We also propose novel methods and show that they achieve the lower bounds.

% \vspace{-5mm}
\subsection{Contributions}
% \vspace{-3mm}
We consider estimating $M$ linear models under $s$-sparse heterogeneity, in both the offline and online scenarios. 
We highlight our contributions as follows:
\begin{itemize}
    \item For linear regression, we propose the \ours method---Median-based Multitask Estimator for Linear Regression---to jointly learn heterogeneous models: \ours first applies a weighted covariate-wise median to source-specific least squares regression estimators 
    to obtain an estimate shared across tasks, 
    and then obtains estimates of individual tasks by shrinking the individual estimates towards the shared estimate via thresholding.
    We provide upper bounds on the estimation error for each individual task using \ours, 
    showing the benefit of multiple tasks (large total sample size $n_{[M]}$) and the sparse heterogeneity (small $s$).
    % , despite heterogeneity and small sample sizes. 
    For balanced datasets, our result improves the rates of pooled ordinary least squares (OLS) and the multitask method of \cite{xu2021multitask}  by factors of $d/s$ and $\sqrt{d/s}$, respectively.     Beyond 
    estimation in linear models,  the applications of \ours to generalized linear models, and statistical inference for task-wise parameters are then discussed.
% \xh{adjust the contribution accordingly}
    
    We also provide matching minimax lower bounds in the multitask setting with sparse heterogeneity, showing the optimality of \ours. 
    Our result generalizes the lower bound for single-task linear regression to allow data points with different noise variances, strengthening the one for sparse linear regression~\citep{raskutti2011minimax}.

    \item We formulate the asynchronous multitask setting for two types of contextual bandit problems 
    where each bandit instance has a certain probability of observing a context and taking an action at any time.
    We use \ours in the multitask bandit setting, leading to improved individual regret bounds, where the scaling in the context dimension $d$ from single-task methods is replaced by the level of heterogeneity. 
    In addition, we provide new minimax lower bounds for asynchronous multitask bandit problems.

    \item We support our methods with experiments on both synthetic data and the PISA educational attainment dataset. 
    Our empirical results support our theoretical findings, and show an improvement over existing methods.

\end{itemize}

\subsection{Related Works}
 We review the most closely related works here. 
 More literature is reviewed in Appendix~\ref{app:ref}.

% \vspace{-5mm}
\paragraph{Data Heterogeneity.}
Complex datasets are often obtained by aggregating data from heterogeneous sources; which may correspond to subpopulations 
with unique characteristics \citep{fan2014challenges, meinshausen2015maximin,marron2017big}.
Data heterogeneity can reduce the performance of standard methods 
designed for independent and identically distributed (i.i.d.) data 
in statistical inference \citep{guo2020inference,hu2022collaborative,Yuan2021RemovingDH} 
and various learning tasks \citep{zhao2016partially,gu2022weighted,mcmahan2017communication}. 
However,
it is sometimes possible to mitigate the effects of heterogeneity \citep{luo2022odach,yang2020analysis,zhang2019optimal,chen2022distributed,wang2019additive}.

% \vspace{-5mm}
\paragraph{Multitask Linear Regression \& Contextual Bandits. }
% \paragraph{multitask Leaning. }
Studying multitask linear regression, \cite{Tripuraneni2021ProvableMO,Du2021FewShotLV} show improved generalization errors compared to the single-task  OLS, 
by considering a low-dimensional shared representation. 
A similar result holds for personalized federated learning \citep{Collins2021exploitingSR}. 
\cite{yang2019high} assume group-wise heterogeneity of parameters and propose to regularize the least squares objective, without a finite-sample theoretical analysis.

Starting from \cite{woodroofe1979one}, literature on contextual bandits has developed vastly 
\citep[see e.g.,][etc]{sarkar1991one,yang2002randomized,perchet2013multi,chen2021statistical,chen2022online,chen2022nearly,luo2022contextual}.
Multitask contextual bandit 
methods include regularizing the bandit instance parameters, 
and pooling data from related bandit instances \citep[see \eg,][]{soare2014multi, chu2011contextual,valko2013finite,cesa2013gang,deshmukh2017multi,gentile2014online,gentile2017context}.
One can also impose a shared prior distribution over bandit instances \citep{cella2020meta,kveton2021meta,bastani2022meta}. 
However, most resulting
regret bounds  for individual bandit instances can counter-intuitively deteriorate 
and can be worse than for single-bandit methods. 
Furthermore,
some methods require bandit instances to appear sequentially to learn the prior~\citep{lazaric2013sequential}.

Recently, \cite{xu2021multitask} propose methods with improved estimation error in linear regression and regret in contextual bandits, when tasks are sparsely heterogeneous. 
Our matching upper and lower bounds imply that their method is sub-optimal.

\paragraph{Transfer Learning.}  
Transfer learning aims to boost the learning performance on a particular task (typically with a small dataset) when given data from other sources. 
A few works have studied transfer learning in linear regression \citep{li2022transfer} and generalized linear models \citep{tian2022transfer,li2023estimation}, but mainly for parameters with $\ell_1$-bounded heterogeneity.

\subsection{Notation}
We introduce necessary notations here and refer to the full definitions in Appendix~\ref{app:ref}.
We use $:=$ or $\triangleq$ to introduce definitions.
For an integer $d\ge 1$, we write $[d]$ for $\{1,\dots,d\}$. We use $I_{d}$ to denote the $d\times d$ identity matrix. 
For a vector $v\in\RR^d$, we denote its entries as  $v_1,\dots,v_d$.
We also denote 
$\smash{\|v\|_p = (\sum_{k\in[d]} |v_k|^p)^{1/p}}$ for all $p>0$, with $\|v\|_0$ defined as the number of non-zero entries.
For any $\cI\subseteq [M]$, given weights $\{w_m\}_{m=1}^M$ (or sample sizes $\{n_m\}_{m=1}^M$), we write $W_{\cI}$ as $\sum_{m\in\cI}w_m$ and $n_{\cI}$ for $\sum_{m\in\cI}n_m$.
For a matrix $A\in\RR^{m\times n}$, we denote the $(i,j)$-th covariate of $A$ by  $[A]_{i,j}$ or $A_{i,j}$,
and the $i$-th row (resp., the $j$-th column) by $A_{i,\cdot}$ (resp., $A_{\cdot, j}$).
For two real numbers $a$ and $b$, we write $a\vee b$ and  $a\wedge b$ for $\max\{a,b\}$ and $\min\{a,b\}$, respectively.
For an event $E$, we write $\one(E)$ for the indicator of the event.
We use the Bachmann-Landau asymptotic notations $\Omega(\cdot)$, $\Theta(\cdot)$, $O(\cdot)$ to absorb constant factors, and use $\tilde{\Omega}(\cdot)$, $\widetilde{O}(\cdot)$ to also absorb logarithmic factors. 
Furthermore, we use probabilistic notations such as $O_P(a_{\{n_m\}_{m=1}^M})$ to denote quantities that are bounded by $a_{\{n_m\}_{m=1}^M}$ with overwhelming probabilities as $\min_{m\in[M]}{n_m}\to\infty$.
For a number $x\in\RR$, we use $(x)_{+}$ to denote its non-negative part, \ie, $x\one(x\geq 0)$.

\section{Multitask Linear Regression}\label{sec:multi-lr}
% \vspace{-3mm}
In this section, we study multitask linear regression under sparse heterogeneity with $M>0$ tasks, each of which
is associated  with an unknown task-specific parameter $\beta^{(m)}\in\RR^d$ for $m\in[M]$.
For a covariate vector $x^{(m)}_i$ associated with task $m$, 
the outcome follows the linear model
% \vspace{-5mm}
\begin{equation*}
    y_i^{(m)}=\langle x_i^{(m)}, \beta^{(m)}\rangle+\ep_i^{(m)},
    % \vspace{-5mm}
\end{equation*}
where $\ep_i^{(m)}$ is noise satisfying conditions specified below.
We observe $n_m > 0$ i.i.d.~vectors of covariates for each task $m$;
the sample sizes $n_m$ may vary with $m$.
For each task $m$, we denote by $\vX^{(m)}\in\RR^{n_m\times d}$ the matrix whose rows are the observed features, 
and by $Y^{(m)}\in\RR^{n_m}$ the vector of corresponding observed outcomes.
Our goal is to use $\{(\vX^{(m)},Y^{(m)})\}_{m=1}^M$ to estimate the parameter $\beta^{(m)}$ for each task $m$. 
We denote by $\vX$ the $n_{[M]}\times d$ 
concatenation of all matrices $\{\vX^{(m)}\}_{m=1}^M$, where $n_{[M]}\triangleq \sum_{m=1}^Mn_m$ denotes the total sample size.
We consider the following condition,
which will enable us to pool information to improve estimation: 
\begin{assumption}[\sc Sparse heterogeneity]\label{asp:heterogneity}
There exists an \emph{unknown} $s \in \{0,\ldots, d\}$ and a common global parameter $\beta^\star\in\RR^d$ such that for each $m\in[M]$, $\|\beta^{(m)}-\beta^\star\|_0\le s$.
\end{assumption}
We remark that while a common upper bound $s$ is assumed in Condition~\ref{asp:heterogneity}, most of our methods also adapt to the case where the $\ell_0$-heterogeneity between $\beta^{(m)}$ and  $\beta^\star$ 
differs across tasks \ie, $\|\beta^{(m)}-\beta^\star\|_0\leq s_m$, and results depend on the individual values of $s_m$ as opposed to just their maximum; see Remark \ref{rmk:varying} for more details.
The heterogeneity level $s$ (or $\{s_m\}_{m=1}^M$) and the global parameter $\beta^\star\in\RR^d$ may not be identifiable, but our task-wise results apply simultaneously to all possible choices of $\{\beta^{(m)}\}_{m=1}^M$ and $s$ (or $\{s_m\}_{m=1}^M$) satisfying Condition~\ref{asp:heterogneity}. 
We will also require several additional standard assumptions. 
In particular,  in the main text, we consider Gaussian noise to simplify the analysis (Condition \ref{asp:noise}). We also provide similar results for \emph{Orlicz-norm-bounded noise}, covering sub-Gaussian and sub-exponential noises, in Appendix \ref{app:general-noise} at the cost of an auxiliary high-order term in $n$ in our rates.

\begin{assumption}[\sc Gaussian noise]\label{asp:noise}
For each $m\in[M]$, 
and some $\sigma_m\ge 0$, $m\in[M]$,
the noises $\{\ep_i^{(m)}\}_{i=1}^{n_m}$ are i.i.d. random variables with $\ep_i^{(m)}\sim \cN(0,\sigma_m^2)$.
\end{assumption}

We next introduce a customary condition on task-wise distributions.

\begin{assumption}[\sc Covariate distribution]\label{asp:design-mats}
There are constants $L\geq \mu>0$ such that 
for any $m\in[M]$, $\{x^{(m)}_i\}_{i=1}^{n_m}$  are the $L$-sub-Gaussian and  independently distributed covariates with zero mean and covariance $\Sigma^{(m)}\succeq \mu  I_{d}$,  where $\succeq$ denotes the Loewner order.
\end{assumption}
Throughout the paper, we focus on investigating the impact of sample sizes $\{n_m\}_{m=1}^M$, variances $\{\sigma_m^2\}_{m=1}^M$, dimension of covariates $d$ and heterogeneity $s$, while omitting the dependence on the other problem characteristics, including $\mu$ and $L$.
When leveraging heterogeneous datasets, 
the performance can be hampered by very small and noisy datasets
\citep[e.g.,][etc]{akbani2004applying,kotsiantis2006handling,chawla2010data}. 
To prove optimal estimation errors, we will require Condition \ref{asp:size} to mildly constrain the sample sizes of the input datasets.

\begin{assumption}[\sc Sample size constraint]\label{asp:size}
We assume $n_m\geq c\ln(d)d$ for a sufficiently large constant $c$.
For the case of homogeneous variances $\sigma_1^2=\cdots=\sigma_M^2$, there is a constant $c_{\rm s}\geq 1$ such that  $\ln((d/s) \wedge(n_{[M]}/\min_{m\in[M]}n_m))^2\max_{m\in[M]}n_m\leq c_{\rm s}n_{[M]}$ and $\max_{m\in[M]}\sqrt{n_m}\sum_{m=1}^M\sqrt{n_m} \leq c_{\rm s}n_{[M]}$. For the case of different variances, we require the same inequalities to hold for the rescaled sample sizes $\tilde n_m\triangleq n_m/\sigma_m^2$.
\end{assumption}
Condition \ref{asp:size} is satisfied by the ideal balanced case where $n_m=\Theta(n)$ and $\sigma_m=\Theta(\sigma)$ for some $n>0$ and $\sigma>0$, which implies   $\ln(n_{[M]}/\min_{m\in[M]}n_m)^2\max_{m\in[M]}n_m= O(\ln(M)^2n_{[M]}/M)$ and $\max_{m\in[M]}\sqrt{n_m}\sum_{m=1}^M\sqrt{n_m}= O(n_{[M]})$, leading to $c_{\rm s}=O(1)$. However, Condition~\ref{asp:size} also covers more skewed sample sizes, including the singly dominant case where $n_1=\Theta(nM^c)$ with some $c\in[0,1)$ and $n_m=\Theta(n)$ for $2\leq m\leq M$. 

% \vspace{-5mm}
\subsection{Algorithm Overview} 
% \vspace{-3mm}
\begin{algorithm}[t]
	\caption{\ours: Weighted-median-based Multitask Linear Regressors}
	\label{alg:molar}
	\begin{algorithmic}
		\STATE \noindent {\bfseries Input:} $\{(\vX^{(m)},Y^{(m)})\}_{m=1}^M$, thresholds $\{\gamma_m\}_{m=1}^M$, weights $\{w_m\}_{m=1}^M$
		\FOR{$m\in[M]$}
		\vspace{1mm}
		\STATE Let $\widehat{\beta}_{\mathrm{ind}}^{(m)}=(\vX^{(m)\top} \vX^{(m)})^{-1}\vX^{(m)\top} Y^{(m)}$ be the OLS estimator for dataset $(\vX^{(m)},Y^{(m)})$ 
		\vspace{1mm}
		\ENDFOR
	    \vspace{1mm}
		\STATE Let $\widehat{\beta}^\star = \wmed(\{\widehat{\beta}_{\mathrm{ind}}^{(m)}\}_{m=1}^M; \{w_m\}_{m=1}^M)$ be the covariate-wise weighted median
  \vspace{1mm}
		\FOR{$m\in[M]$ and $k\in[d]$}
		    \vspace{2mm}
                \STATE \textsf{/* Option I: hard thresholding */}
                \vspace{1mm}
		        \STATE $\widehat{\beta}_{{\rm MOLAR}, k}^{(m)}=\widehat{\beta}^\star_k$ \textbf{ if } $|\widehat{\beta}^\star_k-\widehat{\beta}^{(m)}_{\mathrm{ind},k}|\le \gamma_m\sqrt{[(\vX^{(m)\top} \vX^{(m)})^{-1}]_{k,k}} $ \textbf{ else } $\smash{\widehat{\beta}^{(m)}_{\mathrm{ind},k}}$  
		        \vspace{2mm}
          \STATE \textsf{/* Option II: soft thresholding */}
                \vspace{1mm}
		        \STATE $\widehat{\beta}_{{\rm MOLAR}, k}^{(m)}=\widehat{\beta}^\star_{k}+\mathsf{SoftThresholding}(\widehat{\beta}^{(m)}_{\mathrm{ind},k}-\widehat{\beta}^\star_k;\gamma_m\sqrt{[(\vX^{(m)\top} \vX^{(m)})^{-1}]_{k,k}} )$
		        \vspace{2mm}
		\ENDFOR
		\STATE \noindent {\bfseries Output:} $\{\widehat{\beta}_{\rm MOLAR}^{(m)}\}_{m=1}^M$
	\end{algorithmic}
\end{algorithm}

We now introduce the \ours method, which consists of two steps: \emph{collaboration} and \emph{covariate-wise shrinkage}. 
% The two-stage idea is similar to the algorithm proposed in \cite{huang2022collaborative,xu2021multitask} for estimating discrete distributions under sparse heterogeneity.
Here,  \ours mainly concerns the case where $n_m\gg d$ for all $m\in[M]$ in the main text and a variant applicable to $n_m< d$ is provided in Appendix~\ref{app:high-dim}.
In the first step, we estimate the global parameter $\beta^\star$ via the covariate-wise weighted median of the OLS estimates 
$\{\widehat{\beta}^{(m)}_\mathrm{ind}\}_{m=1}^M$,
where the weights are adjusted according to the sample sizes and noise variances of each task. 
Recall that a weighted median $\wmed(\{z_m\}_{m=1}^M; \{w_m\}_{m=1}^M)$ of the scalar variables $\{z_m\}_{m=1}^M$ and non-negative weights $\{w_m\}_{m=1}^M$ (not necessarily summing up to one) 
is defined as any of the minimizers to 
the function $z\mapsto\sum_{m=1}^M w_m|z-z_m|$. 
Scaling all weights $\{w_m\}_{m=1}^M$ by a common factor leads to the same weighted median. 
In particular, if $w_1=\cdots=w_M$, the weighted median recovers the classical median and thus the weights can be omitted for clarity.

There are two key insights in this step. 
First, since the heterogeneity is sparse,
for most coordinates, 
most OLS estimates are unbiased for $\beta^\star$. 
Hence, to estimate those covariates of the global parameter, 
we view the local estimates as potentially perturbed by outliers 
and leverage robust statistical methods to mitigate their heterogeneity-incurred influence. 
Second, the weighting mechanism allows OLS estimates from datasets with larger sizes and less noise to contribute more to the global estimate $\widehat \beta^\star$. 
Notably,
our work also presents a novel non-asymptotic analysis of weighted medians, which can be of independent interest.

In the second step---covariate-wise shrinkage---we detect mismatched covariates between task-wise OLS estimates $\{\widehat{\beta}^{(m)}_{\mathrm{ind}}\}_{m=1}^M$ and the global estimate $\widehat{\beta}^\star$.
Recall that the global and task-wise parameters differ in only a few coordinates. 
For covariates $k\in[d]$ such that $|\widehat{\beta}^\star_k-\widehat{\beta}^{(m)}_{\mathrm{ind},k}|$ is below
$\gamma_m \sqrt{v^{(m)}_k}$ 
% \xh{note the threshold.} 
for some threshold $\gamma_m$ and $v^{(m)}_k\triangleq \sqrt{[(\vX^{(m)\top}\vX^{(m)})^{-1}]_{k,k}}$, we may expect that $\beta^{(m)}_k=\beta^\star_k$.
Also, the estimate $\widehat{\beta}^\star_k$ of $\beta^\star_k$ can be more accurate than $\widehat{\beta}^{(m)}_k$ for $\beta^{(m)}_k$,
as it is estimated collaboratively from multiple datasets.
As a result,  we can assign $\widehat{\beta}^\star_k$ as the final estimate $\widehat{\beta}^{(m)}_{\mathrm{MOLAR},k}$ of $\beta^{(m)}_k$ if $|\widehat{\beta}^\star_k-\widehat{\beta}^{(m)}_{\mathrm{ind},k}| \le 
\gamma_m \sqrt{v^{(m)}_k}$.
On the other hand, for the covariates where the global and local parameters differ, \ie, $\beta^{(m)}_k\neq \beta_k^\star$,  the threshold is more likely to be exceeded. 
In this case, 
we keep the local estimate $\smash{\widehat{\beta}^{(m)}_{\mathrm{ind},k}}$ as $\smash{\widehat{\beta}^{(m)}_{\mathrm{MOLAR},k}}$. 
This can be viewed as shrinkage of $\smash{\widehat{\beta}^{(m)}_{\mathrm{ind},k}}$ towards 
$\widehat{\beta}^\star_k$ 
via hard thresholding.
Alternatively, to allow a smooth transition, we can also conduct a soft thresholding step where the final estimate $\smash{\widehat{\beta}_{{\rm MOLAR},k}^{(m)}}$ shifts $\widehat{\beta}_{{\rm ind},k}$ by $\gamma_m\sqrt{v_k^{(m)}}$ when $|\widehat{\beta}^\star_k-\widehat{\beta}^{(m)}_{\mathrm{ind},k}| > \gamma_m \sqrt{v^{(m)}_k}$. The latter can be viewed as shrinkage via the soft thresholding operator $x\mapsto\textsf{SoftThresholding}(x;\lambda):=\mathrm{sign}(x)(|x|-\lambda)_+$ defined for any $x\in\RR$ and $\lambda\ge 0$.
The two options have the same theoretical guarantees but can differ slightly in practice.

While using two stages, 
our method does not require sample splitting. 
\ours  requires neither the knowledge of the heterogeneity bounds $\|\beta^{(m)}-\beta^\star\|_0$  nor of the  support sets  of  $\{\beta^{(m)}-\beta^\star\}_{m=1}^M$.

% \vspace{-5mm}
\subsection{Analysis of Estimation Error under Gaussian Noise}\label{sec:ana-gau}
% \vspace{-3mm}
In this subsection, we provide theoretical results for \ours for a Gaussian noise; more general noise is considered in the Appendix.
The local OLS estimates $\widehat{\beta}_{\mathrm{ind}}^{(m)}$ 
can be written as
$\smash{\widehat{\beta}_{\mathrm{ind}}^{(m)}=(\vX^{(m)\top} \vX^{(m)})^{-1}\vX^{(m)\top} Y^{(m)}= {\beta}^{(m)}+(\vX^{(m)\top} \vX^{(m)})^{-1}\vX^{(m)\top} \boldsymbol{\ep}^{(m)}}$, 
where $\boldsymbol{\ep}^{(m)}\sim \cN(0, \sigma_m^2 I_{n_m})$. 
Thus, denoting $[(\vX^{(m)\top}\vX^{(m)})^{-1}]_{k,k}$ as $v_k^{(m)}$
for any $k\in[d]$ and $m\in [M]$, we have
% \vspace{-3mm}
\begin{equation}\label{lem:ind-est}
\widehat{\beta}^{(m)}_{\mathrm{ind},k} \mid \vX^{(m)}\sim \cN(\beta^{(m)}_k, v^{(m)}_k\sigma_m^2).
% \vspace{-3mm}
\end{equation}
For each $k\in[d]$,
let
$\cB_k\triangleq \{m\in[M]:\,\beta^{(m)}_k\neq \beta^\star_k\}$ be the set of \emph{unaligned tasks} at covariate $k$. 
For any $0<\eta\le 1$,  we define the set of \emph{$\eta$-well-aligned covariates} as
\begin{equation}\label{ieta}
    \cI_\eta:=\left\{k\in[d]:\,\sum_{m=1}^M\one(m\in\cB_k) w_m<\eta W_{[M]}\right\},
\end{equation}
where $W_{\cB_k}\triangleq \sum_{m\in \cB_k}w_m$ and  $W_{[M]}\triangleq \sum_{m=1}^Mw_m$.
The $\eta$-well-aligned covariates are only used in the proof, not in our algorithm.
At each covariate $k\in[d]$, for $m\in[M]\backslash \cB_k$, by \eqref{lem:ind-est}, $\smash{\widehat{\beta}^{(m)}_{{\rm ind},k}}$ is an unbiased estimate of $\beta^\star_k$. 
When $W_{\cB_k}$ is relatively small compared to $W_{[M]}$,  
the set 
$\smash{\{\widehat{\beta}^{(m)}_{{\rm ind},k}\}_{m\in \cB_k}}$ of estimates possibly biased for $\beta^{\star}_k$ have small weights.
We will show that the weighted median can estimate $\beta^\star_k$ accurately, despite the biased subset.

We first bound the estimation error of $\widehat{\beta}^\star$ for $\eta$-well-aligned covariates. To this end, we need to characterize the estimation error of the weighted median of Gaussian inputs with non-identical means and variances, as shown in Appendix \ref{app:median-Gaussian}.
%in which $\widehat{\beta}^\star_k$ is close to the true value of $\beta^\star_k$ for well-aligned covariates. 
Applying this to each covariate $k\in\cI_\eta$ with $\eta\le 1/5$,  we find the following bounds for the estimation error of $\widehat{\beta}^\star$.

\begin{proposition}[\sc Error bound for well-aligned coordinates]\label{prop:center-covariate}
     Taking\footnote{Throughout the paper, we assume $\{\sigma_m\}_{m=1}^M$ are known as they can be easily estimated using the OLS-based formula $\widehat \sigma_m^2 = \|Y^{(m)}-\vX^{(m)}\widehat \beta^{(m)}_{\rm ind}\|_2^2/(n_m-d)$. Experiments with estimated variances can be found in Appendix~\ref{app:robust_offline}.} $w_m= n_m/\sigma_m^2$ for $m\in[M]$, under Conditions~\ref{asp:noise}, \ref{asp:design-mats}, and \ref{asp:size}, for any $0<\eta \le \frac{1}{5}$, $k\in\cI_\eta$,
it holds with probability at least $1-O((\min_m \tilde n_M/\tilde n_{[M]})\vee (s/d))-O(Mde^{-c\min_m n_m})$ that
% \vspace{-3mm}
\begin{equation*}
    |\widehat{\beta}^\star_k-\beta^\star_k|=\widetilde {O}\left(\sigma_w\frac{W_{\cB_k}+\left(\sum_{m\in \cB_k^c}w_m^2\right)^{1/2}}{ W_{[M]}}\right),
    % \vspace{-3mm}
\end{equation*}
where $\sigma_w:= W_{[M]}^{-1}\sum_{m=1}^Mw_m(\sigma_m/\sqrt{n_m})$ is the  weighted average of standard deviations. 
In particular, in the regime of balanced variances where $\sigma_m=\Theta(\sigma)$ for some $\sigma>0$, we have
\begin{equation*}
    |\widehat{\beta}^\star_k-\beta^\star_k|=\widetilde {O}\left(\frac{n_{\cB_k} \sigma}{n_{[M]}\sqrt{\max_{m\in[M]}n_m}}+\frac{\sigma}{\sqrt{n_{[M]}}}\right).
    % \vspace{-3mm}
\end{equation*}
% where $n_{\cI}:=\sum_{m\in\cI}n_m$ for any $\cI\subseteq [M]$.
\end{proposition}

\begin{remark}[Comparison with \cite{xu2021multitask}]\label{rmk:vjosdfs}
When $n_m=\Theta(n)$ and $\sigma_m=\Theta(\sigma)$ for some $n>0$ and $\sigma>0$ and  all $m\in[M]$, 
% When $n_m=\Theta(n)$ for some $n> 0$ and all $m\in[M]$,
\cite{xu2021multitask}  estimate $\beta^\star$ 
through a trimmed mean of the individual OLS estimates.
Then, a Lasso-based shrinkage is used to estimate the local parameters  $\{\beta^{(m)}\}_{m=1}^M$. 
The Lasso step is more computationally expensive than our covariate-wise procedures. 
Moreover,
the trimmed mean is less effective than the median in handling sparse heterogeneity. First, setting the fraction of the data trimmed $\omega$, taken as $\sqrt{s/d}$ in \citet[Corollary 1]{xu2021multitask}, requires knowing the sparse heterogeneity level $s$. 
In contrast, the weighted median is applicable to all potential values of $s$. 
Second, using a single trimming proportion $\omega$ is suboptimal,
as trimming fewer local estimates for covariates with more aligned tasks can improve accuracy. 
The best choice of $\omega =\sqrt{s/d}$ yields 
$ \|\widehat{\beta}^{(m)}-\beta^{(m)}\|_1=\widetilde{O}_P(({d}/{\sqrt{M}}+\sqrt{sd}){\sigma}/{\sqrt{n}})$, which is larger than the minimax optimal rate by a factor of $\sqrt{d/s}$ (see Theorem \ref{thm:ls-fixed-design} and \ref{thm:lower1} for optimality). 
\end{remark}

Proposition \ref{prop:center-covariate} provides an upper bound relating to the weighted frequency $W_{\cB_k}/W_{[M]}$ of misalignment. 
This result cannot be obtained by directly applying concentration inequalities to the estimates, due to heterogeneity.
Instead, we analyze the concentration of the empirical weighted $(1/2\pm{W_{\cB_k}}/{W_{[M]}})$-quantiles to mitigate the influence of heterogeneity. 
The constant $1/5$, which restricts the heterogeneity, 
is not essential and is chosen for clarity. 
With more cumbersome calculations, it can be replaced with any number below ${1}/{2}$. %so that estimating the global parameter becomes possible, as unbiased estimates dominate at this covariate.

While estimating $\beta^\star$ is not our main goal, based on Proposition \ref{prop:center-covariate}, we readily obtain the following bound for estimating $\beta^\star$ by choosing $\eta =\max_{k\in[d]}W_{\cB_k}/W_{[M]} $, 
so $[d]=\cI_\eta$,
and summing up the errors over all covariates. Noting that $\sum_{k\in[d]}W_{\cB_k}/W_{[M]}=s$,
Corollary~\ref{cor:uniform-hetero} reveals that the global parameter can be accurately estimated if heterogeneity happens roughly uniformly across all covariates.
\begin{corollary}[\sc Error bound for global parameter]\label{cor:uniform-hetero} 
Taking $w_m= n_m/\sigma_m^2$  for all $m\in[M]$, under Conditions \ref{asp:noise}, \ref{asp:design-mats}, and \ref{asp:size}, let $\widehat{\beta}^\star$ be obtained from Algorithm \ref{alg:molar} and suppose $W_{\cB_k}/W_{[M]}=O(s/d)$ for any $k\in [d]$. 
For any $p\in\{1,2\}$, it holds with
probability
$1-O((\tilde n_M/\tilde n_{[M]})\vee (s/d))-O(Mde^{-c\min_m n_m})$ that
\begin{equation*}
        \|\widehat{\beta}^\star-\beta^\star\|_p^p=\widetilde{O}\left(\sigma_w^p\left(\frac{s^p}{d^{p-1}}+\frac{d\left(\sum_{m=1}^Mw_m^2\right)^{1/2}}{W_{[M]}}\right)\right),
        % \vspace{-3mm}
\end{equation*} 
where $\sigma_w:= W_{[M]}^{-1}\sum_{m=1}^Mw_m(\sigma_m/\sqrt{n_m})$ . In particular,
in the regime of balanced variances where $\sigma_m^2=\Theta(\sigma^2)$ for some $\sigma>0$, we have
\begin{equation*}
    \|\widehat{\beta}^\star-\beta^\star\|_p^p=\widetilde{O}\left(\frac{s^p\sigma^p}{\max_{m\in[M]}n_m^{p/2}d^{p-1}}+\frac{d\sigma^p}{n_{[M]}^{p/2}}\right).
\end{equation*}
\end{corollary}
Proposition \ref{prop:center-covariate} above shows that the coefficients of the well-aligned covariates are accurately estimated. 
Thus one can use the global estimate $\widehat{\beta}^\star$ for the well-aligned covariates where the global parameter $\beta^\star$ also aligns with the local parameter $\beta^{(m)}$. 
The coefficients of the remaining covariates, which are either poorly aligned or do not satisfy $\beta^\star_k=\beta^{(m)}_k$, could be estimated by individual OLS estimates. 
In Theorem \ref{thm:ls-fixed-design} below, 
we argue that  with high probability, 
properly chosen thresholds achieve this. 
The proof is in Appendix \ref{app:ls-fixed-design}.
\begin{theorem}[\sc Error bound for task-wise parameters]\label{thm:ls-fixed-design}
Under Conditions~\ref{asp:heterogneity}, \ref{asp:noise}, \ref{asp:design-mats} and \ref{asp:size},
taking
$\gamma_m =c_1\sqrt{\ln((n_{[M]}/n_m)\wedge d)}\sigma_m$  for all $m\in[M]$ with  $c_1\geq 1$ being constant,
with $\widehat{\beta}^{(m)}_{\mathrm{MOLAR}}$  from Algorithm \ref{alg:molar} using either Option I or II,
it holds for any $p\in\{1,2\}$, $m\in[M]$ that 
% \vspace{-3mm}
\begin{equation}\label{eqn:vmcoxbmnj5}
% \vspace{-3mm}
    \|\widehat{\beta}^{(m)}_{\mathrm{MOLAR}}-\beta^{(m)}\|_p^p =\widetilde{O}_P\left(\frac{s\sigma_m^p}{n_m^{p/2}}+\frac{d}{
    \left(\sum_{m=1}^Mn_m/\sigma_m^2\right)^{p/2}}\right).
\end{equation}In particular,
in the regime of balanced variances where $\sigma_m^2=\Theta(\sigma^2)$ for some $\sigma>0$, we have
% \vspace{-3mm}
\begin{equation}\label{eqn:vjsiodfdvcx}
     \|\widehat{\beta}^{(m)}_{\mathrm{MOLAR}}-\beta^{(m)}\|_p^p =\widetilde{O}_P\left(\frac{s\sigma^p}{n_m^{p/2}}+\frac{d\sigma^p}{n_{[M]}^{p/2}}\right).
\end{equation}
\end{theorem}

\begin{remark}[\sc Varying heterogeneity levels]\label{rmk:varying}
While we assume the task-wise heterogeneity is constrained by a common parameter $s$ in Condition~\ref{asp:heterogneity} for simplicity, we remark that similar theoretical results hold under varying heterogeneity levels where $\|\beta^{(m)}-\beta^\star\|_0\leq s_m$ for some $s_m\in\{0,1,\dots,d\}$ and all $m\in[M]$. 
Considering $\sigma_m=\Theta(\sigma)$ for illustration, 
where we have $\sum_{k\in[d]}W_{\cB_k}/W_{[M]}\asymp \sum_{m\in[M]}n_ms_m/n_{[M]}=:\bar s_w$, Our theory implies
\begin{equation*}
\|\widehat{\beta}^{(m)}_{\mathrm{MOLAR}}-\beta^{(m)}\|_p^p =\widetilde{O}_P\left(\frac{s_m\vee \bar s_w\sigma^p}{n_m^{p/2}}+\frac{d\sigma^p}{
    n_{[M]}^{p/2}}\right).
\end{equation*} Therefore, a few outlier tasks with large $\|\beta^{(m)}-\beta^\star\|_0$ (at most $d$) do not heavily impact the ultimate task-wise estimation errors in \ours, 
revealing its robustness.
\end{remark}

\begin{remark}[\sc Extensions of \ours]
    \ours can be extended 
    beyond parameter estimation in linear models. 
    For example, \ours is readily extended to generalized linear models by adjusting the individual maximum likelihood estimates (MLEs) $\{\widehat \beta^{(m)}_{\rm ind}\}_{m=1}^M$ and adjusting the data matrix $\vX^{(m)\top}\vX^{(m)}$ with the inverse link function. A similar guarantee can be established for sufficiently large sample sizes thanks to the asymptotic normality of task-wise MLEs. 
    For details, we refer to Appendix~\ref{app:glm}.
    
    In addition,  one can construct confidence intervals for task-wise parameters $\{\beta^{(m)}\}_{m=1}^M$ 
    by leveraging the improved concentration of $\widehat \beta^\star_k$ for well-aligned covariates $k$.
    These can have shorter lengths than the canonical single-task OLS intervals; 
    see Appendix~\ref{app:conf-interval}.
\end{remark}

The upper bound in \eqref{eqn:vjsiodfdvcx} consists of two terms. Taking 
$p=2$ for illustration,
the first term---$s/n_m$---is independent of the dimension $d$, and is a factor $d/s$ smaller than the minimax optimal rate $d/ n_m$  of estimation in a single linear regression task \citep{lehmann2006theory}. 
 Meanwhile, the second, dimension-dependent, term---$d/n_{[M]}$---is $n_{[M]}/n_m$ times smaller than  $d/n$, since it depends on the \emph{total sample size} $n_{[M]}$ used collaboratively. 
 This brings a significant benefit under sparse heterogeneity, \ie, when $s\ll d$. 
 Therefore, our method has a factor of $\min\{n_{[M]}/n_m,d/s\}$ improvement in accuracy, compared to the optimal rate for a single
 linear regression task.

Algorithm \ref{alg:molar} is applicable 
 to any value of heterogeneity level $s$. 
 In particular, when the heterogeneity is dense, \ie, $s=\Omega(d)$, our result recovers the optimal estimation error rate  $d/n_m$ in single-task linear regression. 
Therefore, the collaboration mechanism in Algorithm \ref{alg:molar} does not harm the rate, regardless of the level of heterogeneity.

\begin{table}[t]
\centering 
\caption{\small Bounds on the estimation error $\|\widehat{\beta}^{(m)}-\beta^{(m)}\|_1$ of various methods under balanced variances \ie, $\sigma_m^2=\Theta(\sigma^2)$ for all $m\in[M]$:
$\beta^{(m)}$ is the ground-truth parameter,
$\widehat{\beta}^{(m)}$ is the estimator,
${\delta}^{(m)}= \beta^{(m)}-\beta^\star$ is a non-vanishing measure of heterogeneity. 
The standard regime shows the results for balanced datasets, \ie, $n_m=\Theta(n)$ for all $m\in[M]$. 
The data-poor regime shows the results for transfer learning in the $(M+1)$-th task with a potentially small dataset: notably,
\ours and the minimax lower bound requires $n_{M+1}=O(n_{[M]} (s/d)^2)$ while the others require $n_{M+1}=O(\min_{m\in[M]}n_{m}/d^2)$.  See \citet[Sec 3.6]{xu2021multitask} for more details about the baseline methods.
Numerical constants and logarithmic factors are omitted for clarity.
}

\label{tab:lr-comparsion}
\begin{tabular}{lccc}
\toprule
Method & Standard Regime &  Data-poor Regime \\
\midrule
Individual OLS~\citep{lehmann2006theory}\hspace{-3mm} & ${\sigma d}/{\sqrt{n}}$  &  ${\sigma d}/{\sqrt{n_{M+1}}}$ \\
Individual LASSO~\citep{tibshirani1996regression} & ${\sigma d}/{\sqrt{n}}$  &  ${\sigma d}/{\sqrt{n_{M+1}}}$ \\
Global OLS~\citep{Dobriban2018DistributedLR} & $\hspace{-2mm}\|{\delta}^{(m)}\|_1+{\sigma d}/{\sqrt{ Mn}}\hspace{-2mm}$  &  $\hspace{-2mm}\|{\delta}^{(M+1)}\|_1+{\sigma }/{\sqrt{ n_{[M]}}}\hspace{-2mm}$ \\
% Averaged OLS $\widehat{\beta}_{\rm avg}^{(m)}$ \citep{crammer2008learning}  & $\|{\delta}^{(m)}\|_1+\frac{\sigma d}{\sqrt{\mu Mn}}$ &  $\|{\delta}^{(m)}\|_1+\frac{\sigma }{\sqrt{\mu Mn_m}}$ \\
% Averaged multitask \citep{xu2021multitask} & $\frac{\sigma (Ms)\wedge d}{\sqrt{\mu n}}+\frac{\sigma d}{\sqrt{\mu Mn}}$ & $\frac{\sigma (Ms)\wedge d}{\sqrt{n_m}}$ \\
Robust Multitask~\citep{xu2021multitask} & $ {\sigma \sqrt{sd}}/{\sqrt{n}}+{\sigma d}/{\sqrt{Mn}}$\hspace{-3mm} &  ${\sigma s}/{\sqrt{n_{M+1}}}$ \\
\midrule
\ours (Theorem~\ref{thm:ls-fixed-design}) & ${\sigma {s}}/{\sqrt{n}}+{\sigma d}/{\sqrt{Mn}}$ &  ${\sigma{s}}/{\sqrt{n_{M+1}}}$ \\
Lower Bound (Theorem~\ref{thm:lower1}) & ${\sigma{s}}/{\sqrt{n}}+{\sigma d}/{\sqrt{Mn}}$ &  ${\sigma{s}}/{\sqrt{n_{M+1}}}$ \\
\bottomrule
\end{tabular}
\end{table}

Theorem \ref{thm:ls-fixed-design} also implies that the collaborative estimator $\widehat{\beta}^\star$ is useful in transfer learning \citep{tian2022transfer,li2022transfer}, where,  given a number of ``source'' tasks with large samples, the goal is to maximize performance on a specific ``target''
task with a small sample.

\begin{corollary}[\sc Transfer learning for a data-poor task]\label{cor:data-poor}In the regime of balanced variances,
    when a new $(M+1)$-th task  has a small dataset with $n_{M+1}=O(n_{[M]} (s/d)^{2/p})$, the $\ell_1$ and $\ell_2$ estimation errors for the task parameter using \ours are  $\widetilde{O}_P(\sigma s/\sqrt{ n_{M+1}})$  and $\widetilde{O}_P(\sigma^2 s/ n_{M+1})$, respectively.
\end{corollary}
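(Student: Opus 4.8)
The plan is to obtain the corollary as an immediate specialization of Theorem \ref{thm:ls-fixed-design}: the data-poor hypothesis is precisely the condition under which the dimension-dependent (collaborative) term in the bound \eqref{eqn:vmcoxbmnj5} is absorbed into the sparsity-dependent (local) term. Recall that for $p\in\{1,2\}$ that theorem gives
\begin{equation*}
\EE[\|\widehat{\beta}^m_{\mathrm{MOLAR}}-\beta^m\|_p^p]=\widetilde{O}\left(\frac{\sigma^p}{\mu^{p/2}}\left(\frac{s}{(n^m)^{p/2}}+\frac{d}{(\sum_{\ell\in[M]} n^\ell)^{p/2}}\right)\right).
\end{equation*}
So it suffices to show that, under the stated constraint on $n^m$, the second summand is $O(\cdot)$ of the first.

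First, I would rewrite the inequality $d/(\sum_\ell n^\ell)^{p/2}= O(s/(n^m)^{p/2})$ in the equivalent form $n^m = O((\sum_\ell n^\ell)(s/d)^{2/p})$, obtained by isolating $n^m$; one checks the exponent $2/p$ is exactly what appears in the hypothesis for both $p=1$ and $p=2$. Next, since $(s/d)^{2/p}\le 1$, the hypothesis $n^m=O((\sum_{\ell\neq m}n^\ell)(s/d)^{2/p})$ already forces $n^m=O(\sum_{\ell\neq m}n^\ell)$, whence $\sum_\ell n^\ell = n^m+\sum_{\ell\neq m}n^\ell=\Theta(\sum_{\ell\neq m}n^\ell)$. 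Substituting this identity shows that the hypothesis coincides, up to absolute constants, with the rearranged inequality $n^m = O((\sum_\ell n^\ell)(s/d)^{2/p})$, so the collaborative term is dominated and the bound collapses to $\widetilde{O}(\sigma^p s/(\mu^{p/2}(n^m)^{p/2}))$. Setting $p=1$ and $p=2$ then yields the claimed $\ell_1$ bound $\widetilde{O}(\sigma s/\sqrt{\mu n^m})$ and squared-$\ell_2$ bound $\widetilde{O}(\sigma^2 s/(\mu n^m))$, respectively.

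The one point that needs genuine verification---and the main obstacle---is checking that the hypotheses of Theorem \ref{thm:ls-fixed-design}, in particular the sample-size Condition \ref{asp:size}, still hold in this transfer-learning regime, since a very small target sample $n^m$ might seem to violate a balancedness requirement. The key observation is that Condition \ref{asp:size} constrains only the minimizing index $\tau$ of \eqref{t}, which is determined by the largest, relatively balanced source datasets; a single small target dataset sits at the tail of the sorted sizes $n^1\ge\cdots\ge n^M$ and thus does not enter the computation of $\tau$. I would make this explicit, confirming that the source tasks alone certify Condition \ref{asp:size} and that the logarithmic and $C_{\rm s}$ factors hidden in $\widetilde{O}(\cdot)$ are unaffected, so the invocation of Theorem \ref{thm:ls-fixed-design} is legitimate.
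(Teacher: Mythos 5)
Your proposal is correct and matches the paper's (implicit) derivation: the paper states this corollary as an immediate specialization of Theorem \ref{thm:ls-fixed-design}, absorbing the collaborative term $d/(\sum_\ell n^\ell)^{p/2}$ into $s/(n^m)^{p/2}$ exactly via the rearrangement $n^m = O((\sum_\ell n^\ell)(s/d)^{2/p})$ and the observation that $\sum_\ell n^\ell = \Theta(\sum_{\ell\neq m} n^\ell)$. Your additional check that Condition \ref{asp:size} is unaffected by a single small tail dataset (since $\tau$ is determined by the largest datasets and the total sample size changes only by a constant factor) is sound and makes explicit a point the paper leaves tacit.
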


 % \begin{remark}  
 The rates  in Corollary \ref{cor:data-poor}  do not depend on the feature dimension $d$; in contrast to a linear dependence for the individual OLS estimate. 
% \cite{xu2021multitask} obtain the same rate---with additional logarithmic dependence on $d$--- but their method requires the more restrictive condition $n_{M+1}=O(\min_{m\in[M]}n_m\{n^\ell\} /d^2)$. 
In Table \ref{tab:lr-comparsion}, we compare  \ours with the method from \cite{xu2021multitask} and other baseline methods, in terms of the $\ell_1$ estimation error, in both the standard regime where $n_1=\cdots=n_{M}=n$ and the data-poor regime where $n_{M+1}$ is small. 
We find that \ours outperforms all methods compared in both regimes.
% \end{remark}

\subsection{Lower Bound}\label{sec:lb-mul-lr}
To complement our upper bounds, we provide 
minimax lower bounds for our multitask linear regression task under sparse heterogeneity. 
%Inspired by the lower bounds from \cite{huang2022collaborative}, 
We consider 
the best estimators $\{\widehat{\beta}^{(m)}\}_{m=1}^M$ that leverage heterogeneous datasets, 
in the worst-case sense over all global parameters $\beta^\star \in\RR^d$, 
task-wise parameters $\{\beta^{(m)}\}_{m=1}^M$ each in $\BB_{s}(\beta^\star):=\{\beta\in\RR^d:\|\beta-\beta^\star\|_0\le s\}$, as well as covariance matrices $\{\Sigma^{(m)}\}_{m=1}^M$ each in $\mathcal{A}_{\mu,L}\triangleq \{\Sigma\in\RR^{d\times d}:\Sigma \mbox{ positive semi-definite,}$ $ \mu I_d \preceq  \Sigma \preceq LI_d\}$:
% as well as design matrices $\{\vX^{(m)}:m\in[M]\}$ each in $\cX_{n_m,\mu}\triangleq \{\vX\in\RR^{n_m\times d}:\vX^\top \vX\succeq  \mu n_m I_d\}$:
\begin{equation}\label{eqn:vhoeqwfq}
\mathcal{M}(s,d,\mu,L,m,p, \{n_m\}_{m=1}^M\},\{\sigma_m\}_{m=1}^M)
     := \inf_{\widehat{\beta}^{(m)}}
    \sup_{\substack{\beta^\star \in\RR^d,\;\{\beta^{(m)}\}_{m=1}^M\subseteq \BB_s(\beta^\star)\\ \{\Sigma^{(m)}\}_{m=1}^M \subseteq \cA_{\mu,L}}}\EE[\|\widehat{\beta}^{(m)}-{\beta}^{(m)}\|_p^p].
\end{equation}
% \dl{this $\mathcal{M}$ notation is nowhere used -- maybe change Theorem 2 to $\mathcal{M}$ or remove the notation}
The minimax risk \eqref{eqn:vhoeqwfq} characterizes the best possible worst-case estimation error under our heterogeneous multitask learning model.

Since the supremum is taken over all  parameters $\{\beta^{(m)}\}_{m=1}^M\subseteq\RR^d$ such that $\|\beta^{(m)} -\beta^\star\|_0\le s$ for some $\beta^\star \in \RR^d$, we can consider  two representative cases. 
First, the \emph{homogeneous} case where $\beta^{(1)}=\cdots=\beta^{(m)}=\beta^\star$;
which reduces to a single  linear regression task with $n_{[M]}$ data points and varying noise variances, 
leading to a lower bound  $\Omega_P(d/(\sum_{m=1}^Mn_{m}/\sigma_m^2)^{p/2})$ for the minimax estimation error. 
Second, in the independent \emph{$s$-sparse} case where  $\beta^\star = 0$ and $\|\beta^{(m)}\|_0\le s$ for all $m\in[M]$, clearly $\| \beta^{(m)} - \beta^\star \|_0 \leq s$  and thus  $\{\beta^{(m)}\}_{m=1}^M\subseteq\mathbb{B}_s(\beta^\star)$.
By constructing independent priors for $\{\beta^{(m)}\}_{m=1}^M$, we show that only data points sampled from the model with parameter $\beta^{(m)}$ are informative for estimating $\beta^{(m)}$. 
We then show a lower bound of $\Omega(s \sigma_m^p/n_m^{p/2})$ for the minimax risk. In particular, our lower bound for the $\ell_1$ error strengthens the existing lower bound of order $\Omega(s^{1/2} \sigma_m/n_m^{1/2})$ for sparse linear regression \citep{raskutti2011minimax}.
Combining the two cases, we prove the following lower bound. 
The formal proof is in Appendix \ref{app:lower1}.
\begin{theorem}[\sc Minimax lower bound for linear regression under sparse heterogeneity]\label{thm:lower1}
For any $m\in[M]$, $p\in\{1,2\}$, it holds that
\begin{equation*}
% \vspace{-3mm}
    \mathcal{M}(s,d,\mu,L,m,p, \{n_m\}_{m=1}^M\},\{\sigma_m\}_{m=1}^M)= \Omega\left(\frac{s\sigma_m^p}{n_m^{p/2}}+\frac{d}{\left(\sum_{m=1}^Mn_m/\sigma_m^2\right)^{p/2}}\right).
\end{equation*}
\end{theorem}
Theorems \ref{thm:lower1} and  \ref{thm:ls-fixed-design} imply that \ours is minimax optimal up to logarithmic terms.

\section{Linear Contextual Bandit} \label{sec:multi-lcb}
% \vspace{-3mm}
In this section, we study multitask linear contextual bandits as an application of our \ours method. 
A contextual bandit problem \citep{woodroofe1979one}
consists of data collection over several rounds $t=1, \ldots , T$, where $T>0$ is referred to as the time horizon.
At each round,
an analyst observes a context---covariate---vector, 
and based on all previous observations, chooses one of $K$ options---referred to as arms---observing the associated reward.
The goal is to minimize the \emph{regret} compared to the best possible choices of arms.
In linear
contextual bandits, 
two settings are widely studied.

In the first setup, referred to as {\sf Model-C} by \cite{Ren2020DynamicBL}, at each round $t\in[T]$, the analyst  first observes a set of $K$ $d$-dimensional  contexts $\{x_{t,a}\}_{a\in[K]}$ in which $x_{t,a}$ is associated with arm $a$. If the analyst selects action $a\in[K]$, then a reward $y_{t}=\langle x_{t,a}, \beta\rangle +\ep_t$ is earned,
where $\beta \in\RR^d$ is the
 unknown parameter vector associated with the bandit and $\{\ep_t\}_{t=0}^\infty$ is a sequence of i.i.d.~noise variables. 
In the second setup, referred to as {\sf Model-P} by \cite{Ren2020DynamicBL}, at each round $t\in[T]$, the analyst instead only observes a single $d$-dimensional context $x_t$. Then if action $a\in[K]$ is chosen, the analyst earns the reward $y_{t}=\langle x_{t}, \beta_a\rangle +\ep_t$,
where $\beta_a \in\RR^d$ is the unknown
 parameter vector associated with arm $a$ and $\{\ep_t\}_{t=0}^\infty$ is still a sequence of i.i.d.~noise variables. 

Both models have been studied: 
for instance {\sf Model-C}  in \cite{Han2020SequentialBL,oh2021sparsity,Ren2020DynamicBL} and {\sf Model-P} in \cite{bastani2020online,bastani2021mostly}. 
We extend these models to the multitask setting. 
We present
{\sf Model-C} here, 
and state a parallel set of results under {\sf Model-P} in Appendix \ref{app:model-p}. 
We consider $M$ $K$-armed bandit instances, 
where the $ m$-th  is associated with a parameter $\beta^{(m)}\in\RR^d$.
Each bandit $m\in[M]$ has an activation probability $p_m\in[0,1]$.
At each round $t$ within the time horizon $T$,
each bandit $m$ is independently activated with  probability $p_m$; 
and we observe contexts for the activated bandits. 
The parameters $\{p_m\}_{m=1}^M$ model the frequency 
of receiving contexts. 
% Without loss of generality, we consider $p^{1}\ge p^2\ge \cdots\ge p^{M}$. 
When  $p_1=\dots=p_m =1$, contexts for all bandit instances are always observed. We denote $\sum_{m\in\cI}p_m$ as $p_{\cI}$ for any $\cI\subseteq [M]$. Without loss of generality, we assume $p_1\geq \cdots \geq p_M$.

The analyst observes a set of $K$ $d$-dimensional contexts---covariates, feature vectors---$\{x_{t,a}^{(m)}\}_{a\in[K]}$ for each activated bandit instance $m\in\cS_t$ in the set $\cS_t$ of  activated  bandit instances at time $t$. 
The contexts  $\{x_{t,a}^{(m)}\}_{a\in[K]}$, for each $m\in\cS_t$ in each round are sampled independently.
Using the observed contexts, and all previously observed data,
the analyst 
selects actions $a_t^{(m)}\in[K]$ for each activated bandit instance $m\in\cS_t$ and earns the reward $y_{t}^{(m)}=\langle x_{t,a_t^{(m)}}^{(m)},\beta^{(m)}\rangle +\ep_{t}^{(m)}\in\RR$, 
where $\ep_{t}^{(m)}$ are, for $t\in [T], m\in \cS_t$, i.i.d.~noise variables.
To apply \ours in this setting, we
require  Condition  \ref{asp:shb} over all bandit parameters. 
Compared to Condition \ref{asp:heterogneity} for linear regression,
this requires all parameters to have a bounded $\ell_2$ norm, 
as is common in the area (see \eg, \cite{Han2020SequentialBL,bastani2020online}).

\begin{assumption}[\sc Sparse heterogeneity \& Boundedness]\label{asp:shb}
    There is an unknown global parameter $\beta^\star\in\RR^d$ and 
    value $s \in \{0,1,\ldots, d\}$ 
    such that $\|\beta^{(m)}-\beta^\star\|_0\le s$ for any $m\in[M]$. Further, $\|\beta^{(m)}\|_2\le 1$ for all $m\in[M]$. 
\end{assumption}
As before, our analysis applies to each $\beta^\star,s $ for which the condition holds.

\begin{assumption}[\sc Frequency constraint]\label{asp:freq}
We have $\min_{m\in[M]}(p_{1}\vee (p_{[M]} /m))/p_{m} \le c_{\rm f}$ for an absolute constant $c_{\rm f}$, where $p_{[M]}\triangleq \sum_{m=1}^M p_m$ .
\end{assumption}

Following previous literature studying {\sf Model-C}, we state other standard conditions as follows. 
The Gaussian noise Condition \ref{asp:gsn} is used only for simplicity, 
as our results can be extended to more general noise with a bounded Orlicz norm by leveraging the results of the offline linear regression with general noise in Appendix \ref{app:general-noise}. 

\begin{assumption}[\sc Gaussian noise]\label{asp:gsn}
    The noise variables $\{\ep_t^{(m)}\}_{t=1}^\infty$ are i.i.d.~$\cN(0,1)$ variables.
\end{assumption}

Next, we require the contexts to be sub-Gaussian.
Since bounded contexts are sub-Gaussian, Condition \ref{asp:covariate-subg} is weaker than 
the assumption of bounded contexts commonly used in the literature on contextual bandits \citep{xu2021multitask,bastani2021mostly,bastani2020online,kim2019doubly}.

\begin{assumption}[\sc Sub-Gaussianity]\label{asp:covariate-subg}
   There is $L\ge 0$, such that  
    for each $m\in[M]$, $t\in [T]$, and $a\in[K]$, $x_{t,a}^{(m)}$ is $L$-sub-Gaussian.
\end{assumption}

We next consider
Condition \ref{asp:diverse},
which ensures sufficient exploration even with a greedy algorithm \citep{Ren2020DynamicBL}. 

\begin{assumption}[\sc Diverse context]\label{asp:diverse}
    There are  positive constants $\mu$ and $c_x$ such that for any $\beta\in\RR^d$, vector $v\in\RR^d$, and $m\in[M]$,
    it holds that $\PP(\langle x^{(m)}_{t,a^\star}, v\rangle^2\ge \mu)\ge c_x$ where $a^\star =\underset{{a\in[K]}}{\operatorname{argmax}} \langle x_{t,a}^{(m)} ,\beta\rangle$ and the probability  is taken over the joint distribution of $\{x_{t,a}^{(m)}\}_{a=1}^K$.
\end{assumption}

\begin{remark}
     The ``diverse context'' condition, which is widely used in the literature on bandits and reinforcement learning,  simplifies the algorithmic design and the corresponding proof. 
     The condition or equivalent settings have been used in \eg,  \citet[Lemma 1]{bastani2021mostly},  \citet[Condition 4]{oh2021sparsity}, \citet[Condition 2]{cella2022multi}, \citet[Definition 3.1]{hao2021online}, 
     \citet[Condition 2.2 (c)]{chakraborty2023thompson}, 
     \cite{Han2020SequentialBL}. 
     Condition \ref{asp:diverse} encompasses many classical distributions, \eg, Gaussian contexts where $x_{t,a}\sim \cN(0,\Sigma)$ with $\Sigma\succeq 16 \mu I_d$ for each $a\in [K]$, and sub-Gaussian distributions.
    See \citet[Section 2.3]{Ren2020DynamicBL} for more discussion.
     In Conditions \ref{asp:covariate-subg} and \ref{asp:diverse}, we do not require $\{x_{t,a}^{(m)}\}_{a=1}^K$ to be independent across actions $a\in[K]$.  

     On the other hand, we remark that the condition is made only for simplicity and to show that \ours can be applied in an important online setting. 
     One can readily extend \oursb to other bandit algorithms with exploration phases (see \eg, \cite{bastani2020online,xu2021multitask}).
\end{remark}

% \vspace{-6mm}
\subsection{Algorithm Overview}
% \vspace{-3mm}
\begin{algorithm}[t]
	\caption{MOLARBandit: Multitask Bandits with \ours estimates}
	\label{alg:bc-bandit}
	\begin{algorithmic}
		\STATE \noindent {\bfseries Input:} Time horizon $T$, $\widehat{\beta}^{(m)}_{-1}=0$, $\vX^{(m)}_{q}=\emptyset$, and $Y^{(m)}_{q}=\emptyset$  for $m\in[M]$,  
  initial batch size $H_0$ and batch $\cH_0=[H_0]$, number of batches $Q= \lceil \log_2(T/H_0)\rceil$
  \vspace{1mm}
        \STATE Define batches $\cH_q = \{t:2^{q-1}H_0<t \le \min\{2^q H_0, T\}\}$, for $q=1,\dots, Q $
        \FOR{$t=1,\cdots,T$}
		\FOR{each bandit in parallel}
%        \vspace{1mm}
%        \STATE Bandit instance $m$ is activated with probability $p_m$
        \vspace{1mm}
        \IF{$t\in\cH_q$ \textbf{and} bandit instance $m$ is activated}
        \vspace{1mm}
		\STATE Choose $a_t^{(m)}=\arg\max_{a\in[K]}\langle x_{t,a}^{(m)},\widehat{\beta}_{q-1}^{(m)}\rangle$,  breaking ties randomly, and gain reward $y_{t}^{(m)}$
  \vspace{1mm}
		\STATE Augment observations $\vX_{q}^{(m)}\,\leftarrow\,[\vX_{
  q}^{(m)\top}, x_{t,a_t^{(m)}}^{(m)}]^\top$ and $Y_{q}^{(m)}\,\leftarrow\,[Y_{q}^{(m)\top},y_t^{(m)}]^\top$
  \vspace{1mm}
        \ENDIF
		\ENDFOR
  \IF{$t= 2^q H_0$, \ie, batch $\cH_q$ ends}
        \vspace{1mm}
        \STATE Let $n_{m,q}=|Y_{q}^{(m)}|$ and $\cC_q=\{m\in[M]:n_{m,q}\ge 2C_{\rm b}(\ln(MT)+d\ln(L\ln(K)/\mu))\}$ 
        % with $C_{\rm b}$ defined in Lemma \ref{lem:eig-lb} 
        % \xh{revise later}
        \vspace{1mm}
		\STATE Call MOLAR$(\{(\vX^{(m)}_{q},Y^{(m)}_{q})\}_{m\in\cC_q})$ to obtain $\{\widehat{\beta}^{(m)}_{q}\}_{m\in \cC_q}$
        \vspace{1mm}
        \FOR{$m\in[M]\backslash \cC_q$}
        \vspace{1mm}
        \STATE Let $\widehat{\beta}^{(m)}_{q}=\widehat{\beta}^{(m)}_{q-1}$, $\vX_{{q+1}}^{(m)}=\vX_{{q}}^{m}$, and $Y_{{q+1}}^{(m)}=Y_{{q}}^{m}$ 
        \vspace{1mm}
        \ENDFOR
        \ENDIF
  \ENDFOR 
	\end{algorithmic}
\end{algorithm}

We now introduce our \oursb algorithm, see Algorithm \ref{alg:bc-bandit}. \oursb manages multiple bandit instances in a batched way, as in \cite{Han2020SequentialBL,Ren2020DynamicBL}.
We split the time horizon $T$ into batches that double in length, \ie, $|\cH_{q}|=2^{q-1}|\cH_0|$ for all $q\ge 1$, 
yielding $Q=O(\log_2(T/|\cH_0|))$ batches. 
Within each batch $\cH_q$, 
each bandit instance $m$ leverages 
the current estimate $\widehat{\beta}_q^{(m)}$ of the parameter $\beta^{(m)}$ without further exploration. 
These estimates are updated at the end of a batch, based on all previous observations. 

Compared to existing methods,
\oursb has two novel features: 
\emph{novel estimates} and \emph{fine-grained collaboration}.
First,
in the single-bandit regime,
the estimate $\widehat{\beta}^{(m)}$ is often obtained by using OLS \citep{goldenshluger2013linear}, LASSO \citep{bastani2020online}, and ridge regression \citep{Han2020SequentialBL}. 
In the multi-bandit regime, we use our \ours estimators to improve accuracy based on all instances. 
If used at each time step, \ours
induces strong correlations between the observations,
as well as between the estimates $\{\widehat{\beta}_{\rm ind}^{(m)}\}_{m=1}^M$ from  Algorithm \ref{alg:molar}, across all instances. 
This makes the median potentially inaccurate (see Lemma \ref{lem:median-Gaussian}). 
As a remedy, batching ensures that our 
estimates in the current batch are independent conditional on the observations from previous batches. 
Batching  
also reduces computational costs.

Second, as indicated by Condition \ref{asp:design-mats} from Section \ref{sec:ana-gau}, \ours requires the eigenvalues of the non-centered empirical covariance matrices of the datasets in the collaboration to be lower bounded. 
We will show that this holds with high probability
even when the arms are adaptively chosen
(Lemma \ref{lem:eig-lb}  in Appendix \ref{app:eig-lb}) when the sample size $n_{m,q}$ is of order $\widetilde{\Omega}(d)$.
However, this may fail within a small batch $\cH_q$ for bandit instances that rarely observe contexts because of  their small activation probabilities, so  $m\notin\cC_q$, 
with 
% \vspace{-4mm}
\begin{equation}\nonumber
    \label{cq}
\cC_q=\{m\in[M]:n_{m,q}\ge 2C_{\rm b}(\ln(MT)+d\ln(L\ln(K)/\mu))\},
% \vspace{-4mm}
\end{equation}
for $C_{\rm b}$ defined in Lemma \ref{lem:eig-lb}. 
Thus we neither involve these instances in \ours, 
nor update their parameter estimates until entering a large batch. 
In these instances,
the observations are not used to update estimates, 
and are merged into future batches.

% \vspace{-5mm}
\subsection{Regret Analysis}
% \vspace{-3mm}
Due to the differences in the activation probabilities $\{p_m\}_{m=1}^M$, the number of observed contexts and the regret of each bandit instance 
can vary greatly. 
Therefore,
we consider the following form of individual regret: 
given a time horizon $T\ge 1$ and a specific algorithm $A$ that produces action trajectories $\{a_t^{(m)}\}_{t\in[T],m\in[M]}$, we define the cumulative regret for each instance $m\in[M]$ as
% \vspace{-3mm}
\begin{equation*}
% \vspace{-3mm}
    R_T^{(m)}(A):=\sum_{t=1}^T\EE\left[\max_{a\in[K]}\langle x_{t,a}^{(m)}-x_{t,a_t^{(m)}}^{(m)},\beta^{(m)}\rangle\one(m\in\cS_t)\right]
\end{equation*}
where $\cS_t$ is the random set of activated bandits at time $t$. 
%and is a random variable related to $\{p_m\}_{m=1}^M$.

After showing that the empirical
covariance matrices are well-conditioned  with high probability at the end of each batch in Lemma \ref{lem:eig-lb}, 
we leverage results from Section \ref{sec:ana-gau} to show that the \ours estimates $\{\widehat{\beta}^{(m)}_q\}_{m\in\cC_q}$ are  accurate. 
This also requires controlling $\{n_{m,q}\}_{m\in\cC_q}$ to meet the sample size constraint (Condition \ref{asp:size}) with a high probability, based on Condition \ref{asp:freq}. 
This result is included in Lemma \ref{lem:bandit-est} and is proved in Appendix \ref{app:bandit-est}.
\begin{lemma}[\sc Parameter estimation bound for heterogeneous bandits]\label{lem:bandit-est}
    Under Conditions \ref{asp:shb}-\ref{asp:diverse}, for any $0\le q<Q$, and\footnote{We choose the largest index achieving the minimum.} $\tau =\arg\min_{m\in[M]}(p_{1}\vee p_{[M]} /m))/p_{m}$, if $|\cH_q|\ge 2C_{\rm b}(\ln(MT)+d\ln(L\ln(K)/\mu))/p_{\tau}$ with $C_{\rm b}$ defined in Lemma \ref{lem:eig-lb}, it holds with probability at least $1-2/T$  (over the randomness of $\{\vX_q^{(m)}\}_{m=1}^M$)
    % \xh{check the proof}
    that for all $m\in\cC_q$,
    % \vspace{-3mm}
    \begin{equation*}
    % \vspace{-3mm}
        \EE[\|\widehat{\beta}^{(m)}_q-\beta^{(m)}\|_2^2 \mid (\vX^{(m)}_{q},Y^{(m)}_{q})_{m\in\cC_q}]=\widetilde{O}\left(\frac{1}{ |\cH_q|}\left(\frac{s}{p_m}+\frac{d}{p_{[M]}}\right)\right),
    \end{equation*}
    where
    logarithmic factors and quantities  depending only on $c_x$, $c_{\rm f}$  are absorbed into $\widetilde{O}(\cdot)$.
\end{lemma}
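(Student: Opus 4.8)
The plan is to reduce the bandit estimation problem, at the end of each batch $\cH_q$, to the offline multitask linear regression setting of Section \ref{sec:ana-gau}, and then invoke Theorem \ref{thm:ls-fixed-design}. The key conceptual point is that the batching structure decouples the within-batch observations from the parameter estimate used during that batch: for each bandit $m\in\cC_q$, the actions chosen in batch $\cH_q$ depend only on $\widehat{\beta}^m_{q-1}$, which is measurable with respect to the data from previous batches. Hence, conditionally on the sigma-field generated by the pre-batch data, the contexts $\{X^m_{t,a^m_t}\}$ collected in $\cH_q$ together with the noise $\ep^m_t$ form a fixed-design linear regression instance with Gaussian noise, and the resulting OLS and MOLAR estimates satisfy the hypotheses under which Theorem \ref{thm:ls-fixed-design} was proved.

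First I would verify the three structural assumptions needed to apply Theorem \ref{thm:ls-fixed-design} to the conditional problem. The sparse-heterogeneity Condition \ref{asp:heterogneity} is inherited directly from Condition \ref{asp:shb}. The Gaussian-noise Condition \ref{asp:noise} follows from Condition \ref{asp:gsn}, since conditioning on the pre-batch data does not alter the distribution of the fresh noise in $\cH_q$. The design-matrix Condition \ref{asp:design-mats} is the substantive one: I would invoke Lemma \ref{lem:eig-lb} to argue that, because the diverse-context Condition \ref{asp:diverse} guarantees sufficient exploration even under greedy action selection, the non-centered empirical covariance $(\vX^{m\top}_q\vX^m_q)/n^m_q \succeq \mu I_d$ holds with high probability whenever $n^m_q = \widetilde\Omega(d)$ — which is exactly the defining threshold of $\cC_q$ in \eqref{cq}. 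This holds simultaneously for all $m\in\cC_q$ by a union bound over the $M$ instances and the $T$ rounds, contributing one of the two $1/T$ terms in the failure probability.

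Second I would establish the sample-size Condition \ref{asp:size} for the conditional regression problem, translating it from the frequency Condition \ref{asp:freq}. Here the counts $n^m_q$ are random — each is a sum of indicators that bandit $m$ is activated in $\cH_q$ — with $\EE[n^m_q] = p^m|\cH_q|$. Using a Chernoff bound and the hypothesis $|\cH_q| \ge 2C_{\rm b}(\ln(MT)+d\ln(L\ln(K)/\mu))/p^\tau$, I would show that with high probability $n^m_q = \Theta(p^m|\cH_q|)$ for every $m\in\cC_q$, so that the ratio controlling Condition \ref{asp:size} concentrates around its frequency analogue $\min_m (p^1\vee(\sum_\ell p^\ell/m))/p^m \le C_{\rm f}$, at the cost of an absolute constant and the $C_{\rm f}$ factor absorbed into $\widetilde O(\cdot)$. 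This union bound supplies the second $1/T$ term. On this event, applying Theorem \ref{thm:ls-fixed-design} with $p=2$ gives the conditional bound with $n^m$ replaced by $n^m_q \asymp p^m|\cH_q|$ and $\sum_\ell n^\ell$ replaced by $\sum_\ell n^\ell_q \asymp |\cH_q|\sum_\ell p^\ell$, yielding exactly the claimed rate $\widetilde O\big(\frac{1}{\mu|\cH_q|}(\frac{s}{p^m}+\frac{d}{\sum_\ell p^\ell})\big)$.

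The main obstacle I anticipate is handling the interplay between the randomness of the activation counts $n^m_q$ and the conditional independence argument cleanly. The MOLAR guarantee is stated for \emph{fixed} sample sizes, so I must condition on the realized counts (and on the pre-batch data) and then show that the high-probability event on which the design and sample-size conditions hold does not destroy the conditional Gaussianity or the independence of the within-batch noise. A secondary subtlety is the index $\tau$: the offline $\tau$ from \eqref{t} is defined via sample sizes, whereas Lemma \ref{lem:bandit-est} fixes $\tau$ via activation \emph{frequencies}, and I would need to argue that with high probability the frequency-based $\tau$ coincides with (or is interchangeable with) the count-based one, so that the collaboration set used inside MOLAR captures an $\widetilde\Omega(\sum_\ell p^\ell |\cH_q|)$ fraction of the total sample. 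Once these measurability and concentration steps are in place, the error bound is an immediate consequence of Theorem \ref{thm:ls-fixed-design}.
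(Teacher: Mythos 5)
Your proposal is correct and follows essentially the same route as the paper's proof: conditioning on pre-batch data to get a fixed-design Gaussian regression, invoking Lemma \ref{lem:eig-lb} for the design condition (one $1/T$ term), using Bernstein concentration of the activation counts $n^m_q$ together with Condition \ref{asp:freq} to verify Condition \ref{asp:size} (the other $1/T$ term), and then applying Theorem \ref{thm:ls-fixed-design} with $p=2$ and $n^m_q \asymp p^m|\cH_q|$. Your worry about the frequency-based versus count-based $\tau$ is resolved exactly as you suggest ("interchangeable"): the paper never needs the two argmins to coincide, only that on the concentration event $[\tau]\subseteq \cC_q$ and the count-based ratio evaluated at the frequency-based index is at most $6C_{\rm f}$, which gives $\sum_{\ell\in\cC_q} n^\ell_q \ge \tau p^\tau |\cH_q|/2 = \widetilde{\Omega}(|\cH_q|\sum_\ell p^\ell)$.
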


Based on Lemma \ref{lem:bandit-est}, we can bound the individual regret as follows; with a proof in Appendix \ref{app:bc-bandit}.
\begin{theorem}[\sc Individual regret upper bound for heterogeneous bandits]\label{thm:bc-bandit}
Under Conditions \ref{asp:shb}-\ref{asp:diverse}, 
the expected regret of \oursb, for any $T\ge 1$ and $1\le H_0\le d$, is bounded as
\begin{equation}\label{eqn:vnidsczxcvs}
    \EE[R_T^{(m)}]=
   \widetilde{O}\left(  d\wedge( Tp_m)+
    \sqrt{\left(s+\frac{dp_m}{p_{[M]}}\right)Tp_m}\right)
\end{equation}
where logarithmic factors as well as quantities depending only on $c_x$, $c_{\rm f}$, $L$, and $\mu$  are absorbed into $\widetilde{O}(\cdot)$.
In particular, if contexts are observed for all bandits at all times, so $p_1=\cdots=p_m=1$, \eqref{eqn:vnidsczxcvs} implies
\begin{equation*}
    \EE[R_T^{(m)}]=
   \widetilde{O}\left( d\wedge T+
    \sqrt{\left(s+\frac{d}{M}\right)T}\right).
\end{equation*}
\end{theorem}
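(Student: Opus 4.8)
The plan is to decompose the cumulative regret $R_T^m$ over the $Q=\lceil\log_2(T/H_0)\rceil$ batches and, within each batch, reduce the per-round regret to the $\ell_2$ estimation error of the estimate driving the greedy choices. Fix a batch $\cH_q$ and a round $t\in\cH_q$ at which bandit $m$ is activated. Writing $a^\star=\argmax_{a}\langle X_{t,a}^m,\beta^m\rangle$ and recalling that $a_t^m$ maximizes $\langle X_{t,a}^m,\widehat{\beta}^m_{q-1}\rangle$, the optimality of $a_t^m$ for $\widehat{\beta}^m_{q-1}$ gives the standard bound
\[
\max_{a}\langle X_{t,a}^m-X_{t,a_t^m}^m,\beta^m\rangle\le \langle X_{t,a^\star}^m,\beta^m-\widehat{\beta}^m_{q-1}\rangle+\langle X_{t,a_t^m}^m,\widehat{\beta}^m_{q-1}-\beta^m\rangle.
\]
The key structural point, and the reason for batching, is that $\widehat{\beta}^m_{q-1}$ is measurable with respect to $\mathcal{G}_{q-1}$, the $\sigma$-algebra of all observations up to the end of batch $q-1$, whereas the contexts $\{X_{t,a}^m\}_{a\in[K]}$ and the activation indicator $\one(m\in\cS_t)$ at time $t\in\cH_q$ are independent of $\mathcal{G}_{q-1}$. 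Conditioning on $\mathcal{G}_{q-1}$ and treating $\widehat{\beta}^m_{q-1}-\beta^m$ as fixed, each inner product is a centered $L\|\widehat{\beta}^m_{q-1}-\beta^m\|_2^2$-sub-Gaussian variable by Condition \ref{asp:covariate-subg}; taking a maximum over the $K$ arms and using $\EE[\one(m\in\cS_t)]=p^m$ yields
\[
\EE\Big[\textstyle\sum_{t\in\cH_q}\max_a\langle X_{t,a}^m-X_{t,a_t^m}^m,\beta^m\rangle\,\one(m\in\cS_t)\,\Big|\,\mathcal{G}_{q-1}\Big]=\widetilde{O}\big(\sqrt{L}\,|\cH_q|\,p^m\,\|\widehat{\beta}^m_{q-1}-\beta^m\|_2\big),
\]
where the $\sqrt{\ln K}$ cost of the maximum over arms is absorbed into $\widetilde{O}(\cdot)$.

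Next I split the batches into a \emph{productive} phase and a \emph{burn-in} phase. Writing $n_0:=2C_{\rm b}(\ln(MT)+d\ln(L\ln K/\mu))=\widetilde{O}(d)$ for the threshold in \eqref{cq}, a batch is productive once it is long enough for the estimation guarantee and bandit $m$ has gathered enough data to lie in $\cC_q$, i.e.\ once $|\cH_q|\gtrsim n_0/p^\tau$ and $n_q^m\ge n_0$. For such a batch I take expectations over $\mathcal{G}_{q-1}$, then apply Jensen together with Lemma \ref{lem:bandit-est} for batch $q-1$ to get $\EE\|\widehat{\beta}^m_{q-1}-\beta^m\|_2\le(\EE\|\widehat{\beta}^m_{q-1}-\beta^m\|_2^2)^{1/2}=\widetilde{O}((\tfrac{1}{\mu|\cH_{q-1}|}(\tfrac{s}{p^m}+\tfrac{d}{\sum_\ell p^\ell}))^{1/2})$. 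Since $|\cH_q|=2|\cH_{q-1}|$, the per-batch regret is $\widetilde{O}(\sqrt{\tfrac{L}{\mu}(s+\tfrac{dp^m}{\sum_\ell p^\ell})\,p^m\,|\cH_q|})$; because the $|\cH_q|$ double we have $\sum_q\sqrt{|\cH_q|}=O(\sqrt{T})$, so summing over the productive batches gives exactly the second term $\widetilde{O}(\sqrt{\tfrac{L}{\mu}(s+\tfrac{dp^m}{\sum_\ell p^\ell})Tp^m})$.

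For the burn-in phase I use the crude bound: while bandit $m$'s estimate has not yet been updated it equals $\widehat{\beta}^m_{-1}=0$, and since $\|\beta^m\|_2\le 1$ (Condition \ref{asp:shb}) each activation of $m$ costs expected regret $\widetilde{O}(\sqrt{L})$. The burn-in ends once $m$ has accumulated $n_0$ samples and $|\cH_q|\gtrsim n_0/p^\tau$; in either case the number of activations of $m$ during burn-in is of order $p^m|\cH_q|\lesssim (p^m/p^\tau)\,n_0$, which is $\widetilde{O}(n_0)=\widetilde{O}(d)$ because Condition \ref{asp:freq} forces $p^1/p^\tau\le C_{\rm f}=\widetilde{O}(1)$ and hence $p^m/p^\tau\le C_{\rm f}$. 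Capping this by the total expected number $Tp^m$ of activations of $m$, the burn-in contributes $\widetilde{O}(\sqrt{L}\,(d\wedge Tp^m))$, matching the first term. Finally, the $2/T$-probability failure event of Lemma \ref{lem:bandit-est}, union-bounded over the $O(\log T)$ batches, is handled by a sub-Gaussian tail argument showing that with high probability all contexts over $[T]$ have norm $\widetilde{O}(\sqrt{L})$, so the regret on the bad event is of order $T\cdot\widetilde{O}(\sqrt{L}\,d)\cdot\widetilde{O}(1/T)$ and is subdominant to the stated bound; summing the three contributions and specializing to $p^1=\cdots=p^M=1$ (so $\sum_\ell p^\ell=M$) gives the displayed corollary.

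I expect the main obstacle to be the burn-in analysis in the asynchronous setting: one must show that the number of rounds until bandit $m$ both enters the collaboration set $\cC_q$ and meets the batch-length requirement of Lemma \ref{lem:bandit-est} costs only $\widetilde{O}(d\wedge Tp^m)$ activations. This needs (i) Chernoff control of the random sample counts $n_q^m$ around their means $|\cH_q|p^m$, so that membership in $\cC_q$ is guaranteed with high probability at the right time, and (ii) a careful use of the frequency condition to bound $p^m/p^\tau$, which is precisely what prevents the dimension-dependent burn-in cost from being inflated by the heterogeneity of the activation probabilities. By contrast, the reduction to estimation error and the geometric summation are standard once the batching-based conditional independence is in place.
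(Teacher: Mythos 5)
Your proposal is correct and follows essentially the same route as the paper's proof in Appendix \ref{app:bc-bandit}: the same reduction of the instantaneous regret to $\widetilde{O}\bigl(\sqrt{L}\,p^m\bigl(1\wedge\|\widehat{\beta}^m_{q-1}-\beta^m\|_2\bigr)\bigr)$ via greedy optimality and the maximal inequality over the $K$ arms, the same burn-in/productive split at batch length $\widetilde{O}(d/p^{\tau})$ (and $\widetilde{O}(d/p^{m})$ for $m\notin[\tau]$, with Bernstein/Chernoff control of $n_q^m$ to guarantee $m\in\cC_{q-1}$), Jensen's inequality combined with Lemma \ref{lem:bandit-est}, the frequency condition to bound $p^m/p^{\tau}\le C_{\rm f}$, and the geometric sum $\sum_q\sqrt{|\cH_q|}=O(\sqrt{T})$. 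The only deviation is your handling of the failure event of Lemma \ref{lem:bandit-est}: the paper simply keeps the capped error $1\wedge\|\widehat{\beta}^m_{q-1}-\beta^m\|_2\le 2$ inside the per-batch expectation, yielding a remainder $\widetilde{O}(\sqrt{L}\,p^m)$ that is always subdominant, whereas your context-norm tail route produces a $T\cdot\widetilde{O}(\sqrt{L}\,d)\cdot\widetilde{O}(1/T)$ remainder that, as written, need not be dominated by $\sqrt{L}\cdot d\wedge(Tp^m)$ when $Tp^m\ll d$ --- a minor slip that is immediately repaired by the paper's capping argument, which you in effect already have from your crude burn-in bound.
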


For a single contextual bandit without collaboration,  
when $T=\Omega(d^2)$, the minimax optimal  regret bound is $\widetilde{\Theta}(\sqrt{dT})$~\citep{Han2020SequentialBL,chu2011contextual,auer2002using}. Theorem \ref{thm:bc-bandit} implies a regret bound of order $\widetilde{O}(\sqrt{(s+d/M)T})$, when $T=\Omega( d^2/(s+d/M))$, which shows a factor of $\min\{M,d/s\}^{1/2}$ improvement. 
Similarly to Theorem \ref{thm:ls-fixed-design}, 
Algorithm \ref{alg:bc-bandit} is applicable to 
any heterogeneity level $s$. 
When the heterogeneity is non-sparse, \ie, $s=\Omega(d)$, 
Theorem \ref{alg:bc-bandit} recovers the optimal regret $\widetilde{\Theta}(\sqrt{d T})$ for
a single bandit. 
Thus, the collaboration mechanism in Algorithm \ref{alg:bc-bandit} is always benign, regardless of the heterogeneity.

% \vspace{-5mm}
\subsection{Lower Bound}
% \vspace{-3mm}
To complement our upper bound, we also present a minimax regret lower bound that characterizes the fundamental learning limits
in multitask linear contextual bandits under sparse heterogeneity. Similar to Theorem \ref{thm:lower1}, Theorem \ref{thm:lb-bandit} also leverages two representative cases:
the homogeneous case where $\beta^{(1)}=\cdots=\beta^{(m)}=\beta^\star$ and the $s$-sparse case where $\beta^\star =0$ and $\|\beta^{(m)}\|_0\le s$ for all $m\in[M]$. The two cases yield a lower bound of orders $\Omega(\sqrt{{dTp_m^2}/{p_{[M]}}})$ and $\Omega(\sqrt{{sTp_m}})$, respectively. For each case, the proof outline is similar to the lower bound from \cite{Han2020SequentialBL} in the single-bandit regime by additionally incorporating the probabilistic activations.
We set a uniform prior for the parameters 
and translate the regret to a measure characterizing the difficulty of distinguishing two distributions. 
The difficulty---and therefore the regret---is then quantified and bounded via Le Cam's method \citep{Tsybakov2008IntroductionTN}. 
Since our task-specific regret in the multitask regime is  different  from the standard regret in the single-task regime, 
the proof requires some novel steps to handle the activation sets $\cS_t$;
see Appendix \ref{app:lb-bandit}.

\begin{theorem}[\sc Individual regret lower bound for heterogeneous bandits]\label{thm:lb-bandit}
Given any $1\le s\le d$ and $\{p_m\}_{m=1}^M\subseteq[0,1]$, for any $m\in[M]$, when $T\ge \max\{(d+1)/p_{[M]}, (s+1)/p_m\}/(16L)+1$,  there exist   $\{\beta^{(m)}\}_{a\in[K],m\in[M]}$ satisfying Condition \ref{asp:shb},
and distributions of contexts satisfying Condition
\ref{asp:covariate-subg} and \ref{asp:diverse}, 
such that for any online Algorithm $A$ and for any $m\in[M]$,
\begin{equation*}
    \EE[R_T^{(m)}(A)]=\Omega\left(\sqrt{\left(s+\frac{dp_m}{p_{[M]}}\right)Tp_m}\right).
\end{equation*}
In particular, when $p_m=1$ for all $m\in[M]$,
$\EE[R_T^{(m)}(A)]=\Omega\left(\sqrt{\left(s+d/M\right)T}\right)$.
\end{theorem}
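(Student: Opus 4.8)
The plan is to reduce the claimed bound to its two additive pieces and to establish each by a separate hard instance, exactly as in the regression lower bound (Theorem \ref{thm:lower1}). Since $\sqrt{a+b}\le \sqrt{a}+\sqrt{b}\le 2(\sqrt{a}\vee\sqrt{b})$, and since the minimax quantity is a supremum over all instances obeying Conditions \ref{asp:shb}--\ref{asp:diverse}, it suffices to construct two priors over parameter families---both satisfying these conditions---under which the Bayes regret of bandit $m$ is $\Omega(\sqrt{sTp^m})$ and $\Omega(\sqrt{d(p^m)^2T/\sum_{\ell\in[M]}p^\ell})$, respectively; by Yao's principle each gives a minimax lower bound, and their maximum dominates $\tfrac12\sqrt{(s+dp^m/\sum_{\ell}p^\ell)Tp^m}$. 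For both families I would fix Gaussian contexts $X_{t,a}^m\sim\cN(0,\Sigma)$ with $\Sigma\succeq 16\mu I_d$, which (by the discussion after Condition \ref{asp:diverse}) satisfy Conditions \ref{asp:covariate-subg} and \ref{asp:diverse}, and whose anti-concentration is what converts parameter error into instantaneous regret; only the parameter families differ. Throughout I would condition on the activation sequence $\{\cS_t\}_{t\in[T]}$, which is exogenous (independent of the parameters, contexts and rewards), so that $N_m:=\sum_{t}\one(m\in\cS_t)$ becomes deterministic with $\EE[N_m]=Tp^m$; the stated horizon condition makes $Tp^m$ and $T\sum_\ell p^\ell$ large enough that $N_m$ and $\sum_\ell N_\ell$ concentrate around their means.

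For the $s$-sparse bound I would set $\beta^\star=0$ and draw the $\beta^m$ independently across $m$ from a sign-flip prior on the first $s$ coordinates, $\beta^m=\epsilon z^m$ with $z^m\in\{-1,+1\}^{s}\times\{0\}^{d-s}$. Because the $\beta^m$ are independent and bandit $\ell$'s rewards depend only on $\beta^\ell$, the observations of bandits $\ell\neq m$ are uninformative about $\beta^m$; conditioned on the activation sequence, bounding bandit $m$'s regret reduces exactly to a single $s$-dimensional contextual bandit run for $N_m$ active rounds. I would then invoke the single-bandit argument of \citet{Han2020SequentialBL}---a uniform prior over the signs, a reduction of per-round regret to the error of distinguishing $z^m_k=+1$ from $z^m_k=-1$ via the spread of the Gaussian design, and Le Cam's method \citep{Tsybakov2008IntroductionTN} with per-step KL of order $\epsilon^2$---and balance $\epsilon\asymp\sqrt{s/N_m}$ (admissible because the horizon condition forces $N_m\gtrsim s$) to obtain conditional regret $\Omega(\sqrt{sN_m})$. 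Averaging over the activation and using $N_m\ge Tp^m/2$ gives $\Omega(\sqrt{sTp^m})$.

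For the homogeneous bound I would set $\beta^1=\cdots=\beta^M=\beta^\star$ with a uniform sign-flip prior $\beta^\star=\epsilon z$, $z\in\{-1,+1\}^d$, keeping $\|\beta^\star\|_2=\epsilon\sqrt d\le 1$. Now every bandit's rewards inform the common $\beta^\star$, so the sample size relevant for distinguishing hypotheses up to global round $t$ is $\sum_{\ell}\sum_{r\le t}\one(\ell\in\cS_r)\approx t\sum_\ell p^\ell$. I would run a phased Assouad/Le Cam argument: on a geometric phase ending at round $t$ the accumulated information is $O(\epsilon^2 t\sum_\ell p^\ell)$, so choosing $\epsilon_t\asymp (t\sum_\ell p^\ell)^{-1/2}$ leaves a constant fraction of the coordinates of $z$ undetermined, which by the spread of the Gaussian design forces bandit $m$'s instantaneous regret on each of its active rounds to be $\Omega(\epsilon_t\sqrt d)=\Omega(\sqrt{d/(t\sum_\ell p^\ell)})$ up to the constants $\mu,c_x,L$ of the design. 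Summing over bandit $m$'s active rounds, $\sum_{t\le T}p^m\sqrt{d/(t\sum_\ell p^\ell)}\asymp p^m\sqrt{d/\sum_\ell p^\ell}\cdot\sqrt T=\sqrt{d(p^m)^2T/\sum_\ell p^\ell}$, yields the second bound; the horizon condition $T\ge (d+1)/(16L\sum_\ell p^\ell)+1$ is precisely what makes the smallest admissible $\epsilon\asymp (T\sum_\ell p^\ell)^{-1/2}$ compatible with $\|\beta^\star\|_2\le 1$.

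The main obstacle, and the departure from the single-bandit analysis of \citet{Han2020SequentialBL}, is the coupling between the random activation sets $\cS_t$ and the pooling of information in the homogeneous case: information about bandit $m$'s parameter is aggregated across all $M$ bandits (through $\sum_\ell p^\ell$), whereas regret is charged only on bandit $m$'s $\approx Tp^m$ active rounds, so the usual ``estimation error $\times$ horizon'' balance must be carried out with two different effective sample sizes. Conditioning on the parameter-independent activation sequence is what decouples these and renders the per-step KL bounds and the sample counts deterministic; the residual care is to verify Condition \ref{asp:diverse} for the Gaussian design while keeping the per-step KL of order $\epsilon^2$, and to keep the sign-flip perturbations inside the unit ball, both guaranteed by the horizon condition. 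I expect the bookkeeping of the phased argument---rather than any single conceptual step---to be the most delicate part.
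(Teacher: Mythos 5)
Your overall architecture matches the paper's: the same two-case decomposition (homogeneous and $s$-sparse) combined via $A\vee B\ge (A+B)/2$, Gaussian contexts to certify Conditions \ref{asp:covariate-subg} and \ref{asp:diverse}, and a Le Cam/Pinsker argument in which information about the common parameter pools at rate $\sum_{\ell\in[M]}p^\ell$ while regret is charged to bandit $m$ only at rate $p^m$. Your $s$-sparse reduction (independent priors across instances, so data from bandits $\ell\neq m$ is uninformative about $\beta^m$, then the single-bandit bound in $s$ dimensions over $\approx Tp^m$ active rounds) is essentially the paper's argument.

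The gap is in the homogeneous case. Your ``phased'' mechanism---choosing $\epsilon_t\asymp (t\sum_\ell p^\ell)^{-1/2}$ on a phase ending at round $t$ and then summing the contributions $p^m\sqrt{d/(t\sum_\ell p^\ell)}$ over all $t\le T$---is not a valid lower-bound step. Within a single instance the scale of $\beta^\star$ is fixed for the whole horizon; the indistinguishability statement at scale $\epsilon_t$ concerns the hardest prior \emph{at that scale}, and the hard instances at different scales are different, so the per-round bounds $\Omega(p^m\epsilon_t\sqrt{d})$ for different $t$ do not hold simultaneously for any one instance and cannot be added. Indeed, at a fixed scale $\epsilon$, once $t\gtrsim d/(\epsilon^2 L\sum_\ell p^\ell)$ the algorithm can identify $\beta^\star$ and its instantaneous regret may collapse; ``sum of the current estimation error over $t$'' is an upper-bound heuristic, not a lower bound. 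The repair is what the paper does, and what your own closing remark about the smallest admissible $\epsilon$ already suggests: fix a single scale $\Delta\asymp\sqrt{(d+1)/((T-1)L\sum_\ell p^\ell)}$, show that the accumulated KL up to every $t\le T$ stays below a constant---the paper handles the random activation sets not by conditioning and concentration of $N_m$ but by averaging the conditional KL over the activation randomness via joint convexity (Lemma \ref{lem:convexity}), which directly replaces $\one(r\in\cS_\ell)$ by $p^r$---so that every one of the $T$ rounds contributes $\Omega(p^m\sqrt{L}\,\Delta)$, and sum to obtain $\Omega\left(\sqrt{dT(p^m)^2/\sum_\ell p^\ell}\right)$ with no phases. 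A secondary point: the paper uses a uniform spherical prior together with the reflection map $\beta\mapsto\beta-2\langle u_t^m,\beta\rangle u_t^m$ and the tilted measures $Q_t^{m\pm}$, which makes the two-point reduction exact for the {\sf Model-C} regret $\max_a\langle X_{t,a}^m-X_{t,a_t^m}^m,\beta\rangle$; your hypercube sign-flip prior would instead call for a coordinate-wise Assouad decomposition, which does not separate cleanly for this non-coordinatewise regret functional.
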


Theorem~\ref{thm:lb-bandit} and \ref{thm:bc-bandit} imply that \oursb is minimax optimal when $T =\Omega( d^2/(s + d/M ))$,  up to logarithmic terms.

\begin{figure}[t]
    \centering
    \includegraphics[height = 0.25\textwidth]{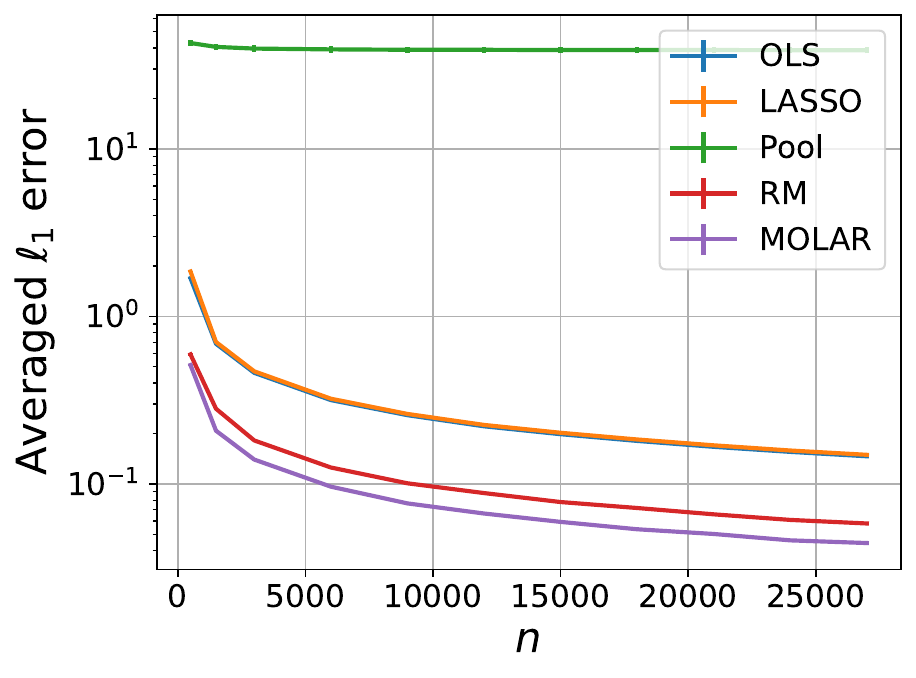}
    \hspace{-3mm}
    \includegraphics[height = 0.25\textwidth]{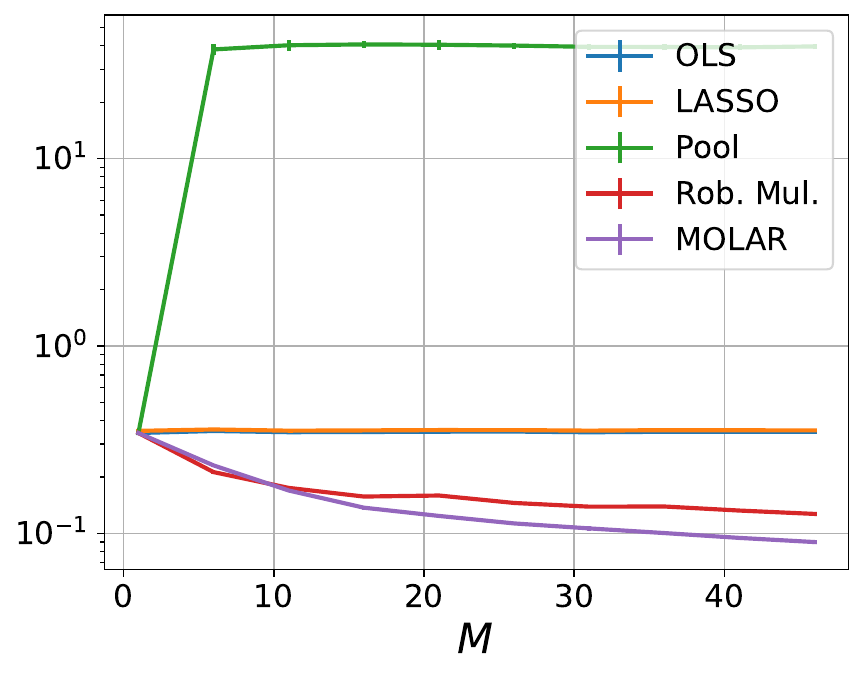}
    \hspace{-3mm}
    \includegraphics[height = 0.25\textwidth]{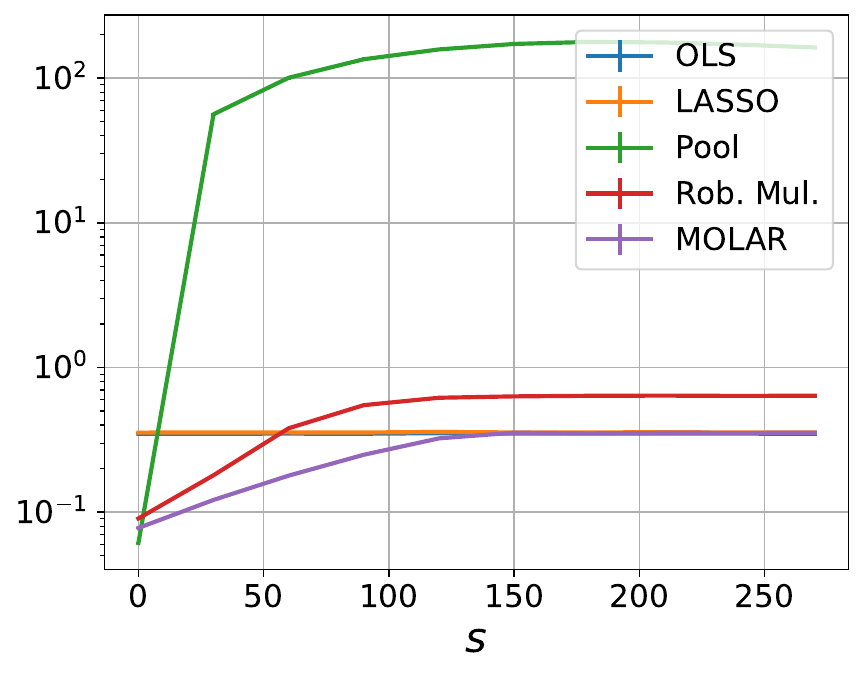}
    \caption{Average $\ell_1$ estimation error for multitask linear regression. (Left):  Fixing $s = 20, \,M = 30$ and varying $n$.
  (Middle): Fixing $s = 20,\,n = 5,000$ and varying $M$.
  (Right): Fixing  $M = 30, \,n = 5,000$ and varying $s$.
  The standard error bars  (barely visible) are obtained from ten independent trials. 
  }
    \label{fig:synthetic-ls}
\end{figure}

% \vspace{-5mm}
\section{Experiments}
% \vspace{-3mm}
In this section, we evaluate the performance of our method in both offline and online scenarios with synthetic and empirical datasets. We provide an overview of the experiments here and provide the remaining details in Appendix \ref{app:experi}.
For linear regression, we evaluate individual OLS estimates, LASSO estimates, a global OLS estimate via data pooling, the robust multitask estimate \citep{xu2021multitask}, and our \ours estimates, denoted by OLS, LASSO, Pool, RM, and  \ours below. 
For contextual bandits, we evaluate
the OLS Bandit \citep{goldenshluger2013linear}, LASSO Bandit \citep{Ren2020DynamicBL}, Trace Norm Bandit \citep{cella2022multi}, Robust Multitask Bandit \citep{xu2021multitask}, and our \oursb methods over multiple \textsf{Model-C} bandit instances\footnote{A few multitask bandit are not obviously applicable to
our setup: \cite{soare2014multi, gentile2014online} aggregate data from similar yet heterogeneous instances, leading  to linear growth in regret; \citep{kveton2021meta,cella2020meta,bastani2022meta} consider Bayesian meta-learning that require instances to be observed sequentially
rather than simultaneously to construct a  prior for instances.}, 
denoted by OLSB, LASSOB, TNB, RMB, and  MOLARB below. 
OLSB and LASSOB act by treating $M$  bandit instances independently, either via
OLS or LASSO. 
Trace Norm Bandit is a state-of-the-art multitask bandit method that leverages trace---nuclear---norm regularized estimates, 
improving accuracy when the parameters span a linear space of rank smaller than $d$. 
% See Appendix \ref{app:experi} for additional experimental details.
More experimental details and results can be found in Appendix~\ref{app:experi}.

% \vspace{-5mm}
\subsection{Numerical simulations}\label{sec:sync}
% \vspace{-3mm}

\textbf{Linear Regression. }\label{sec:synthetic-lr}
We first 
randomly sample $\beta^\star $ from the uniform distribution over the $(d-1)$-dimensional sphere $\mathbb{S}^{d-1}$ where $d=300$.
From the $d$ covariates of $\beta^\star$, we draw $s$ covariates uniformly at random 
and randomly assign new values sampled from the standard Gaussian distribution with re-normalization to preserve $\|\beta^{(m)}\|_2=\|\beta^\star\|$ for all $m\in[M]$.
We repeat this procedure $M$ times to obtain sparsely perturbed parameters $\{\beta^{(m)}\}_{m=1}^M\subseteq  \mathbb{S}^{d-1}$.
Then, $M$ datasets  $\{x_i^{(1)}\}_{i=1}^n,\dots,\{x_i^{(m)}\}_{i=1}^n$ with i.i.d.~$\cN(0, I_{d})$ features
are sampled for each task $m \in [M]$, each containing $n$ data points\footnote{We conduct similar experiments for correlated covariates and disparate task-wise sample sizes in Appendix~\ref{app:correlated-disparse}.}. 
The outcomes $\{y_i^{(1)}\}_{i=1}^n,\dots,\{y_i^{(m)}\}_{i=1}^n$ are set as $y_i^{(m)} = \langle x_i^{(m)}, \beta^{(m)}\rangle +\ep_i^{(m)}$ where $\ep_i^{(m)}$ are i.i.d.~$\cN(0, \sigma^2)$ noise with $\sigma= 0.1$. 
We conduct the simulations by varying the sample size $n$, the number of tasks $M$, and the number of heterogeneous covariates $s$. 
As the datasets have equal sample sizes, 
we take the averaged $\ell_1$ error $\frac{1}{M}\sum_{m=1}^{M}\|\widehat{\beta}^{(m)}-\beta^{(m)}\|_1$ as the performance metric.

 Since we have  sparse heterogeneity, 
 we expect RM and \ours
to outperform baseline methods, 
which is corroborated by the experimental results from Figure \ref{fig:synthetic-ls}. 
For \ours, 
the estimation error decreases as  $n$ and $M$ increase and  $s$ decreases, as revealed by Theorem \ref{thm:ls-fixed-design}. 
Furthermore, \ours outperforms baseline methods for most values of $n$, $M$, and $s$.
Other methods outperform \ours when $s$ is sufficiently large,
as shown in the right panel of Figure \ref{fig:synthetic-ls}, which highlights the crucial role of sparse heterogeneity. 
However, we remark that when $s$ is close to $d$,  the parameters $\{\beta^{(m)}\}_{m=1}^M$ are highly different, 
and no multitask approaches can provably outperform the individual OLS estimates.

Figure \ref{fig:synthetic-ls} also supports the theoretical predictions from  Table \ref{tab:lr-comparsion}. 
OLS  does not pool data 
and thus its estimation error does not vary 
with $M$ and $s$.
Since the  parameters are not sparse, 
we see no benefit in
using LASSO over OLS, and thus, their curves almost overlap. Also, due to the heterogeneity, pooling all datasets introduces a non-vanishing bias when $M>1$; 
even for large $n$ and $M$. 
Furthermore, these estimation errors grow as $s$ grows, due 
to the increasing heterogeneity. 
RM, while addressing heterogeneity, 
performs worse than \ours due to its sub-optimality discussed earlier.

We also perform simulations to support our theoretical results about the rate of the estimation error of \ours, presented in Theorem \ref{thm:ls-fixed-design}. We fix $n=10,000$ and generate $\{\beta^{(m)}\}_{m=1}^M$ in different dimensions $d$ but with a constant ratio $\rho = s/d$. As can be seen in Figure \ref{fig:rate_verify}, when $M$ and $\rho = s/d$ are fixed, the estimation error of \ours grows linearly as $d$ increases. 
The slopes of the curves, corresponding to $\Theta(\sigma (\rho+1/\sqrt{M})/\sqrt{n})$ in Theorem \ref{thm:ls-fixed-design}, decrease as $M$ grows and $\rho$ decays.
This aligns with our theoretical results.

\begin{figure}
    \centering
    \includegraphics[height = 0.25\textwidth]{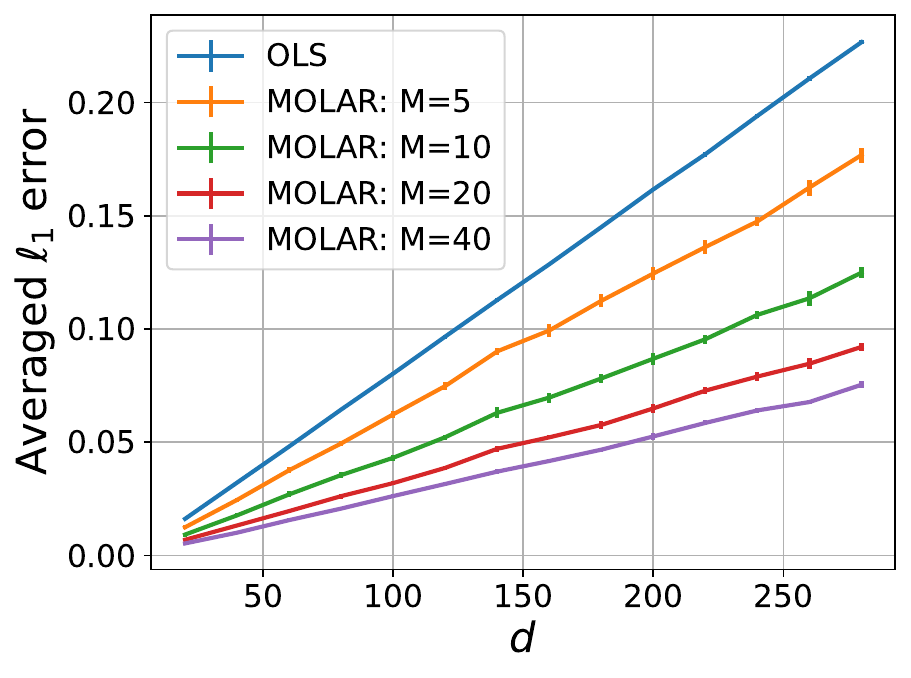}
    \hspace{10mm}
    \includegraphics[height = 0.25\textwidth]{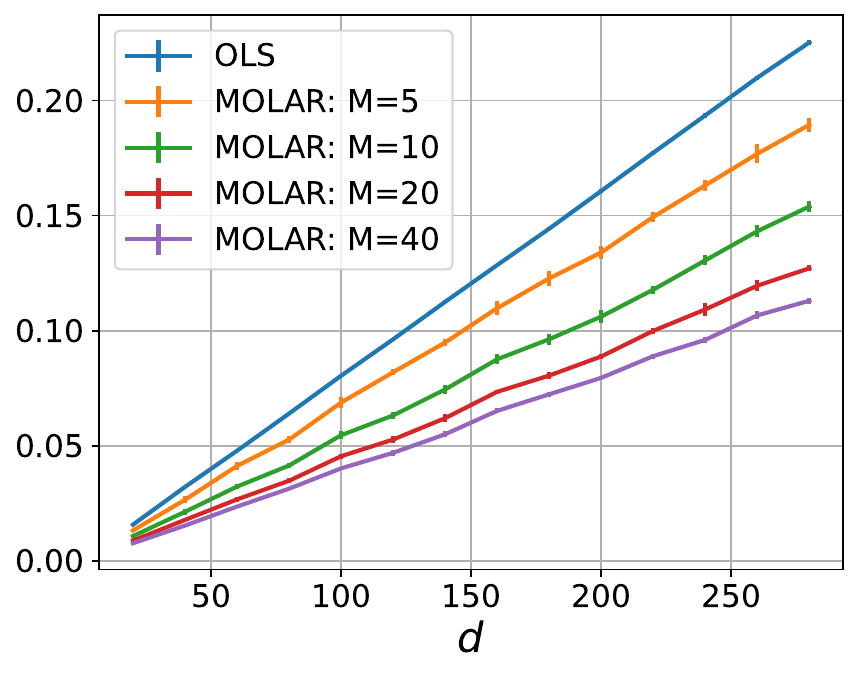}
    % \vspace{-5mm}
    \caption{Average $\ell_1$ estimation error for multitask linear regression with $n=10,000$ and $\rho=s/d$ fixed. (Left): $\rho=0.1$.
  (Right): $\rho=0.2$.
  The standard error bars are obtained from ten independent trials.}
    \label{fig:rate_verify}
\end{figure}

\vspace{2mm}
\noindent\textbf{Linear Contextual Bandits. } 
We
set $(d,s,M,K)= (30,2,20, 3)$ and randomly sample the activation probabilities $\{p_m\}_{m=1}^M$ from the uniform distribution on $[0,1]$. 
We then sample the sparsely heterogeneous parameters $\{\beta^{(m)}\}_{m=1}^M\subseteq  \mathbb{S}^{d-1}$  as in the linear-regression experiments. For any activated bandit $m$ at time $t$,  we independently sample the contexts $\{x_{t,a}^{(m)}\}_{a\in[K]}$  from $\cN(0,I_d)$ and the  sample reward noise $\ep_t^{(m)}\sim\cN(0, \sigma^2)$ with $\sigma= 0.5$.

%We compare MOLARB with baseline methods. 
We consider instances with a large, medium, and small activation probability respectively, as shown in Figure \ref{fig:synthetic-b}.
We observe that MOLARB outperforms all baseline methods. 
The advantage over OLSB and LASSOB is substantial, 
as they do not leverage collaboration across tasks. 
We observe that RMB and TNB outperform OLSB and LASSOB due to regularization.
TNB is slightly less accurate because the parameters do not necessarily have a low-rank structure.
Moreover, the difference
between OLSB and LASSOB is small, as the parameters are not sparse. 
Also, the cumulative regret of instances with smaller activation probabilities is lower than of the
ones with larger activation probabilities, due to fewer rounds of decision-making.
Further,  \oursb is computationally more efficient than LASSOB, TNB, and RMB which require solving optimization problems in each update.

\begin{figure}[t]
    \centering
    \includegraphics[height = 0.25\textwidth]{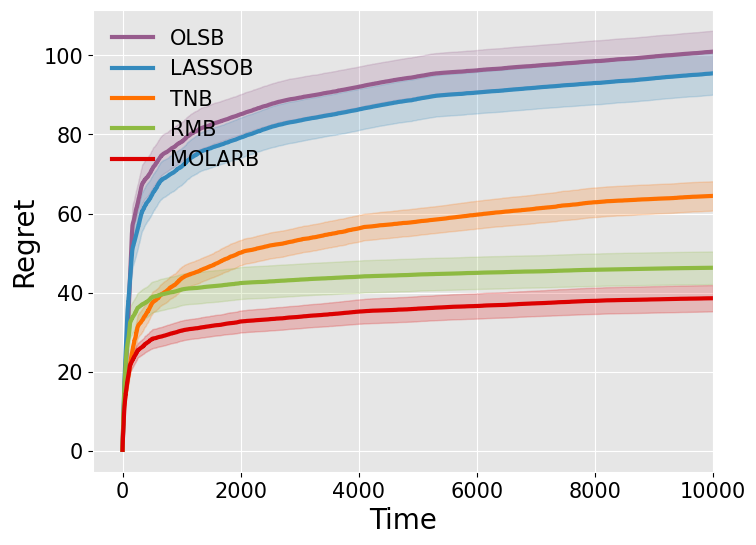}
    \hspace{-3.5mm}
    \includegraphics[height = 0.25\textwidth]{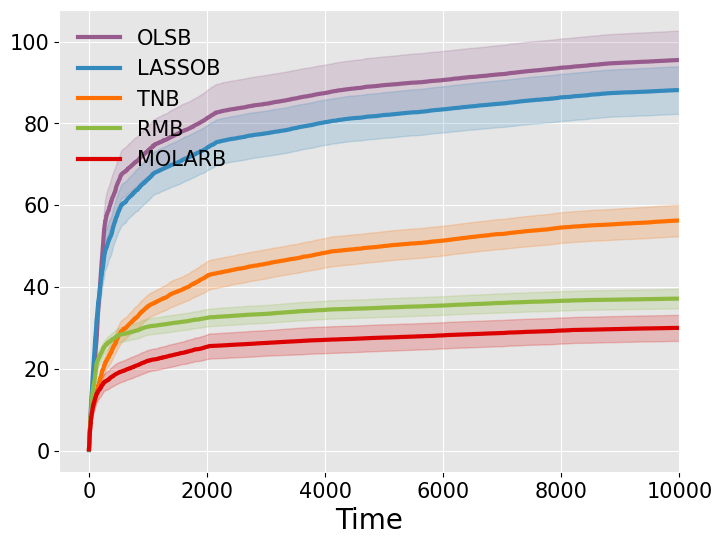}
    \hspace{-3.5mm}
    \includegraphics[height = 0.25\textwidth]{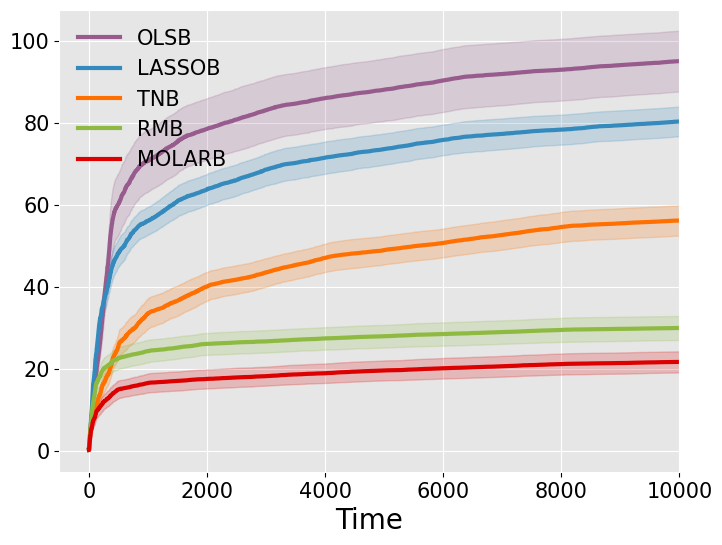}
    % \vspace{-10mm}
    \caption{Regret $R_T^{(m)}$ of instances with activation probability $0.778$ (Left), $0.466$ (Middle), $0.318$ (Right), respectively, where
    shaded regions depict the corresponding
95\% normal confidence intervals based on standard errors calculated over twenty independent trials. 
    }
    \label{fig:synthetic-b}
\end{figure}

% \vspace{-5mm}
\subsection{PISA Dataset}\label{pisa}
% \vspace{-3mm}
The \emph{Programme for International Student Assessment} (PISA) is a large-scale international study conducted by the Organisation for Economic Co-operation and Development (OECD).
% from over $60$ countries and economies. 
The study aims to evaluate the quality of education systems around the world by assessing the skills and knowledge of 15-year-old students in reading, mathematics, and science. 
This dataset has been widely used to gain insights into the impact of factors including teaching practices \citep{oecd2019teaching},  gender \citep{stoet2018gender}, and socioeconomic status \citep{kline2019socioeconomic} on student academic performance.

In this experiment, we use a part of the PISA2012 data across $M=15$ countries 
to learn linear predictors for individual countries,
treating each country as a  task.
After basic preprocessing detailed in the Appendix \ref{app:experi}, we have $57$ student-specific features and a continuous response assessing students' mathematics ability---the variable ``PV1MATH'', standardized.  
See Appendix \ref{app:experi} for additional experimental details including fractions of data used for training, validation, and testing,   hyperparameter choices, and robustness checks.
Figure \ref{fig:coef_plt} plots the differences in the coefficients across countries. The structure of sparse heterogeneity appears to be reflected in the dataset.

Our experiments include linear regression and contextual bandits. In linear regression, we estimate the linear coefficients of the processed features for predicting the response with the aforementioned estimation methods. 
We simulate the setup of \textsf{Model-C} as follows. 
We read the records of two students from each country with an activation probability proportional to its sample size.
The goal in each round is to select the student with a better ability in mathematics.
Recall that in \textsf{Model-C}, a $K$-armed bandit observes $K$ contexts at a time, with a shared parameter across the contexts generating the rewards.
Accordingly, there are $K=2$ arms and the reward is the mathematics score.
 Since the two data points are randomly drawn from the dataset of the same country without replacement, the population parameters are clearly identical.

\begin{table}[h]
\centering
\caption{The $\ell_1$ estimation errors and the averaged relative error (A.R.E.) on the PISA dataset, over $100$ independent random data splits.}
\begin{tabular}{lccccc}
\toprule
       Country              & OLS           & LASSO         & Pool          & RM     & \ours         \\\hline
Mexico      & $1.35\pm0.02$  &  $1.57\pm0.02$  &  $2.11\pm0.01$  &  { $\bf1.22\pm0.01$}  &  $1.32\pm0.01$  \\
Italy      & $2.05\pm0.04$  &  $1.59\pm0.01$  &  $2.34\pm0.01$  &  {$\bf1.53\pm0.02$}  &  $1.61\pm0.02$  \\
Spain      & $2.00\pm0.04$  &  $1.85\pm0.02$  &  $2.64\pm0.02$  &  $\bf1.66\pm0.02$  &  $\bf1.67\pm0.02$  \\
Canada      & $2.09\pm0.03$  &  $2.02\pm0.02$  &  $2.78\pm0.01$  &  $2.05\pm0.03$  &  $\bf1.78\pm0.03$  \\
Brazil      & $1.85\pm0.03$  &  $1.80\pm0.02$  &  $2.60\pm0.01$  &  $\bf1.60\pm0.02$  &  $1.76\pm0.02$  \\
Austrilia      & $2.52\pm0.04$  &  $1.92\pm0.02$  &  $2.15\pm0.01$  &  $1.99\pm0.02$  &  $\bf1.76\pm0.02$  \\
UK      & $2.53\pm0.03$  &  $2.15\pm0.02$  &  $2.32\pm0.01$  &  $1.93\pm0.02$  &  $\bf1.70\pm0.02$  \\
UAE      & $2.60\pm0.05$  &  $2.61\pm0.03$  &  $3.08\pm0.01$  &  $2.36\pm0.03$  &  $\bf2.19\pm0.03$  \\
Switzerland      & $2.94\pm0.04$  &  $2.66\pm0.03$  &  $3.29\pm0.01$  &  $2.88\pm0.03$  &  $\bf2.59\pm0.03$  \\
Qatar        & $2.85\pm0.04$  &  $2.63\pm0.04$  &  $4.24\pm0.01$  &  $\bf2.49\pm0.04$  &  $\bf2.56\pm0.03$  \\
Colombia      & $2.89\pm0.06$ &  $2.36\pm0.02$ &  $2.91\pm0.01$ &  $\bf2.06\pm0.02$ &  $2.25\pm0.02$ \\
Finland      & $3.41\pm0.05$ &  $\bf 2.29\pm0.02$ &  $2.97\pm0.01$ &  $2.65\pm0.03$ &  $2.49\pm0.03$ \\
Belgium      & $3.68\pm0.06$ &  $2.87\pm0.03$ &  $2.91\pm0.01$ &  $2.95\pm0.04$ &  $\bf2.55\pm0.03$ \\
Denmark      & $3.48\pm0.06$ &  $2.70\pm0.03$ &  $2.56\pm0.01$ &  $2.23\pm0.03$ &  $\bf1.87\pm0.02$ \\
Jordan      & $3.01\pm0.05$ &  $2.57\pm0.03$ &  $2.66\pm0.01$ &  $2.28\pm0.03$ &  $\bf2.08\pm0.03$ \\\hline
A.R.E.     & $100\%$   &  $85.72\pm0.37\%$    &  $106.05\pm0.47\%$   &  $81.30\pm0.33\%$    &  $\bf76.93\pm0.34\%$ \\
\bottomrule
\end{tabular}
\label{tab:pisa-off}
\end{table}

For offline experiments, we show the $\ell_1$ estimation errors of all methods on all individual countries in Table \ref{tab:pisa-off}.
The best result, \ie, outperforming all others regardless of standard deviations, is in bold font. 
For a global comparison, we define the averaged relative error $\sum_{m=1}^{(m)}\|\widehat{\beta}^{(m)}-\beta^{(m)}\|_1/(\sum_{m=1}^{(m)}\|\widehat{\beta}_{\rm ind}^{(m)}-\beta^{(m)}\|_1)$ where $\widehat{\beta}_{\rm ind}^{(m)}$ is the individual OLS estimate over task $m$. 
\ours outperforms other methods on most tasks and is the best in terms of the global error metric. 

For online experiments, we present the results for the tasks associated with Canada, UAE, and Denmark---which have activation probabilities $p_m= 0.64, \,0.32,\,0.22$, respectively---in Figure \ref{fig:pisa-on}. More figures can be found in Appendix~\ref{app:pisa_more}.
Again, we see that 
\oursb performs favorably compared to the baselines. 
This is more pronounced 
for the tasks with smaller activation probabilities, 
as suggested by the theory. 

\begin{figure}[t]
    \centering
    \includegraphics[height = 0.25\textwidth]{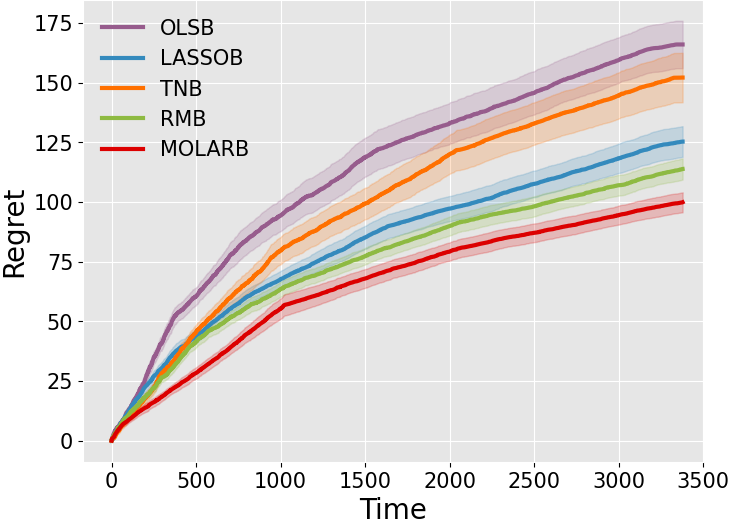}
    \hspace{-3.5mm}
    \includegraphics[height = 0.25\textwidth]{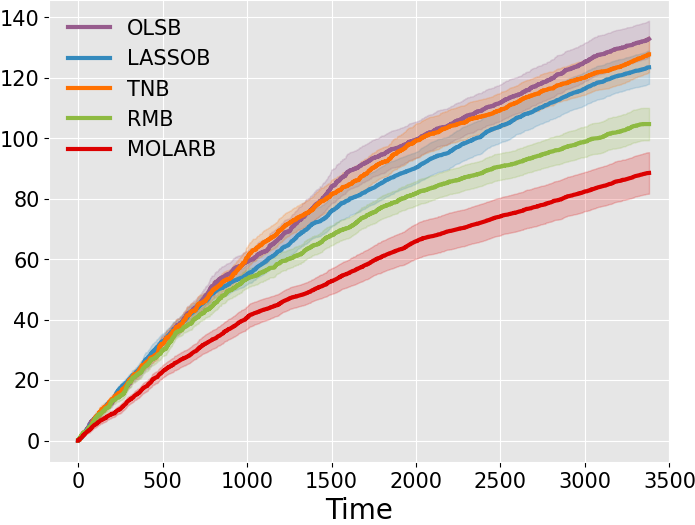}
    \hspace{-3.5mm}
    \includegraphics[height = 0.25\textwidth]{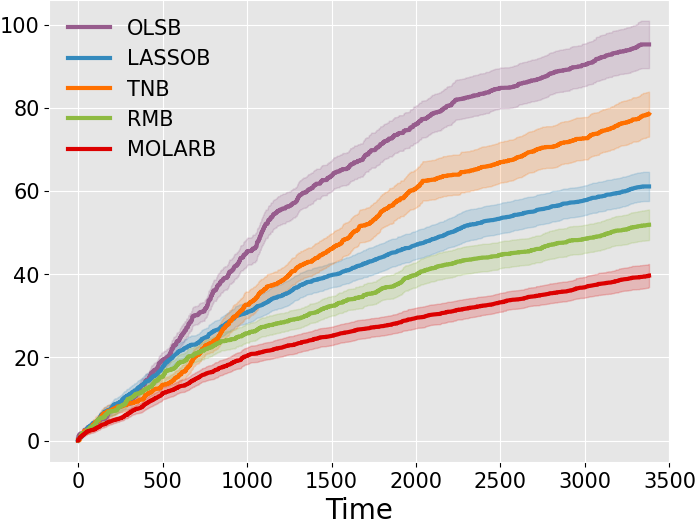}
    % \vspace{4mm}
    \caption{Regret $R_T^{(m)}$ of Canada, UAE, and Denmark of the PISA dataset. The
    shaded regions depict the corresponding
95\% normal confidence intervals based on standard errors calculated over twenty independent trials.}
    \label{fig:pisa-on}
\end{figure}

% \vspace{-5mm}
\section{Conclusion}
% \vspace{-3mm}
We consider multitask learning under sparse heterogeneity in both linear regression and linear contextual bandits.
For linear regression, we propose the \ours algorithm that collaborates on multiple datasets, improving accuracy compared to existing multitask methods. 
Applying \ours to linear contextual bandits, we also improve current regret bounds for individual bandit instances. To complement the upper bounds,
we establish lower bounds for multitask linear regression and contextual bandits, 
justifying the optimality of the proposed methods.
Our methods are also extended to generalized linear models and the construction of confidence intervals.
Our experimental results support our theoretical findings.
Future directions include investigating problem-specific 
optimal methods whose rate depends on $\{\beta^{(m)}\}_{m=1}^M$.

\section*{Acknowledgements}

We are grateful to Zhimei Ren for helpful discussions. Xinmeng Huang was supported in part by the NSF DMS 2046874 (CAREER), NSF CAREER award CIF-1943064;
 Donghwan Lee was supported in part by ARO W911NF-20-1-0080, DCIST, Air Force Office of Scientific Research Young Investigator Program (AFOSR-YIP) \#FA9550-20-1-0111 award.

{\small
\bibliography{references}
\bibliographystyle{plainnat-abbrev}
}

\newpage
\appendix

\section{More Related Works \& Notations}\label{app:ref}

\paragraph{Multitask Learning. }
When the data 
has components corresponding to multiple domains (also referred to as tasks or sources), 
multitask learning aims to develop methods that borrow information across tasks  \citep{caruana1998multitask}.
Multitask learning can be beneficial when the task-associated parameters are close in some sense, \eg, in the $\ell_2$ norm, or follow a common prior distribution \citep{raina2006constructing,hanneke2022no}. 
Popular multitask methods include regularizing the parameters to be estimated towards a common parameter---through ridge \citep{evgeniou2004regularized,hanzely2020lower}, $\ell_2$ \citep{duan2022adaptive}, kernel ridge \citep{evgeniou2005learning} penalties, etc---and 
clusteringpooling datasets based on similarity metrics  \citep{ben2010theory,crammer2008learning,Dobriban2018DistributedLR}. 
One can further leverage certain shared structures to improve rates of estimation. 
\cite{Tripuraneni2021ProvableMO,Du2021FewShotLV,Collins2021exploitingSR} study a low dimensional shared representation of task-specific models. 
\cite{lounici2009taking,singh2020online} consider the parameters for each task to be sparse and share the same support. \cite{bastani2021predicting,xu2021multitask,huang2022collaborative}
motivate and study sparse heterogeneity.

 % We follow the last stream of sparse heterogeneity. While previous work has shown improved bounds, less is known about the optimality under this structure.  We propose novel collaboration methods with minimax optimal guarantees.

\paragraph{Robust Statistics \& Learning. }
In robust statistics and learning \citep{huber1981robust,hampel2011robust} many methods have been developed that are resilient to unknown data corruption \citep{rousseeuw1991tutorial,Minsker2015GeometricMA}.
From the optimization perspective, methods to robustly aggregate gradients of the loss functions have been developed \citep{Su2016FaultTolerantMO,Blanchard2017ByzantineTolerantML,Yin2018ByzantineRobustDL}. 
Our setting is different and requires a novel analysis.

\paragraph{Notations.}
We use $:=$ or $\triangleq$ to introduce definitions.
For an integer $d\ge 1$, we write $[d]$ for both $\{1,\dots,d\}$ and $\{e_1,\dots,e_d\}\subseteq \RR^d$, where $e_k$ is the $k$-th canonical basis vector of $\RR^d$. We use $I_{d}$ to denote the $d\times d$ identity matrix. 
We let $\mathbb{B}_d$ denote the unit Euclidean ball centered at the origin in $\RR^d$.
For a vector $v\in\RR^d$, we denote its entries as  $v_1,\dots,v_d$.
We also denote 
$\smash{\|v\|_p = (\sum_{k\in[d]} |v_k|^p)^{1/p}}$ for all $p>0$, with $\|v\|_0$ defined as the number of non-zero covariates.
% For $v\in\RR^d$, we denote by $\mathbb{B}_s(v)$ the $s$-distinct neighborhood $\{v^\prime\in\RR^d:\|v^\prime-v\|_0\le s\}$.
% Given any index set $\cI \subset [d]$, 
% we write $|\cI|$ for its cardinality and denote by $[v]_{\cI}$ the sub-vector $(v_k:{k\in\cI})$ indexed by $\cI$.
For any $\cI\subseteq [M]$, given weights $\{w_m\}_{m=1}^M$ (or sample sizes $\{n_m\}_{m=1}^M$), we denote $W_{\cI}$ as $\sum_{m\in\cI}w_m$ and write $n_{\cI}$ for $\sum_{m\in\cI}n_m$. We also write $[v]_{\cI}$  and $v_{\cI}$ as the sub-vector of $v$ with entries in $\cI$
For a matrix $A\in\RR^{m\times n}$, we denote the $(i,j)$-th covariate of $A$ by  $[A]_{i,j}$ or $A_{i,j}$,
and the $i$-th row (resp., the $j$-the column) by $A_{i,\cdot}$ (resp., $A_{\cdot, j}$).
For two real numbers $a$ and $b$, we write $a\vee b$ and  $a\wedge b$ for $\max\{a,b\}$ and $\min\{a,b\}$, respectively.
% Recall that  a random variable  $X$ is  $\sigma^2$-sub-Gaussian for some $\sigma^2\ge 0$,  if  $\EE[X]=0$ and $\EE[\exp(\langle v, X\rangle)]\le \exp(\sigma^2 \|v\|^2/2)$.
For $\sigma^2>0$, we denote by $\subG(\sigma^2)$ the class of $\sigma^2$-sub-Gaussian random variables. For an event $E$, we write $\one(E)$ for the indicator of the event; so $\one(E)(x)=1$ if $x\in E$ and $\one(E)(x)=0$ otherwise.
We use the Bachmann-Landau asymptotic notations $\Omega(\cdot)$, $\Theta(\cdot)$, $O(\cdot)$ to absorb constant factors, and use $\tilde{\Omega}(\cdot)$, $\widetilde{O}(\cdot)$ to also absorb logarithmic factors
in various problem parameters specified in each case. 
Furthermore, we use probabilistic notations such as $O_P(a_{\{n_m\}_{m=1}^M})$ to denote quantities that are bounded by $a_{\{n_m\}_{m=1}^M}$ with overwhelming probabilities as $\min_{m\in[M]}{n_m}\to\infty$.
For a number $x\in\RR$, we use $(x)_{+}$ to denote its non-negative part, \ie, $x\one(x\geq 0)$.

\newpage
\section{Technical Lemmas}

\begin{lemma}[\sc Tail integral formula for expectation]\label{lem:integral}
For any non-negative, continuous random variable $Z$ with $\EE[Z]<\infty$ and any $q\ge 0$, it holds that 
\begin{equation*}
    \EE[Z\mathds{1}(Z\ge q)]=q\PP(Z\ge q)+\int_{q}^{\infty}\PP(Z\ge t)\d t.
\end{equation*}
\end{lemma}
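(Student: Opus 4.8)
The plan is to use the layer-cake (Fubini--Tonelli) representation of a non-negative random variable together with a splitting of the resulting integral at the threshold $q$, which reduces the identity to elementary bookkeeping of indicator functions.

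First I would write, for the non-negative variable $Z$, the pointwise identity $Z = \int_0^\infty \mathds{1}(t < Z)\,\d t$. Multiplying both sides by $\mathds{1}(Z \ge q)$ gives $Z\,\mathds{1}(Z\ge q) = \int_0^\infty \mathds{1}(t<Z)\,\mathds{1}(Z\ge q)\,\d t$, and I would split the integral at $t = q$. On $[0,q)$, whenever $Z \ge q$ one automatically has $t < q \le Z$, so the product of indicators equals $\mathds{1}(Z\ge q)$, contributing $q\,\mathds{1}(Z\ge q)$. On $[q,\infty)$, whenever $t < Z$ with $t \ge q$ one gets $Z > t \ge q$, so $\mathds{1}(Z\ge q) = 1$ automatically, contributing $\int_q^\infty \mathds{1}(t<Z)\,\d t$. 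Hence $Z\,\mathds{1}(Z\ge q) = q\,\mathds{1}(Z\ge q) + \int_q^\infty \mathds{1}(t<Z)\,\d t$ holds pointwise.

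Then I would take expectations and invoke Tonelli's theorem to exchange $\EE$ with $\int_q^\infty$, obtaining $\EE[Z\,\mathds{1}(Z\ge q)] = q\,\PP(Z\ge q) + \int_q^\infty \PP(Z > t)\,\d t$. Finally, since $Z$ is continuous, $\PP(Z > t) = \PP(Z \ge t)$ for every $t$, which yields exactly the claimed formula.

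The only step needing genuine care is the justification of the interchange of expectation and integral; this is where the hypotheses enter, as Tonelli requires the non-negativity of the integrand (guaranteed here) and the finiteness of the right-hand integral follows from $\EE[Z] < \infty$. An alternative route would be integration by parts on the Stieltjes integral $\int_q^\infty z\,\d F(z)$, with $F$ the distribution function of $Z$, where the boundary term at infinity vanishes precisely because $\EE[Z]<\infty$ forces $t\,\PP(Z\ge t)\to 0$; I would prefer the Fubini argument, however, since it sidesteps any discussion of boundary terms and handles the continuity point transparently.
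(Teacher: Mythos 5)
Your proof is correct: the pointwise layer-cake identity $Z=\int_0^\infty \mathds{1}(t<Z)\,\d t$, the split at $t=q$, the Tonelli interchange (which indeed needs only non-negativity, with $\EE[Z]<\infty$ serving merely to make both sides finite), and the use of continuity solely to pass from $\PP(Z>t)$ to $\PP(Z\ge t)$ are all sound. The paper gives no proof of its own---it cites the result as well known, pointing to \citep[Exercise 1.2.3]{vershynin2018high}---and your Fubini--Tonelli argument is precisely the canonical one underlying that citation, so your approach is essentially the same as the (implicit) one in the paper.
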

\begin{proof}
    The  result is well known \citep[e.g., Exercise 1.2.3 in][]{vershynin2018high}.
\end{proof}

\begin{lemma}[\sc Maximal Inequalites]
\label{lem:max-ineq}
For $\sigma^2>0$
and for $1 \le m \le M$, 
let $X_m \sim \subG(\sigma^2)$,  not necessarily independent, for all $1 \le m \le M$.
Then, it holds that
\begin{enumerate}
    \item $\mathbb{E}[ \max_{1 \le m \le M} X_m] \le \sigma \sqrt{2 \ln (M)}$; 
    
    \item $\mathbb{E}[ \max_{1 \le m \le M} |X_m|] \le \sigma \sqrt{2 \ln(2M)}$;
    
    \item For any $t\ge 0$, $\mathbb{P}\left( \max_{1 \le m \le M} X_m \ge t \right) \le M \exp( -{t^2}/(2\sigma^2) )$.
\end{enumerate}
\end{lemma}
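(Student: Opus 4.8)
The plan is to derive all three claims from the scalar sub-Gaussian moment generating function bound $\EE[\exp(\lambda X_m)]\le \exp(\sigma^2\lambda^2/2)$, which holds for every $\lambda\in\RR$ by applying the definition of $\subG(\sigma^2)$ with the one-dimensional vector $v=\lambda$. Since each argument only aggregates over the $M$ indices via a sum or a union bound, independence of the $X_m$ is never used, which is exactly why the lemma can drop that hypothesis.

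For the first claim I would use the exponential-moment (Chernoff--Cram\'er) method. For any $\lambda>0$, Jensen's inequality gives $\exp(\lambda\EE[\max_{m} X_m])\le \EE[\exp(\lambda\max_{m} X_m)]=\EE[\max_{m}\exp(\lambda X_m)]$, and bounding the maximum of nonnegative terms by their sum yields $\EE[\max_{m}\exp(\lambda X_m)]\le\sum_{m=1}^M\EE[\exp(\lambda X_m)]\le M\exp(\sigma^2\lambda^2/2)$. Taking logarithms and dividing by $\lambda$ produces
\begin{equation*}
\EE\Big[\max_{1\le m\le M} X_m\Big]\le \frac{\ln(M)}{\lambda}+\frac{\sigma^2\lambda}{2},
\end{equation*}
and optimizing the right-hand side by choosing $\lambda=\sqrt{2\ln(M)}/\sigma$ gives the stated bound $\sigma\sqrt{2\ln(M)}$.

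For the second claim I would apply the first claim to the enlarged collection of $2M$ variables $\{X_1,-X_1,\dots,X_M,-X_M\}$. Each $-X_m$ again lies in $\subG(\sigma^2)$, because the MGF bound holds for all real arguments (replacing $\lambda$ by $-\lambda$); and $\max_{1\le m\le M}|X_m|$ equals the maximum of these $2M$ sub-Gaussian variables. The first claim, with $2M$ in place of $M$, then delivers $\sigma\sqrt{2\ln(2M)}$. For the third claim I would combine a union bound with the standard sub-Gaussian tail estimate: the Chernoff bound gives, for each $m$ and any $t\ge 0$, $\PP(X_m\ge t)\le\inf_{\lambda>0}\exp(-\lambda t+\sigma^2\lambda^2/2)=\exp(-t^2/(2\sigma^2))$, after which a union bound over $m\in[M]$ yields $\PP(\max_{m} X_m\ge t)\le\sum_{m=1}^M\PP(X_m\ge t)\le M\exp(-t^2/(2\sigma^2))$.

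Every step is a direct application of convexity, summation over the index set, and the scalar MGF control, so there is no genuine obstacle. The only points requiring mild care are confirming that the definition of $\subG(\sigma^2)$ supplies two-sided MGF control (needed so that $-X_m\in\subG(\sigma^2)$ in the second claim), and that the Chernoff optimizations are carried out over $\lambda>0$ so the minimizers $\lambda=\sqrt{2\ln(M)}/\sigma$ and $\lambda=t/\sigma^2$ are admissible.
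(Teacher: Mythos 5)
Your proof is correct and complete. The paper itself does not spell out an argument for this lemma—it simply cites it as well known (referencing standard sources)—and what you give is precisely the canonical proof that such citations point to: the Chernoff--Cram\'er exponential-moment bound with $\lambda=\sqrt{2\ln(M)}/\sigma$ for the expectation of the maximum, symmetrization to the $2M$ variables $\{\pm X_m\}$ for the absolute-value version, and union bound plus the optimized Chernoff tail for the probability bound. You also correctly flag the two points that matter: the paper's definition of $\subG(\sigma^2)$ gives two-sided MGF control (so $-X_m\in\subG(\sigma^2)$), and independence is never used since every step aggregates via sums or suprema. The only edge case worth a word is $M=1$ in the first claim, where $\lambda=\sqrt{2\ln(M)}/\sigma=0$ is not admissible; but taking $\lambda\to 0^{+}$ in $\ln(M)/\lambda+\sigma^2\lambda/2$ recovers the bound $\EE[X_1]\le 0$, which holds since sub-Gaussian variables here are centered, so nothing breaks.
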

\begin{proof}
     The  result is well known \cite[e.g., ][]{koltchinskii2002rosenthal}. 
\end{proof}

\begin{lemma}[\sc Bernstein's inequality; \cite{Uspensky1937IntroductionTM}]\label{lem:bern}
Let $Z_{1}, \ldots, Z_{n}$ be $i . i . d .$  random variable  with $|Z_1-\EE[Z_1]|\le b$ and $\mathrm{Var}(Z_1)=\sigma^2>0$,  and let $\bar{Z}=\frac{1}{n}\sum_{i=1}^nZ_i$. 
Then for any $\delta \ge 0$,
\begin{equation}\label{eqn:jgowmfqww}
    \max\{\mathbb{P}(\bar{Z}-\mathbb{E}[\bar{Z}] > \delta),\mathbb{P}(\bar{Z}-\mathbb{E}[\bar{Z}] <- \delta)\} \le\exp \left(-\frac{n \delta^{2}}{2(\sigma^2+b\delta)}\right).
\end{equation}
\end{lemma}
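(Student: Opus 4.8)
The plan is to use the classical Chernoff--Cram\'er moment generating function (MGF) method. First I would reduce to centered variables by setting $W_i = Z_i - \mathbb{E}[Z_i]$, so that $\mathbb{E}[W_i]=0$, $|W_i|\le b$, and $\mathrm{Var}(W_1)=\sigma^2$. The two tails are symmetric: since $\{-W_i\}$ satisfies the very same hypotheses (same bound $b$, same variance $\sigma^2$), it suffices to prove the upper-tail bound on $\mathbb{P}(\sum_{i=1}^n W_i > n\delta)$ and then apply the identical argument to $\{-W_i\}$ to control $\mathbb{P}(\bar Z-\mathbb{E}[\bar Z]<-\delta)$. For any $\lambda>0$, Markov's inequality applied to $e^{\lambda\sum_i W_i}$, together with independence, gives
\begin{equation*}
\mathbb{P}\Big(\sum_{i=1}^n W_i > n\delta\Big)\le e^{-\lambda n\delta}\big(\mathbb{E}[e^{\lambda W_1}]\big)^n .
\end{equation*}

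The main quantitative step is controlling the single-variable MGF. Expanding $e^{\lambda W_1}$, using $\mathbb{E}[W_1]=0$ to kill the linear term, and invoking the moment bound $\mathbb{E}[|W_1|^k]\le b^{k-2}\mathbb{E}[W_1^2]=b^{k-2}\sigma^2$ for $k\ge 2$ (a direct consequence of $|W_1|\le b$), I would obtain
\begin{equation*}
\mathbb{E}[e^{\lambda W_1}]\le 1+\frac{\sigma^2}{b^2}\sum_{k\ge 2}\frac{(\lambda b)^k}{k!}=1+\frac{\sigma^2}{b^2}\big(e^{\lambda b}-1-\lambda b\big)\le \exp\Big(\frac{\sigma^2}{b^2}\big(e^{\lambda b}-1-\lambda b\big)\Big),
\end{equation*}
where the last inequality uses $1+x\le e^x$. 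Raising to the $n$-th power and combining with the Chernoff bound, the exponent becomes $-\lambda n\delta + (n\sigma^2/b^2)(e^{\lambda b}-1-\lambda b)$ for every $\lambda>0$.

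To reach the clean stated form I would avoid solving the exact optimization and instead use the elementary relaxation $e^{u}-1-u\le \tfrac{u^2/2}{1-u/3}$ (valid for $0\le u<3$) with $u=\lambda b$, followed by the explicit choice $\lambda=\delta/(\sigma^2+b\delta/3)$; a short computation collapses the exponent to $-n\delta^2/\big(2(\sigma^2+b\delta/3)\big)$, the standard Bernstein rate. Since $\sigma^2+b\delta/3\le\sigma^2+b\delta$, this bound is in turn at most $\exp\!\big(-n\delta^2/(2(\sigma^2+b\delta))\big)$, which is exactly the claimed inequality. The only genuinely delicate point is this MGF relaxation together with the choice of $\lambda$: one must pick a bound tight enough to recover the quadratic-over-linear rate yet simple enough to evaluate in closed form, while everything else (centering, the symmetry reduction, and the Chernoff step) is routine bookkeeping. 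As the result is entirely classical, in the paper it suffices to cite a standard reference such as \cite{Uspensky1937IntroductionTM}, as the authors do.
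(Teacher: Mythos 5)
Your proposal is correct and complete: the Chernoff--Cram\'er argument with the moment bound $\EE[|W_1|^k]\le b^{k-2}\sigma^2$, the relaxation $e^{u}-1-u\le \frac{u^2/2}{1-u/3}$, and the choice $\lambda=\delta/(\sigma^2+b\delta/3)$ yields the sharper exponent $-n\delta^2/\bigl(2(\sigma^2+b\delta/3)\bigr)$, which dominates the stated bound since $\sigma^2+b\delta/3\le\sigma^2+b\delta$. The paper itself supplies no proof and simply cites \cite{Uspensky1937IntroductionTM}, and your derivation is exactly the standard classical argument behind that citation, so there is no divergence to discuss.
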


\begin{lemma}[\sc Properties of Orlicz norm \citep{Smithies1962ConvexFA}]\label{lem:orlicz}
    For any $\alpha \in (0,2]$, the following properties hold when $\|Z\|_{\Psi_\alpha}$ exists.
    \begin{enumerate}
        \item Normalization: $\EE[\Psi_\alpha(|Z|/\|Z\|_{\Psi_\alpha})]\le 1$.
        \item Homogeneity: $\|cZ\|_{\Psi_\alpha}=c\|Z\|_{\Psi_\alpha}$ for any $c\in\RR$.
        \item Deviation inequality: $\PP(|Z|\ge t)\le 2 \exp(-(t/\|Z\|_{\Psi_\alpha})^\alpha)$.
    \end{enumerate}
\end{lemma}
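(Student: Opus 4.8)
The plan is to derive all three properties directly from the variational definition of the Orlicz norm, namely $\|Z\|_{\Psi_\alpha} = \inf\{t > 0 : \EE[\Psi_\alpha(|Z|/t)] \le 1\}$ with $\Psi_\alpha(x) = \exp(x^\alpha) - 1$, using only monotonicity and continuity of $\Psi_\alpha$ on $[0,\infty)$ together with Markov's inequality. Throughout I would write $N := \|Z\|_{\Psi_\alpha}$ and assume $N \in (0,\infty)$, since the degenerate cases $N = 0$ and $N = \infty$ are immediate.

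For the normalization property, I would take a decreasing sequence $t_k \downarrow N$ with $\EE[\Psi_\alpha(|Z|/t_k)] \le 1$, which exists by the definition of the infimum. Since $\Psi_\alpha$ is continuous and nondecreasing, the integrands $\Psi_\alpha(|Z|/t_k)$ increase pointwise to $\Psi_\alpha(|Z|/N)$ as $t_k \downarrow N$; the monotone convergence theorem then yields $\EE[\Psi_\alpha(|Z|/N)] = \lim_k \EE[\Psi_\alpha(|Z|/t_k)] \le 1$. For homogeneity, for $c \neq 0$ I would substitute $t = |c|s$ in the defining infimum to obtain $\|cZ\|_{\Psi_\alpha} = \inf\{|c|s : \EE[\Psi_\alpha(|Z|/s)] \le 1\} = |c|\,\|Z\|_{\Psi_\alpha}$, with the $c = 0$ case being trivial. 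As worded the identity should read $|c|$ rather than $c$; this is the only point needing a small correction.

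For the deviation inequality, I would apply Markov's inequality to the nonnegative increasing transform $\Psi_\alpha$. For any $t > 0$, monotonicity gives $\PP(|Z| \ge t) = \PP(\Psi_\alpha(|Z|/N) \ge \Psi_\alpha(t/N))$, and Markov's inequality together with the normalization property just proved bounds this by $1/\Psi_\alpha(t/N) = 1/(\exp((t/N)^\alpha) - 1)$. To convert this into the stated form I would split on the size of $x := (t/N)^\alpha$: when $x \ge \ln 2$ the elementary inequality $1/(e^x - 1) \le 2e^{-x}$ (equivalent to $e^x \ge 2$) applies, while when $x \le \ln 2$ one has $2e^{-x} \ge 1 \ge \PP(|Z| \ge t)$, so the bound holds trivially. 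Combining the two regimes gives $\PP(|Z| \ge t) \le 2\exp(-(t/N)^\alpha)$ for all $t > 0$.

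The argument is a sequence of standard manipulations, so there is no serious obstacle; the only steps requiring care are the monotone-convergence justification of normalization, which needs continuity of $\Psi_\alpha$ at $N$, and the two-regime case split in the deviation bound, which is precisely what lets the crude Markov estimate $1/(e^x-1)$ be replaced by the cleaner factor $2e^{-x}$ uniformly over all $t > 0$.
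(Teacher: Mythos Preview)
Your proof is correct and entirely standard: monotone convergence for normalization, a change of variables for homogeneity (with the right correction that the constant should be $|c|$), and Markov's inequality plus the two-regime split for the tail bound. The paper does not actually supply a proof of this lemma; it is stated as a technical lemma with a citation to \citet{Smithies1962ConvexFA} and used without further justification, so there is no approach to compare against.
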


\begin{lemma}[\sc Berry-Esseen theorem \citep{shevtsova2010improvement}]\label{lem:berry}
    Given independent random variables $\{Z_t\}_{i=1}^n$ with $\EE[Z_i]=0$, $\EE[Z_i]=\sigma_i^2\ge 0$, and $\EE[|Z_i|^3]=\rho_i<\infty$, let $S_n = {\sum_{i=1}^n Z_i}/{\sqrt{\sum_{i=1}^n \sigma_i^2}}$ be the normalized  sum, and denote $F_n$ the c.d.f.~of $S_n$, and $\Phi$ the c.d.f.~of the standard normal distribution. It holds that 
    \begin{equation*}
        \sup_{z\in \RR}|F_n(z)-\Phi(z)|\le 0.6 \sum_{t=1}^n \rho_t /\left(\sum_{t=1}^n \sigma_t^2\right)^{3/2}.
    \end{equation*}
    % Note that the result also holds when $\sigma_1=\cdots=\sigma_n=0$ by viewing the RHS as $+\infty$.
\end{lemma}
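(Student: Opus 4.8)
The statement to be proved is the Lyapunov form of the Berry--Esseen theorem (Lemma \ref{lem:berry}); the plan is to prove it by the classical Fourier-analytic route, combining Esseen's smoothing inequality with a Taylor expansion of the characteristic function of the normalized sum. Write $B^2=\sum_{t=1}^n\sigma_t^2$ for the variance of $\sum_t Z_t$ (so $S_n$ has unit variance), let $\phi_t(u)=\EE[e^{iuZ_t}]$ be the characteristic function of $Z_t$, and set $f(u)=\prod_{t=1}^n\phi_t(u/B)$, the characteristic function of $S_n$. Denote by $L:=\sum_t\rho_t/B^3$ the Lyapunov ratio, which is exactly the right-hand side we must dominate. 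Since $\sup_z|F_n(z)-\Phi(z)|\le 1$ always, it suffices to treat the regime where $L$ is below a fixed absolute constant, as otherwise the claim holds trivially by taking $C_{\rm be}$ large. I will also repeatedly use the termwise power-mean inequality $\sigma_t^3\le\rho_t$, which in particular gives $\max_t\sigma_t\le L^{1/3}B$.

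First I would invoke Esseen's smoothing inequality: since $\Phi$ has density bounded by $1/\sqrt{2\pi}$ and characteristic function $e^{-t^2/2}$, for every $T>0$,
\[
\sup_{z\in\RR}|F_n(z)-\Phi(z)|\le \frac{1}{\pi}\int_{-T}^{T}\left|\frac{f(t)-e^{-t^2/2}}{t}\right|\d t+\frac{24}{\pi\sqrt{2\pi}\,T}.
\]
I would then choose $T=c_0/L$ for a small absolute constant $c_0$, so that the second term is already $O(L)$, and the problem reduces to showing that the frequency integral is $O(L)$ as well.

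The routine part is the low-frequency range $|t|\le L^{-1/3}$. Using $|e^{ix}-1-ix+x^2/2|\le |x|^3/6$, each factor satisfies $\phi_t(u)=1-\tfrac12\sigma_t^2u^2+\theta_t\rho_t|u|^3/6$ with $|\theta_t|\le1$. Taking moduli (valid since $\sigma_t^2u^2\le 2$ on this range) and then logarithms gives $|f(t)|\le\exp(-t^2/2+L|t|^3/6)$, i.e. Gaussian decay. Writing $f(t)=e^{\sum_t\psi_t}$ with $\psi_t=\log\phi_t(t/B)=-\tfrac12\sigma_t^2t^2/B^2+r_t$, a second-order expansion of the logarithm yields $\sum_t\psi_t=-t^2/2+E(t)$ with $|E(t)|\le C(L|t|^3+B^{-4}t^4\sum_t\sigma_t^4)$. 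The cubic part sums to a multiple of $L|t|^3$; the quartic part is controlled by the refined bound $\sum_t\sigma_t^4\le(\max_t\sigma_t)\sum_t\sigma_t^3\le L^{1/3}B\cdot\rho=L^{4/3}B^4$, so $|E(t)|\le C(L|t|^3+L^{4/3}t^4)$, which is bounded on $|t|\le L^{-1/3}$. Hence $|f(t)-e^{-t^2/2}|=|e^{-t^2/2}||e^{E(t)}-1|\le C'e^{-t^2/2}(L|t|^3+L^{4/3}t^4)$, and dividing by $|t|$ and integrating against the Gaussian factor gives $C''(L+L^{4/3})=O(L)$ since $L\le1$.

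The main obstacle is the intermediate range $L^{-1/3}\le|t|\le T\asymp 1/L$. Here $e^{-t^2/2}$ is super-exponentially small, so its contribution is negligible, and after the triangle inequality the task is to show $\int_{L^{-1/3}}^{1/L}|f(t)|/|t|\,\d t=O(L)$. The naive bound $|\prod_t\phi_t|\le 1$ loses all decay and makes this integral diverge at the scale $T\asymp1/L$, so the per-factor Taylor expansion used above no longer suffices: it only yields Gaussian decay of the product up to $|t|\asymp L^{-1/3}$. Obtaining the sharp linear-in-$L$ rate therefore requires a genuinely uniform bound on the characteristic function $|f(t)|$ of $S_n$ at these moderately large frequencies, established by exploiting that enough of the factors $|\phi_t(t/B)|$ are bounded away from $1$ (equivalently, a careful accounting of how many coordinates are ``active'' at frequency $t$). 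This is the delicate step that distinguishes the sharp $O(L)$ bound from the easier $O(L^{1/3})$ estimate; once it is in place, combining the low- and intermediate-frequency contributions with the $O(L)$ smoothing remainder completes the proof. Given that the result is classical, in the paper I would ultimately cite \citet{berry1941accuracy} rather than reproduce this argument in full.
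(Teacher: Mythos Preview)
Your proposal is correct: the paper does not prove this lemma at all but simply states it with the citation \citep{berry1941accuracy}, exactly as you anticipate in your last sentence. The Fourier-analytic sketch you give (Esseen's smoothing inequality plus Taylor expansion of the characteristic function, with the delicate intermediate-frequency range) is the standard route to the Lyapunov form, so nothing further is needed here.
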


\begin{lemma}[\sc Generalized Hanson-Wright inequality \citep{Gtze2021ConcentrationIF,sambale2020some}]\label{lem:hw-ineq}
% {\xh}{Notation.}
    For any $\alpha \in(0,2]$, let $Z_1,\dots,Z_n$ be i.i.d.~zero-mean  random
variables with $\|Z_1\|_{\Psi_\alpha}\le \sigma$  and $A=(a_{i,j})$ be a symmetric matrix.
Then there is an absolute constant $c_{\rm hw}$ such  that for any $t\ge 0$, 
\begin{equation*}
    \PP\left(\left|\sum_{i,j\in [n]}a_{i,j}Z_iZ_j-\sum_{i\in [n]}a_{i,i}\mathrm{Var}(Z_1)\right|\ge t\right)\le 2\exp\left(-\frac{1}{c_{\rm hw}}\min\left\{\frac{t^2}{\sigma^4\|A\|_{\rm F}^2},\left(\frac{t}{\sigma^2\|A\|_{\rm op}^2}\right)^{\alpha /2}\right\}\right)
\end{equation*}
where $\|\cdot\|_{\rm F}$ and $\|\cdot\|_{\rm op}$ indicates the Frobenius norm and the operator norm, respectively.
In particular, when $A=vv^\top$ with $v=(v_1,\dots,v_n)^\top \in\RR^n$, it holds for any $t\ge \sigma^2\|v\|_2^2$ that 
\begin{equation*}
    \PP\left(\left|\left(\sum_{i\in [n]}v_{i}Z_i\right)^2-\|v\|_2^2\mathrm{Var}(Z_1)\right|\ge t\right)\le 2\exp\left(-\frac{1}{c_{\rm hw}}\left(\frac{t}{\sigma^2\|v\|_{2}^2}\right)^{\alpha/2}\right).
\end{equation*}
\end{lemma}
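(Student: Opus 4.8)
The general tail bound is precisely the generalized Hanson-Wright inequality established in \cite{Gtze2021ConcentrationIF,sambale2020some}, so the plan is to invoke that result directly for the first display and then derive the ``in particular'' rank-one case as a short corollary. First I would specialize the quadratic form: taking $A = vv^\top$, so that $a_{i,j} = v_i v_j$, gives $\sum_{i,j\in[n]} a_{i,j} Z_i Z_j = (\sum_{i\in[n]} v_i Z_i)^2$ and $\sum_{i\in[n]} a_{i,i}\mathrm{Var}(Z_1) = \|v\|_2^2 \mathrm{Var}(Z_1)$, so that the event appearing in the general inequality is exactly the event in the specialized claim.

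Next I would compute the two matrix norms of the rank-one symmetric matrix $A = vv^\top$. Its unique nonzero eigenvalue is $\|v\|_2^2$, with eigenvector $v/\|v\|_2$, so $\|A\|_{\rm op} = \|v\|_2^2$; and $\|A\|_{\rm F}^2 = \sum_{i,j\in[n]} v_i^2 v_j^2 = \|v\|_2^4$, hence $\|A\|_{\rm F} = \|v\|_2^2$ as well. Substituting these, the sub-Gaussian scale $\sigma^4\|A\|_{\rm F}^2$ becomes $\sigma^4\|v\|_2^4$ and the sub-exponential scale $\sigma^2\|A\|_{\rm op}$ becomes $\sigma^2\|v\|_2^2$, so the exponent reduces to $-\tfrac{1}{C_{\rm hw}}\min\{t^2/(\sigma^4\|v\|_2^4),\,(t/(\sigma^2\|v\|_2^2))^{\alpha/2}\}$.

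The final step is to show that the sub-exponential branch attains this minimum once $t \ge \sigma^2\|v\|_2^2$. Writing $u = t/(\sigma^2\|v\|_2^2)$, the two competing quantities are $u^2$ and $u^{\alpha/2}$; since $u \ge 1$ and $\alpha \le 2$ we have $u^{\alpha/2} \le u^2$, so the minimum equals $u^{\alpha/2}$ and the exponent becomes $-\tfrac{1}{C_{\rm hw}}(t/(\sigma^2\|v\|_2^2))^{\alpha/2}$, which is the stated bound. There is no substantive obstacle here, as the entire analytic difficulty is absorbed into the cited inequality; the only point deserving care is the threshold $t \ge \sigma^2\|v\|_2^2$, which is exactly what forces $u \ge 1$ and hence guarantees that the sub-exponential term, rather than the sub-Gaussian term, controls the tail in this regime.
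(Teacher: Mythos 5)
Your proposal is correct and matches the paper's treatment: the paper states this lemma as a cited result from \cite{Gtze2021ConcentrationIF,sambale2020some} with no written proof, and the rank-one case is exactly the routine specialization you carry out ($\|vv^\top\|_{\rm op}=\|vv^\top\|_{\rm F}=\|v\|_2^2$, then $u^{\alpha/2}\le u^2$ for $u=t/(\sigma^2\|v\|_2^2)\ge 1$). Note you implicitly read the sub-exponential scale as $\sigma^2\|A\|_{\rm op}$ rather than the paper's literal $\sigma^2\|A\|_{\rm op}^2$, which is the correct form in the cited references and the only one consistent with the specialized display, so the exponent $\|A\|_{\rm op}^2$ in the general statement is a typo that your derivation silently (and rightly) corrects.
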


\begin{lemma}[\sc Joint convexity of the KL-divergence \citep{Cover1991ElementsOI}]\label{lem:convexity}
The Kullback-Leibler divergence $D_{\rm KL}(P\,\|\,Q)$ is jointly convex
in its arguments $P$ and $Q$: let $P_1$, $P_2$, $Q_1$, $Q_2$ be distributions on a common set $\cX$, then for any $\lambda\in[0,1]$, it holds that 
\begin{equation*}
    D_{\rm KL}(\lambda P_1 +(1-\lambda)P_2\,\|\,\lambda Q_1+(1-\lambda)Q_2)\le \lambda D_{\rm KL}(P_1\,\|\,Q_1) + (1-\lambda) D_{\rm KL}(P_2\,\|\,Q_2).
\end{equation*}
More generally, if the parameter $\theta$ follows some prior $R$ and if conditioned on the parameter $\theta$, the random variables $X\sim P_\theta$ and $Y\sim Q_{\theta}$, then
\begin{equation*}
    D_{\rm KL}(P_\theta \circ R\,\|\,Q_\theta \circ R)\le\EE_{\theta \sim R}[D_{\rm KL}(P_\theta\,\|\,Q_\theta)],
\end{equation*}
\end{lemma}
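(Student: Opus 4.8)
The plan is to reduce the whole statement to the joint convexity of a single scalar function and then integrate. First I would recall the elementary fact that $g(t)=t\ln t$ is convex on $(0,\infty)$, and that its associated \emph{perspective} $f(a,b):=b\,g(a/b)=a\ln(a/b)$ is jointly convex on the half-plane $\{(a,b):b>0\}$, extended by the usual conventions $f(0,b)=0$ and $f(a,0)=+\infty$ for $a>0$. This can be checked directly: writing $f=a\ln a-a\ln b$, the Hessian is $\left(\begin{smallmatrix}1/a & -1/b\\ -1/b & a/b^2\end{smallmatrix}\right)$, which has nonnegative diagonal and vanishing determinant, hence is positive semidefinite. (Equivalently, the epigraph of a perspective is the linear image of the convex epigraph of $g$.)

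For the first inequality, let $p_i,q_i$ denote densities of $P_i,Q_i$ with respect to a common dominating measure $\nu$ on $\cX$, e.g.\ $\nu=P_1+P_2+Q_1+Q_2$. For $\nu$-almost every $x$, applying joint convexity of $f$ to the points $(p_1(x),q_1(x))$ and $(p_2(x),q_2(x))$ with weights $\lambda,1-\lambda$ gives
\[
f\big(\lambda p_1(x)+(1-\lambda)p_2(x),\ \lambda q_1(x)+(1-\lambda)q_2(x)\big)
\le \lambda f(p_1(x),q_1(x))+(1-\lambda)f(p_2(x),q_2(x)).
\]
Integrating both sides against $\nu$ and recognizing the left-hand side as $D_{\rm KL}(\lambda P_1+(1-\lambda)P_2\,\|\,\lambda Q_1+(1-\lambda)Q_2)$ and the right-hand side as $\lambda D_{\rm KL}(P_1\|Q_1)+(1-\lambda)D_{\rm KL}(P_2\|Q_2)$ yields the claim. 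For the general mixture statement, I would upgrade the two-point bound to an arbitrary prior $R$ by the same pointwise convexity combined with Jensen's inequality: for each $x$,
\[
f\Big(\textstyle\int p_\theta(x)\,dR(\theta),\ \int q_\theta(x)\,dR(\theta)\Big)\le \int f(p_\theta(x),q_\theta(x))\,dR(\theta),
\]
and integrating over $x$ and interchanging the order of integration gives $D_{\rm KL}(P_\theta\circ R\,\|\,Q_\theta\circ R)\le \EE_{\theta\sim R}[D_{\rm KL}(P_\theta\|Q_\theta)]$. Alternatively—and perhaps more cleanly—I would derive the mixture bound from the chain rule and the data-processing inequality: form the joint laws of $(\theta,X)$ under which $\theta\sim R$ and $X\mid\theta$ is $P_\theta$ (resp.\ $Q_\theta$); the chain rule gives $D_{\rm KL}(R\otimes P_\theta\,\|\,R\otimes Q_\theta)=\EE_{\theta\sim R}[D_{\rm KL}(P_\theta\|Q_\theta)]$, while marginalizing out $\theta$ can only decrease the divergence, which is exactly the stated bound.

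The main obstacle here is measure-theoretic bookkeeping rather than any substantive idea: one must fix a common dominating measure, handle the boundary conventions where a density vanishes so that $f$ takes the value $0$ or $+\infty$ consistently (and $f$ remains lower semicontinuous), and justify the interchange of the $x$- and $\theta$-integrations via Tonelli, which is legitimate because the integrand is bounded below by a fixed integrable affine function. Once these conventions are in place, the convexity of $t\ln t$ does all the work.
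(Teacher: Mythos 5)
Your proposal is correct, but note that the paper does not actually prove this lemma: it is imported wholesale from \citet{Cover1991ElementsOI} as a known result, so you have supplied an argument where the paper only gives a citation. Your route---joint convexity of the perspective $f(a,b)=a\ln(a/b)$ of $t\mapsto t\ln t$ (your Hessian computation checks out: diagonal entries $1/a$ and $a/b^2$ are nonnegative and the determinant vanishes), followed by pointwise application against a common dominating measure---is essentially the classical textbook proof, equivalent to the log-sum inequality that Cover and Thomas themselves use, so in spirit you have reconstructed the cited source's argument rather than found a new one. The genuinely valuable addition is your second derivation of the mixture form via the chain rule plus data processing: writing $D_{\rm KL}(R\otimes P_\theta\,\|\,R\otimes Q_\theta)=\EE_{\theta\sim R}[D_{\rm KL}(P_\theta\,\|\,Q_\theta)]$ and marginalizing out $\theta$ is cleaner than the Jensen--Tonelli route, because it sidesteps the one place where your first argument requires real care: for the general prior $R$ you need a jointly measurable family of densities $p_\theta(x),q_\theta(x)$ with respect to a \emph{single} dominating measure valid for $R$-almost every $\theta$ simultaneously (a Markov-kernel hypothesis), and the natural candidate $\nu=P_\theta\circ R+Q_\theta\circ R$ only dominates each $P_\theta$ up to a null set that a priori depends on the test set. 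Your Tonelli justification is otherwise sound---the bound $a\ln(a/b)\ge a-b$ gives the integrable affine minorant you invoke---so both routes go through, with the data-processing version being the more robust in full generality, which is also the level of generality at which the paper applies the lemma (e.g., in the proofs of Theorems \ref{thm:lb-bandit} and \ref{thm:lb-bandit-p}, where it is used conditionally on activation sets and contexts).
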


\begin{lemma}[{\sc Lemma 8 of }\cite{Han2020SequentialBL}]\label{lem:moment-fund}\label{lem:moment}
Suppose 
that for $d\ge 2 $, $Z\in\RR^d$ is uniformly distributed on the source $(d-1)$-dimensional sphere, then the absolute moments of the first coordinate $[Z]_1$ of $Z$ are, for $k>-1$
\begin{equation*}
    \EE[|[Z]_1|^k]=\frac{\Gamma(\frac{d}{2})\Gamma(\frac{k+1}{2})}{\Gamma(\frac{d+k}{2})\Gamma(\frac{1}{2})}
\end{equation*}
where $\Gamma(\cdot)$ is the gamma function. 
\end{lemma}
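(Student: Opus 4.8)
The plan is to reduce the computation to standard Gaussian and chi-squared moments via the Gaussian representation of the uniform distribution on the sphere. Writing $G=(G_1,\dots,G_d)\sim\cN(0,I_d)$, the normalized vector $Z=G/\|G\|_2$ is uniformly distributed on the $(d-1)$-sphere, so that $[Z]_1=G_1/\|G\|_2$. The structural fact I would invoke is that, for an isotropic Gaussian, the direction $G/\|G\|_2$ is independent of the radius $\|G\|_2$. This independence lets me factor $\EE[|G_1|^k]=\EE[|G_1/\|G\|_2|^k]\cdot\EE[\|G\|_2^k]$ and hence write
\[
\EE\big[|[Z]_1|^k\big]=\frac{\EE[|G_1|^k]}{\EE[\|G\|_2^k]},
\]
turning the problem into two one-dimensional moment computations.

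For the numerator, $G_1\sim\cN(0,1)$ gives $\EE[|G_1|^k]=2^{k/2}\Gamma((k+1)/2)/\Gamma(1/2)$ after the substitution $u=x^2/2$ in the integral $\frac{2}{\sqrt{2\pi}}\int_0^\infty x^k e^{-x^2/2}\,\d x$; the hypothesis $k>-1$ is exactly what makes this integral converge at the origin. For the denominator, $\|G\|_2^2\sim\chi^2_d$, so $\EE[\|G\|_2^k]=\EE[(\chi^2_d)^{k/2}]=2^{k/2}\Gamma((d+k)/2)/\Gamma(d/2)$ by a routine gamma integral. Substituting both expressions, the factors $2^{k/2}$ cancel and I am left with $\Gamma(d/2)\Gamma((k+1)/2)/(\Gamma((d+k)/2)\Gamma(1/2))$, which is the claimed formula.

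The one step that needs genuine care is the independence of $G/\|G\|_2$ and $\|G\|_2$; everything else is bookkeeping. If one prefers to sidestep it, an equivalent route is to use the known marginal density of a single coordinate of a uniform point on the sphere, namely $f(x)\propto(1-x^2)^{(d-3)/2}$ on $[-1,1]$, and compute $\EE[|[Z]_1|^k]=\frac{2}{B(1/2,(d-1)/2)}\int_0^1 x^k(1-x^2)^{(d-3)/2}\,\d x$. The substitution $u=x^2$ turns this into a Beta integral equal to $\frac12 B((k+1)/2,(d-1)/2)$, and converting Beta to Gamma via $B(a,b)=\Gamma(a)\Gamma(b)/\Gamma(a+b)$ yields the same answer once the common $\Gamma((d-1)/2)$ factors cancel. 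Either way, the crux is identifying the correct one-dimensional distribution; the remaining gamma-function arithmetic is immediate, and I expect no serious obstacle beyond justifying that one reduction.
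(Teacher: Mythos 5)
Your proposal is correct, but note that the paper does not prove this lemma at all: it is imported by citation as Lemma 8 of \cite{Han2020SequentialBL}, so there is no in-paper argument to match against. Both of your routes check out. In the Gaussian route, the factorization $\EE[|G_1|^k]=\EE[|[Z]_1|^k]\,\EE[\|G\|_2^k]$ is legitimate because $G/\|G\|_2$ and $\|G\|_2$ are independent by the spherical symmetry of $\cN(0,I_d)$ (polar factorization of the density), and your two one-dimensional moments are right: $\EE[|G_1|^k]=2^{k/2}\Gamma(\frac{k+1}{2})/\Gamma(\frac12)$ (finite exactly when $k>-1$, which is where the hypothesis enters) and $\EE[(\chi^2_d)^{k/2}]=2^{k/2}\Gamma(\frac{d+k}{2})/\Gamma(\frac{d}{2})$ (finite since $k>-1>-d$ for $d\ge 2$), after which the $2^{k/2}$ factors cancel to give the stated formula; a sanity check at $k=2$ gives $\EE[[Z]_1^2]=\frac{\Gamma(d/2)\Gamma(3/2)}{\Gamma((d+2)/2)\Gamma(1/2)}=1/d$, as symmetry demands. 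Your alternative via the marginal density $f(x)=(1-x^2)^{(d-3)/2}/B(\frac12,\frac{d-1}{2})$ on $[-1,1]$ is equally valid, including the edge case $d=2$ where the exponent is $-1/2$ but the endpoint singularity remains integrable, and the substitution $u=x^2$ indeed yields $\frac12 B(\frac{k+1}{2},\frac{d-1}{2})$ with the $\Gamma(\frac{d-1}{2})$ factors cancelling. The only step that genuinely needs justification is the direction--radius independence (or, on the second route, the derivation of the marginal density, which itself follows from the same polar decomposition), and you correctly flag it; what your write-up buys over the paper's treatment is a self-contained derivation rather than an appeal to the literature.
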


\begin{definition}[\sc Maximum expectation over small probability sets]\label{def:small-set-ingeral}
Given a probability space $A=(\Omega, \PP, \cF)$, a random variable $Z$ over $A$, and $\delta \in[0,1]$, 
define $\EE^\delta[Z]$ as the maximum expectation of $Z$ over all measurable sets with probability at most $\delta$:
\begin{equation}\label{is}
    \EE^\delta[Z]:=\sup_{A\in \cF}\left\{\EE[Z(\omega)\one (\omega\in A)]: \,\PP(A)\le \delta \right\}.
\end{equation}
\end{definition}

\begin{lemma}[\sc Sub-Gaussian integral over small probability sets]\label{lem:small-set-ingeral}
For any $Z\sim \subG(\sigma^2)$ with $\sigma^2>0$ and $\EE[Z]=0$, the maximum expectation over small probability sets from \eqref{is} of $|Z|$ and $Z^2$ is bounded as 
\begin{equation}\label{edz}
    \EE^\delta[|Z|]=O\left( \delta \sigma \ln(2/\delta)^{1/2}\right)
\end{equation}
and $\EE^\delta[Z^2]=O\left( \delta \sigma^2 \ln(2/\delta)\right)$.
Moreover, if $|Z|\ge |\widetilde{Z}|$ a.s., then $\EE^\delta[|\widetilde{Z}|]\le \EE^\delta[|Z|]$ for any $\delta \in[0,1]$.

\end{lemma}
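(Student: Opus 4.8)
The plan is to reduce the supremum over all small-probability sets to a single scalar optimization, and then feed in the sub-Gaussian tail bound. Write $W$ for either $|Z|$ or $Z^2$, both non-negative. The key observation is that for any event $A$ with $\PP(A)\le\delta$ and any threshold $q\ge 0$, the pointwise inequality $(W-q)\one(A)\le (W-q)_+$ gives
\[
\EE[W\one(A)] = \EE[(W-q)\one(A)] + q\PP(A) \le \EE[(W-q)_+] + q\delta .
\]
Taking the supremum over all admissible $A$ and then the infimum over $q\ge 0$ yields the variational bound $\EE^\delta[W]\le \inf_{q\ge 0}\big(q\delta + \EE[(W-q)_+]\big)$, which needs no continuity assumption and bypasses having to identify the extremal set explicitly (morally the superlevel set $\{W\ge q\}$, as in Lemma \ref{lem:integral}). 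Rewriting the truncated mean as a tail integral, $\EE[(W-q)_+]=\int_q^\infty \PP(W> t)\,\d t$, reduces everything to bounding a one-dimensional integral of the tail.

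For the first bound I would take $W=|Z|$ and use the sub-Gaussian tail $\PP(|Z|\ge t)\le 2\exp(-t^2/(2\sigma^2))$. Choosing $q=\sigma\sqrt{2\ln(2/\delta)}$, so that the tail at $q$ is exactly $\delta$, the first term contributes $q\delta = \delta\sigma\sqrt{2\ln(2/\delta)}$, while the integral term is controlled by the standard Gaussian tail estimate $\int_q^\infty e^{-t^2/(2\sigma^2)}\,\d t\le (\sigma^2/q)e^{-q^2/(2\sigma^2)}$, giving a contribution of order $\sigma\delta/\sqrt{\ln(2/\delta)}$. Summing the two yields $\EE^\delta[|Z|]=O(\delta\sigma\ln(2/\delta)^{1/2})$, since the second term is dominated by the first for $\delta\in(0,1]$.

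For the second bound I would take $W=Z^2$ and use $\PP(Z^2\ge t)=\PP(|Z|\ge\sqrt t)\le 2\exp(-t/(2\sigma^2))$, which is now a genuine exponential tail and integrates elementarily. Choosing $q=2\sigma^2\ln(2/\delta)$, the first term gives $q\delta=2\sigma^2\delta\ln(2/\delta)$ and $\int_q^\infty 2e^{-t/(2\sigma^2)}\,\d t = 4\sigma^2 e^{-q/(2\sigma^2)}=2\sigma^2\delta$, so the two combine to $O(\delta\sigma^2\ln(2/\delta))$. Finally, the monotonicity claim is immediate: if $|\widetilde Z|\le|Z|$ a.s. then $\EE[|\widetilde Z|\one(A)]\le\EE[|Z|\one(A)]$ for every $A$, and taking suprema over $\{A:\PP(A)\le\delta\}$ gives $\EE^\delta[|\widetilde Z|]\le \EE^\delta[|Z|]$.

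The main obstacle---really the only nontrivial step---is the first reduction: arguing that the worst-case set can be replaced by a scalar-threshold problem. The variational inequality above makes this clean and sidesteps the rearrangement/``bathtub principle'' technicalities (and the continuity hypothesis) one would otherwise need. After that, the remaining work is just choosing $q$ to balance the two terms and evaluating elementary Gaussian and exponential tail integrals, while keeping track of the regime $\delta\in(0,1]$ so that $\ln(2/\delta)\ge\ln 2>0$; the degenerate case $\delta=0$ forces $\PP(A)=0$ and makes both sides vanish.
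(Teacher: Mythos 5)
Your proof is correct, and it takes a genuinely different route from the paper's. The paper proves the bound by first assuming $Z$ is continuous, so that the extremal set in \eqref{is} is exactly a superlevel set $\{|Z|\ge q_\delta\}$ with $\PP(|Z|\ge q_\delta)=\delta$, then applying the tail integral formula (Lemma \ref{lem:integral}) to that set, and finally handling general (possibly atomic) $Z$ by a Gaussian smoothing argument $Z_\ep=\sqrt{1-\ep}\,Z+\sqrt{\ep}\,Z'$ with $Z'\sim\cN(0,\sigma^2)$ and a limit $\ep\to 0$. Your variational inequality $\EE[W\one(A)]\le \EE[(W-q)_+]+q\delta$, valid pointwise for every admissible $A$ and every $q\ge 0$, is the standard CVaR-type dual bound; it replaces both of these steps at once: there is no need to identify the extremal set, and hence no continuity hypothesis and no smoothing/limiting argument. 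Your threshold choices ($q=\sigma\sqrt{2\ln(2/\delta)}$ for $|Z|$, $q=2\sigma^2\ln(2/\delta)$ for $Z^2$) make the tail bound at $q$ equal to $\delta$, and the remaining integrals are controlled exactly as you say: the Gaussian tail estimate gives a contribution of order $\sigma\delta/\sqrt{\ln(2/\delta)}$, dominated by $q\delta$, and the exponential tail for $Z^2$ integrates in closed form to $2\sigma^2\delta$. One further point in your favor: you treat the $Z^2$ case explicitly, whereas the paper only asserts it ``follows similarly,'' and your one-line argument for the monotonicity claim ($\EE[|\widetilde Z|\one(A)]\le\EE[|Z|\one(A)]$ for each $A$, then take suprema) is exactly right and is likewise not spelled out in the paper. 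What the paper's approach buys in exchange is essentially nothing beyond slightly more explicit constants in the continuous case; your argument is shorter, more general, and equally sharp at the level of the stated $O(\cdot)$ bounds.
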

\begin{proof}
We first consider a continuous random variable $Z$ over a probability space $A=(\Omega, \PP, \cF)$.
Then, 
from \eqref{is},
$\EE^\delta[|Z|]$ is given by the integral over
\begin{equation*}
    A_\delta=\{\omega \in \Omega:\,|Z(\omega)|\ge q_\delta\}\quad \text{with}\quad \PP(|Z(\omega)|\ge q_\delta)=\delta.
\end{equation*}
Since $Z\sim \subG(\sigma^2)$, by the Chernoff bound, we have 
$\delta =\PP(|Z(\omega)|\ge q_\delta)\le 2\exp\left(-\frac{q_\delta^2}{2\sigma^2}\right)$,
which implies 
\begin{equation}\label{eqn:jiodsfasd}
    q_\delta \le \sqrt{2\ln(2/\delta)}\sigma.
\end{equation}
Plugging \eqref{eqn:jiodsfasd} and the Chernoff bound into Lemma \ref{lem:integral}, we find
\begin{align}
    \EE[|Z|\mathds{1}(|Z|\ge q_\delta)]&=q_\delta \PP(|Z|\ge q_\delta)+\int_{q_\delta}^\infty \PP(|Z|\ge t)\d t\nonumber\\
    &\le \delta q_\delta+\int_{q_\delta}^\infty\min\left\{\delta, 2\exp\left(-\frac{t^2}{2\sigma^2}\right)\right\}\d t\label{eqn:vbjiefa}
\end{align}
where the last inequality follows from the Chernoff bound and from 
$\PP(|Z|\ge t)\le \PP(|Z|\ge q_\delta)= \delta$.
Now,
\begin{align}
    &\int_{q_\delta}^\infty\min\left\{\delta, 2\exp\left(-\frac{t^2}{2\sigma^2}\right)\right\}\d t
    =\delta \left(\sqrt{2\ln(2/\delta)}\sigma-q_\delta\right)+2\int_{\sqrt{2\ln(2/\delta)}\sigma}^\infty \exp\left(-\frac{t^2}{2\sigma^2}\right)\d t\nonumber\\
    &=\delta \left(\sqrt{2\ln(2/\delta)}\sigma-q_\delta\right)+2\sqrt{2\pi}\sigma \cdot\PP\left(\cN(0,\sigma)\ge \sqrt{2\ln(2/\delta)}\sigma\right).\label{eqn:jvboimfwe}
\end{align}
Using \citep[Proposition 2.1.2]{vershynin2018high}, we have
\begin{align}
    &\PP\left(\cN(0,\sigma^2)\ge \sqrt{2\ln(2/\delta)}\sigma\right)=
    \PP\left(\cN(0,1)\ge \sqrt{2\ln(2/\delta)}\right)\nonumber\\
    &\le \frac{1}{\sqrt{2\pi}\sqrt{2\ln(2/\delta)}}\exp\left(-\ln(2/\delta)\right)
    =\frac{\delta}{4\sqrt{\pi\ln(2/\delta)}}.\label{eqn:vbiofoa}
\end{align}
Combining \eqref{eqn:vbiofoa} with \eqref{eqn:jvboimfwe} and \eqref{eqn:vbjiefa}, we find \eqref{edz}.
% \begin{equation*}
%     \EE^\delta[|Z|]= \EE[|Z|\mathds{1}(|Z|\ge q_\delta)]\le \delta \sigma \left((2\ln(2/\delta)^{1/2}+(2\ln(2/\delta)^{-1/2})\right).
% \end{equation*}
For random variables $Z$ that are not necessarily continuous, let $Z_\ep := \sqrt{1-\ep}\,Z+\sqrt{\ep}\,Z'$ with independent Gaussian $Z'\sim\cN(0,\sigma^2)$ and $0\le\ep\le 1$. 
Clearly, 
$Z_\ep\sim \subG(\sigma^2)$ and is continuous. 
By the result of the continuous case, we have 
\begin{equation}\label{eqn:hirengewf}
    \delta \sigma \left(\left[2\ln(2/\delta)\right]^{1/2}+\left[2\ln(2/\delta)\right]^{-1/2}\right)
    \ge \EE^\delta[|Z_\ep|]\ge \sqrt{1-\ep}\, \EE^\delta[|Z|]-\sqrt{\ep}\,\EE^\delta[|Z'|].
\end{equation}
Letting $\ep\rightarrow 0$ in the right-hand side of \eqref{eqn:hirengewf}, \eqref{edz} follows for general $Z$.
The result for $\EE^\delta[Z^2]$ follows similarly.
\end{proof}

\begin{lemma}\label{lem:f-g}
Given $\eta\in(0,1]$, $r_k\ge 0$ for all $k\in[d]$, $a\ge 0$,
and for $p\in\{1,2\}$,
consider the functions $f_q: \{x\in\RR^d: 0\le x_k\le 1,\,\forall\, k\in[d] \text{ and }\sum_{k\in[d]}x_k\le s\}\to\RR$,  $f_p(x_1,\dots,x_d):=\sum_{k\in[d]}(x_k\wedge a)^p\mathds{1}\{x_k<\eta \})+a^p\mathds{1}\{x_k\ge \eta \}$.
Then it holds that 
\begin{equation*}
    \max_{x_1,\dots,x_d}f_p(x_1\dots,x_d)\le a^p\left(\left\lceil \frac{s}{ a\wedge \eta }\right\rceil \wedge d\right).
\end{equation*}
\end{lemma}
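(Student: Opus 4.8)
The plan is to reduce the claim to a coordinate-wise estimate. Write $g(x):=(x\wedge a)^p\mathds{1}\{x<\eta\}+a^p\mathds{1}\{x\ge \eta\}$ for the per-coordinate summand, so that $f_p(x_1,\dots,x_d)=\sum_{k\in[d]}g(x_k)$ on the feasible set. If $a=0$ then $g\equiv 0$ and the bound is trivial, so assume $a>0$; since $\eta\in(0,1]$ this gives $a\wedge\eta>0$. The heart of the argument is the pointwise envelope
\[
g(x)\ \le\ a^p\left(\frac{x}{a\wedge \eta}\wedge 1\right),\qquad x\in[0,1].
\]

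To prove this I would combine two bounds. First, $g(x)\le a^p$ always, because $(x\wedge a)^p\le a^p$. Second, $g(x)\le a^p x/(a\wedge\eta)$, verified by cases. If $x\ge\eta$, then $g(x)=a^p$ and $x/(a\wedge\eta)\ge \eta/(a\wedge\eta)\ge 1$, so the bound holds. If $a\le x<\eta$ (which forces $a<\eta$, hence $a\wedge\eta=a$), then $g(x)=a^p$ and $a^p x/(a\wedge\eta)=a^{p-1}x\ge a^{p-1}\cdot a=a^p$. Finally, if $x<a\wedge\eta$, then $g(x)=x^p$ and, using $x\le a\wedge\eta\le a$ together with the monotonicity of $t\mapsto t^{p-1}$ for $p\in\{1,2\}$,
\[
\frac{g(x)}{x}=x^{p-1}\le (a\wedge\eta)^{p-1}\le a^{p-1}=\frac{a^p}{a}\le \frac{a^p}{a\wedge\eta}.
\]
Taking the minimum of the two bounds yields the envelope.

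Summing the envelope over $k\in[d]$ gives $f_p(x)\le a^p\sum_{k\in[d]}\big(x_k/(a\wedge\eta)\wedge 1\big)$. Each summand is at most $1$ and at most $x_k/(a\wedge\eta)$, so the sum is bounded by both $d$ and $\big(\sum_{k}x_k\big)/(a\wedge\eta)\le s/(a\wedge\eta)$, where the last step uses the constraint $\sum_k x_k\le s$. Hence $f_p(x)\le a^p\big(d\wedge \tfrac{s}{a\wedge\eta}\big)$. Since $s/(a\wedge\eta)\le \lceil s/(a\wedge\eta)\rceil$, monotonicity of the minimum gives $d\wedge \tfrac{s}{a\wedge\eta}\le \lceil \tfrac{s}{a\wedge\eta}\rceil \wedge d$, which is the stated bound; in fact the ceiling-free version is slightly stronger.

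The only nontrivial step is the pointwise envelope, and within it the regime $x<a\wedge\eta$, where the per-coordinate value $x^p$ must be controlled by the linear envelope $a^p x/(a\wedge\eta)$; this is exactly where $x\le a\wedge\eta$ and the restriction $p\in\{1,2\}$ enter. Once the envelope is in hand, the maximization over the (truncated-simplex) domain is immediate from the two elementary facts $(\cdot)\wedge 1\le 1$ and $(\cdot)\wedge 1\le \cdot$, so I do not anticipate any further obstacle.
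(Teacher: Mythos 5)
Your proof is correct, and it takes a genuinely different route from the paper's. The paper argues by an exchange/rearrangement argument: ordering $x_1\ge\cdots\ge x_d$, it shows via two swap steps that the supremum is approached by configurations in which all but at most one active coordinate equals $a\wedge\eta$, and then counts such coordinates, which is where the ceiling $\lceil s/(a\wedge\eta)\rceil$ arises. You instead prove the pointwise envelope $g(x)\le a^p\left(\frac{x}{a\wedge\eta}\wedge 1\right)$ for the per-coordinate summand and conclude by summing, using that each capped term is bounded both by $1$ and by $x_k/(a\wedge\eta)$ together with the budget constraint $\sum_k x_k\le s$. I checked your case analysis for the envelope (the regimes $x\ge\eta$, $a\le x<\eta$, and $x<a\wedge\eta$) and it is sound, including the reduction to $a>0$ and the degenerate point $x=0$. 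Your approach buys several things: it is more elementary and avoids the implicit attainment-of-maximum issue in the exchange argument (note $f_p$ is discontinuous because of the indicators, so ``the maximum is attained'' requires care that the paper glosses over); it yields the slightly sharper, ceiling-free bound $a^p\left(d\wedge \frac{s}{a\wedge\eta}\right)$, which implies the stated one since $s/(a\wedge\eta)\le\lceil s/(a\wedge\eta)\rceil$; and it works verbatim for any $p\ge 1$, since the only place the exponent enters is the monotonicity of $t\mapsto t^{p-1}$. What the paper's exchange argument buys in exchange is a description of the (near-)extremal configuration itself, which your envelope bound does not provide, though that extra information is never used downstream.
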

\begin{proof}
We only prove the result for $f_1$, and the result for function $f_2$ follows similarly. Without loss of generality, we assume $1\ge x_1\ge x_2\ge \cdots\ge x_d\ge0$ and $\lceil {s}/{ (a\wedge \eta) }\rceil <d$ since $f_p(x_1\dots,x_d)\le d a^p$ is clear. In this case, we claim that the maximum can be attained at $x_1=\cdots=x_{\lfloor s/(a\wedge \eta) \rfloor}=a\wedge \eta$, $x_{\lfloor s/(a\wedge \eta)  \rfloor+1}=s-\eta\lfloor s/(a\wedge \eta)  \rfloor $, and $x_k=0$ for all $k>\lfloor s/\eta \rfloor+1$.
Further, the maximum is upper bounded by $a^p\left(\lceil{s}/{(a\wedge \eta )}\rceil \wedge d\right)$. We now use the exchange argument to prove the claim.
\begin{itemize}
    \item[\quad S. 1] If there is some $k$ such that $x_k>a\wedge \eta  \ge x_{k+1}$, then defining $x'$ by letting $(x_k^\prime ,x_{k+1}^\prime )=((a\wedge \eta) , x_k+x_{k+1}-(a\wedge \eta) )$ while for other $j$, $x_j^\prime=x_j$, does not decrease the value of $f_1$. Therefore, the maximum is attained by $x$ such that  for some $j$, $x_1 = \dots = x_j = (a\wedge \eta)  > x_{j+1} \ge  \cdots \ge x_d$.
    \item[S. 2] If  there is some $k$ such that $(a\wedge \eta)  >x_k\ge   x_{k+1}>0$, then defining $x^\prime$ by letting  $(x_k^\prime ,x_{k+1}^\prime )=(\min\{(a\wedge \eta) ,x_k+x_{k+1}\}, \max\{0,x_k+x_{k+1}-(a\wedge \eta) \})$  while for other $j$, $x'_j=x_j$, does not decrease the value of $f_1$. 
    Therefore, combined with Step 1, the maximum is attained by $x$ such that  for some $j$,  $x_1 = \dots = x_j = a\wedge \eta > x_{j+1} \ge  0$ and $x_k=0$ for all $k>j+1$. 
    Thus at most one element lies in $(0,\eta)$.
\end{itemize}
Combining S. 1 and S. 2 above, we complete the proof of the claim, which further leads to the conclusion.
\end{proof}

\newpage
\section{Results on Linear Regression with Gaussian Noise}
\subsection{Lemma \ref{lem:median-Gaussian} and its Proof}\label{app:median-Gaussian}
 Lemma  \ref{lem:median-Gaussian} characterizes the estimation error of the median of Gaussian inputs.  
This is similar to classical results from
robust statistics \citep[see, \eg,][]{lerasle2011robust}, 
but existing results typically assume that the  ``inlier data'' is an i.i.d.~sample. 
In contrast, we only require independence, since we wish to apply it to the non-i.i.d.~variables $\{\widehat{\beta}^{(m)}_{\rm ind}\}_{m=1}^M$.

\begin{lemma}\label{lem:gaussian-quant}
    Given independent Gaussian random variables $\{Z_i\sim \cN(\mu,\sigma_i^2)\}_{i\in \cI}$
    with a shared mean but with possibly different variances, 
    and non-negative weights $\{ w_i\}_{i\in \cI}$ with $W_{\cI}\triangleq \sum_{i\in\cI}w_i$,
 then $1/2+\alpha$- weighted population quantile $\mu_{1/2+\alpha}$ is defined such that
\begin{equation}\label{eqn:weight-quant}
     \sum_{i\in\cI}w_i\Phi\left(\frac{\mu_{1/2+\alpha}-\mu}{\sigma_i}\right)=\left(\frac{1}{2}+\alpha\right)W_{\cI}.
\end{equation}
Then it holds for any $|\alpha|<1/2$ that
\begin{equation*}
    |\mu_{1/2+\alpha}-\mu|\leq C_{\alpha} \alpha \bar \sigma_{\cI} 
\end{equation*}
where $\bar \sigma_{\cI} =\sum_{i\in \cI}w_i\sigma_i/W_{\cI}$, $C_\alpha\triangleq \max_{0<\epsilon<1/2-\alpha}\left\{\phi(\Phi^{-1}(1-\epsilon))(1- \frac{2\alpha}{1-2\epsilon})\right\}^{-1}$, and $\phi$, $\Phi$ are the density and c.d.f.~of the standard normal distribution, respectively.
\end{lemma}
\begin{proof}
 We only prove the case where $\alpha\geq 0$ and the other case follows by symmetry. 
 We denote the normalized weight $w_i/W_{\cI}$ as $\bar w_i$. 
Clearly, $\mu_{1/2+\alpha}\geq \mu$ for $\alpha\geq 0$, so we can divide $\cI$ into to two groups based on weighted probabilities:
        \begin{align}\nonumber
        \cI_{\rm small}:=\{{i\in \cI}:\Phi((\mu_{1/2+\alpha}-\mu)/\sigma_i) \leq 1-\epsilon\},\quad 
        \cI_{\rm large}:=\cI\backslash\cI_{\rm small},
    \end{align}
    where  $0< \epsilon<1/2-\alpha$ is a real number to be chosen later.
    Using the mean-value theorem, for all $0\leq z\leq \Phi^{-1}(1-\epsilon)$, there exists $\xi \in (0, z)$ such that
    \begin{equation}\nonumber
        \Phi(z)= \frac{1}{2}+z\phi(\xi)\geq \frac{1}{2}+z\phi(\Phi^{-1}(1-\epsilon)).
    \end{equation}
    where $\phi$ is the density of the standard normal distribution. 
    We thus have
\begin{equation}\label{eqn:daivzxcvzxvc}
        \frac{1}{2}+\alpha\geq \sum_{i\in\cI_{\rm small}}\bar w_i \left(\frac{1}{2}+\phi(\Phi^{-1}(1-\epsilon))\sigma_i^{-1}(\mu_{1/2+\alpha}-\mu)\right)+\sum_{i\in\cI_{\rm large}}\bar w_i(1-\epsilon),
    \end{equation}
    leading to 
    \begin{align}\label{eqn:nvbidsnfgzds}
        \mu_{1/2+\alpha}\leq &\mu+\alpha /\left(\phi(\Phi^{-1}(1-\epsilon))\sum_{{i\in \cI_{\rm small}}}\bar w_i/\sigma_i\right).
    \end{align}
On the other hand, we have from \eqref{eqn:daivzxcvzxvc} that
\begin{equation}\label{eqn:nvbidsnfgzds2}\nonumber
        \sum_{{i\in \cI_{\rm large}}}w_i\leq \frac{\alpha}{1/2-\epsilon},
    \end{equation}
    which implies
    \begin{equation}\label{eqn:nvbidsnfgzds3}\nonumber
        \sum_{i\in \cI_{\rm small}}w_i\geq 1- \frac{\alpha}{1/2-\epsilon}.
    \end{equation}
Consequently, using H\"older's inequality, we obtain
\begin{align}\label{eqn:nvbidsnfgzds4}
    \sum_{i\in\cI_{\rm small}}\bar w_{i}/\sigma_i\sum_{i\in\cI_{\rm small}}\bar w_{i}\sigma_i\geq \left(\sum_{i\in\cI_{\rm small}}\bar w_i\right)^2\geq \left(1- \frac{\alpha}{1/2-\epsilon}\right)^2.
\end{align}
    Combing \eqref{eqn:nvbidsnfgzds} and \eqref{eqn:nvbidsnfgzds4}, we obtain
    \begin{equation}\label{eqn:nvicxnvcxdd}\nonumber
        \mu_{1/2+\alpha}\leq \mu+ C_\alpha\sum_{i\in\cI_{\rm small}}\bar w_i \sigma_i,
    \end{equation}
    where $ C_\alpha$ is defined in the statement.
\end{proof}

\begin{lemma}\label{lem:median-Gaussian}
Given  independent Gaussian random variables $\{Z_m\sim \cN(\mu_m,\sigma_m^2)\}_{m=1}^M$, any positive weights $\{w_m\}_{m=1}^M$,
and some $\mu\in\RR$, let $\cB\triangleq \{m\in[M]:\mu_m \neq \mu\}$ and $\cG\triangleq [M]\backslash \cB$.
If $|\cB|< M$,
for any $\delta \ge 0$ such that
\begin{equation}\label{eqn:cond}
    \alpha_{\cB,\delta}\triangleq\sum_{m\in\cB}w_m/W_{[M]}+\sqrt{1.01\delta \sum_{m\in\cG}w_m^2}/W_{\cG}<\frac{1}{2},
\end{equation}
it holds with probability at least $1-2e^{-2\delta}$ that
\begin{equation*}
    |\wmed(\{Z_m\}_{m\in[M]};\{w_m\}_{m\in[M]})-\mu|\le  C_{\alpha_{\cB,\delta}}\alpha_{\cB,\delta}\bar \sigma_{\cG},
\end{equation*}
where $\bar \sigma_{\cG} \triangleq \sum_{m\in\cG}w_m\sigma_m/W_{\cG}$ and  $C_\alpha$ is the constant depending only on $\alpha$ defined in Lemma \ref{lem:gaussian-quant}.
\end{lemma}
\begin{proof}
Denote $\cB^c$ as $\cG$ for notational simplicity. 
For all $z\in \RR$,
let $\widehat{F}_{\cG}(z):=\sum_{m\in\cG}w_m\mathds{1}(Z_n\le z)/W_{\cG}$ and $\widehat{F}_{[M]}(z):=\sum_{m\in[M]}w_m\mathds{1}(Z_m\le z)/W_{[M]}$ be the weighted empirical distributions of $\{Z_m\}_{m\in\cG}$ and $\{Z_m\}_{m=1}^M$, respectively. 
Then we have 
\begin{equation}\nonumber
    \EE[\widehat{F}_{\cG}(z)]=\sum_{m\in\cG}w_m\mathds{1}(Z_{m}\le z)/W_{\cG}=\sum_{m\in\cG}w_m\Phi\left( \frac{z-\mu}{\sigma_m}\right)/W_{\cG}.
\end{equation}
By the condition~\eqref{eqn:cond}
on 
$\alpha_{\cB,\delta}$,
there are unique values $z_\text{high}$ and $z_\text{low}$ such that 
\begin{align*}
   \sum_{m\in\cG}w_m\Phi\left( \frac{z_{\rm high}-\mu}{\sigma_m}\right)=& \left(\frac{1}{2}+\alpha_{\cB,\delta}\right)W_{\cG},\\
    \sum_{m\in\cG}w_m\Phi\left( \frac{z_{\rm low}-\mu}{\sigma_m}\right)=& \left(\frac{1}{2}-\alpha_{\cB,\delta}\right)W_{\cG}.
\end{align*}
By Hoeffding’s inequality,  for any given $\delta\ge 0$ and $z\in\RR$, we have with probability at least $1-e^{-2\delta}$ that
% \xh{refine it by taking account into weights}\xh{may not need this, just boudn the deviation from the true quantiles}
\begin{equation}\label{eqn:iogdgsd2-w}\nonumber
    \widehat{F}_{\cG}(z)-W_{\cG}^{-1}\sum_{m\in\cG}w_m\Phi\left( \frac{z-\mu}{\sigma_m}\right)\leq \sqrt{\delta \sum_{m\in\cG}w_m^2}/W_{\cG}.
\end{equation}
It is not hard to verify that for all $z\in\RR$,
\begin{equation}\label{eqn:iogdgsd}
    \left|\widehat{F}_{\cG}(z)-\widehat{F}_{[M]}(z)\right|\le \frac{W_{\cB}}{W_{[M]}}.
\end{equation}
We thus have 
\begin{align*}
    \widehat{F}_{[M]}(z_{\rm high})\geq &\widehat{F}_{\cG}(z_{\rm high})-\frac{W_{\cB}}{W_{[M]}}\\
    \geq &W_{\cG}^{-1}\sum_{m\in\cG}w_m\Phi\left( \frac{z-\mu}{\sigma_m}\right)-\frac{W_{\cB}}{W_{[M]}}-\sqrt{\delta \sum_{m\in\cG}w_m^2}/W_{\cG}>\frac{1}{2}.
\end{align*}
Similarly, we have $\widehat{F}_{[M]}(z_{\rm low})<1/2$. Further using Lemma \ref{lem:gaussian-quant} leads to the conclusion.
\end{proof}

\subsection{Lemma \ref{lem:col-est} and its Proof}\label{app:col-est}
\begin{lemma}\label{lem:col-est}
Under Conditions \ref{asp:noise} and \ref{asp:design-mats}, for any $0<\eta \le \frac{1}{5}$ and $k\in\cI_\eta$, it holds for any $0\le \delta \le W_{[M]}/(21 \max_{m\in[M]}w_m)$ that
\begin{equation*}
    \PP\left(|\widehat{\beta}^\star_k-\beta^\star_k|\ge 1.25C_{0.45}\alpha_{\cB_k,\delta}\bar \sigma_{[M],k}\mid \{\vX^{(m)}\}_{m=1}^M\right)\le 2e^{-2\delta},
\end{equation*}
where $\bar {\sigma}_{[M],k}=\sum_{m\in[M]}w_m\sigma_m\sqrt{[(\vX^{(m)\top}\vX^{(m)})^{-1}]_{k,k}}/W_{[M]}$ and $\alpha_{\cB_k,\delta}$
follows from the definition in \eqref{eqn:cond}.
\end{lemma}
\begin{proof}
% Dentoe $\sum_{i\in\cI}w_i$ as  $W_{\cI}$ for any $\cI\subseteq [M]$ and denote $[M]\backslash \cB_k$ as $\cG_k$.
Since $W_{\cB_k}\le \eta W_{[M]}$ for any $k\in\cI_\eta$ and $\eta \le \frac{1}{5}$, we have for each $k\in\cI_\eta$ and any $0\le \delta \le 1/(25\sum_{m\in\cG_k}\bar w_m^2)$ that 
\begin{align*}
% \label{gmbd}
    \alpha_{\cB_k,\delta}
    =&\frac{W_{\cB_k}}{W_{[M]}}+\sqrt{1.01\delta \sum_{m\in\cG_k}w_m^2}/W_{\cG_k}
    \leq \frac{W_{\cB_k}}{W_{[M]}}+\sqrt{1.01\delta \max_{m\in[M]}w_m}/W_{\cG_k}^{1/2}\\
    \leq &\frac{W_{\cB_k}}{W_{[M]}}+\sqrt{1.01\delta \max_{m\in[M]}w_m}/(0.5 W_{[M]})^{1/2}
    \le \frac{1}{5}+\frac{1}{4}=0.45.
\end{align*}
Therefore, the condition \eqref{eqn:cond} from Lemma
\ref{lem:median-Gaussian} is satisfied with $\alpha=0.45$.  
Thus, by Lemma \ref{lem:median-Gaussian}, we have for any $0\le \delta \le W_{[M]}/(21 \max_{m\in[M]}w_m)$ that
\begin{equation}\label{eqn:vnisdngvzxc1}
    \PP\left(|\widehat{\beta}^\star_k-\beta^\star_k|\ge C_{0.45} \bar \alpha_{\cB_k,\delta}\sigma_{\cG_k} \mid \{\vX^{(m)}\}_{m=1}^M\right)\le 2e^{-2\delta}.
\end{equation}
Furthermore, using $W_{\cG_k}\geq 4W_{[M]}/5$, we have 
\begin{equation}\label{eqn:vnisdngvzxc2}
    \bar {\sigma}_{\cG_k}=\sum_{m\in\cG_k}w_m\sigma_m/W_{\cG_k}\leq 5\sum_{m\in\cG_k}w_m\sigma_m/(4W_{[M]})=1.25 \bar \sigma_{[M],k}.
\end{equation}
Combining \eqref{eqn:vnisdngvzxc1} with \eqref{eqn:vnisdngvzxc2} completes the proof.
\end{proof}

\subsection{Proof of Proposition \ref{prop:center-covariate}}\label{app:center-covariate}

\begin{proof}

For simplicity, we only prove the case $n_1\geq \cdots\geq n_M$ and $ \sigma_1=\cdots=\sigma_M=\sigma$ for some $\sigma>0$, 
and thus $w_m=n_m/\sigma^2$ for all $m\in[M]$. 
The case of heterogeneous variances follows by considering the rescaled sample size $\tilde n_m=n_m\sigma^2/\sigma_m^2$ for each $m\in [M]$.

For each $k\in[d]$, let $\bar {\sigma}_k\triangleq\sum_{m\in[M]}w_m\sqrt{v^{(m)}_k}\sigma/W_{[M]}$ where $v_k^{(m)}= \sqrt{[(\vX^{(m)\top}\vX^{(m)})^{-1}]_{k,k}}$.
Due to Condition \ref{asp:size}, we have 
\begin{align}\nonumber
    W_{[M]}/(21 \max_{m\in[M]}w_m)
    \geq  n_{[M]}/(21 n_1) \geq (21\ln((n_{\rm [M]}/n_M) \wedge (d/s))/c_{\rm s})^2,
\end{align}
where the last inequality follows from Condition \ref{asp:size} and the choice of $\{w_m\}_{m=1}^M$ and $c_{\rm s}$ is defined in Condition \ref{asp:size}.

Taking $\delta \triangleq (21\ln(n_{[M]}/n_M \wedge (d/s))/c_{\rm s})^2$ in Lemma \ref{lem:col-est},   we have 
\begin{equation}\label{eqn:vuisd}
    \PP\left(|\widehat{\beta}^\star_k-\beta^\star_k|\ge 1.25C_{0.45}  \alpha_{\cB_k,\delta}\bar{\sigma}_k\mid \{\vX^{(m)}\}_{m=1}^M\right)\le 2 e^{-2(21\ln(n_{[M]}/n_M \wedge (d/s))/c_{\rm s})^2}=O\left(\frac{n_M}{n_{[M]}}\bigvee \frac{s}{d}\right).
\end{equation}

On the other hand, by using a standard $\epsilon$-net argument (see \eg,  \cite{vershynin2018high}), one can show that the event
\begin{equation}\nonumber
    E\triangleq \{\vX^{(m)\top}\vX^{(m)}\gtrsim \mu n_m I_d/2,\, \forall\, m\in[M]\}
\end{equation}
holds with probability at least $1-O(Mde^{-cn_M})$ where $c$ is a constant only depending on $\mu$ and $L$. Since event $E$ implies $\sqrt{v_k^{(m)}}=O(1/\sqrt{n_m})$ and thus 
\begin{equation}\nonumber
    \bar \sigma_k=O\left(\sum_{m\in[M]}\sqrt{n_m}/\sigma_m/\sum_{m\in[M]}{n_m}/\sigma_m^2\right),
\end{equation}  combining with \eqref{eqn:vuisd}, we complete the proof.

\end{proof}

\subsection{Proof of Theorem \ref{thm:ls-fixed-design}}\label{app:ls-fixed-design}
We provide the proof for $p=1$ and \textsf{Option I} under identical variances, \ie, $\sigma_1=\cdots=\sigma_M=\sigma$; the case $p=2$, \textsf{Option II}, or heterogeneous noise variances follows
similarly.  Let $\cI^{(m)}:=\{k\in[d]:\beta^{(m)}_k=\beta^\star_k\}$  and recall $\cI_\eta$ from \eqref{ieta}. For all $m\in[M]$, we provide a series of bounds for $|\widehat{\beta}^{(m)}_{{\rm MOLAR},k}-\beta^{(m)}_k|$ for each $k\in[d] $ in three cases. We denote $\widehat{\beta}^{(m)}_{\rm MOLAR}$ as $\widehat{\beta}^{(m)}$ below for simplicity.  First noting that $v_k^{(m)}=O(t/\sqrt{n_m})$ with probability at least $1-de^{-cn_m t^2}$ for some constant $c$ (see, \eg, \cite{vershynin2018high}), we have $v_k^{(m)}=O_P(\ln(d)/\sqrt{n_m}),\forall\,m\in[M]$.

\vspace{2mm}
\noindent\textbf{Case 1. } For any $k \in [d]$, we guarantee that
\begin{equation}\label{eqn:case1}
    \EE[|\widehat{\beta}^{(m)}_{k}-\beta^{(m)}_k|\mid \{\vX^{(m)}\}_{m=1}^M]=\widetilde{O}_P(\sigma/\sqrt{n_m}).
\end{equation} 
 By definition, for any $m\in[M]$ and $k\in[d]$, $\widehat{\beta}^{(m)}_{k}$ is either equal to $\smash{\widehat{\beta}^{(m)}_{\mathrm{ind},k}}$ or $\widehat{\beta}^\star_k$, and the latter happens only when $|\widehat{\beta}^\star_k-\widehat{\beta}^{(m)}_{\mathrm{ind},k}|\le  \gamma_m \sqrt{v^{(m)}_k}$. In the latter case, we have 
\begin{equation*}
    |\widehat{\beta}^{(m)}_{k}-\beta^{(m)}_k|=|\widehat{\beta}^\star_k-\beta^{(m)}_k|\le |\widehat{\beta}^{(m)}_{k}-\beta^{(m)}_k|+|\widehat{\beta}^{(m)}_{\mathrm{ind},k}-\widehat{\beta}^\star_k|\le  |\widehat{\beta}^{(m)}_{k}-\beta^{(m)}_k|+\gamma_m \sqrt{v^{(m)}_k}.
\end{equation*}
Therefore, in both cases,
\begin{equation}\label{eqn:nbiwfgew}
    |\widehat{\beta}^{(m)}_{k}-\beta^{(m)}_k|\le |\widehat{\beta}^{(m)}_{\mathrm{ind},k}-\beta^{(m)}_k|+ \gamma_m \sqrt{v^{(m)}_k}.
\end{equation}
By \eqref{lem:ind-est},  $\widehat{\beta}^{(m)}_{\mathrm{ind},k}-\beta^{(m)}_k\mid v_k^{(m)} \sim \cN(0,\sigma^2v^{(m)}_k)$, we have $\EE[|\widehat{\beta}^{(m)}_{\mathrm{ind},k}-\beta^{(m)}_k|\mid v_k^{(m)}] = O(\sigma v_k^{(m)})=\widetilde O_P(\sigma/\sqrt{n_m})$.

\vspace{2mm}
\noindent\textbf{Case 2. } When $k\in \cI^{(m)} \cap \cI_\eta$, 
we can obtain the improved bound
\begin{equation}\label{eqn:case3}
    \EE[|\widehat{\beta}^{(m)}_{k}-\beta^{(m)}_k|\mid  \{\vX^{(m)}\}_{m=1}^M]=\widetilde{O}_P\left(\frac{W_{\cB_k}\sigma }{W_{[M]}\sqrt{n_1}} + \frac{\sigma}{\sqrt{n_{[M]}}}+\frac{s\sigma /d}{\sqrt{n_m}}\right).
\end{equation}
Let $\delta =(21\ln(n_{[M]}/n_M \wedge (d/s))/c_{\rm s})^2$ and $\bar {\sigma}_k=\sum_{m\in[M]}w_m\sqrt{v^{(m)}_k}\sigma/W_{[M]}$ as stated in Section \ref{app:center-covariate}.
Without loss of generality, 
we consider $1.25C_{0.45} \alpha_{\cB_k,\delta} \bar {\sigma}_k\le \sigma \sqrt{v^{(m)}_k}$, otherwise \eqref{eqn:case3} is implied by \eqref{eqn:case1}. 
Define the event $\cE_k=\{|\widehat{\beta}^\star_k-\beta^\star_k|\le 1.25C_{0.45} \alpha_{\cB_k,\delta} \bar {\sigma}_k\}$.
 By Lemma \ref{lem:col-est}, we have $\PP((\cE_k)^c)\le O((n_M/n_{[M]})\wedge (s/d))$. 
 Furthermore, by the condition $1.25C_{0.45}\alpha_{\cB_k,\delta} \bar {\sigma}_k\le \sigma \sqrt{v^{(m)}_k}$, we have that the event $\cE_k$ implies $|\widehat{\beta}^\star_k-\beta^\star_k|\le 1.25C_{0.45}\alpha_{\cB_k,\delta} \bar {\sigma}_k$. On the event $\cE_k$, if  $\widehat{\beta}_{k}^{(m)} \neq  \widehat{\beta}^\star_k$, \ie, $|\widehat{\beta}^\star_k-\widehat{\beta}^{(m)}_{\mathrm{ind},k}|> \gamma_m \sqrt{v^{(m)}_k} $, then,  for $k\in \cI^{(m)}\cap  \cI_\eta $,
\begin{align*}
    |\widehat{\beta}^{(m)}_{\mathrm{ind},k}-\beta^{(m)}_k|=|\widehat{\beta}^{(m)}_{\mathrm{ind},k}-\beta^\star_k|\ge &|\widehat{\beta}^{(m)}_{\mathrm{ind},k}-\widehat{\beta}^\star_k|-|\widehat{\beta}^\star_k-\beta^\star_k|
    >\gamma_m \sigma \sqrt{v^{(m)}_k}-1.25C_{0.45}\alpha_{\cB_k,\delta} \bar {\sigma}_k\\
    \geq &(\gamma_m-\sigma) \sqrt{v^{(m)}_k}.
\end{align*}
Let $\zeta_k^{(m)}  \triangleq (\gamma_m-\sigma) \sqrt{v^{(m)}_k}$
and
\begin{equation*}
    \cF_k^{(m)}\triangleq\left\{|\widehat{\beta}^{(m)}_{\mathrm{ind},k}-\beta^{(m)}_k| \le \zeta_k^{(m)}\sigma\sqrt{v^{(m)}_k}\right\}.
\end{equation*}
The event $\cF_k^{(m)} \cap \cE_k$ implies that $\widehat{\beta}_{k}^{(m)} = \widehat{\beta}^\star_k$ for $k\in \cI_\eta \cap \cI^{(m)}$.
Since $\widehat{\beta}^{(m)}_{\mathrm{ind},k}-\beta^{(m)}_k \mid v_k^{(m)}\sim \subG(v^{(m)}_k\sigma^2)$, we have $\PP((\cF_k^{(m)})^c\mid v_k^{(m)})\le O((n_M/n_{[M]})\wedge (s/d))$.  
Thus,
with probability at least $\PP(\cE_k\cap \cF_k^{(m)})\ge 1- O((n_M/n_{[M]})\wedge (s/d))$, it holds that
\begin{align*}
    |\widehat{\beta}^{(m)}_{k}-\beta^{(m)}_k|=|\widehat{\beta}^\star_k-\beta^{(m)}_k|&\le 1.25C_{0.45}\alpha_{\cB_k,\delta} \bar {\sigma}_k.
\end{align*}
Furthermore, using \eqref{eqn:nbiwfgew} and Lemma \ref{lem:small-set-ingeral} and recalling Definition \ref{def:small-set-ingeral}, we have that for any $k\in \cI_\eta\cap \cI^{(m)}$,
\begin{align}
    \EE[|\widehat{\beta}^{(m)}_{k}-\beta^{(m)}_k|\mid  \{\vX^{(m)}\}_{m=1}^M]&\le \widetilde{O}\left(\alpha_{\cB_k,\delta} \bar {\sigma}_k\right)+\EE^{O((n_m/n_{[M]})\vee(s/d) )}\left[|\widehat{\beta}^{(m)}_{\mathrm{ind},k}-\beta^{(m)}_k|+ \gamma_m \sqrt{v^{(m)}_k}\right]\nonumber\\
    &= \widetilde{O}\left(\alpha_{\cB_k,\delta} \bar {\sigma}_k\right)+\widetilde{O}\left( (n_M/n_{[M]})\vee (s/d) )\sigma \sqrt{v_k^{(m)}}\right).\label{eqn:vningcxew1}
\end{align}
Now,
\begin{equation}\label{eqn:vncixvnxf1}
    \bar {\sigma}_k\triangleq\sum_{m\in[M]}w_m\sqrt{v^{(m)}_k}\sigma/W_{[M]}=\widetilde O_P\left(\sum_{m\in[M]}\sqrt{n_m}\sigma/n_{[M]}\right)\overset{a}{\leq} \widetilde 
 O_P\left(\sigma/\sqrt{n_1}\right)
\end{equation}
and 
\begin{equation}\label{eqn:vncixvnxf2}
    \frac{\sqrt{\sum_{m\in[M]}w_m^2}}{W_M}\bar {\sigma}_k\leq \frac{\sqrt{n_1}}{\sqrt{n_{[M]}}}\widetilde O_P\left(\sum_{m\in[M]}\sqrt{n_m}\sigma/n_{[M]}\right)\overset{b}{\leq} \widetilde O_P({\sigma}/{n_{[M]}}),
\end{equation}
where inequalities $a$ and $b$ are due to Condition \ref{asp:size}.
Thus, \eqref{eqn:vncixvnxf1} and \eqref{eqn:vncixvnxf2} lead to
\begin{equation}\label{eqn:vningcxew2}
    \alpha_{\cB_k,\delta} \bar {\sigma}_k=O_P\left(\frac{W_{\cB_k}\sigma}{W_{[M]}\sqrt{n_1}}+\frac{\sigma}{\sqrt{n_{[M]}}}\right).
\end{equation}
Combining \eqref{eqn:vningcxew1} and \eqref{eqn:vningcxew2}, we reach \eqref{eqn:case3}.

\vspace{2mm}
\noindent
\textbf{Bounding the summed error. }
Combining the cases \eqref{eqn:case1}
and \eqref{eqn:case3},  we obtain 
\begin{align}
    &\EE[\|\widehat{\beta}^{(m)}-\beta^{(m)}\|_1\mid  \{\vX^{(m)}\}_{m=1}^M]\nonumber\\
    =&\sum_{k \in\cI_\eta \cap \cI^{(m)}}\EE[|\widehat{\beta}^{(m)}_{k}-\beta^{(m)}_k|\mid  \{\vX^{(m)}\}_{m=1}^M] + \sum_{k \notin \cI^{(m)}\cap\cI_\eta}\EE[|\widehat{\beta}^{(m)}_{k}-\beta^{(m)}_k|\mid  \{\vX^{(m)}\}_{m=1}^M]\nonumber\\
    \le &\sum_{k \in\cI_\eta \cap \cI^{(m)}}\widetilde O_P\left(\frac{W_{\cB_k}\sigma}{W_{[M]}\sqrt{n_1}}+\frac{\sigma}{\sqrt{n_{[M]}}}+\frac{s\sigma /d}{\sqrt{n_m}}\right)+ \sum_{k \notin \cI^{(m)}\cap\cI_\eta}\widetilde{O}\left( \frac{\sigma}{\sqrt{n_m}}\right)\nonumber\\
    \le &\sum_{k \in\cI_\eta }\widetilde O_P\left(\frac{W_{\cB_k}\sigma}{W_{[M]}\sqrt{n_m}}\right) + \sum_{k \notin \cI_\eta}\widetilde{O}_P\left( \frac{\sigma}{\sqrt{ n_m}}\right)+\widetilde{O}_P\left( \frac{s\sigma}{\sqrt{ n_m}}+\frac{d\sigma}{\sqrt{ n_{[M]}}}\right).\label{eqn:hvisdfgsd}
\end{align}
where the last inequality is due to $|(\cI^{(m)})^c|\le s$ and $n_1\geq n_m$ for any $m\in[M]$.
Using Lemma \ref{lem:f-g} with $a=1$ and $x_k=W_{\cB_k}/W_{[M]}$ for all $k\in[d]$, we have 
\begin{equation}\label{eqn:nvicnvxc}
    \sum_{k\in[d]}\left(\frac{W_{\cB_k}}{W_{[M]}} \one(W_{\cB_k}/W_{[M]} < \eta)+ \one(W_{\cB_k}/W_{[M]} \ge  \eta)\right)\le {\lceil s/\eta \rceil}.
\end{equation}
Plugging  $\eta =1/5=O(1)$ into  \eqref{eqn:nvicnvxc} and  combining  \eqref{eqn:hvisdfgsd}, we have 
\begin{align}\nonumber
    \EE[\|\widehat{\beta}^{(m)}-\beta^{(m)}\|_1\mid  \{\vX^{(m)}\}_{m=1}^M]=\widetilde{O}_P\left( \frac{s\sigma}{\sqrt{ n_m}}+\frac{d\sigma}{\sqrt{ n_{[M]}}}\right).
\end{align}
Using Chebyshev's inequality, we obtain \begin{align}\nonumber
    \|\widehat{\beta}^{(m)}-\beta^{(m)}\|_1=\widetilde{O}_P\left( \frac{s\sigma}{\sqrt{ n_m}}+\frac{d\sigma}{\sqrt{ n_{[M]}}}\right).
\end{align}

Similarly for $p=2$, we can establish 
\begin{align}\nonumber
    \EE[\|\widehat{\beta}^{(m)}-\beta^{(m)}\|_2^2\mid  \{\vX^{(m)}\}_{m=1}^M]=\widetilde{O}_P\left( \frac{s\sigma^2}{ n_m}+\frac{d\sigma^2}{ n_{[M]}}\right).
\end{align}

\subsection{Proof of Theorem \ref{thm:lower1}}\label{app:lower1}
\begin{proof}
As discussed in Section \ref{sec:lb-mul-lr}, we prove Theorem \ref{thm:lower1} by considering two special cases of our sparse heterogeneity model: 
\begin{enumerate}
    \item The \emph{homogeneous} case where $\beta^1=\cdots=\beta^M=\beta^\star \in\RR^d$.
    \item The \emph{$s$-sparse} case where $\beta^\star =0$ and  $\|\beta^{m}\|_0\le s$ for all $m\in[M]$.
\end{enumerate}
We also take $\Sigma^{(m)}=\mu I_d$ for all $m\in[M]$ as the special case to guarantee the lower bound. Therefore, we clearly have 
\begin{equation}\label{eqn:gjojfqowrqw}
    \mathcal{M}\ge\inf_{\widehat{\beta}^\star}
    \sup_{{\beta}^\star\in\RR^d}\EE[\|\widehat{\beta}^\star-{\beta}^\star\|_p^p]\triangleq A
\end{equation}
and 
\begin{equation}\label{eqn:gjoejtomqwetfqw}
    \mathcal{M}
    \ge \inf_{\widehat{\beta}^{(m)}}
    \sup_{\substack{\|\beta^{(m)}\|_0\leq s}}\EE[\|\widehat{\beta}^{(m)}-{\beta}^{(m)}\|_p^p]\triangleq B.
\end{equation}

We will show that $A=\widetilde \Omega(d/( \sum_{m=1}^M n_m/\sigma_m^2)^{p/2})$ and $B=\widetilde \Omega(s\sigma_m^p/(n_m)^{p/2})$. 
Then the conclusion follows from the inequality $A\vee B\ge (A+B)/2=\Omega(A+B)$.

\vspace{2mm}
\noindent \textbf{The case where $p=2$:}
The proof follows the same idea as 
 Example 8.4.5 of \cite{duchi2016introductory} and 
 the argument in \cite{duchi2013distance}.
 These show a lower bound $\Omega(d\sigma^2/\|\vX^\top \vX\|_{\rm op} )$ for linear regression  with a given covariate matrix $\vX$ and i.i.d.~$\cN(0,\sigma^2)$ noises. Here, we sketch the key ideas for the extension to different noise variances.

 Let $\mathcal{V}$ be a packing of $\{-1,1\}^d$ such that $\|v-v^\prime\|\geq d/2$ for distinct elements of $\mathcal{V}$, and $|\mathcal{V}|\geq \exp(d/8)$ as guaranteed by the Gilbert-Varshamov bound \citep[Example 7.5]{duchi2016introductory}. For fixed $\delta >0$, if we set $\beta_v^\star=\delta v$, then we have the 
 packing guarantee for distinct elements $v,v^\prime$ that
\[
\|\beta_v^\star-\beta_{v^\prime}^{\star}\|_1=\delta \sum_{k=1}^d|v_k-v_k^\prime|\geq \delta d/2.
\]
Then we have an upper bound on the Kullback-Leibler divergence of the data distributions associated with $\beta_v^\star$, $\beta_{v^\prime}^\star$, 
a feature vector $x\sim \cN(0, \mu I_d)$, and standard deviation $\sigma$:
\begin{align*}
    D_\mathrm{KL}(\cN(x^\top\beta^\star_{v},\sigma^2)\,\|\,\cN(x^\top \beta^\star_{v^\prime},\sigma^2))=&\frac{1}{2\sigma^2}\EE[\|x^\top(\beta^\star_{v}-\beta^\star_{v^\prime})\|_2^2]=\frac{\mu\|\beta^\star_{v}-\beta^\star_{v^\prime}\|_2^2}{2\sigma^2}.
\end{align*}
Consequently, given the independent observations $\mathbf{Y}_v\triangleq \{\{x_i^{(m)\top}\beta_v^\star+\epsilon_i^{(m)}\}_{i=1}^{n_m}\}_{m=1}^M$ where $\epsilon_i^{(m)}\overset{i.i.d.}{\sim}\cN(0,\sigma_m^2)$ and $\mathbf{Y}_{v^\prime}$, 
we have the Kullback-Leibler divergence for the joint distributions:
\begin{align}
    D_\mathrm{KL}(P(\mathbf{Y}_v, \{\vX^{(m)}\}_{m=1}^M)\,\|\,P(\mathbf{Y}_{v^\prime}, \{\vX^{(m)}\}_{m=1}^M))=\sum_{m=1}^M\frac{\mu n_m}{2\sigma_m^2}\|\beta^\star_{v}-\beta^\star_{v^\prime})\|_2^2
    \leq \sum_{m=1}^M\frac{2\mu n_md\delta^2}{\sigma_m^2},\nonumber
\end{align}
where the last inequality holds because $\|v-v^\prime\|_2^2\leq 4d$.
Now we apply the local Fano method \citep[Proposition 8.43]{duchi2016introductory}, we obtain
\[
\inf_{\widehat{\beta}^\star}
    \sup_{{\beta}^\star\in\RR^d}\EE[\|\widehat{\beta}^\star-{\beta}^\star\|_2^2]\geq \frac{\delta^2 d}{2}\left(1-\frac{I(V;\{\vX^{(m)}\}_{m=1}^M,\mathbf{Y})+\ln(2)}{\ln(|\mathcal{\cV}|)}\right),
\]
where $I(V;\{\vX^{(m)}\}_{m=1}^M,\mathbf{Y})$ is the mutual information between the index variable $V\sim {\rm Unif}(\mathcal{\cV})$ and the responses $ \mathbf{Y}=\{Y^{(m)}\}_{m\in[M]}$.
By choosing $\delta =8^{-1}\left(\sum_{m=1}^M\mu n_m/\sigma_m^2\right)^{-1/2}$, following  \citep[Example 7.5]{duchi2016introductory},  we find that $1-(I(V;\mathbf{Y})+\ln(2))/\ln(|\mathcal{\cV}|)\geq 1/2$. 
This leads to 
\[
A=\inf_{\widehat{\beta}^\star}
    \sup_{{\beta}^\star\in\RR^d}\EE[\|\widehat{\beta}^\star-{\beta}^\star\|_2^2]=\Omega\left(d\left(\sum_{m=1}^M\mu n_{m}/\sigma_m^2\right)^{-1}\right).
\]

By applying the same argument to a single task with $s$-dimensional parameters, one can readily show $B=\Omega(s\sigma_m^p/n_m^{p/2})$.

\vspace{2mm}
\noindent\textbf{The case where $p=1$:} The proof for the $\ell_1$ case follows the same workflow by utilizing $\|\beta_v^\star-\beta_{v^\prime}^{\star}\|_1=\delta \sum_{k=1}^d|v_k-v_k^\prime|\geq \delta d/2$. 
We then obtain $A=\Omega(d/(\sum_{m\in [M]} n_m/\sigma_m^2)^{1/2})$ and thus $B=\Omega(s\sigma_m/ n_m^{1/2})$ accordingly.

\end{proof}

%\newpage
\section{Results on Linear Regression with  General Noise}\label{app:general-noise}

Lemma \ref{lem:median-Gaussian}, the cornerstone of our previous analysis, relies on the Gaussianity of the noise $\{\ep_i^{(m)}\}_{i=1}^{n_m}$ for each $m\in[M]$. 
In this section, we argue that the estimation error of $\widehat{\beta}^\star$ holds up to a smaller-order term, 
even if we relax Gaussianity to bounded skewness and Orlicz norm. 
% The key idea is using Gaussian approximation provided by the following Berry-Esseen theorem. 
While this holds for unbalanced datasets, 
for simplicity, we only state the results for balanced datasets, \ie, $n_m=n$ and $\sigma_m=\sigma$ for all $m\in[M]$. 
% In this case,  $\tau =M$ and we overload the notations $\cI_\eta=\{k\in[d]:\,|\cB_k|<\eta M\}$ where $\cB_k\triangleq \{m\in[M]:\,\beta^{(m)}_k\neq \beta^\star_k\}$ for each $k\in[d]$. 
Recall that for a random variable $Z$,
and for $\alpha\ge 0$,
the $\alpha$-order Orlicz norm of $\ep_i^{(m)}$, $1\le i\le n$, is 
\begin{equation*}
        \|Z\|_{\Psi_\alpha}:=\inf\left\{c>0:\EE\left[\Psi_\alpha\left({|Z|}/{c}\right)\right]\le 1\right\},
   \end{equation*}
where $\Psi_\alpha(u)=e^{u^\alpha} -1$ for any $u\ge 0$.
We make the following technical assumption,  replacing Condition \ref{asp:noise}.
\begin{assumption}[\sc Bounded covariates, skewness, and Orlicz norm]\label{asp:combo}
There are constants $c_{\rm sk},L\ge \mu \ge  0$, $\alpha\in(0,2]$, and $\sigma\ge 0$ such that for each $m\in[M]$,
\begin{enumerate}
    \item the covariates satisfy $\mu n I_d\preceq \vX^{(m)\top}\vX^{(m)}\preceq n L I_d$  and $\|x_i^{(m)}\|_2^2 \le d L$ for all $1\le i\le n$;
    \item the noise variables $\ep_i^{(m)}$, $1\le i\le n$, are 
     i.i.d.~with  zero mean, 
    satisfying  $\EE[|\ep_i^{(m)}|^3]\le c_{\rm sk}(\EE[|\ep_i^{(m)}|^2])^{3/2}$;
    \item the $\alpha$-order Orlicz norm of $\ep_i^{(m)}$, $1\le t\le n$, is bounded as 
    $\|\ep_i^{(m)}\|_{\Psi_\alpha}\le \sigma$.
\end{enumerate}
\end{assumption}

\begin{remark}
    In Condition \ref{asp:combo}, 
    the first condition strengthens Condition \ref{asp:design-mats}, and is used to control the coefficients involved in a normal approximation via the Berry-Esseen theorem \citep{berry1941accuracy}. 
    This condition holds with a high probability for randomly sampled design matrices with i.i.d.~features having a well-conditioned expected covariance matrix   \citep{vershynin2018high},  including the stochastic contexts in bandits that we will use. 
    The second condition, while less standard in the literature, is also needed in the Berry-Esseen theorem. 
    % We allow $L$, $\mu$ and $c_{\rm sk}$ to depend on the sample size $n$ as long as $Lc_{\rm sk}/\mu$ is dominated by $n^{1/2}$.
    The third condition generalizes the widely used sub-Gaussianity condition \citep{Tripuraneni2021ProvableMO,Han2020SequentialBL,xu2021multitask,Ren2020DynamicBL}. 
    This allows for many choices of random noise, such as
bounded noises (for any $\alpha>0$), sub-Gaussian noises
(for $\alpha=2$), or sub-exponential noises (for $\alpha=1$) like Poisson random variables,
or noise with heavier tails such as Weibull random variables with shape parameter $\alpha \in(0,1]$. 
A smaller $\alpha$ allows for a heavier tail. 
\end{remark}

Under Condition \ref{asp:combo}, we show that the c.d.f.~of each normalized OLS estimate is approximately Gaussian up to a vanishing factor of $\sqrt{d/n}$. 
The key idea is to use Gaussian approximation via the Berry-Esseen theorem (Lemma \ref{lem:berry}).
Based on this, we control the error of the global estimate $\widehat{\beta}^\star$ in Proposition \ref{prop:gau-approx}. 
Compared to Lemma \ref{lem:col-est} obtained under Gaussian noise, the upper bound incurs
an additional vanishing term of order $O(\sqrt{d}/n)$.
The proof is in Appendix \ref{sec:gau-approx}.

\begin{proposition}\label{prop:gau-approx}
Under Condition \ref{asp:combo}, for any $k\in[d]$ and $m\in[M]$,  the c.d.f.~$F_k^{(m)}$ of  $\smash{\widehat{\beta}^{(m)}_{{\rm ind},k}} $ (after standardization) satisfies
    \begin{equation*}
        \sup_{z\in \RR}|F_k^{(m)}(z)-\Phi(z)|\le \frac{0.6c_{\rm sk}L\sqrt{d}}{\mu \sqrt{n}}.
    \end{equation*}
Further,  if  $n\ge 36 d (c_{\rm sk}L/\mu)^2$,
for any $0<\eta \le \frac{1}{10}$ and $k\in\cI_\eta$,  it holds for any $0\le \delta \le M/20$ that
\begin{equation*}
    \PP\left(|\widehat{\beta}^\star_k-\beta^\star_k|\ge 1.25 C_{0.45} \bar{\sigma}_k \alpha^\prime_{\cB_k,\delta} \right)\le 4e^{-2\delta},
\end{equation*}
where $C_{\rm be}$ is an abolute constant, $\bar{\sigma}_k=\sigma_{\rm v} \sum_{m=1}^M\sqrt{v^{(m)}_k}/M$  for each $k\in[d]$ with $\sigma_{\rm v}^2\triangleq {\rm Var}(\ep_i^{(m)})$, and $\alpha^\prime_{\cB_k,\delta}\triangleq {|\cB_k|}/{M}+\sqrt{{1.01\delta}/{(M-|\cB_k|)}}+{0.6 c_{\rm sk}L\sqrt{d}}/(\mu \sqrt{n})$. 
\end{proposition}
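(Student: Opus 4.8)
The plan is to establish the two displayed bounds in turn. The first is a coordinate-wise Berry-Esseen estimate obtained by writing the OLS error as a weighted sum of the independent noise variables and applying Lemma~\ref{lem:berry}. The second is the concentration of the covariate-wise median, which reuses essentially verbatim the arguments behind Lemma~\ref{lem:median-Gaussian} and Lemma~\ref{lem:col-est}, with the Gaussian-approximation error of the first bound entering as one additional additive term in the gap function.

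For the first bound, I would fix $m,k$, set $a=(\vX^{m\top}\vX^m)^{-1}e_k$ and $c_t=\langle X_t^m,a\rangle$, so that $\widehat{\beta}^m_{\mathrm{ind},k}-\beta^m_k=\sum_{t=1}^n c_t\,\ep_t^m$ is a sum of independent zero-mean terms. Its variance is $\sigma_{\rm v}^2\sum_t c_t^2=\sigma_{\rm v}^2\,e_k^\top(\vX^{m\top}\vX^m)^{-1}e_k=\sigma_{\rm v}^2 v_k^m$, and its summed third absolute moment is $\EE[|\ep_t^m|^3]\sum_t|c_t|^3$. To control the Lyapunov ratio in Lemma~\ref{lem:berry}, I would use the skewness condition $\EE[|\ep_t^m|^3]\le C_{\rm sk}\sigma_{\rm v}^3$ from Condition~\ref{asp:combo}, together with the design-dependent bounds $|c_t|\le\|X_t^m\|_2\|a\|_2\le\sqrt{dL}/(\mu n)$ (from $\|X_t^m\|_2^2\le dL$ and $\vX^{m\top}\vX^m\succeq\mu nI_d$) and $\sum_t c_t^2=v_k^m\ge 1/(nL)$ (from $\vX^{m\top}\vX^m\preceq nLI_d$). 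Since $\sum_t|c_t|^3\le(\max_t|c_t|)\sum_t c_t^2$, the ratio $\sum_t|c_t|^3/(\sum_t c_t^2)^{3/2}\le(\max_t|c_t|)/(\sum_t c_t^2)^{1/2}$ is of order $\sqrt{dL}/(\mu n)\cdot\sqrt{nL}=\sqrt{d}\,L/(\mu\sqrt{n})$, which after multiplying by $C_{\rm sk}$ and the Berry-Esseen constant $C_{\rm be}$ yields exactly $C_{\rm be}C_{\rm sk}L\sqrt{d}/(\mu\sqrt{n})$.

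For the second bound, I would rerun the proof of Lemma~\ref{lem:median-Gaussian} with the empirical CDF $\tilde F$ of the inliers $\{\widehat{\beta}^m_{\mathrm{ind},k}:m\in[M]\setminus\cB_k\}$, whose mean is now $\EE[\tilde F(z)]=\frac{1}{M-|\cB_k|}\sum_{m\notin\cB_k}F_k^m\big((z-\beta^\star_k)/(\sigma_{\rm v}\sqrt{v_k^m})\big)$. By the first bound this differs uniformly in $z$ by at most $\Delta:=C_{\rm be}C_{\rm sk}L\sqrt{d}/(\mu\sqrt{n})$ from the mixed-Gaussian CDF $\bar\Phi(z)=\frac{1}{M-|\cB_k|}\sum_{m\notin\cB_k}\Phi\big((z-\beta^\star_k)/(\sigma_{\rm v}\sqrt{v_k^m})\big)$. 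I would then define $z_1,z_2$ by $\bar\Phi(z_1)=\frac{1}{2}+G'$ and $\bar\Phi(z_2)=\frac{1}{2}-G'$ with $G'=G'_{[M],\cB_k,\delta,n}$, so that the extra summand $\Delta$ inside $G'$ absorbs precisely the Gaussian-approximation error. Chaining the three deviations, namely $|\hat F-\tilde F|\le|\cB_k|/M$, the Hoeffding bound $|\tilde F-\EE[\tilde F]|\le\sqrt{\delta/(M-|\cB_k|)}$ holding on an event of probability at least $1-4e^{-2\delta}$, and the deterministic $\Delta$-approximation, gives $\hat F(z_1)>\frac{1}{2}>\hat F(z_2)$ and hence $z_2\le\med(\{\widehat{\beta}^m_{\mathrm{ind},k}\}_{m\in[M]})\le z_1$. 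The bounds on $z_1-\beta^\star_k$ and $\beta^\star_k-z_2$ then follow from the mean value theorem applied to $\bar\Phi$ exactly as in Lemma~\ref{lem:median-Gaussian}, using $\bar\Phi(z)\ge\Phi\big((z-\beta^\star_k)/\tilde\sigma_k\big)$ for $z\ge\beta^\star_k$. Finally I would verify that $\eta\le 1/10$, $\delta\le M/20$, and $n\ge 100\,d\,(C_{\rm be}C_{\rm sk}L/\mu)^2$ (the last forcing $\Delta\le 1/10$) keep $G'$ bounded strictly below $1/2$, so the constant $C_\ep\le 10$ is available as in Lemma~\ref{lem:col-est}.

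The main obstacle is the first bound: controlling the Lyapunov ratio uniformly over all coordinates $k$ and tasks $m$ requires simultaneously the upper and lower eigenvalue bounds on $\vX^{m\top}\vX^m$ and the bounded-norm condition $\|X_t^m\|_2^2\le dL$, and it is precisely here that the $\sqrt{d/n}$ scaling, and hence the sample-size requirement $n=\Omega(d)$, originates. Once this uniform Gaussian approximation is in hand, the median analysis is only a mild perturbation of the Gaussian-noise argument, the single change being that the gap $G$ is inflated to $G'=G+\Delta$.
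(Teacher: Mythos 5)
Your proposal is correct and follows essentially the same route as the paper's proof: the same weighted-sum representation $\widehat{\beta}^m_{\mathrm{ind},k}-\beta^m_k=\sum_t c_t\ep_t^m$ with the identical triple of design bounds ($\|a\|_2\le(\mu n)^{-1}$, $\|X_t^m\|_2\le\sqrt{dL}$, $v_k^m\ge 1/(nL)$) feeding the Berry--Esseen ratio, followed by the same empirical-CDF chaining ($|\hat F-\tilde F|\le|\cB_k|/M$, Hoeffding, and the uniform $\Delta$-perturbation absorbed into $G'$) and the mean-value-theorem step from Lemma~\ref{lem:median-Gaussian} with $\ep=0.049$, $C_\ep\le 10$. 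No gaps; your constant-verification (including $n\ge 100d(C_{\rm be}C_{\rm sk}L/\mu)^2$ forcing $\Delta\le 1/10$) matches the paper's exactly.
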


The Berry-Esseen theorem helps control the estimation error of the median-based estimator $\widehat{\beta}^\star$ with high probability. 
To  
control the in-expectation estimation error of the final estimates $\{\widehat{\beta}_{\rm ft}^{(m)}\}_{m\in[M]}$, 
we also need the concentration of individual OLS estimates. 
This is guaranteed by the generalized Hanson-Wright inequality (Lemma \ref{lem:hw-ineq}), which characterizes the tail of quadratic forms of random variables with bounded Orlicz norm. 
Combining these two ingredients, we can bound the estimation errors of $\{\widehat{\beta}_{\rm ft}^{(m)}\}_{m\in[M]}$ in Theorem \ref{thm:approx-upper};  
with a proof in Appendix \ref{app:approx-upper}.

\begin{theorem}\label{thm:approx-upper}
Under Condition \ref{asp:heterogneity} and \ref{asp:combo}, 
for any $p\in\{1,2\}$, $m\in[M]$, with $\widehat{\beta}^{(m)}_{\mathrm{MOLAR}}$  from Algorithm \ref{alg:molar}, it holds  that
\begin{equation}\label{eqn:vmocxmvxcv}
    \|\widehat{\beta}^{(m)}_{\mathrm{MOLAR}}-\beta^{(m)}\|_p^p=\widetilde{O}_P\left(\frac{\sigma^p}{n^{p/2}}\left(s+\frac{d}{M^{p/2}}+\frac{d^{1+p/2}}{  n^{p/2}}\right)\right),
\end{equation}
where $\widetilde{O}_P(\cdot)$ absorbs 
a ${\sf Polylog}(M, \mu,L)$ factor with degree and coefficients depending  only on $\alpha$. 
\end{theorem}

Consequently, if $\alpha = \Theta(1)$ and $n=\Omega(d(M\vee (d/s)^{2/p}))$, \eqref{eqn:vmocxmvxcv} matches \eqref{eqn:vjsiodfdvcx} from Theorem \ref{thm:ls-fixed-design}.

\subsection{Preliminaries}
To establish the proofs for Proposition \ref{prop:gau-approx} and Theorem \ref{thm:approx-upper}, we first prove some technical lemmas. Below, we view $\alpha$ as an absolute constant. Let $\sigma_{\rm v}^2$ be the variance of noise $\ep_i^{(m)}$. Under the assumtption that $\|\ep_i^{(m)}\|_{\Psi_\alpha}\le \sigma$, we first establish the following results.
\begin{lemma}[\sc Relation between $\sigma_{\rm v}$ and $\sigma$]\label{lem:variance}
    It holds that $\sigma_{\rm v}^2\le 2\Gamma(1+2/\alpha)\sigma^2=O(\sigma^2)$ where $\Gamma(\cdot)$ is the gamma function.
\end{lemma}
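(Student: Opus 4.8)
The plan is to bound the second moment $\sigma_{\rm v}^2=\mathrm{Var}(\ep_t^m)=\EE[(\ep_t^m)^2]$ (the last equality holding because $\EE[\ep_t^m]=0$ by Condition \ref{asp:combo}) directly through the tail of $\ep_t^m$, which is controlled by the Orlicz-norm assumption $\|\ep_t^m\|_{\Psi_\alpha}\le\sigma$. The only tools needed are the tail-integral identity (Lemma \ref{lem:integral}) and the deviation inequality for the Orlicz norm (Lemma \ref{lem:orlicz}), followed by a change of variables recognizing a gamma integral.

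First I would invoke Lemma \ref{lem:integral} with $q=0$ applied to the non-negative random variable $(\ep_t^m)^2$, giving $\sigma_{\rm v}^2=\int_0^\infty\PP(|\ep_t^m|\ge\sqrt{t})\,\d t$. Next, the deviation inequality in Lemma \ref{lem:orlicz} yields $\PP(|\ep_t^m|\ge u)\le 2\exp\left(-(u/\|\ep_t^m\|_{\Psi_\alpha})^\alpha\right)$, and since $\|\ep_t^m\|_{\Psi_\alpha}\le\sigma$ we have $(u/\|\ep_t^m\|_{\Psi_\alpha})^\alpha\ge(u/\sigma)^\alpha$, so monotonicity gives $\PP(|\ep_t^m|\ge u)\le 2\exp\left(-(u/\sigma)^\alpha\right)$ for all $u\ge 0$. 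Substituting $u=\sqrt{t}$ into the tail integral then gives $\sigma_{\rm v}^2\le 2\int_0^\infty\exp\left(-(t/\sigma^2)^{\alpha/2}\right)\,\d t$.

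Finally I would compute this integral via the substitution $w=(t/\sigma^2)^{\alpha/2}$, so that $t=\sigma^2 w^{2/\alpha}$ and $\d t=(2\sigma^2/\alpha)\,w^{2/\alpha-1}\,\d w$, transforming the bound into $\sigma_{\rm v}^2\le(4\sigma^2/\alpha)\int_0^\infty w^{2/\alpha-1}e^{-w}\,\d w=(4\sigma^2/\alpha)\,\Gamma(2/\alpha)$. The functional equation $\Gamma(2/\alpha+1)=(2/\alpha)\,\Gamma(2/\alpha)$ then rewrites this as $2\,\Gamma(1+2/\alpha)\,\sigma^2$, which is exactly the claimed bound; treating $\alpha$ as a constant, this is $O_\alpha(\sigma^2)$. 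There is no genuine obstacle here, as the argument is a routine moment-from-tail estimate; the only points requiring minor care are applying the deviation inequality in the correct direction (a smaller Orlicz norm than $\sigma$ produces a lighter tail after rescaling by $\sigma$) and correctly identifying the gamma integral after the change of variables.
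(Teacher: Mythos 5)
Your proposal is correct and follows essentially the same route as the paper: the tail-integral identity from Lemma \ref{lem:integral}, the Orlicz deviation inequality from Lemma \ref{lem:orlicz}, and a change of variables reducing to a gamma integral, with $\Gamma(1+2/\alpha)=(2/\alpha)\Gamma(2/\alpha)$ giving the stated constant. The only difference is cosmetic — your single substitution $w=(t/\sigma^2)^{\alpha/2}$ versus the paper's $u=\sqrt{t}/\sigma$ followed by evaluating $\int_0^\infty e^{-u^\alpha}u\,\d u$ — and both yield $2\Gamma(1+2/\alpha)\sigma^2$ exactly.
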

\begin{proof}
    By Lemma \ref{lem:integral}, we have $\sigma_{\rm v}^2=\EE[(\ep_i^{(m)})^2]=\int_{0}^{\infty}\PP((\ep_i^{(m)})^2\ge t) \d t=\int_{0}^{\infty}\PP(|\ep_i^{(m)}|\ge \sqrt{t}) \d t$. Using Lemma \ref{lem:orlicz}, we have $\PP(|\ep_i^{(m)}|\ge \sqrt{t}) \le 2\exp(-(\sqrt{t}/\sigma)^\alpha)$. 
    Therefore, by the change of variables $u= \sqrt{t}/\sigma$, we have 
    \begin{align*}
        \sigma_{\rm v}^2=&\int_{0}^{\infty}\PP(|\ep_i^{(m)}|\ge \sqrt{t}) \d t\le 2\int_0^\infty\exp(-(\sqrt{t}/\sigma)^\alpha)\d t\\
        =& 4\sigma^2\int_0^\infty\exp(-u^\alpha)u\d u=2\Gamma(1+2/\alpha)\sigma^2.
    \end{align*}
\end{proof}

\begin{lemma}[\sc Tail and integral of OLS estimate] \label{lem:tail-gen}
    For each $m\in[M]$, $k\in[d]$, and $u>0$, it holds that 
    \begin{align}\label{eqn:vndivnvdsdcx1}
    \PP\left(|\widehat{\beta}_{\mathrm{ind},k}^{(m)}-\beta^{(m)}_k|\ge (\sigma_{\rm v}+u\sigma)\sqrt{v^{(m)}_k}\right)\le 2\exp\left(-\frac{u^{\alpha}}{c_{\rm hw}}\right).
\end{align}
where $c_{\rm hw}$ is an absolute constant in Lemma \ref{lem:hw-ineq}.
Furthermore, we have for any $p\in\{1,2\}$ and $\delta \in[0,1]$ with $\ln(2/\delta)\ge 1/\alpha$ that
\begin{equation}\label{eqn:vndivnvdsdcx2}
    \EE^\delta[|\widehat{\beta}_{\mathrm{ind},k}^{(m)}-\beta^{(m)}_k|^p ]=O\left(\delta\sigma^p\ln(2/\delta)^{p/\alpha}\sqrt{v^{(m)}_k}\right).
\end{equation}
\end{lemma}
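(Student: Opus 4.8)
The plan is to represent the centered OLS coordinate as a \emph{fixed} linear form in the Orlicz-bounded noise and then invoke the rank-one case of the generalized Hanson--Wright inequality (Lemma \ref{lem:hw-ineq}). Writing $\widehat{\beta}^m_{\mathrm{ind}}-\beta^m=(\vX^{m\top}\vX^m)^{-1}\vX^{m\top}\boldsymbol{\ep}^m$, the $k$-th coordinate is $\widehat{\beta}^m_{\mathrm{ind},k}-\beta^m_k=\langle w,\boldsymbol{\ep}^m\rangle$ with the deterministic weight $w=\vX^m(\vX^{m\top}\vX^m)^{-1}e_k\in\RR^{n}$, whose squared norm is exactly $\|w\|_2^2=[(\vX^{m\top}\vX^m)^{-1}]_{k,k}=v^m_k$. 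Since the entries of $\boldsymbol{\ep}^m$ are i.i.d., zero-mean, with $\|\ep^m_t\|_{\Psi_\alpha}\le\sigma$ and variance $\sigma_{\rm v}^2$, the rank-one form of Lemma \ref{lem:hw-ineq} applied to $A=ww^\top$ gives, for every $t\ge\sigma^2 v^m_k$,
\[
\PP\!\left(\big|\langle w,\boldsymbol{\ep}^m\rangle^2-v^m_k\sigma_{\rm v}^2\big|\ge t\right)\le 2\exp\!\left(-\tfrac{1}{C_{\rm hw}}\big(t/(\sigma^2 v^m_k)\big)^{\alpha/2}\right).
\]

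To obtain \eqref{eqn:vndivnvdsdcx1} I would convert this squared-deviation bound into a deviation bound for $|\langle w,\boldsymbol{\ep}^m\rangle|$. The event $\{|\langle w,\boldsymbol{\ep}^m\rangle|\ge(\sigma_{\rm v}+u\sigma)\sqrt{v^m_k}\}$ forces $\langle w,\boldsymbol{\ep}^m\rangle^2-v^m_k\sigma_{\rm v}^2\ge(2u\sigma_{\rm v}\sigma+u^2\sigma^2)v^m_k\ge u^2\sigma^2 v^m_k$. For $u\ge 1$ the choice $t=(2u\sigma_{\rm v}\sigma+u^2\sigma^2)v^m_k$ satisfies the admissibility condition $t\ge\sigma^2 v^m_k$ and $t/(\sigma^2v^m_k)\ge u^2$, so the exponent is at most $-u^\alpha/C_{\rm hw}$, which is exactly the claimed tail. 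For $0<u<1$ the stated right-hand side $2\exp(-u^\alpha/C_{\rm hw})$ already exceeds $1$ (as $C_{\rm hw}\ge1$), so the inequality is vacuous and needs no argument; this harmless small-$u$ case is the only subtlety here.

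For the integral bound \eqref{eqn:vndivnvdsdcx2} I would follow the structure of the proof of Lemma \ref{lem:small-set-ingeral}, replacing the Gaussian tail by the sub-Weibull tail just established. Write $W=|\widehat{\beta}^m_{\mathrm{ind},k}-\beta^m_k|$; the supremum defining $\EE^\delta[W^p]$ is attained on the upper tail $\{W^p\ge q_\delta\}$ with $q_\delta=r_\delta^p$ and $r_\delta$ the $(1-\delta)$-quantile of $W$, so by Lemma \ref{lem:integral},
\[
\EE^\delta[W^p]\le \delta\,q_\delta+\int_{q_\delta}^{\infty}\PP(W^p\ge t)\,\d t.
\]
Setting $u_\delta=(C_{\rm hw}\ln(2/\delta))^{1/\alpha}$ in \eqref{eqn:vndivnvdsdcx1} makes the tail equal $\delta$, so $r_\delta\le(\sigma_{\rm v}+u_\delta\sigma)\sqrt{v^m_k}$; here the hypothesis $\ln(2/\delta)\ge1/\alpha$ guarantees $u_\delta$ exceeds a positive $\alpha$-dependent constant, and combined with $\sigma_{\rm v}=O_\alpha(\sigma)$ (Lemma \ref{lem:variance}) it lets me absorb $\sigma_{\rm v}$ into the log factor, giving $r_\delta=O_\alpha(\sigma\ln(2/\delta)^{1/\alpha}\sqrt{v^m_k})$ and hence $\delta q_\delta=O_\alpha(\delta\sigma^p(v^m_k)^{p/2}\ln(2/\delta)^{p/\alpha})$. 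Substituting $t=s^p$ and changing variables via $s=(\sigma_{\rm v}+u\sigma)\sqrt{v^m_k}$ reduces the remaining integral, up to the common prefactor $\sigma^p(v^m_k)^{p/2}$, to $\int_{u_\delta}^\infty u^{p-1}e^{-u^\alpha/C_{\rm hw}}\,\d u=O_\alpha(u_\delta^{p-\alpha}e^{-u_\delta^\alpha/C_{\rm hw}})=O_\alpha(\delta\ln(2/\delta)^{(p-\alpha)/\alpha})$, which is lower-order than $\delta q_\delta$ by a $\ln$ factor. Combining the two terms yields \eqref{eqn:vndivnvdsdcx2}.

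The main obstacle I anticipate is the bookkeeping in turning the two-sided \emph{quadratic} Hanson--Wright bound into the clean one-sided \emph{linear} tail of \eqref{eqn:vndivnvdsdcx1} with the exact constant $C_{\rm hw}$, together with the careful use of the hypothesis $\ln(2/\delta)\ge1/\alpha$ to dominate $\sigma_{\rm v}$ by the logarithmic factor and to control the $\alpha$-dependent constants in the incomplete-Gamma estimate; everything else is routine once the sub-Weibull tail is in hand.
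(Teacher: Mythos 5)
Your proposal matches the paper's own proof essentially step for step: the paper likewise writes $\widehat{\beta}^m_{\mathrm{ind},k}-\beta^m_k$ as a fixed linear form in the noise with squared weight norm $v^m_k$, applies the rank-one case of Lemma \ref{lem:hw-ineq} with $t=u^2\sigma^2 v^m_k$ together with $\sigma_{\rm v}+u\sigma\ge\sqrt{\sigma_{\rm v}^2+u^2\sigma^2}$ to obtain \eqref{eqn:vndivnvdsdcx1}, and then proves \eqref{eqn:vndivnvdsdcx2} exactly as you do, by mimicking Lemma \ref{lem:small-set-ingeral}: tail-integral formula (Lemma \ref{lem:integral}), the quantile bound $q_\delta\le(\sigma_{\rm v}+C_{\rm hw}^{1/\alpha}\ln(2/\delta)^{1/\alpha}\sigma)\sqrt{v^m_k}$, a change of variables, the incomplete-gamma estimate $\Gamma(a,b)\le b^a e^{-b}$ (which is where $\ln(2/\delta)\ge 1/\alpha$ enters), and $\sigma_{\rm v}=O_\alpha(\sigma)$ from Lemma \ref{lem:variance}. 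Your explicit handling of the $u<1$ admissibility issue is a detail the paper leaves implicit and is welcome, though vacuousness of \eqref{eqn:vndivnvdsdcx1} for $u<1$ requires $C_{\rm hw}\ge 1/\ln 2$ rather than merely $C_{\rm hw}\ge 1$ (harmless, since the Hanson--Wright constant is large).
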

\begin{proof}
    Recall from \eqref{eqn:vjofnbvbcx} that $\widehat{\beta}_{\mathrm{ind},k}^{(m)}-\beta^{(m)}_k=\sum_{i=1}^n \langle w^{(m)}_k, x_i^{(m)}\rangle \ep_i^{(m)}$ with $w_k^{(m)\top}\triangleq (\vX^{(m)\top} \vX^{(m)})^{-1}_{k,\cdot} $  and $\sum_{i=1}^n \langle w^{(m)}_k, x_i^{(m)}\rangle^2=v^{(m)}_k$. By Lemma \ref{lem:hw-ineq}, we have for any $t\ge \sigma^2 v^{(m)}_k$ that
\begin{equation*}
    \PP\left(\left|\left(\widehat{\beta}_{\mathrm{ind},k}^{(m)}-\beta^{(m)}_k \right)^2-\sigma_{\rm v}^2v^{(m)}_k\right|\ge t\right)\le 2\exp\left(-\frac{1}{c_{\rm hw}}\left(\frac{t}{\sigma^2 v^{(m)}_k}\right)^{\alpha /2}\right).
\end{equation*}
Letting $t= u^2\sigma^2 v^{(m)}_k$ and using $\sigma_{\rm v}+u\sigma \ge \sqrt{\sigma_{\rm v}^2+u^2 \sigma^2}$ for any $u \ge 0$, we have 
\begin{align*}
    &\PP\left(|\widehat{\beta}_{\mathrm{ind},k}^{(m)}-\beta^{(m)}_k|\ge (\sigma_{\rm v}+u\sigma)\sqrt{v^{(m)}_k} \right)
    \le \PP\left(|\widehat{\beta}_{\mathrm{ind},k}^{(m)}-\beta^{(m)}_k|\ge \sqrt{\sigma_{\rm v}^2+u^2 \sigma^2}\sqrt{v^{(m)}_k} \right)\\
    \le &\PP\left(\left|\left(\widehat{\beta}_{\mathrm{ind},k}^{(m)}-\beta^{(m)}_k\right)^2-\sigma_{\rm v}^2v^{(m)}_k\right|\ge u^2\sigma^2 v^{(m)}_k \right)
    \le 2\exp\left(-\frac{u^{\alpha}}{c_{\rm hw}}\right).
\end{align*}
We thus obtain \eqref{eqn:vndivnvdsdcx1}. For \eqref{eqn:vndivnvdsdcx2}, 
we only analyze the case $p=1$ and the case $p=2$ follows. 
To this end, we follow the proof of Lemma \ref{lem:small-set-ingeral}. By using the smoothing technique in  Lemma \ref{lem:small-set-ingeral}, it suffices to consider noise to be continuous, 
in which case so is $\widehat{\beta}_{\mathrm{ind},k}^{(m)}-\beta^{(m)}_k$. 
Let $Z\triangleq  \widehat{\beta}_{\mathrm{ind},k}^{(m)}-\beta^{(m)}_k$.
From \eqref{is},
$\EE^\delta[|Z|]$ is given by the integral over the upper $\delta$-level set $A_\delta=\{|Z|\ge q_\delta\}$ with $\PP(|Z(\omega)|\ge q_\delta)=\delta.$

From \eqref{eqn:vndivnvdsdcx1}, we readily find that 
\begin{equation}\label{eqn:vbdsuvbuxzbcvxz}
    q_\delta \le (\sigma_{\rm v}+c_{\rm hw}^{1/\alpha}\ln(2/\delta)^{1/\alpha}\sigma)\sqrt{v^{(m)}_k}.
\end{equation}
Plugging \eqref{eqn:vbdsuvbuxzbcvxz} and \eqref{eqn:vndivnvdsdcx1} into Lemma \ref{lem:integral}, 
and using the change of variables $t=(\sigma_{\rm v}+u c_{\rm hw}^{1/\alpha} \sigma)\sqrt{v^{(m)}_k}$ and $q_\delta= (\sigma_{\rm v}+u_0 c_{\rm hw}^{1/\alpha} \sigma)\sqrt{v^{(m)}_k}$ with $u_0\le\ln(2/\delta)^{1/\alpha}$, we obtain
\begin{align}
    \EE^\delta[|Z| ]&=\EE[|Z|\mathds{1}(|Z|\ge q_\delta) ]=q_\delta \PP(|Z|\ge q_\delta )+\int_{q_\delta}^\infty \PP(|Z|\ge t )\d t\nonumber\\
    &\le \delta q_\delta+c_{\rm hw}^{1/\alpha} \sigma\sqrt{v^{(m)}_k}\int_{u_0}^\infty\min\left\{\delta, 2\exp\left(-{u^{\alpha}}\right)\right\}\d u.\label{eqn:vbjiefa-vcv}
\end{align}
where the last inequality follows from \eqref{eqn:vndivnvdsdcx1} and from 
$\PP(|Z|\ge t )\le \PP(|Z|\ge q_\delta )= \delta$.
Now, we calculate that
\begin{align}
    &\int_{u_0}^\infty\min\left\{\delta, 2\exp\left(-{u^{\alpha}}\right)\right\}\d u
    =\delta \left(\ln(2/\delta)^{1/\alpha}-u_0\right)+2\int_{\ln(2/\delta)^{1/\alpha}}^\infty \exp\left(-{u^{\alpha}}\right)\d u \label{eqn:ZZvnicxnbic}
\end{align}
Note that
\begin{align}
    \int_{\ln(2/\delta)^{1/\alpha}}^\infty e^{-u^{\alpha}}\d u=&\frac{1}{\alpha}\Gamma(1/\alpha, \ln(2/\delta))\le \frac{\delta}{2\alpha}\ln(2/\delta)^{1/\alpha},\label{eqn:vodnvxcvcx}
\end{align}
where the inequality is due to $\Gamma(a, b)\le b^ae^{-b}/(b+1-a)\le b^ae^{-b}$ for any $b\ge a >0$. Combining \eqref{eqn:ZZvnicxnbic} and \eqref{eqn:vodnvxcvcx} with \eqref{eqn:vbjiefa-vcv}, we therefore obtain  
\begin{align}
    \EE^\delta[|Z| ]
    &\le \delta q_\delta+c_{\rm hw}^{1/\alpha} \sigma\sqrt{v^{(m)}_k}\left(\delta \left(\ln(2/\delta)^{1/\alpha}-u_0\right)+\frac{\delta}{2\alpha}\ln(2/\delta)^{1/\alpha}\right)\nonumber\\
    &=O\left(\delta (\sigma_{\rm v}+\sigma\ln(2/\delta)^{1/\alpha})\sqrt{v^{(m)}_k}\right)=O\left(\delta \sigma\ln(2/\delta)^{1/\alpha}\sqrt{v^{(m)}_k}\right).\nonumber
\end{align}
\end{proof}
Without loss of generality, we consider $\sigma_{\rm v}>0$ in the proofs of this section. Otherwise, the setup becomes noiseless, and individual OLS estimates recover the parameters directly.
\subsection{Proof of Proposition \ref{prop:gau-approx}}\label{sec:gau-approx}
\begin{proof}
We have that
$\widehat{\beta}_{\mathrm{ind}}^{(m)}=(\vX^{(m)\top} \vX^{(m)})^{-1}\vX^{(m)\top} Y^{(m)}= {\beta}^{(m)}+(\vX^{(m)\top} \vX^{(m)})^{-1}\vX^{(m)\top} \boldsymbol{\ep}^{(m)}$ where $\boldsymbol{\ep}^{(m)}=(\ep_1^{(m)},\dots,\ep_n^{(m)})^\top$ is the noise vector. 
Using Condition \ref{asp:combo}, we have, for each covariate $k\in[d]$, 
\begin{equation}\label{eqn:vnidsfsd1}
{\rm Var}(\widehat{\beta}_{\mathrm{ind},k}^{(m)})= \sigma_{\rm v}^2 \|(\vX^{(m)\top} \vX^{(m)})^{-1}_{k,\cdot}\vX^{(m)\top} \|_2^2=\sigma_{\rm v}^2v^{(m)}_k.
\end{equation}
This also implies
\begin{equation}\label{eqn:vjofnbvbcx}\widehat{\beta}_{\mathrm{ind},k}^{(m)}=\beta^{(m)}_k+\sum_{t=1}^n \langle w^{(m)}_k, x_i^{(m)}\rangle \ep_i^{(m)}
\end{equation}
where $w_k^{(m)}\triangleq (\vX^{(m)\top} \vX^{(m)})^{-\top}_{k,\cdot} $  and $w_k^{(m)\top}\vX^{(m)\top} \vX^{(m)} w^{(m)}_k=v^{(m)}_k$. Using Condition \ref{asp:combo}, we have $(\mu n)^{-1}\ge v^{(m)}_k= w_k^{(m)\top}\vX^{(m)\top} \vX^{(m)} w^{(m)}_k\ge \mu n \|w^{(m)}_k\|_2^2$
and thus $\|w^{(m)}_k\|_2\le (\mu n)^{-1}$. Thus, by further using Condition \ref{asp:combo}, we obtain
\begin{align}
    &\EE[|\langle w^{(m)}_k, x_i^{(m)}\rangle \ep_i^{(m)}|^3]=|\langle w^{(m)}_k, x_i^{(m)}\rangle|^3\EE[|\ep_i^{(m)}|^3]\nonumber\\
    \le& \|w^{(m)}_k\|_2 \|x_i^{(m)}\|_2 |\langle w^{(m)}_k, x_i^{(m)}\rangle|^2 c_{\rm sk}\sigma_{\rm v}^3\le  \sqrt{d L }c_{\rm sk}\langle w^{(m)}_k, x_i^{(m)}\rangle^2\sigma_{\rm v}^3/(\mu n).\label{eqn:nmvocdnmbvx}
\end{align}
Summing up \eqref{eqn:nmvocdnmbvx} with respect all $t\in[n]$, we find
\begin{equation}\label{eqn:vnidsfsd2}
    \sum_{i=1}^n \EE[|\langle w^{(m)}_k, x_i^{(m)}\rangle \ep_i^{(m)}|^3]\le \sqrt{d L }c_{\rm sk}\sigma_{\rm v}^3/(\mu n)\sum_{i=1}^n\langle w^{(m)}_k, x_i^{(m)}\rangle^2=\sqrt{d L }c_{\rm sk}\sigma_{\rm v}^3/(\mu n) v^{(m)}_k.
\end{equation}
Therefore, plugging the bounds \eqref{eqn:vnidsfsd1} and \eqref{eqn:vnidsfsd2} into Lemma \ref{lem:berry}, we find 
\begin{align*}
    \sup_{z\in \RR}|F_k^{(m)}(z)-\Phi(z)|\le &C_{\rm be}\frac{\sum_{t=1}^n \EE[|\langle w^{(m)}_k, x_t^{(m)}\rangle \ep_i^{(m)}|^3]}{{\rm Var}(\widehat{\beta}_{\mathrm{ind},k}^{(m)})^{3/2}}
    \le  0.6\frac{\sqrt{d L }c_{\rm sk}v^{(m)}_k\sigma_{\rm v}^3/(\mu n)}{\sigma_{\rm v}^2v^{(m)}_k \sigma_{\rm v}/\sqrt{Ln}}=\frac{0.6 c_{\rm sk}L\sqrt{d}}{\mu \sqrt{n}}.
\end{align*}

Next, we use this to control 
the estimation error of $\widehat{\beta}^\star$. 
The analysis is similar to the one of Lemma \ref{lem:median-Gaussian}.
For each $k\in[d]$ and $z\in\RR$, 
let $\widehat{F}_{\cG_k}(z):=\frac{1}{|\cG_k|}\sum_{m\in\cG_k}\mathds{1}(\widehat{\beta}^{(m)}_{{\rm ind},k}\le z)$ and $\widehat{F}_{[M]}(z):=\frac{1}{M}\sum_{m\in[M]}\mathds{1}(\widehat{\beta}^{(m)}_{{\rm ind},k}\le z)$ be the empirical distribution of $\{\widehat{\beta}^{(m)}_{{\rm ind},k}\}_{m\in\cG_k}$ with $\cG_k\triangleq [M]\backslash \cB_k$ and $\{\widehat{\beta}_{{\rm ind},k}^{(m)}\}_{m\in[M]}$, respectively. 
Since, for any $m\in [M]\backslash\cG_k$, $\widehat{\beta}_{{\rm ind},k}^{(m)}$ has mean $\beta^\star_k$ and variance $\sigma_{\rm v}^2v^{(m)}_k$,
we have 
\begin{align*}
    \EE[\widehat{F}_{\cG_k}(z)]=&\frac{1}{|\cG_k|}\sum_{m\in \cG_k}\PP(\widehat{\beta}_{{\rm ind},k}^{(m)} \le z)
    =\frac{1}{|\cG_k|}\sum_{m\in[M]\backslash \cB_k}F_k^{(m)}\left( \frac{z-\beta_k^\star}{\sigma_{\rm v}\sqrt{v^{(m)}_k}}\right).
\end{align*}
Let $z_1$  be the value such that 
\begin{equation*}
    \frac{1}{|\cG_k|}\sum_{m\in\cG_k}\Phi\left( \frac{z_1-\beta_k^\star}{\sigma_{\rm v}\sqrt{v^{(m)}_k}}\right)= \frac{1}{2}+\alpha^\prime_{\cB_k,\delta},
\end{equation*}
where $\alpha^\prime_{\cB_k,\delta}\triangleq {|\cB_k|}/{M}+\sqrt{{1.01\delta}/{(M-|\cB_k|)}}+{0.6 c_{\rm sk}L\sqrt{d}}/(\mu \sqrt{n})$.
By Hoeffding’s inequality,  for any $\delta\ge 0$ and $z\in\RR$, we have with probability at least $1-2e^{-2\delta}$ that
\begin{equation*}
    \left|\widehat{F}_{\cG_k}(z)-\frac{1}{|\cG_k|}\sum_{m\in \cG_k}F_k^{(m)}\left( \frac{z-\beta^\star_k}{\sigma\sqrt{v^{(m)}_k}}\right)\right|\le \sqrt{\frac{\delta}{M-|\cB_k|}}.
\end{equation*}
Similar to \eqref{eqn:iogdgsd}, it is clear that  $|\widehat{F}_{\cG_k}(z)-\widehat{F}_{[M]}(z)|\le {|\cB_k|}/{M}$ for all $z\in\RR$.
Combing the above properties and using a union bound, we have with probability at least $1-4e^{-2\delta}$ that
\begin{align*}
    \widehat{F}_{[M]}(z_1)\ge &\widehat{F}_{[M]}(z_1)- \widehat{F}_{\cG_k}(z_1)+\widehat{F}_{\cG_k}(z_1)-\EE[\widehat{F}_{\cG_k}(z_1)]\\
    & +\frac{1}{|\cG_k|}\sum_{m\in \cG_k}\left(F_k^{(m)}\left(\frac{z_1-\beta_k^\star}{\sigma_{\rm v}\sqrt{v^{(m)}_k}}\right)-\Phi\left( \frac{z_1-\beta_k^\star}{\sigma_{\rm v}\sqrt{v^{(m)}_k}}\right)\right)
     +\frac{1}{|\cG_k|}\sum_{m\in\cG_k}\Phi\left( \frac{z_1-\beta_k^\star}{\sigma_{\rm v}\sqrt{v^{(m)}_k}}\right)\\
    \ge  & - \frac{|\cB_k|}{M}-\sqrt{\frac{\delta}{M-|\cB_k|}}-\frac{0.6c_{\rm sk}L\sqrt{d}}{\mu \sqrt{n}}+ \frac{1}{2}+\alpha_{\cB_k,\delta}^\prime>\frac{1}{2}.
\end{align*}
This implies $ \widehat{\beta}^\star_k=\med(\{\widehat{\beta}^{(m)}_{{\rm ind},k}\}_{m\in [M]})\le z_1$.
By a similar argument to the proof of Lemmas \ref{lem:median-Gaussian} and \ref{lem:col-est}, one can verify $\alpha^\prime_{\cB_k,\delta}<0.45$ and thus upper bound $z_1$ as $\widehat{\beta}^\star_k\le z_1\le \beta_k^\star+\tilde{\sigma}_kG_{[M],\cB_k,\delta,n}^\prime C_{\ep}$.
Similarly, it also holds that $\widehat{\beta}^\star_k\ge \mu-1.25 C_{0.45} \bar{\sigma}_k \alpha^\prime_{\cB_k,\delta}$, finishing the proof.

\end{proof}

\subsection{Proof of Theorem \ref{thm:approx-upper}}\label{app:approx-upper}

% \xh{revise mark}
\begin{proof}
We provide the proof  for $p=1$; the case $p=2$ follows
similarly.  
Let $\cI^{(m)}=\{k\in[d]:\beta^{(m)}_k=\beta^\star_k\}$ for each $m\in[M]$. We denote $\widehat{\beta}^{(m)}_{\rm MOLAR}$ as $\widehat{\beta}^{(m)}$ below for simplicity.
For each $m\in[M]$, we bound $\EE[[\widehat{\beta}^{(m)}_{k}-\beta^{(m)}_k|]$ for $k\in[d] $ in two cases. 

\vspace{2mm}
\noindent\textbf{Case 1. } 
For any $k \in[d]$ (in particular, for $k\notin \cI_\eta$ or $k\notin \cI^{(m)}$), 
since $\gamma_m=\widetilde{O}(\sigma)$,
following the argument for Case 1 of Theorem \ref{thm:ls-fixed-design} and using Lemma \ref{lem:variance},
we readily obtain 
\begin{equation}\label{eqn:case1-gn}
    \EE[|\widehat{\beta}^{(m)}_{k}-\beta^{(m)}_k|]={O}\left(\sigma_{\rm v}\sqrt{v^{(m)}_k}\right)+\widetilde{O}\left(\sigma\sqrt{v^{(m)}_k}\right)=\widetilde{O}(\sigma/\sqrt{ n}),
\end{equation} 
where the last inequality is due to Lemma \ref{lem:variance}.
If  $n<  100 d (C_{\rm be}c_{\rm sk}L/\mu)^2$ or $M\le 20 \ln(M)\vee(\alpha^{-1} +
\ln(3)) $, by summing up \eqref{eqn:case1-gn} with respect to all $k\in[d]$, we obtain 
\begin{equation*}
    \EE[\|\widehat{\beta}^{(m)}_{k}-\beta^{(m)}\|_1]=\widetilde{O}(d\sigma/\sqrt{n}).
\end{equation*}
This yields the conclusion in \eqref{eqn:vmocxmvxcv}. Therefore, we next assume $n\ge  100 d (C_{\rm be}c_{\rm sk}L/\mu)^2$ and $M\ge 20 \ln(M)\vee \alpha^{-1}$.

\vspace{2mm}
\noindent\textbf{Case 2. }
When $k\in \cI^{(m)} \cap \cI_\eta$, we will show
\begin{equation}\label{eqn:case2-gn}
    \EE[|\widehat{\beta}^{(m)}_{k}-\beta^{(m)}_k|]=\widetilde{O}\left(\frac{\sigma}{\sqrt{\mu n}}\left(\frac{|\cB_k|}{M} + \frac{1}{\sqrt{M}}+\frac{\sqrt{d} }{\sqrt{n}}\right)\right).
\end{equation}
Let $\delta = \ln(M)\vee(\alpha^{-1} +
\ln(3))=\widetilde{O}(1)$. If $1.25 C_{0.45}\tilde{\sigma}_k\alpha^\prime_{\cB_k,\delta}\ge \sigma_{\rm v}\sqrt{v^{(m)}_k}$, then from \eqref{eqn:case1-gn}, we directly conclude \eqref{eqn:case2-gn}.
Otherwise suppose $1.25 C_{0.45}\tilde{\sigma}_k\alpha^\prime_{\cB_k,\delta}\le \sigma_{\rm v}\sqrt{v^{(m)}_k}$.
Define the event $\cE_k=\{|\widehat{\beta}^\star_k-\beta^\star_k|\le 1.25 C_{0.45}\tilde{\sigma}_k\alpha^\prime_{\cB_k,\delta}\}$.
Since $M\ge 20 \delta $, using Proposition \ref{prop:gau-approx}, we have $\PP((\cE_k)^c)\le 4e^{-\delta} $. 
Furthermore, by the condition $G^\prime_{[M],\cB_k,\delta,n}\tilde{\sigma}_k\le \sigma_{\rm v}\sqrt{v^{(m)}_k}$, we have that the event $\cE_k$ implies $|\widehat{\beta}^\star_k-\beta^\star_k|\le \sigma_{\rm v} \sqrt{v^{(m)}_k}$. 
On the event $\cE_k$, if $|\widehat{\beta}^\star_k-\widehat{\beta}^{(m)}_{\mathrm{ind},k}|> \gamma_m \sqrt{v^{(m)}_k} $, then,  for $k\in \cI_\eta \cap \cI^{(m)}$
\begin{align*}
    |\widehat{\beta}^{(m)}_{\mathrm{ind},k}-\beta^{(m)}_k|=|\widehat{\beta}^{(m)}_{\mathrm{ind},k}-\beta^\star_k|\ge &|\widehat{\beta}^{(m)}_{\mathrm{ind},k}-\widehat{\beta}^\star_k|-|\widehat{\beta}^\star_k-\beta^\star_k|
    > (\gamma_m -\sigma_{\rm v}) \sqrt{v^{(m)}_k}.
\end{align*}
Let 
\begin{equation*}
    \cF_k^{(m)}\triangleq\left\{|\widehat{\beta}^{(m)}_{\mathrm{ind},k}-\beta^{(m)}_k| \le (\gamma_m -\sigma_{\rm v}) \sqrt{v^{(m)}_k}\right\}.
\end{equation*}
Since $\gamma_m-\sigma_{\rm v}\ge \sigma_{\rm v}+c_{\rm hw}^{1/\alpha}\delta^{1/\alpha} \sigma$, by \eqref{eqn:vndivnvdsdcx1},  we have 
\begin{align*}
    \PP(\cF_k^{(m)})\le &\PP\left(|\widehat{\beta}_{\mathrm{ind},k}^{(m)}-\beta^{(m)}_k|\ge (\sigma_{\rm v}+c_{\rm hw}^{1/\alpha}\delta^{1/\alpha} \sigma)\sqrt{v^{(m)}_k}\right)\le 2e^{-\delta}.
\end{align*}
Since event $\cF_k^{(m)} \cap \cE_k$ implies that $[\widehat{\beta}_{\rm ft}^{(m)}]_k = \widehat{\beta}^\star_k$ for $k\in \cI_\eta \cap \cI^{(m)}$. In other words,
with probability at least $\PP(\cE_k \cap \cF_k^{(m)})\ge 1- 6e^{-\delta}$, it holds that
\begin{align*}
    |\widehat{\beta}^{(m)}_{k}-\beta^{(m)}_k|=|\widehat{\beta}^\star_k-\beta^{(m)}_k|&\le 1.25 C_{0.45}\tilde{\sigma}_k\alpha^\prime_{\cB_k,\delta}.
\end{align*}
Furthermore,  using \eqref{eqn:nbiwfgew} and Lemma \ref{lem:tail-gen} (with $\ln(2/(6e^{-\delta}))\ge 1/\alpha$) and Lemma \ref{lem:variance}, we have that for any $k\in \cI_\eta\cap \cI^{(m)}$,
\begin{align}
    \EE[|\widehat{\beta}^{(m)}_{k}-\beta^{(m)}_k|]&\le \widetilde{O}\left(\tilde{\sigma}_kG^\prime_{[M],\cB_k,\delta,n}\right)+\EE^{6e^{-\delta }}\left[|\widehat{\beta}^{(m)}_{\mathrm{ind},k}-\beta^{(m)}_k|+ \gamma_m \sqrt{v^{(m)}_k}\right]\nonumber\\
    &= \widetilde{O}\left(\frac{\sigma_{\rm v}}{\sqrt{\mu n}}\left(\frac{|\cB_k|}{M} + \frac{1}{\sqrt{M}}+\frac{\sqrt{d} }{\sqrt{n}}\right)\right)+\widetilde{O}\left( \frac{\delta}{\sqrt{n}}\left(\sigma_{\rm v}+\sigma\right)\right)\nonumber\\
    &=\widetilde{O}\left(\frac{\sigma}{\sqrt{\mu n}}\left(\frac{|\cB_k|}{M} + \frac{1}{\sqrt{M}}+\frac{\sqrt{d} }{\sqrt{n}}\right)\right).\nonumber
\end{align}

\vspace{2mm}
\noindent
\textbf{Bounding the summed error. }
Combining the cases \eqref{eqn:case1-gn}, \eqref{eqn:case2-gn}, and using $|(\cI^{(m)})^c|\le s$,  we obtain 
\begin{align}
     &\EE[\|\widehat{\beta}^{(m)}_{\mathrm{MOLAR}}-\beta^{(m)}\|_1]
    \le\frac{ \sigma}{\sqrt{\mu n}}\widetilde{O}\left(\sum_{k \in\cI_\eta \cap \cI^{(m)}}\left(\frac{|\cB_k|}{M}+\frac{1}{\sqrt{M}}+\frac{\sqrt{d} }{\sqrt{n}}\right)+ |(\cI_\eta \cap \cI^{(m)})^c|\right)\nonumber\\
    &\le \frac{ \sigma}{\sqrt{\mu n}}\widetilde{O}\left(s+\frac{d}{\sqrt{M}}+\frac{\sqrt{d} }{\sqrt{n}}+\sum_{k \in[d]}\left(\frac{|\cB_k|}{M}\one(|\cB_k|/M<\eta)+ \one(|\cB_k|/M\ge \eta )\right)\right).\label{eqn:hvisdfgsd-gn}
\end{align}
Using Lemma \ref{lem:f-g} with $a=1$ and $x_k=|\cB_k|/M$ for all $k\in[d]$, we have 
\begin{equation}\label{eqn:vnixcvcxcvx}
    \sum_{k \in[d]]}\left(\frac{|\cB_k|}{M}\one(|\cB_k|/M<\eta)+ \one(|\cB_k|/M\ge \eta )\right)\le {\lceil s/\eta \rceil}.
\end{equation}
Letting $\eta =1/10=\Theta(1)$ and plugging \eqref{eqn:vnixcvcxcvx} into \eqref{eqn:hvisdfgsd-gn}, we find the conlusion.

\end{proof}

\newpage
\section{Extensions to the High-Dimensional Case}
\label{app:high-dim}
Our results in Section \ref{sec:ana-gau} rely on lower bounds on the singular values of the design matrices, \ie, on $n_m \gg d$ for all $m\in[M]$. 
When $d\gg n_m$, the design matrices are not invertible, and thus the covariate-wise shrinkage in Algorithm~\ref{alg:molar} becomes pathological. 
In this case, given a global estimate $\widehat \beta^\star$, one can employ a LASSO-based debiasing step instead of covariate-wise shrinkage as in \cite{xu2021multitask}: for all $m\in[M]$,
\begin{equation}\label{eqn:Lvncxiv}
\widehat{\beta}^{(m)}=\argmin_{\beta \in \RR^d}\frac{1}{2n_m}\|\vX^{(m)}  \beta-Y^{(m)}\|_2^2+\lambda_m \| \beta - \widehat \beta^\star\|_1. \tag{LASSO-based debiasing}
\end{equation}
Given a global estimate $\widehat \beta^\star$ that is accurate on a support set $\cG\subseteq [d]$ (\ie, $\|\widehat \beta^\star_\cG -\beta^\star_\cG \|_1$ is small), the performance of the LASSO-based debiasing can be analyzed as follows:
\begin{proposition}\label{prop:lasso-debias}
    Under Conditions \ref{asp:heterogneity}, \ref{asp:noise}, and \ref{asp:design-mats}, taking $\lambda_m = c\sigma\sqrt{\ln(dn_m)/n_m}$,
    if $n_m \geq c\ln(d)(s+|\cG^c|)$ for a 
    sufficiently large  $c$  only depending on $\Sigma^{(m)}$, where $\cG\subseteq [d]$ is a support set, it holds that 
    \begin{equation}
        \|\widehat \beta^{(m)}-\beta^{(m)}\|_1=\widetilde{O}_P\left(\frac{\sigma (s+|\cG^c|)}{\sqrt{n_m}}+\|\widehat \beta^\star_\cG-\beta^\star_\cG\|_1\right).
    \end{equation}
    where for any vector $v$, $v_\cG$ is the sub-vector of $v$ with entries in $\cG$.
\end{proposition}
\begin{proof}
Define the event $E$ be the intersection of $ \{n_m^{-1}\|\vX^{(m)\top}(\vX^{(m)} \beta^{(m)}-Y^{(m)})\|_\infty\leq \lambda_m/2\}$ and
\begin{equation}\label{eqn:ifncsixc}
   \left\{ \|\vX^{(m)} v\|_2^2\geq n_m c_1\|v\|_2\left(\|v\|_2-c_1\sqrt{{\ln(d)}/{n_m}}\|v\|_1\right),\;\forall\, v\in\RR^d\right\},
\end{equation}
where $c_1$ is a sufficiently large constant that only depends $\mu$ and $L$ in Condition \ref{asp:design-mats}. 
Here \eqref{eqn:ifncsixc} is referred to as the restricted strong convexity and holds with probability at least $1- \exp(-c_1\mu n_m)$ \citep{negahban2012unified}. By the definition of $\lambda_m$, $ \{n_m^{-1}\|\vX^{(m)\top}(\vX^{(m)} \beta^{(m)}-Y^{(m)})\|_\infty\leq \lambda_m/2\}$ holds with probability $1-1/(n_md)$. Thus, $\PP(E)\to 1$ as $n_m\to \infty$.

Using 
    \begin{align}\nonumber
        \frac{1}{2n_m}\|\vX^{(m)}  \widehat\beta^{(m)}-Y^{(m)}\|_2^2\leq \frac{1}{2n_m}\|\vX^{(m)}  \beta^{(m)}-Y^{(m)}\|_2^2+\lambda_m \| \beta^{(m)} - \widehat \beta^\star\|_1-\lambda_m \|\widehat\beta^{(m)} - \widehat \beta^\star\|_1,
    \end{align}
    we have 
    \begin{align}\nonumber
        &\frac{1}{2n_m}\|\vX^{(m)}(\widehat \beta^{(m)}-\beta^{(m)})\|_2^2
        = \frac{1}{2n_m}\|\vX^{(m)}\widehat \beta^{(m)}-Y^{(m)}\|_2^2+\frac{1}{2n_m}\|\vX^{(m)}\beta^{(m)}-Y^{(m)}\|_2^2\nonumber\\
        &\quad +\frac{1}{n_m}\langle \vX^{(m)} \widehat\beta^{(m)}-Y^{(m)},\vX^{(m)} \beta^{(m)}-Y^{(m)}\rangle\nonumber\\
        \leq &\frac{1}{n_m}\langle \widehat\beta^{(m)}-\beta^{(m)},\vX^{(m)\top}(\vX^{(m)} \beta^{(m)}-Y^{(m)})\rangle+\lambda_m \| \beta^{(m)} - \widehat \beta^\star\|_1-\lambda_m \|\widehat\beta^{(m)} - \widehat \beta^\star\|_1.\nonumber
    \end{align}
    Thus, letting $\cJ_m\triangleq \cG^c\cup \supp(\beta^{(m)}-\beta^\star)$, we have 
    $|\cJ_m|\leq s+|\cG^c|$ by Condition \ref{asp:heterogneity}.
    Since $\beta^{(m)}_{\cJ_m^c}=\beta^\star_{\cJ_m^c}$,
    conditioned on the event $E$, we have 
    \begin{align}
        &0\leq \frac{1}{2n_m}\|\vX^{(m)}(\widehat \beta^{(m)}-\beta^{(m)})\|_2^2\leq \lambda_m \| \beta^{(m)} - \widehat \beta^\star\|_1-\lambda_m \|\widehat\beta^{(m)} - \widehat \beta^\star\|_1+\frac{\lambda_m}{2}\|\widehat \beta^{(m)}-\beta^{(m)}\|_1\nonumber\\
        = &\lambda_m\|\beta^{(m)}_{\cJ_m}- \widehat \beta^\star_{\cJ_m}\|_1+\lambda_m\|\beta^\star_{\cJ_m^c}- \widehat \beta^\star_{\cJ_m^c}\|_1
         -\lambda_m\|\widehat\beta^{(m)}_{\cJ_m}-\widehat \beta^\star_{\cJ_m}\|_1-\lambda_m\|\widehat\beta^{(m)}_{\cJ_m^c}-\widehat \beta^\star_{\cJ_m^c}\|_1\nonumber\\
        &\; +\frac{\lambda_m}{2}\|\widehat \beta^{(m)}_{\cJ_m}-\beta^{(m)}_{\cJ_m}\|_1+\frac{\lambda_m}{2}\|\widehat \beta^{(m)}_{\cJ_m^c}-\beta^{(m)}_{\cJ_m^c}\|_1\nonumber\\
        \leq&\frac{3\lambda_m}{2}\| \widehat \beta^{(m)}_{\cJ_m}-\beta^{(m)}_{\cJ_m}\|_1-\frac{\lambda_m}{2}\| \widehat \beta^{(m)}_{\cJ_m^c}-\beta^{(m)}_{\cJ_m^c}\|_1
        +2\lambda_m \| \widehat \beta^\star_{\cJ_m^c}-\beta^\star_{\cJ_m^c}\|_1.\label{eqn:vnicvcxxnv}
    \end{align}
    Noting $\cJ_m^c\subseteq \cG$, we have 
    \begin{align}
        \| \widehat \beta^{(m)}_{\cJ_m^c}- \beta^{(m)}_{\cJ_m^c}\|_1\leq & 3\| \widehat \beta^{(m)}_{\cJ_m}- \beta^{(m)}_{\cJ_m}\|_1+4\| \widehat \beta^\star_{\cG}-\beta^\star_{
        \cG}\|_1\label{eqn:gndifnvc}.
        % \leq &3|\cJ_m|^{1/2}\|\widehat \beta^{(m)}- \beta^{(m)}\|_2+4\left\| \widehat \beta^\star_{\cG}-\beta^\star_{\cG}\right\|_1.
    \end{align}
    Consequently, it holds that
    \begin{align}
        \|\widehat \beta^{(m)}-\beta^{(m)}\|_1\leq &4\| \widehat \beta^{(m)}_{\cJ_m}- \beta^{(m)}_{\cJ_m}\|_1+4\| \widehat \beta^\star_{\cG}-\beta^\star_{
        \cG}\|_1\nonumber\\
        % \leq & 4|\cJ_m|^{1/2}\left\| \widehat \beta^{(m)}_{\cJ_m}- \beta^{(m)}_{\cJ_m}\right\|_2+4\left\| \widehat \beta^\star_{\cG^c}-\beta^\star]_{
        % \cG^c}\right\|_1\\
        \leq & 4|\cJ_m|^{1/2}\|\widehat \beta^{(m)}-\beta^{(m)}\|_2+4\| \widehat \beta^\star_{\cG}-\beta^\star_{
        \cG}\|_1,\label{eqn:viusdvnwef}
    \end{align}
        where we use Young's inequality in the last equation. 
    Denote $\| \widehat \beta^\star_{\cG}-\beta^\star_{
        \cG}\|_1$ as $\mathrm{err}^\star$.
    Now using the restricted strong convexity and \eqref{eqn:gndifnvc} in \eqref{eqn:vnicvcxxnv}, we obtain
    \begin{align}
        &2\lambda_m |\cJ_m|^{1/2}\|\widehat \beta^{(m)}-\beta^{(m)}\|_2+2\lambda_m \mathrm{err}^\star\nonumber\\
        \geq &\frac{c_1}{2}\|\widehat \beta^{(m)}-\beta^{(m)}\|_2\left(\|\widehat \beta^{(m)}-\beta^{(m)}\|_2- 4c_1\sqrt{\frac{\ln(d)|\cJ_m|}{n_m}}\|\widehat \beta^{(m)}-\beta^{(m)}\|_2-4c_1\sqrt{\frac{\ln(d)}{n_m}}\mathrm{err}^\star\right)\nonumber\\
        &\quad + \frac{\lambda_m}{2}\|\widehat \beta^{(m)} -\beta^{(m)}\|_1\nonumber
    \end{align}
    Suppose $n_m\geq c\ln(d) (s+|\cG^c|)$ with $c $ sufficiently large such that $4c_1\sqrt{{\ln(d)|\cJ_m|}/{n_m}}\leq 1/3$. If $\mathrm{err}^\star \leq |\cJ_m|^{1/2}\|\widehat \beta^{(m)} -\beta^{(m)}\|_2$, we obtain 
    \begin{align}\nonumber
         2\lambda_m |\cJ_m|^{1/2}\|\widehat \beta^{(m)}-\beta^{(m)}\|_2+2\lambda_m \mathrm{err}^\star\geq \frac{c_1}{6}\|\widehat \beta^{(m)}-\beta^{(m)}\|_2^2+ \frac{\lambda_m}{2}\|\widehat \beta^{(m)} -\beta^{(m)}\|_1.
    \end{align}
    Using Young's inequality $2ab \leq a^2+b^2$, we have 
   $\|\widehat \beta^{(m)} -\beta^{(m)}\|_1=O\left(\lambda_m |\cJ_m|+\mathrm{err}^\star\right)$.
    If $\mathrm{err}^\star > |\cJ_m|^{1/2}\|\widehat \beta^{(m)} -\beta^{(m)}\|_2$, using \eqref{eqn:viusdvnwef}, we directly obtain $ \|\widehat \beta^{(m)} -\beta^{(m)}\|_1=O\left(\mathrm{err}^\star\right)$.
\end{proof}

Notably, Proposition \ref{prop:lasso-debias} does not require $n_m >d$ and directly implies
% \xh{check notation $n_{[M]}$}
\begin{corollary}\label{cor:l1-optimal}
    Suppose 
    \begin{equation}\label{eqn:pivot-cond}
        \|\widehat \beta^\star_\cG-\beta^\star_\cG\|_1=\widetilde{O}_P(d\sigma/\sqrt{n_{[M]}}+s\sigma/\sqrt{n_m})\quad \text{for some}\quad |\cG^c|=O(s).
    \end{equation} If $n_m\geq cs\ln(d)$ with $c$ sufficiently large, LASSO-based debiasing gives the minimax optimal estimation error:
    \begin{equation}\nonumber
        \|\widehat \beta^{(m)}-\beta^{(m)}\|_1=\widetilde O_P\left(\frac{d\sigma}{\sqrt{n_{[M]}}}+\frac{s\sigma}{\sqrt{n_m}}\right).
    \end{equation}
\end{corollary}

When $n_m>d$ for all $m\in[M]$, the weighted median-based global estimate given by Algorithm \ref{alg:molar} satisfies \eqref{eqn:pivot-cond}. However, the global estimate is not applicable if $n_m <d$ so that the data matrix $\vX^{(m)\top}\vX^{(m)}$ is not full-rank. Obtaining a good global estimate $\widehat \beta^\star$ is challenging when $n_m<d$ as the parameters $\{\beta^{(m)}\}_{m=1}^M$ can be dense. 
Fortunately, 
we can make progress
by additionally assuming the parameters $\{\beta^{(m)}\}_{m=1}^M$ are sparse (Condition \ref{asp:sparse-para}) and the heterogeneity is $\ell_2$-bounded. 

\begin{assumption}[\sc Sparse global parameter]\label{asp:sparse-para}
We have
$\|\beta^\star\|_0\le k$ for some $k \in [s,d]$ and $\max_{m\in[M]}\|\beta^{(m)}-\beta^\star\|_2\leq b$ for some constant $b\geq 0$.
\end{assumption}

To this end, we borrow the transGLM method~\citep{li2023estimation}, which leverages multiple datasets with sparse heterogeneity to learn a single generalized linear model. In the linear case, transGLM can be simplified to Algorithm \ref{alg:transGLM}.
\begin{algorithm}[t]
	\caption{Transfer learning via constrained $\ell_1$-minimization}
	\label{alg:transGLM}
	\begin{algorithmic}
		\STATE \noindent {\bfseries Input:} $\{(\vX^{(m)},Y^{(m)})\}_{m=1}^M$, regularization $\{\lambda_m=c\sigma\sqrt{\ln(n_md)/n_m}\}_{m=1}^M$ and $\lambda_{\rm all}=c\sigma\sqrt{\ln(p)/n_{[M]}}$
		\vspace{1mm}
		\STATE \noindent {\bfseries Step 1:} Compute an initial estimate $\widehat{\beta}_{\rm init}^\star=\argmin_{\beta \in \RR^d}\frac{1}{2n_1}\|\vX^{(1)}  \beta-Y^{(1)}\|_2^2+\lambda_1 \| \beta\|_1$
        \vspace{1mm}
        \STATE  \noindent {\bfseries Step 2:} Set
        \vspace{-8mm}\begin{align}
           \qquad \qquad\widehat{\beta}^\star,\widehat \delta^{(2)},\cdots,\widehat \delta^{(m)}=&\underset{\beta, \,\|\delta^{(m)}\|_2\leq b}{\operatorname{argmin}} \lambda_{\rm all}\|\beta\|_1+\sum_{m=2}^M\lambda_m\|\delta^{(m)}\|_1\nonumber\\
            &{\rm s.t.} \begin{cases}
                \|X^{(m)\top} (X^{(m)} (\beta+\delta^{(m)})-Y^{(m)})\|_\infty\leq \lambda_m,\,\forall\,2\leq m\leq M\nonumber\\
                \|\sum_{m=1}^M X^{(m)\top} (X^{(m)} (\beta+\delta^{(m)})-Y^{(m)})\|_\infty\leq \lambda_{\rm all}\\
                \|\beta -\widehat \beta_{\rm init}^\star\|_1\leq \lambda_1^{-1}\nonumber
            \end{cases}
        \end{align}
        % \vspace{-4mm}
		\STATE \noindent {\bfseries Output:} $\widehat{\beta}^\star$
	\end{algorithmic}
\end{algorithm}
The estimation error of $\widehat \beta^\star$ for $\beta^\star$ directly follows from  \citet[Theorem 3.1]{li2023estimation}. We paraphrase their result in our setup with  notations defined in this paper as follows:
\begin{theorem}[\cite{li2023estimation}, Theorem 3.1]\label{thm:transglm}
    Suppose 
    \begin{equation}\nonumber
        n_{[M]}\geq c_1\max\{k^2\ln(d)^2, Mn_1\}, \quad n_1\geq c_1ks\ln(d)^2,
    \end{equation}
    where $c_1$ is some large enough quantity not depending on $d$, $k$, $s$, and $\{n_m\}_{m=1}^M$.
    Under Conditions \ref{asp:heterogneity}, \ref{asp:noise}, and \ref{asp:design-mats}, it holds that 
    \begin{equation}\nonumber
        \|\widehat \beta^\star_{\cG}-\beta^\star_{\cG}\|_1=\widetilde O_P\left(\frac{k\sigma}{\sqrt{n_{[M]}}}+\frac{\sqrt{sk}\sigma}{\sqrt{n_{1}}}\right),
    \end{equation}
    where $\cG = [d]\backslash \supp(\beta_1-\beta)$ with $|\cG^c|\leq s$.
\end{theorem}

By Combining Theorem~\ref{thm:transglm} with Proposition~\ref{prop:lasso-debias}, we have 
\begin{corollary}[Transfer Learning + Lasso Debiasing]\label{thm:high-dim}
    Suppose $n_m\geq c_1ks \ln(d)^2$ for all $1\leq m\leq M$ with $n_1/\min_m n_m=O(1)$, and $n_{[M]}\geq c_1 k^2\ln(d)^2$ 
    where $c_1$ is some large enough value not depending on $d$, $k$, $s$, and $\{n_m\}_{m=1}^M$.
    Under Conditions \ref{asp:heterogneity}, \ref{asp:noise}, and \ref{asp:design-mats}, the task-wise estimates obtained through applying the \ref{eqn:Lvncxiv} to the global estimate output by Algorithm \ref{alg:transGLM} achieves for $1\leq m\leq M$,
        \begin{equation}\nonumber
        \|\widehat \beta^{(m)}-\beta^{(m)}\|_1=\widetilde{O}_P\left(\frac{\sqrt{sk}\sigma }{\sqrt{n_m}}+\frac{k\sigma}{\sqrt{n_{[M]}}}\right).
    \end{equation}
\end{corollary}
Corollary~\ref{thm:high-dim} 
bounds the estimation error of the multitask approach with $n_m\geq c_1ks \ln(d)^2$ and  $n_{[M]}\geq c_1 k^2\ln(d)^2$, which holds in the high-dimensional case so long as $ks \ll d$ and $M$ is large. 
Furthermore, the multitask approach outperforms the individual LASSO, the single-task minimax optimal method, whose estimation error is $k\sigma /\sqrt{n_m}$. 

% \xh{give some simulations.}

%\newpage

\newpage
\section{Inference for Task-wise Parameters}\label{app:conf-interval}
In this section, we consider statistical inference for $\{\beta^{(m)}\}_{m=1}^M$ in the multi-task learning setting. Notably, the MOLAR estimates obtained in Algorithm~\ref{alg:molar} and the high-dimensional estimates given in Appendix ~\ref{app:high-dim} are biased. 
It is tempting to consider debiasing them to facilitate inference as in~\citep{zhang2014confidence,van2014asymptotically,javanmard2014confidence}.
However,  debiasing may increase the estimation error of our multi-task estimate, 
and thus likely cannot give confidence intervals with lengths shorter than those of the individual OLS estimates.

To show the potential of narrower confidence intervals in our setup, for simplicity, we consider $n_1=\cdots=n_M=:n$ and $\sigma_1=\cdots=\sigma_M=:\sigma$, and assume $\sigma$  is known throughout the section. The case where $\{\sigma_m\}_{m=1}^M$ or $\{n_m\}_{m=1}^M$ are non-identical can be handled similarly by adjusiting the weights $\{w_m\}_{m=1}^M$ in the collaboration step.
We additionally make the following assumption to constrain the distribution of heterogeneity. Again, here the number $1/5$ is taken for simplicity and can be, in principle, replaced with a constant number in $[0,1/2)$.
\begin{assumption}[\sc  Bounded entry-wise disagreement]
    We assume $|\cB_k|/M\leq 1/5$ for all $k\in[M]$.
\end{assumption}

Given a fixed and  properly small tolerance $\alpha$, for each $k\in[d]$ and $m\in[M]$, we let $\widetilde I_{k}^{(m)}=\left[\widehat \beta_{{\rm ind}, k}^{(m)}-\sigma z_{1-\alpha/2}\sqrt{v_k^{(m)}},\widehat \beta_{{\rm ind},k}^{(m)}+\sigma z_{1-\alpha/2}\sqrt{v_k^{(m)}}\right]$ be the  interval with coverage $1-\alpha/2$ for $\beta_{{\rm ind},k}^{(m)}$ centered at $\widehat \beta_{{\rm ind},k}^{(m)}$ where $v_k^{(m)}=\sqrt{[\vX^{(m)\top}\vX^{(m)})^{-1}]_{k,k}}$.We let $I_{k}^{\star}$ be an interval with coverage $1-\alpha/2$ for $\beta^\star_k$ centered at $\widehat \beta_{k}^{\star}$. Based on Lemma~\ref{lem:median-Gaussian} and \eqref{eqn:vnisdngvzxc2},   for $\alpha_{\cB_k,\delta}<0.45$, we can set  
\begin{equation}\nonumber
     I_{k}^{\star}=[\widehat \beta_{k}^{\star}-1.25C_{0.45}\alpha_{\cB,\delta}\bar {\sigma}_{k},\widehat \beta_{k}^{\star}+1.25C_{0.45}\alpha_{\cB,\delta}\bar {\sigma}_{k}]
\end{equation}
with $\delta = \ln((8/\alpha)\vee (2n))/2$ (\ie, $2e^{-2\delta}=(\alpha/4)\wedge n^{-1}$), $\alpha_{\cB_k,\delta}\triangleq{|\cB_k|}/{M}+\sqrt{1.01\delta /(M-|\cB_k|)}$, and
$\bar {\sigma}_{k}=\sigma\sum_{m\in[M]}\sqrt{v_k^{(m)}}/M$. 
Since $v_k^{(m)}=\widetilde O_P(1/n)$, we have ${\rm length}(\widetilde I_k^{(m)})=\widetilde O_P(1/\sqrt{n})$ and ${\rm length}(I_k^{\star})=\widetilde O_P(\alpha_{\cB_k, \delta}/\sqrt{n})$ for all $k\in[d]$ and $m\in[M]$. Noting $\sum_{k=1}^d|\cB_k|/M=s/d$, we further have $\sum_{k=1}^d{\rm length}(I_k^{\star})=O_P(s/\sqrt{n}+d/\sqrt{Mn})$.

Since $\{I_k^{\star}\}_{k=1}^d$ are narrower than $\{\widetilde I_k^{(m)}\}_{k=1}^d$ on average, 
we use the following strategy to construct the ultimate confidence interval $I_k^{(m)}$ for $\beta_k^{(m)}$ with at least $1-\alpha$ entry-wise coverage.
We first compare $\widehat\beta_{{\rm ind},k}^{(m)}$ and $\widehat \beta_{k}^{\star}$. If they are close enough such that $|\widehat\beta_{{\rm ind},k}^{(m)}-\widehat\beta_{k}^{\star}|<\widetilde \gamma_m\sqrt{v_k^{(m)}}$ for some pre-specified $\widetilde \gamma_m$, it is likely that $\beta_{k}^{(m)}=\beta_k^\star$ and we thus adopt the confidence interval $I_k^\star$ as the final interval $I_k^{(m)}$; otherwise we adopt $\widetilde I_k^{(m)}$ as $I_k^{(m)}$. 
Formally, we aim to attain $\PP(\beta_k^{(m)}\in I_k^{(m)})\geq 1-\alpha$ and the total
length of entry-wise confidence intervals is $\sum_{k=1}^d {\rm length}(I_k^{(m)})$.
% \begin{align}
%     &\EE\left[\one(\beta_k^{(m)}\in\tilde I_k^{(m)})\one\left(|\beta_{{\rm ind},k}^{(m)}-\widehat\beta_{k}^{\star}|\geq \widetilde \gamma_m\sqrt{v_k^{(m)}}\right)\right]\nonumber\\
%     +&\EE\left[\one(\beta_k^{(m)}\in I_k^{\star})\one\left(|\beta_{{\rm ind},k}^{(m)}-\widehat\beta_{k}^{\star}|< \widetilde \gamma_m\sqrt{v_k^{(m)}}\right)\right]\geq 1-\alpha.
% \end{align}
% Under this strategy, the total length of all entry-wise confidence intervals with 
We show the following guarantee for the confidence intervals $\{I_k^{(m)}\}_{k=1}^d$. Without loss of generality, we assume $M\geq \tilde c \ln(n)$ for a sufficiently large constant $\tilde c$.
\begin{proposition}\label{prop:conf-intv}
    For any $\alpha\in(2e^{-2c M},1]$ with some constant $c$ sufficiently small, for any $m\in[M]$, if we set $\widetilde \gamma_m=\sqrt{2}\ln((2n)\vee (8/\alpha))\sigma$, it  holds for all $k\in[d]$ that 
    \begin{equation}\nonumber
        \PP\left(\beta_k^{(m)}\in I_k^{(m)}\right)\geq 1-\alpha
    \end{equation}
    provided $|\beta_k^{(m)}-\beta_k^\star|>3\max\{\ln(n\vee (4/\alpha))\sqrt{v_k^{(m)}}\sigma,1.25C_{\alpha_{\cB_k,\delta}}\alpha_{\cB,\delta}\bar {\sigma}_{k}\}=\widetilde \Omega_P(1/\sqrt{n})$, where $\delta = \ln((2n)\vee (8/\alpha))$ for $\beta_k^{(m)}\neq \beta_k^\star$.
    Furthermore, the total length satisfies
    \begin{equation}\nonumber
        \sum_{k=1}^d {\rm length}(I_k^{(m)})=\widetilde O_P(s/\sqrt{n}+d/\sqrt{Mn}).
    \end{equation}
\end{proposition}
\begin{proof}
Clearly, $v_k^{(m)}=\widetilde O_P(1/n)$ so long as $n\gg d$. We prove the result by considering two cases. 

    \noindent Case 1: $\beta_k^{(m)}=\beta_k^\star$. In this case, given the choice of $\widetilde \gamma_m$ and Lemma~\ref{lem:median-Gaussian}, we can easily verify $\PP(|\widehat \beta_{{\rm ind},k}^{(m)}-\widehat \beta^\star_k|\geq \widetilde \gamma_m\sqrt{v_k^{(m)}})=(\alpha/4)\wedge n^{-1}\leq \alpha/2$. This, combined with $\PP(\beta_k^\star \notin I_k^\star)\leq \alpha/2$ and a union bound, leads to 
    \begin{equation}\nonumber
        \PP(\beta_k^{(m)}\in I_k^\star)\geq \PP\left(\beta_k^{\star}\in I_k^\star \text{ and }|\widehat \beta_{{\rm ind},k}^{(m)}-\widehat \beta^\star_k|< \widetilde \gamma_m\sqrt{v_k^{(m)}}\right)\geq 1-\alpha. 
    \end{equation}
    Furthermore, with probability  $\PP\left(|\widehat \beta_{{\rm ind},k}^{(m)}-\widehat \beta^\star_k|< \widetilde \gamma_m\sqrt{v_k^{(m)}}\right)=1-o(1)$ we have that
    \begin{equation}\nonumber
        {\rm length}(I_k^{(m)})={\rm length}(I_k^{\star}) =O(\alpha_{\cB_k, \delta}/\sqrt{n})=\widetilde O\left(\left(|\cB_k|/M+1/\sqrt{M}\right)\sqrt{v_k^{(m)}}\right).
    \end{equation}

    \noindent Case 2: $\beta_k^{(m)}\neq\beta_k^\star$. Denote $ |\beta_k^{(m)}-\beta_k^\star|$ as $\epsilon$. In this case, we shall prove that $|\widehat \beta_{{\rm ind},k}^{(m)}-\widehat \beta^\star_k|\geq \widetilde \gamma_m\sqrt{v_k^{(m)}}$ with a high probability. We first see that 
    \begin{align}
        |\widehat \beta_{{\rm ind},k}^{(m)}-\widehat \beta^\star_k|=&|\widehat \beta_{{\rm ind},k}^{(m)}-\beta_k^{(m)}+\beta_k^{(m)}-\beta_k^\star+\beta_k^\star-\widehat \beta^\star_k|\nonumber\\
        \geq &\epsilon - |\widehat \beta_{{\rm ind},k}^{(m)}-\beta_k^{(m)}|-|\beta_k^\star-\widehat \beta^\star_k|.
    \end{align}
    Therefore, $|\widehat \beta_{{\rm ind},k}^{(m)}-\widehat \beta^\star_k|<  \widetilde \gamma_m\sqrt{v_k^{(m)}}$ suggests that one of $|\widehat \beta_{{\rm ind},k}^{(m)}-\beta_k^{(m)}|\geq (\epsilon- \widetilde \gamma_m\sqrt{v_k^{(m)}})/2$ or $|\beta_k^\star-\widehat \beta^\star_k|\geq (\epsilon- \widetilde \gamma_m\sqrt{v_k^{(m)}})/2$ must hold. However, by the condition on $\epsilon$ and $\widetilde \gamma_m$,  using the sub-Gaussianity of $\widehat \beta_{{\rm ind},k}^{(m)}$ and Lemma~\ref{lem:median-Gaussian}, we have 
    \begin{equation}\label{eqn:vbxcibnacz1}
        \PP\left(|\widehat \beta_{{\rm ind},k}^{(m)}-\beta_k^{(m)}|\geq (\epsilon- \widetilde \gamma_m\sqrt{v_k^{(m)}})/2\right)\leq \PP\left(|\widehat \beta_{{\rm ind},k}^{(m)}-\beta_k^{(m)}|\geq \widetilde \gamma_m\sqrt{v_k^{(m)}}\right)\leq \frac{\alpha}{4}\bigwedge \frac{1}{n}\leq \frac{\alpha}{4}
    \end{equation}
    and 
    \begin{equation}\label{eqn:vbxcibnacz2}
        \PP\left(|\beta_k^\star-\widehat \beta^\star_k|\geq (\epsilon- \widetilde \gamma_m\sqrt{v_k^{(m)}})/2\right)\leq \PP\left(|\beta_k^\star-\widehat \beta^\star_k|\geq  1.25C_{\alpha_{\cB_k,\delta}}\alpha_{\cB,\delta}\bar {\sigma}_{k}\right)\leq \frac{\alpha}{4}\bigwedge \frac{1}{n}\leq \frac{\alpha}{4}.
    \end{equation}
    where $\delta =\ln((8/\alpha)\vee (2n))/2$ is much smaller than $M$ so that $\alpha_{\cB_k,\delta}<0.45$.
    Combining \eqref{eqn:vbxcibnacz1}, \eqref{eqn:vbxcibnacz2} with $\PP(\beta_k^{(m)} \notin I_k^{(m)})\leq \alpha/2$ and a union bound, we find
    \begin{equation}\nonumber
         \PP\left(\beta_k^{(m)}\in I_k^\star)\geq\PP(\beta_k^{(m)}\in I_k^{(m)}\text{ and }|\widehat \beta_{{\rm ind},k}^{(m)}-\widehat \beta^\star_k|\geq  \widetilde \gamma_m\sqrt{v_k^{(m)}}\right)\geq 1-\alpha. 
    \end{equation}

    Denoting $\{k\in[d]:\beta_k^{(m)}=\beta_k^\star\}$ as $\cI^{(m)}$ with $|\cI^{(m)}|=s$, combining the two cases, we have 
    \begin{align}\nonumber
        \sum_{k=1}^d {\rm length}(I_k^{(m)})=\widetilde O_P\left(|\cI^{(m)}|/\sqrt{n}+\sum_{k\notin  \cI^{(m)}}\alpha_{\cB_k,\delta}/\sqrt{n}\right).
    \end{align}
    By noting that 
    \begin{equation}\nonumber
        \sum_{k\notin  \cI^{(m)}}\alpha_{\cB_k,\delta}\leq \sum_{k\in[d]}\alpha_{\cB_k,\delta}=\widetilde O(s+d/\sqrt{M}),
    \end{equation}
    we complete the proof.
\end{proof}

Proposition~\ref{prop:conf-intv} shows that our confidence intervals for the entries of the task-wise parameters $\{\beta^{(m)}\}_{m=1}^M$ have total length $\widetilde O_P(s/\sqrt{n}+d/\sqrt{Mn})$. When $s\ll d$ and $M\gg 1$, 
the length is shorter than $\widetilde O_P(d/\sqrt{n})$, the length of the standard intervals based on the individual OLS estimates. 

However, we remark that this proposition requires that the unequal entries between $\beta^{(m)}$ and  $\beta^\star$ be separated by  $\Omega(1/\sqrt{n})$. 
This condition turns out to be necessary to attain confidence intervals with a total length shorter than $\widetilde O_P(d/\sqrt{n})$. 
To see this, we can argue as follows.
Even if the global parameter $\beta^\star$ was \emph{exactly known}, 
corresponding to the case where $M=\infty$, our setup would reduce to constructing a confidence interval for $\beta^{(m)}-\beta^\star$, which is $s$-sparse. This ideal setting becomes an example of inference for single-task sparse linear regression studied by e.g., \cite{cai2017confidence}.
That work shows that the minimax optimal length of  confidence intervals for individual entries is  $\Omega_P(1/\sqrt{n})$ 
when the non-zero entries of the sparse parameter are not constrained to be away from zero (\ie, each entry is either zero or of magnitude $O(1/\sqrt{n})$). 
One can follow the same idea to show that the total length of confidence intervals with entry-wise coverage is $\Omega_P(d/\sqrt{n})$, irrespective of the parameter's sparsity.
The challenge in constructing shorter confidence intervals mainly lies in \emph{identifying the support set of the sparse parameter} when the non-zero entries of the parameter are $O(1/\sqrt{n})$.

\newpage
\section{Results on GLMs}\label{app:glm}
Given the predictors $x\in\RR^d$, if the response $y$ follows the generalized linear models (GLMs),
then its conditional distribution takes the form, for all $x\in\RR^d$,
\begin{equation}\label{eqn:glm}
    y\mid x \sim \PP(y\mid x) = \rho(y)\exp\left(y\langle x, \beta\rangle-\psi(\langle x, \beta\rangle)\right)
\end{equation}
where $\beta\in\RR^d$ is the unknown parameter, 
and $\rho$ and $\psi$  are some known univariate functions. Two important properties of GLMs are $\EE[y\mid x]=\psi^\prime(\langle x, \beta\rangle)$ and $\mathrm{Var}(y\mid x)=\psi^{\prime\prime}(\langle x, \beta\rangle)$ \citep{mccullagh2019generalized}.
In particular, 
for linear models with Gaussian noise, we have a continuous response $y$ and  $\psi(u)=u^2/(2\sigma^2)$ for all $u\in\RR$. 

In the scenario of multitask GLMs, the individual estimate $\widehat \beta^{(m)}_{\rm ind}$ can be 
taken as the minimizer of the negative log-likelihood function
\begin{equation*}
    \widehat \beta^{(m)}_{\rm ind}:=\underset{\beta \in\RR^d}{\operatorname{argmin}}\frac{1}{n_m}\sum_{i=1}^{n_m}\left(-y_i^{(m)}\langle x_i^{(m)},\beta\rangle + \psi(\langle x_i^{(m)}, \beta\rangle)\right).
\end{equation*}
Due to the nonlinearity of GLMs, to apply the MOLAR method, we also need to replace the inverse data matrix $(\vX^{(m)\top} \vX^{(m)})^{-1}$ with $ (\vX^{(m)\top} \widehat D^{(m)}\vX^{(m)})^{-1}$ where $\widehat D^{(m)}\in\RR^{n_m\times n_m}$ is the diagonal matrix with elements $\{\psi^{\prime\prime}(\langle x_i^{(m)}, \widehat \beta^{(m)}_{\rm ind}\rangle)\}_{i=1}^{n_m}$. The two adjustments form Algorithm~\ref{alg:molar-glm}.
\begin{algorithm}[h]
	\caption{{MOLAR-GLM}: Weighted-Median-based Multitask GLM}
	\label{alg:molar-glm}
	\begin{algorithmic}
		\STATE \noindent {\bfseries Input:} $\{(\vX^{(m)},Y^{(m)})\}_{m=1}^M$, thresholds $\{\gamma_m\}_{m=1}^M$, weights $\{w_m\}_{m=1}^M$
		\FOR{$m\in[M]$}
		\vspace{1mm}
		\STATE Let $\widehat{\beta}_{\mathrm{ind}}^{(m)}$ be the individual MLE for $(\vX^{(m)},Y^{(m)})$ 
		\vspace{1mm}
		\ENDFOR
	    \vspace{1mm}
		\STATE Let $\widehat{\beta}^\star = \wmed(\{\widehat{\beta}_{\mathrm{ind}}^{(m)}\}_{m=1}^M; \{w_m\}_{m=1}^M)$ be the covariate-wise weighted median
  \vspace{1mm}
		\FOR{$m\in[M]$ and $k\in[d]$}
		    \vspace{2mm}
                \STATE \textsf{/* Option I: hard thresholding */}
                \vspace{1mm}
		        \STATE $\widehat{\beta}_{{\rm MOLAR}, k}^{(m)}=\widehat{\beta}^\star_k$ \textbf{ if } $|\widehat{\beta}^\star_k-\widehat{\beta}^{(m)}_{\mathrm{ind},k}|\le \gamma_m\sqrt{[(\vX^{(m)\top}  \widehat  D^{(m)}\vX^{(m)})^{-1}]_{k,k}} $ \textbf{ else } $\smash{\widehat{\beta}^{(m)}_{\mathrm{ind},k}}$  
		        \vspace{2mm}
          \STATE \textsf{/* Option II: soft thresholding */}
                \vspace{1mm}
		        \STATE $\widehat{\beta}_{{\rm MOLAR}, k}^{(m)}=\widehat{\beta}^\star_{k}+\mathsf{SoftThresholding}\left(\widehat{\beta}^{(m)}_{\mathrm{ind},k}-\widehat{\beta}^\star_k;\gamma_m\sqrt{[(\vX^{(m)\top} \widehat D^{(m)}\vX^{(m)})^{-1})^{-1}]_{k,k}} \right)$
		        \vspace{2mm}
		\ENDFOR
		\STATE \noindent {\bfseries Output:} $\{\widehat{\beta}_{\rm MOLAR}^{(m)}\}_{m=1}^M$
	\end{algorithmic}
\end{algorithm}

We analyze  \oursglm 
for sparsely heterogeneous parameters $\{\beta^{(m)}\}_{m=1}^M$ satisfying Condition~\ref{asp:heterogneity} in the asymptotic sense where sample sizes are sufficiently large. For simplicity, we only consider $n_1=\cdots=n_M=:n$. Our analysis is built on the asymptotic normality of individual GLM estimates \citep{van2014asymptotically,xia2023debiased}.
Specifically, suppose the following holds.
\begin{assumption}[\sc Conditions for GLMs]\label{asp:glm}
 For each $m\in[M]$, the following conditions hold for the GLM model~\eqref{eqn:glm}.
\begin{enumerate}
    \item $\beta^{(m)}$ is the unique maximizer to $\EE[y_i^{(m)}\langle x_i^{(m)}, \beta\rangle-\psi(\langle x_i^{(m)}, \beta\rangle]$ and $P(y_i^{(m)}\mid x_i^{(m)})$ is quadratic mean differentiable at $\beta^{(m)}$. 
    \item $\frac{1}{n}\sum_{i=1}^ny_i^{(m)}\langle x_i^{(m)}, \beta\rangle-\psi(\langle x_i^{(m)}\rangle, \beta\rangle$ converges uniformly to $\EE[y_i^{(m)}\langle x_i^{(m)}, \beta\rangle-\psi(\langle x_i^{(m)}, \beta\rangle]$ as $n\to \infty$.
    \item $\psi$ is twice continuously differentiable and $\psi$ is uniformly bounded.
    \item The population Fisher matrix $\Sigma^{(m)}\triangleq \EE[x_i^{(m)}x_i^{(m)\top}\psi^{\prime\prime}(\langle x_i^{(m)},\beta^{(m)}\rangle)]$ is positive definite and its eigenvalues are bounded and bounded away from 0, \ie, $$cI_d\preceq \EE[x_i^{(m)}x_i^{(m)\top}\psi^{\prime\prime}(\langle x_i^{(m)},\beta^{(m)}\rangle)]\preceq CI_d$$ for some $\Omega(1)=c\leq C=O(1)$.
\end{enumerate}
\end{assumption}

Condition~\ref{asp:glm} is assumed for technical simplicity, which directly facilitates the 
analysis of maximum likelihood estimates in \citep{van2014asymptotically} and can be possibly relaxed.
Specifically, we can obtain that
\begin{proposition}[\sc Asymptotic normality]\label{prop:asymp}
    Under Condition~\ref{asp:glm}, it holds that for each $m\in[M]$ that
    \begin{equation}\nonumber
        \sqrt{n}(\widehat{\beta}_{{\rm ind}}^{(m)}-\beta^{(m)})\overset{d}{\to}\mathcal{N}\left(0,\, (\Sigma^{(m)})^{-1}\right)\text{ and } 
        V^{(m)}/n\overset{p}{\to}\Sigma^{(m)},
    \end{equation}
    where $ V^{(m)}\triangleq \vX^{(m)\top} \widehat D^{(m)}\vX^{(m)}$.
\end{proposition}
\begin{proof}
    Under Condition~\ref{asp:glm}, it is  easily have $V^{(m)}/n\overset{p}{\to}\Sigma^{(m)}$. Since $\frac{1}{n}\sum_{i=1}^ny_i^{(m)}\langle x_i^{(m)}, \beta\rangle-\psi(\langle x_i^{(m)}, \beta\rangle$ converges uniformly to $\EE[y_i^{(m)}\langle x_i^{(m)}, \beta\rangle-\psi(\langle x_i^{(m)}, \beta\rangle]$ and
    $\EE[y_i^{(m)}\langle x_i^{(m)}, \beta\rangle- $ $\psi(\langle x_i^{(m)}, \beta\rangle]$ has a unique maximum $\beta^{(m)}$ that is well-separated due to the quadratic mean differentiablility,  $\widehat{\beta}_{{\rm ind}}^{(m)}\overset{p}{\to}\beta^{(m)}$ by \citet[Theorem 5.7]{van2014asymptotically}. Then \citet[Theorem 5.39]{van2014asymptotically} guarantees $\sqrt{n}(\widehat{\beta}_{{\rm ind}}^{(m)}-\beta^{(m)})\overset{d}{\to}\mathcal{N}(0,\,(\Sigma^{(m)})^{-1})$
    % , which combined with $V^{(m)}/n\overset{p}{\to}\Sigma^{(m)}$, leads to the conclusion.
\end{proof}

Given the asymptotic normality of individual estimates, one can establish the following asymptotic bounds for the tail probability of the global estimate $\widehat \beta^\star$.
% The proof follows from Lemma~\ref{lem:col-est} and thus we omit it here.
\begin{lemma}\label{lem:col-est-glm}
Under Condition~\ref{asp:glm}, for any $0<\eta \le \frac{1}{5}$ and $k\in\cI_\eta$, it holds for any $0\le \delta \le M/21 $ and $n$ sufficiently large that
\begin{equation*}
    \PP\left(|\widehat{\beta}^\star_k-\beta^\star_k|\ge 2C_{0.45}\alpha_{\cB_k,\delta}\bar v_{[M],k}\right)\le 2e^{-2\delta},
\end{equation*}
where $\bar {v}_{[M],k}=\sum_{m\in[M]}[(\Sigma^{(m)})^{-1/2}]_{k,k}/(\sqrt{n}M)$ and $\alpha_{\cB_k,\delta}$
follows from the definition in \eqref{eqn:cond}.
\end{lemma}
\begin{proof}
By Proposition~\ref{prop:asymp}, we know $\sqrt{n}(\widehat{\beta}_{{\rm ind},k}^{(m)}-\beta_k^{(m)})\overset{d}{\to}\mathcal{N}\left(0,\, [(\Sigma^{(m)})^{-1}]_{k,k}\right)$ for any $k\in[d]$ and $m\in[M]$. 
Note that the weighted median of $\{\widehat{\beta}_{{\rm ind},k}^{(m)}\}_{m=1}^M$ is  upper and lower bounded by the  $(1/2+|\cB_k|/M)$-weighted-quantile and $(1/2-|\cB_k|/M)$-weighted-quantile of $\{\widehat{\beta}_{{\rm ind},k}^{(m)}\}_{m\in \cG_k}$ respectively, where $\cG_k\triangleq \{m\in[M]:\beta_k^{(m)}=\beta_k^\star\}$. The $(1/2+|\cB_k|/M)$-weighted-quantile of $\{\widehat{\beta}_{{\rm ind},k}^{(m)}\}_{m\in \cG_k}$ is equivalent to the $(1/2+|\cB_k|/M)$-weighted-quantile of $\{\sqrt{n}(\widehat{\beta}_{{\rm ind},k}^{(m)}-\beta_k^{(m)})\}_{m\in \cG_k}$ after scaling and translation. We can also leverage Lemma~\ref{eqn:weight-quant} to cover the case of asymptotic normality: for any $\alpha\in[0,1/2)$, let $\mu_{1/2+\alpha}$ be value such that 
\begin{equation}\nonumber
    \sum_{i\in\cG_k}w_i\Phi\left(\frac{\mu_{1/2+\alpha}}{[(\Sigma^{(m)})^{-1}]_{k,k}}\right)=\left(\frac{1}{2}+\alpha\right)W_{\cG_k},
\end{equation}
where $\Phi$ is the c.d.f.~of the standard normal distribution. By the asymptotic normality of $\sqrt{n}(\widehat{\beta}_{{\rm ind},k}^{(m)}-\beta_k^{(m)})$, we know  
\begin{equation*}
    \sum_{i\in\cG_k}w_i\PP(\sqrt{n}(\widehat{\beta}_{{\rm ind},k}^{(m)}-\beta_k^{(m)}) \leq  \mu_{1/2+1.01\alpha})\to\sum_{i\in\cG_k}w_i\Phi\left(\frac{\mu_{1/2+1.01\alpha}}{[(\Sigma^{(m)})^{-1}]_{k,k}}\right)=\left(\frac{1}{2}+1.01\alpha\right)W_{\cG_k}.
\end{equation*}
Therefore, for $n$ sufficiently large, we have 
\begin{equation}\nonumber
     \sum_{i\in\cG_k}w_i\PP(\sqrt{n}(\widehat{\beta}_{{\rm ind},k}^{(m)}-\beta_k^{(m)}) \leq  \mu_{1/2+1.01\alpha})\geq  \left(\frac{1}{2}+\alpha\right)W_{\cG_k},
\end{equation}
which implies that the weighted population $(1/2+\alpha)$-quantile of $\{\sqrt{n}(\widehat{\beta}_{{\rm ind},k}^{(m)}-\beta_k^{(m)})\}_{m\in \cG_k}$ is upper bounded by $\mu_{1/2+1.01\alpha}$, which is in turn upper bounded by the one in Lemma~\ref{lem:gaussian-quant}. 
The lower bound can be argued similarly. In summary, we can give a bound for $\beta^\star_k$ similarly to Lemma \ref{lem:col-est} for sufficiently large $n$ by relaxing a small number close to $C_\alpha$ to as $C_{1.01\alpha}$, which gives us the result.
\end{proof}

Furthermore, one can easily follow the proof of Theorem 1 to bound the estimation errors of the MOLAR estimates in GLMs for sufficiently large $n$. 
We omit the proof here.

\begin{theorem}[\sc Error bound for task-wise parameters]
Under Conditions~\ref{asp:heterogneity} and \ref{asp:glm},
taking
$\gamma_m =c_1\sqrt{\ln(M\wedge d)}$  for all $m\in[M]$ with $c_1$ sufficiently large,
with $\widehat{\beta}^{(m)}_{\mathrm{MOLAR}}$  from Algorithm \ref{alg:molar-glm} using either Option I or II,
it holds for any $p\in\{1,2\}$, $m\in[M]$ and $n$ sufficiently large that 
\begin{equation*}
    n^{p/2}\|\widehat{\beta}^{(m)}_{\mathrm{MOLAR}}-\beta^{(m)}\|_p^p =\widetilde{O}_P\left(s+\frac{d}{
    M^{p/2}}\right).
\end{equation*}
\end{theorem}

An analytical comparison of MOLAR-GLM with related methods \citep{tian2022transfer,li2023estimation} is in Table~\ref{tab:glm-comparsion}. 
\cite{tian2022transfer,li2023estimation} originally considered sparse parameters, and we 
adjust their results to dense parameters and state them with notations defined in our manuscript for a clear comparison. 
We observe that while our method may require larger task-wise sample sizes, it theoretically has faster rates of convergence than
existing methods for GLMs (with dense parameters).

\begin{table}[H]
\centering 
\caption{\small Bounds on the estimation error $\sqrt{n}\|\widehat{\beta}^{(m)}-\beta^{(m)}\|_1$ for GLMs. Suppose $n_1=\cdots=n_M=:n$. Constants and logarithmic factors are omitted for clarity. The results  \cite{tian2022transfer,li2023estimation} are adjusted to dense parameters (\ie, $\|\beta^{(m)}\|_0=\Omega(d)$ for all $m\in[M]$).
}
% \vspace{-10pt}
\label{tab:glm-comparsion}
\footnotesize
\begin{tabular}{lccc}
\toprule
    Method & \hspace{-10mm}Heterogeneity Condition & \hspace{-10mm}Rate & Sample Size \\
    \midrule
    \cite{tian2022transfer} & $\hspace{-10mm}\max_{m\in[M]}\|\beta^{(m)}-\beta^{\star}\|_1= h$ & $\hspace{-10mm}(n^{1/4}h^{1/2})\wedge (\sqrt{n} h)+{ d}/{\sqrt{M}}$  & $Mn \gg d$, $n\gg h^2$\\
    \cite{li2023estimation} & $\hspace{-3mm}\begin{cases}\max_{m\in[M]}\|\beta^{(m)}-\beta^{\star}\|_0= s\\ \max_{m\in[M]}\|\beta^{(m)}-\beta^{\star}\|_2= O(1)
    \end{cases}$ & $\hspace{-10mm}\sqrt{sd}+d/\sqrt{M}$ & $Mn\gg d^2$, $n\gg s d$\\
    MOLAR-GLM & $\hspace{-10mm}\max_{m\in[M]}\|\beta^{(m)}-\beta^{\star}\|_0= s$ & $\hspace{-10mm}s+{ d}/{\sqrt{M}}$ & $n$ sufficiently large\\
    Lower Bound & $\hspace{-10mm}\max_{m\in[M]}\|\beta^{(m)}-\beta^{\star}\|_0= s$ & $\hspace{-10mm}s+{ d}/{\sqrt{M}}$ & -----\\
\bottomrule
\end{tabular}
% \vspace{-10pt}
\end{table}

Acknowledging the importance of extending the application of the MOLAR method to encompass a broader setup, particularly in the context of generalized linear models for $n$ being small,
we think that a detailed investigation and optimality in GLMs particularly for $n$ being small may require entirely different algorithmic designs and should indeed serve as an independent and valuable avenue for future research.

\newpage

\section{Results on Contextual Bandits under {\sf Model-C}}

\subsection{Lemma \ref{lem:eig-lb} and its Proof}\label{app:eig-lb}

To analyze individual regret, we  need that the empirical
covariance matrices, at the end of each batch, are well-conditioned  with high probability,
even when the arms are adaptively chosen. 
We  show Lemma \ref{lem:eig-lb}, which guarantees that for any $m\in[M]$, 
with high probability  the singular values of the contexts $\vX_{q}^{(m)}$ 
are
lower bounded. 
Lemma \ref{lem:eig-lb} is similar to \citep[Lemma 4]{Han2020SequentialBL} and \citep[Lemma 5]{Ren2020DynamicBL} in the single-bandit regime.
However, 
\cite{Han2020SequentialBL} assume Gaussian contexts, which is stronger than our Condition \ref{asp:covariate-subg} and \ref{asp:diverse}, 
while 
\cite{Ren2020DynamicBL} consider an $s$-sparse parameter and 
sub-Gaussian contexts.
The proof of Lemma \ref{lem:eig-lb} 
relies on using an $\ep$-net argument.
\begin{lemma}\label{lem:eig-lb}
    Under Conditions \ref{asp:covariate-subg} and \ref{asp:diverse}, for some $C_{\rm b}\ge 2$ only depending on $c_x$, 
    and for any $0\le q<Q$, it holds with probability at least $1-1/T$ that $\lambda_{\min }(\vX_{{q}}^{(m)\top}\vX_{{q}}^{(m)})\ge n_{m,q}\mu c_x/4$, for all $m \in[M]$ with $n_{m,q}\ge C_{\rm b}(\ln(MT)+d\ln(L\ln(K)/\mu))$.
\end{lemma}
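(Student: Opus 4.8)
The plan is to reduce the bound to a statement about independent samples by exploiting the batched structure, and then to establish a uniform-over-directions lower bound on the quadratic form $v\mapsto v^\top \vX_q^{m\top}\vX_q^m v$ via an $\epsilon$-net combined with a counting argument driven by Condition \ref{asp:diverse}, controlling the net discretization error through a high-probability operator-norm bound that uses Condition \ref{asp:covariate-subg}.

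First I would fix the batch index $q$ and condition on the $\sigma$-field $\cF_{q-1}$ generated by all observations through the end of batch $q-1$, together with the (context-independent) activation variables of batch $\cH_q$. Under this conditioning the estimate $\widehat{\beta}_{q-1}^m$ driving the greedy rule is deterministic, and the set of active rounds of bandit $m$ in $\cH_q$ is fixed, of cardinality $n_q^m$. Since the contexts $\{X_{t,a}^m\}_{a\in[K]}$ are drawn independently across rounds and independently of $\cF_{q-1}$ and of the activation process, the selected contexts $x_t^m := X_{t,a_t^m}^m$, with $a_t^m=\arg\max_a\langle X_{t,a}^m,\widehat{\beta}_{q-1}^m\rangle$, are conditionally i.i.d.\ copies of $X_{a^\star}^m$ where $a^\star=\arg\max_a\langle X_a^m,\widehat{\beta}_{q-1}^m\rangle$. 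Writing $A=\vX_q^{m\top}\vX_q^m=\sum_t x_t^m(x_t^m)^\top$, so that $v^\top A v=\sum_t\langle x_t^m,v\rangle^2$, reduces the task to a concentration statement for a sum of i.i.d.\ rank-one terms.

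For a fixed unit vector $v$, Condition \ref{asp:diverse} applied with $\beta=\widehat{\beta}_{q-1}^m$ gives $\PP(\langle x_t^m,v\rangle^2\ge\mu)\ge c_x$, so the indicators $W_t:=\one(\langle x_t^m,v\rangle^2\ge\mu)$ are i.i.d.\ Bernoulli with mean at least $c_x$. Applying Bernstein's inequality (Lemma \ref{lem:bern}) to the bounded $W_t$ yields $\PP(\sum_t W_t<n_q^m c_x/2)\le\exp(-n_q^m c_x/8)$, and on the complementary event $v^\top A v\ge\mu\sum_t W_t\ge\mu n_q^m c_x/2$. The key point is that this rate scales like $n_q^m c_x$ and is free of $L$ and $K$, which is exactly what lets $C_{\rm b}$ depend only on $c_x$. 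To bound the net discretization error I would separately establish, again by a net argument but now using sub-Gaussianity, that $\|A\|_{\rm op}\le C n_q^m L\ln(2K)$ with high probability: for fixed $v$ one has $\langle x_t^m,v\rangle^2\le\max_a\langle X_{t,a}^m,v\rangle^2$, whose expectation is $O(L\ln K)$ by the maximal inequality (Lemma \ref{lem:max-ineq}), and the resulting sub-exponential sum concentrates; this holds once $n_q^m=\widetilde{\Omega}(d)$, which the threshold guarantees. Then, taking an $\epsilon$-net $\mathcal{U}$ of $\mathbb{S}^{d-1}$ with $|\mathcal{U}|\le(3/\epsilon)^d$, a union bound gives $\min_{w\in\mathcal{U}}w^\top A w\ge\mu n_q^m c_x/2$ with probability at least $1-(3/\epsilon)^d\exp(-n_q^m c_x/8)$; for any unit $v$ with nearest net point $w$, $|v^\top A v-w^\top A w|\le 2\epsilon\|A\|_{\rm op}$, so choosing $\epsilon=\Theta(\mu c_x/(L\ln(2K)))$ forces $2\epsilon\|A\|_{\rm op}\le\mu n_q^m c_x/4$ on the operator-norm event and hence $\lambda_{\min}(A)\ge\mu n_q^m c_x/4$. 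Since $\ln(3/\epsilon)=O(\ln(L\ln(K)/\mu))$, the failure probability is below $1/(MT^2)$ once $n_q^m\ge C_{\rm b}(\ln(MT)+d\ln(L\ln(K)/\mu))$ with $C_{\rm b}=O(1/c_x)$; as this per-$n$ bound is decreasing in $n$, it holds uniformly for all $n_q^m$ above the threshold, and a union bound over the $M$ bandits yields probability at least $1-1/T$.

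The main obstacle is the conditioning step: it is essential that within a batch the greedy rule is frozen, depending only on $\widehat{\beta}_{q-1}^m$, so that the selected contexts are genuinely i.i.d.\ conditionally on the past; this is precisely why the algorithm refreshes estimates only at batch boundaries, and it is what rescues the diversity argument from the adaptivity of the arm choices. The remaining delicate point is balancing the net resolution $\epsilon$ against the operator-norm bound so that the \emph{logarithmic} $\ln(L\ln(K)/\mu)$ dependence in the threshold emerges rather than a polynomial one; this hinges on using the bounded-indicator counting bound above instead of a direct Bernstein bound on $\langle x_t^m,v\rangle^2$, whose variance would otherwise inject a spurious $(L/\mu)^2$ factor into the sample-size requirement.
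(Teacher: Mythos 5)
Your proposal is correct and follows essentially the same route as the paper's proof: conditioning on the batch structure so the greedy rule is frozen and the selected contexts are conditionally i.i.d., a Chernoff/Bernstein bound on the indicators $\one(\langle x_t^m,v\rangle^2\ge\mu)$ via Condition \ref{asp:diverse} for the per-direction lower bound, a sub-exponential concentration argument paying a $\ln(K)$ factor for a high-probability bound on $\lambda_{\max}$, and an $\epsilon$-net of resolution $\Theta(\mu c_x/(L\ln K))$ with the discretization error controlled by the operator norm, which is exactly how the paper arrives at the threshold $C_{\rm b}(\ln(MT)+d\ln(L\ln(K)/\mu))$ with $C_{\rm b}$ depending only on $c_x$. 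The sole (cosmetic) difference is that you dominate the adaptively selected arm by the pointwise maximum $\max_a\langle X_{t,a}^m,v\rangle^2$ and apply a maximal inequality, whereas the paper bounds the moment generating function of $\langle X_{t,a_t^m}^m,v\rangle^2$ by the sum of the MGFs over all $K$ arms; both devices inject the same $\ln(K)$ cost.
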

\begin{proof}
For $0\le q< Q$ and $m\in[M]$, we let $\cT_q^{(m)}$ be the set of times when contexts $\vX_{q}^{m}$ are observed at instance $m$, \ie, $\vX_{q}^{m}=(x_{t,a_t^{(m)}}^{(m)})^\top_{t\in\cT_q^{(m)}}$. 
Clearly, we have $|\cT_q^{(m)}|=n_{m,q}$; 
and further $\{x_{t,a_t^{(m)}}^{(m)}: t\in\cT_q^{(m)}\}$ are independent, conditioned on $\widehat{\beta}^{(m)}_{q-1}$. The following analysis is conditional on  $\{\cT_q^{(m)}\}_{m\in [M]}$ and therefore on $\{n_{m,q}\}_{m\in[M]}$.

We first prove an upper bound on $\lambda_{\max}(\vX_{q}^{(m)\top}\vX_{q}^{(m)}/n_{m,q})$. 
For any $t\in[T]$, $a\in[K]$, $m\in[M]$
and any source vector $v \in\RR^d$, let $Z_{t,a}^{(m)}=\langle v, x_{t,a}\rangle^2$. 
Conditioned on $\widehat{\beta}^{(m)}_{q-1}$, for any $\delta>0$ and $\lambda>0$, we have 
\begin{align}
    &\PP\left(\sum_{t\in\cT_q^{(m)}}Z_{t,a_t^{(m)}}^{(m)}\ge n_{m,q}\delta\mid  \widehat{\beta}^{(m)}_{q-1}\right)\le e^{-\lambda n_{m,q} \delta}\EE\left[\exp\left(\lambda \sum_{t\in\cT_q^{(m)}}Z_{t,a_t^{(m)}}^{(m)}\right)\mid \widehat{\beta}^{(m)}_{{q-1}}\right]\nonumber\\
    =&e^{-\lambda n_{m,q}\delta}\prod_{t\in\cT_q^{(m)}}\EE\left[\exp\left(\lambda Z_{t,a_t^{(m)}}^{(m)}\right)\mid \widehat{\beta}^{(m)}_{{q-1}}\right]
    \le e^{-\lambda n_{m,q}\delta}\prod_{t\in\cT_q^{(m)}}\sum_{a \in [K]}\EE\left[\exp\left(\lambda Z_{t,a}^{(m)}\right)\right].\nonumber
\end{align}
Taking the expectation with respect to $ \widehat{\beta}^{(m)}_{q-1}$, we obtain 
\begin{align}\label{eqn:nvisdngs}
    \PP\left(\sum_{t\in\cT_q^{(m)}}Z_{t,a_t^{(m)}}^{(m)}\ge n_{m,q}\delta \right)\le e^{-\lambda n_{m,q}\delta}\prod_{t\in\cT_q^{(m)}}\sum_{a \in [K]}\EE\left[\exp\left(\lambda Z_{t,a}^{(m)}\right)\right].
\end{align}
Since $x_{t,a}^{(m)}$ is assumed to be $L$-sub-Gaussian  and $\|v\|_2=1$, $\langle v, x_{t,a}\rangle$ is also $L$-sub-Gaussian. As a result, $Z_{t,a}^{(m)}=\langle v, x_{t,a}^{(m)}\rangle^2$ is $(4\sqrt{2}L,4L)$-sub-exponential \citep{vershynin2018high}. 
By the sub-Gaussianity of $\langle v, x_{t,a}^{(m)}\rangle$, using Lemma \ref{lem:small-set-ingeral}, we have $\EE[Z_{t,a}^{(m)}]=\EE[\langle v, x_{t,a}^{(m)}\rangle^2]\le 2L(\ln(2)+1)$. Applying Bernstein's concentration inequality for sub-exponential variables, for $\delta\ge 2L(\ln(2)+1)\ge \EE[Z_{t,a}^{(m)}]$ we have 
\begin{align}
    e^{-\lambda \delta}\EE\left[\exp\left(\lambda Z_{t,a}^{(m)}\right)\right]\le &  e^{-\lambda (\delta-\EE[Z_{t,a}^{(m)}])}\EE\left[\exp\left(\lambda (Z_{t,a}-\EE[Z_{t,a}^{(m)}])\right)\right]\nonumber\\
    \le &\exp\left(-\min\left\{\frac{(\delta-\EE[Z_{t,a}^{(m)}])^2}{64L}, \frac{\delta-\EE[Z_{t,a}^{(m)}]}{32 L}\right\}\right)\nonumber\\
    \le &\exp\left(-\min\left\{\frac{(\delta-2L(\ln(2)+1))^2}{64L}, \frac{\delta-2L(\ln(2)+1)}{32 L}\right\}\right).\label{eqn:vndifgsd}
\end{align}
Combining $\eqref{eqn:vndifgsd}$ with \eqref{eqn:nvisdngs}, we obtain
for $\delta \ge 2L(\ln(2)+1)$,
\begin{align}
   &\PP\left(v^\top(\vX_{q}^{(m)\top}\vX_{q}^{(m)}/n_{m,q})v\ge \delta \right)=\PP\left(\sum_{t\in\cT_q^{(m)}}Z_{t,a_t^{(m)}}^{(m)}\ge n_{m,q}\delta \right)\nonumber\\
   \le&\exp\left(-\left(\min\left\{\frac{(\delta-2L(\ln(2)+1))^2}{64L}, \frac{\delta-2L(\ln(2)+1)}{32 L}\right\} + \ln(K)\right)n_{m,q}\right)\triangleq p_{q,\delta}^{(m)}\label{eqn:jvisdngsd}.
\end{align}

Now, we consider an $\ep$-net $\cN_d(\ep)$ of the source ball $\BB_d$ with cardinality at most $(1+2/\ep)^d$ \citep[Corollary 4.2.13]{vershynin2018high}. 
By applying a union bound to \eqref{eqn:jvisdngsd}, we have  with probability at least $1-(1+2/\ep)^d\sum_{m\in\cC_q}p_{q,\delta}^{(m)}$ that $v^\top(\vX_{q}^{(m)\top}\vX_{q}^{(m)}/n_{m,q})v\le \delta $ holds for any $v\in\cN_d(\ep)$ and $m\in\cC_q$. Now taking any source vector $u\in\RR^d$, there exists $v\in\cN_d(\ep)$ such that $\|u-v\|_2\le \ep$. Furthermore, by symmetry of $(\vX_{q}^{(m)\top}\vX_{q}^{(m)}/n_{m,q})$, we have $u^\top(\vX_{q}^{(m)\top}\vX_{q}^{(m)}/n_{m,q})v=v^\top(\vX_{q}^{(m)\top}\vX_{q}^{(m)}/n_{m,q})u$ and thus
\begin{align}
    &u^\top(\vX_{q}^{(m)\top}\vX_{q}^{(m)}/n_{m,q})u-v^\top(\vX_{q}^{(m)\top}\vX_{q}^{(m)}/n_{m,q})v= (u+v)^\top(\vX_{q}^{(m)\top}\vX_{q}^{(m)}/n_{m,q})(u-v)\nonumber\\
    \le &\ep \left\|\left(\vX_{q}^{(m)\top}\vX_{q}^{(m)}/n_{m,q}\right)(u+v)\right\|_2\le  2\ep \lambda_{\max}\left(\vX_{q}^{(m)\top}\vX_{q}^{(m)}/n_{m,q}\right)\nonumber.
\end{align}
Rearranging the above inequality gives
\begin{equation}\nonumber
    u^\top(\vX_{q}^{(m)\top}\vX_{q}^{(m)}/n_{m,q})u\le 2\ep \lambda_{\max}\left(\vX_{q}^{(m)\top}\vX_{q}^{(m)}/n_{m,q}\right)+\delta.
\end{equation}
Taking the supremum with respect to $u$, we obtain  
\begin{equation}\label{eqn:nmgodsfsd}
    \lambda_{\max}\left(\vX_{q}^{(m)\top}\vX_{q}^{(m)}/n_{m,q}\right)\le \frac{\delta }{1-2\ep}.
\end{equation}

Next we bound $\lambda_{\min}(\vX_{q}^{(m)\top}\vX_{q}^{(m)}/n_{m,q})$.
For any source vector $v\in\RR^d$, we have
\begin{equation*}
   v^\top(\vX_{q}^{(m)\top}\vX_{q}^{(m)}/n_{m,q}) v= v^\top \left(\frac{1}{n_{m,q}}\sum_{t\in\cT_q^{(m)}}x_{t,a_t^{(m)}}^{m}x_{t,a_t^{(m)}}^{m\top}\right)v \ge \frac{\mu}{n_{m,q}}\sum_{t\in\cT_q^{(m)}}\one(\langle v, x_{t,a_t^{(m)}}^{(m)} \rangle^2 \ge \mu).
\end{equation*}
Since $a_t^{(m)} = \arg\max_{a\in[K]}\langle \widehat{\beta}_{q-1}^{(m)} ,x_{t,a}^{(m)}\rangle  $ for any $t\in\cT_q^{(m)}$ and $m\in[M]$, by Condition \ref{asp:diverse}, we have 
\begin{equation*}
    \EE[\one(\langle v, x_{t,a_t^{(m)}}^{(m)} \rangle^2 \ge \mu)]=\PP(\langle v, x_{t,a_t^{(m)}}^{(m)} \rangle^2 \ge \mu)\ge c_x.
\end{equation*}
Therefore, by applying the Chernoff bound, we have
\begin{equation}\label{eqn:jvisdngsd2}
    \PP\left(v^\top(\vX_{q}^{(m)\top}\vX_{q}^{(m)}/n_{m,q}) v \le  {\mu c_x}/{2}\right)\le  e^{-c_xn_{m,q}/8}\triangleq \tilde{p}_{q}^{(m)}.
\end{equation}
By applying a union bound to \eqref{eqn:jvisdngsd2}, we have  with probability at least $1-(1+2/\ep)^d\sum_{m\in\cC_q}\tilde{p}_{q}^{(m)}$ that $v^\top(\vX_{q}^{(m)\top}\vX_{q}^{(m)}/n_{m,q})v\ge {\mu c_x}/{2} $ holds for any $v\in\cN_d(\ep)$ and $m\in\cC_q$. Taking any source vector $u\in\RR^d$, there exists $v\in\cN_d(\ep)$ such that $\|u-v\|_2\le \ep$. Furthermore, by the symmetry of $\vX_{q}^{(m)\top}\vX_{q}^{(m)}/n_{m,q}$, we have 
\begin{align}
    &u^\top(\vX_{q}^{(m)\top}\vX_{q}^{(m)}/n_{m,q})u-v^\top(\vX_{q}^{(m)\top}\vX_{q}^{(m)}/n_{m,q})v= 
    (u+v)^\top(\vX_{q}^{(m)\top}\vX_{q}^{(m)}/n_{m,q})(u-v) \nonumber\\
    \ge & - \ep \left\|\left(\vX_{q}^{(m)\top}\vX_{q}^{(m)}/n_{m,q}\right)(u+v)\right\|_2
    \ge   -2\ep \lambda_{\max}\left(\vX_{q}^{(m)\top}\vX_{q}^{(m)}/n_{m,q}\right).\nonumber
\end{align}
Rearranging the above inequality, we obtain 
\begin{equation}\label{eqn:vnosdfsdas}
    u^\top(\vX_{q}^{(m)\top}\vX_{q}^{(m)}/n_{m,q})u\ge\frac{\mu c_x}{2} - 2\ep \lambda_{\max}\left(\vX_{q}^{(m)\top}\vX_{q}^{(m)}/n_{m,q}\right).
\end{equation}
Taking the infimum in \eqref{eqn:vnosdfsdas} with respect to $u$ and using \eqref{eqn:nmgodsfsd}, we obtain  
\begin{equation}\label{eqnLvndivcxb}
    \lambda_{\min}\left(\vX_{q}^{(m)\top}\vX_{q}^{(m)}/n_{m,q}\right)\ge \frac{\mu c_x}{2} - \frac{2\ep \delta}{1-2\ep}.
\end{equation}
Finally, letting $\delta = 32\max\{L,\sqrt{L}\}(\ln(K)+1)$ and $\ep = {\mu c_x}/{(8 \delta+2\mu c_x)}$, we have $2\ep \delta/(1-2\ep)=\mu c_x/4$ and $p_{q,\delta}^{(m)} \le e^{-n_{m,q}}\le e^{-c_xn_{m,q}/8}=\tilde{p}_q^{(m)}$. Therefore, $\lambda_{\min}\left(\vX_{q}^{(m)\top}\vX_{q}^{(m)}/n_{m,q}\right)\ge {\mu c_x}/{4}$  holds for all $m\in\cC_q$ 
 with a probability of at least 
\begin{align*}
    &1-(1+2/\ep)^d \sum_{m\in\cC_q}(\tilde{p}_q^{(m)}+p_{q,\delta}^{(m)})\ge  1- 2(1+2/\ep)^d\sum_{m\in\cC_q}e^{-c_xn_{m,q}/8}\\
    =&1- 2\left(5+512\max\{L,\sqrt{L}\}(\ln(K)+1)/(\mu c_x)\right)^d\sum_{m\in\cC_q}e^{-c_xn_{m,q}/8}\\
    \ge &1- \exp\left(-\frac{c_x\min_{m\in\cC_q}n_{m,q}}{8}+\ln(2M)+d\ln\left(5+512\max\{L,\sqrt{L}\}(\ln(K)+1)/(\mu c_x)\right)\right).
\end{align*}
In particular, there exists $C_{\rm b}\ge 2$ depending only on $c_x$, such  that when $\min_{m \in \cC_q}n_{m,q}\ge C_{\rm b}(\ln(MT)+d\ln(L\ln(K)/\mu))$, the probability is lower bounded bounded by $1-1/T$.
\end{proof}

\subsection{Proof of Lemma \ref{lem:bandit-est}}\label{app:bandit-est}
\begin{proof}
% Let 
% $\cM_{\mathrm{large}}$
% be the set of indices $m  \in [M]$
% such that $p_m = p^{(j)}$ for some $j \in [\tau]$;
% and notice that $|\cM_{\mathrm{large}}| = \tau$.
Since $n_{m, 0}=\sum_{t\in\cH_{0}}\one(m\in\cS_t)$ and  $n_{m,q} = n_{m, q-1}\one\{m\notin \cC_{q-1}\}+\sum_{t\in\cH_{q}}\one(m\in\cS_t)\ge \sum_{t\in\cH_{q}}\one(\in\cS_t)$ for any $1\le q <Q$,
by using Bernstein's inequality \eqref{eqn:jgowmfqww}, we have for each $m\in[\tau]$,
\begin{align}\label{eqn:visdgs}
    \PP\left(n_{m,q}< \frac{p_m |\cH_{q}|}{2}\right)\le& \exp\left(-\frac{|\cH_{q}|p_m^2}{2(p_m(1-p_m)+p_m/2)}\right)\le \exp\left(-{|\cH_{q}|p_m}/{3}\right)\le\frac{1}{2MT},
\end{align}
where the last inequality holds because $C_{\rm b}\ge 2$ and thus $|\cH_q|\ge 2 C_{\rm b}\ln(MT)/p_{\tau}\ge   3\ln(2MT)/p_{\tau}$.  
Similarly, we have $\PP\left(n_{m,q}> {3p_m |\cH_{q}|}/{2}\right)\ge 1/(2MT)$ for all $m\in[\tau]$.
For each $0\le q<Q$, define the event 
\begin{equation*}
    \cF_{q}\triangleq \{p_m|\cH_{q}|/2\le n_{m,q} \le 3p_m|\cH_{q}|/2\text{ for all } m\in[\tau]\}.
\end{equation*}
 Using \eqref{eqn:visdgs} and applying the union bound over all $m\in[\tau]$, we have $ \PP(\cF_{q})\ge 1-1/T$. Furthermore, by the condition on $|\cH_q|$, we have for all $m\in[\tau]$ that
 \begin{equation*}
     p_m|\cH_{q}|/2\ge C_{\rm b}(\ln(MT)+d\ln(L\ln(K)/\mu).
 \end{equation*}
 Therefore, on the event $\cF_{q}$, we have $[\tau]\subseteq \cC_{q}$, with $\cC_{q}$ from \eqref{cq}.  
For each $0\le q<Q$, define the event 
\begin{equation*}
    \cE_{q}\triangleq \left\{\lambda_{\min}(\vX_{{q}}^{(m)\top}\vX_q^{m})\ge  n_{m,q} \mu c_x/4 \text{ for all } m\in\cC_{q}\right\}.
\end{equation*}  By Lemma \ref{lem:eig-lb}, we have  $\PP(\cE_{q})\ge 1- 1/T$.
On the event $\cE_{q}\cap \cF_q$, which holds with probability at least $1-2/T$,  using Condition \ref{asp:freq}, we have 
 \begin{align*}
     \min_{1\le m\le |\cC_q|}\frac{n_{1,q}\vee (n_{\cC_q,q}/m)}{n_{m,q}}\le& 6\min_{1\le m\le |\cC_q|}\frac{p_{1}\vee (p_{\cC_q}/m)}{p_{m}}
     %\le& 6\frac{p^{1}\vee (\sum_{\ell \in \cC_q}p^{(\ell)}/\tau )}{p_{\tau}}
     \le 6c_{\rm f}=\widetilde{O}(1).
 \end{align*}
Therefore, by applying Theorem \ref{thm:ls-fixed-design} with $p=2$ and using definition of $\cE_q$, 
we have for any $m\in\cC_{q}$ that 
\begin{align}
    \EE\left[\|\widehat{\beta}^{m}_{q}-\beta^{(m)}\|_2^2\mid(\vX^{(m)}_q, Y^{(m)}_q)_{m\in\cC_q}\right]
    =&\widetilde{O}\left(\frac{1}{{\mu}}\left(\frac{s}{n_{m,q}}+\frac{d }{n_{\cC_q,q}}\right)\right).\label{eqn:vncxisu3zk}
\end{align}
Since event $\cF_{q}$ implies  $n_{m,q} \ge p_m |\cH_{q}|/2$ and $[\tau]\subseteq \cC_q$ for all $m\in\cC_q$, we thus have, 
using the definition of $\tau$,
\begin{equation}\label{eqn:vjiods1}
   n_{\cC_q,q}\ge n_{[\tau],q}\ge |\cH_q| \tau p_{\tau}/2= \widetilde{O}\left(|\cH_q|p_{[M]}\right) .
\end{equation}
Plugging \eqref{eqn:vjiods1} into \eqref{eqn:vncxisu3zk}, we reach the conclusion.
\end{proof}

\subsection{Proof of Theorem \ref{thm:bc-bandit}}\label{app:bc-bandit}
\begin{proof}
For any $t\in\cH_q$, $0\le q\le Q$, and $m\in [M]$,  it holds that 
\begin{align}
    &\max_{a\in[K]}\langle x_{t,a}^{(m)}-x_{t,a_t^{(m)}}^{(m)},\beta^{(m)}\rangle \one(m\in\cS_t)
    \le \max_{a\in[K], a^\prime \in[K]}\langle x_{t,a}^{(m)}-x_{t,a^\prime}^{(m)},\beta^{(m)}\rangle\one(m\in\cS_t)\nonumber\\
    =&\left(\max_{a\in[K]}\langle x_{t,a}^{(m)},\beta^{(m)}\rangle+
    \max_{a^\prime\in[K]}|\langle x_{t,a^\prime}^{(m)},\beta^{(m)}\rangle|\right)\one(m\in\cS_t)\le   2\max_{a\in[K]}|\langle x_{t,a}^{(m)}, \beta^{(m)}\rangle|\one(m\in\cS_t).\label{eqn:vnicsxnvx1}
\end{align}
Also, by the definition of $a_t^{(m)}$,
the instantaneous regret can be bounded as
\begin{align}
    &\max_{a\in[K]}\langle x_{t,a}^{(m)}-x_{t,a_t^{(m)}}^{(m)},\beta^{(m)}\rangle \one(m\in\cS_t)\le  \max_{a\in[K]}\langle x_{t,a}^{(m)}-x_{t,a_t^{(m)}}^{(m)},\beta^{(m)}-\widehat{\beta}^{m}_{q-1}\rangle \one(m\in\cS_t)\nonumber\\
    \le & \max_{a\in[K], a^\prime \in[K]}\langle x_{t,a}^{(m)}-x_{t,a^\prime}^{(m)},\beta^{(m)}-\widehat{\beta}^{m}_{q-1}\rangle\one(m\in\cS_t)\le  2\max_{a\in[K]}|\langle x_{t,a}^{(m)}, \beta^{(m)}-\widehat{\beta}^{m}_{q-1}\rangle|\one(m\in\cS_t).\label{eqn:vnicsxnvx2}
\end{align}
Combinig \eqref{eqn:vnicsxnvx1} with \eqref{eqn:vnicsxnvx2}, we obtain, for $m\in\cS_t$,
\begin{align}
    \max_{a\in[K]}\langle x_{t,a}^{(m)}-x_{t,a_t^{(m)}}^{(m)},\beta^{(m)}\rangle \le 2\left(\max_{a\in[K]}|\langle x_{t,a}^{(m)}, \beta^{(m)}\rangle|\right)\wedge\left(\max_{a\in[K]}|\langle x_{t,a}^{(m)}, \beta^{(m)}-\widehat{\beta}^{m}_{q-1}\rangle| \right) 
    \label{eqn:vbcixbvxa}
\end{align}
Taking the expectation of \eqref{eqn:vbcixbvxa} multiplied by 
$\one(m\in\cS_t)$,
conditioned on $\widehat{\beta}^{(m)}_{q-1}$,
by Condition \ref{asp:covariate-subg} and Lemma \ref{lem:max-ineq}, we have
\begin{align}
     &\EE\left[ \max_{a\in[K]}\langle x_{t,a}^{(m)}-x_{t,a_t^{(m)}}^{(m)},\beta^{(m)}\rangle \one(m\in\cS_t)\mid \widehat{\beta}^{m}_{q-1}\right]\nonumber\\
    \le &2\left(\|\beta^{(m)}\|_2\wedge\|\beta^{(m)}-\widehat{\beta}^{m}_{q-1}\|_2\right)\sqrt{2\ln(2K) L}\cdot \PP(m\in\cS_t)\nonumber\\
    \le &2p_m\left(1\wedge\|\beta^{(m)}-\widehat{\beta}^{m}_{q-1}\|_2\right)\sqrt{2\ln(2K) L}.\label{eqn:vniscnvxd-async}
\end{align}
Taking expectations in \eqref{eqn:vniscnvxd-async} with respect to $\widehat{\beta}_{q-1}^{(m)}$, we find 
\begin{equation}\label{eqn:vncsvnsLdsad}
     \EE\left[ \max_{a\in[K]}\langle x_{t,a}^{(m)}-x_{t,a_t^{(m)}}^{(m)},\beta^{(m)}\rangle \one(m\in\cS_t)\right]\le 2\sqrt{2\ln(2K)L}p_m\left(\EE[1\wedge\|\widehat{\beta}^{m}_{q-1}-\beta^{(m)}\|_2]\right).
\end{equation}
Given \eqref{eqn:vncsvnsLdsad}, the remaining key step
is to bound the estimation error 
$\EE[\|\widehat{\beta}^{m}_{q-1}-\beta^{(m)}\|_2]$ for all $0\le q\le Q$ and $m\in[M]$.  
Letting 
$\tilde{Q}= \lceil \log_2(C_{\rm b}(\ln(MT)+d\ln(L\ln(K)/\mu))/(H_0 p_{\tau})\rceil +3$ with $C_{\rm b}$ defined in Lemma \ref{lem:eig-lb}, we first bound $\EE[\|\widehat{\beta}^{m}_{q-1}-\beta^{(m)}\|_2]$ for $m\in[\tau]$. 

\vspace{2mm}
\noindent\textbf{Case 1. }When $0\le q<\tilde{Q}$, using 
$1\wedge\|\widehat{\beta}^{m}_{q-1}-\beta^{(m)}\|_2\le 1$
in \eqref{eqn:vncsvnsLdsad}, 
we have
\begin{equation}\label{eqn:vncsvnsL-pdsa}
    \sum_{t\in\cH_q}\EE\left[\max_{a\in[K]}\langle x_{t,a}^{(m)}-x_{t,a_t^{(m)}}^{(m)},\beta^{(m)}\rangle \one(m\in\cS_t)\right]=\widetilde{O}(p_m|\cH_q|\sqrt{L}).
\end{equation}

\vspace{2mm}
\noindent\textbf{Case 2. }
When $\tilde{Q}\le q\le Q$, we have 
by the definition of $\tilde Q$,
\begin{equation*}
    |\cH_{q-1}|=2^{q-2}H_0\ge 2C_{\rm b}(\ln(MT)+d\ln(L\ln(K)/\mu))/p_{\tau}.
\end{equation*}
Let $\cG_{q-1}$ be the event that $[\tau]\subseteq \cC_{q-1}$, with $\cC_{q-1}$ from \eqref{cq}. 
From Theorem \ref{thm:ls-fixed-design} with $p=2$, 
it follows for all $m\in\cC_{q-1}$ that
    \begin{equation*}
        \EE[1\wedge\|\widehat{\beta}^{(m)}_{q-1}-\beta^{(m)}\|_2^2 \mid (\vX^{(m)}_{q-1},Y^{(m)}_{q-1})_{m\in\cC_{q-1}}]=\widetilde{O}\left(\frac{1}{\mu |\cH_{q-1}|}\left(\frac{s}{p_m}+\frac{d}{p_{[M]}}\right)\right).
    \end{equation*}
Following the argument from Lemma \ref{lem:bandit-est}, 
we know  $\PP(\cG_{q-1})\ge 1-2/T$. Marginalizing over $(\vX^{(m)}_{q-1},Y^{(m)}_{q-1})_{m\in\cC_{q-1}}$ and using  Jensen's inequality, we have for any $m\in[\tau]$ that
\begin{equation}\label{eqn:bnicdnvxc}
    \EE[1\wedge\|\widehat{\beta}^{(m)}_{q-1}-\beta^{(m)}\|_2]=\widetilde{O}\left(\frac{1}{\sqrt{\mu |\cH_{q-1}|}}\sqrt{\frac{s}{p_m}+\frac{d}{p_{[M]}}}+\frac{1}{T}\right)
\end{equation}
where $1/T$ appears by considering the complement of $\cG_{q-1}$, via the  bound $1\wedge\|\widehat{\beta}^{(m)}_{q-1}-\beta^{(m)}\|_2\le 2$. Plugging \eqref{eqn:bnicdnvxc} into \eqref{eqn:vncsvnsLdsad}, we obtain
\begin{align}
    &\sum_{t\in\cH_q}\EE\left[\max_{a\in[K]}\langle x_{t,a}^{(m)}-x_{t,a_t^{(m)}}^{(m)},\beta^{(m)}\rangle\one(m\in\cS_t)\right]\nonumber\\
    =&\widetilde{O}\left(|\cH_q|p_m\left(\sqrt{\frac{L}{{\mu |\cH_{q-1}|}} \left(\frac{s}{p_m}+\frac{d}{p_{[M]}}\right)}+\frac{\sqrt{L}}{T}\right)\right)\nonumber\\
    =&\widetilde{O}\left(\sqrt{\frac{L|\cH_q|(p_m)^2}{{\mu }} \left(\frac{s}{p_m}+\frac{d}{p_{[M]}}\right)}+\frac{\sqrt{L}|\cH_q|p_m}{T}\right),\label{eqn:vnidnscxx}
\end{align}
where the last equation holds because $|\cH_q|\le 2|\cH_{q-1}|$.
Combining the bounds \eqref{eqn:vnidnscxx} with  \eqref{eqn:vncsvnsL-pdsa} for the two cases, we obtain for each $m\in[\tau]$ that 
\begin{align}
    \EE[R_T^{(m)}]=&\sum_{t=1}^{T}\EE\left[\max_{a\in[K]}\langle x_{t,a}^{(m)}-x_{t,a_t^{(m)}}^{(m)},\beta^{(m)}\rangle \one(m\in\cS_t)\right]\nonumber\\
    =& \left(\sum_{q=0}^{\tilde{Q}-1}+\sum_{q=\tilde{Q}}^Q\right)\sum_{t\in\cH_q}\EE\left[\max_{a\in[K]}\langle x_{t,a}^{(m)}-x_{t,a_t^{(m)}}^{(m)},\beta^{(m)}\rangle\one(m\in\cS_t)\right]\nonumber\\
    =&\widetilde{O}\left(\sqrt{L}p_m\sum_{q=0}^{\tilde{Q}-1}|\cH_{q}|+\sum_{q=\tilde{Q}}^Q\left(\sqrt{\frac{L|\cH_q|(p_m)^2}{{\mu }} \left(\frac{s}{p_m}+\frac{d}{p_{[M]}}\right)}+\frac{\sqrt{L}|\cH_q|p_m}{T}\right)\right).\label{eqn:cxLfvbdsufbsd}
    % = &\widetilde{O}\left(\sqrt{L}p_m2^{\tilde{Q}}H_0+\sqrt{T p_m}\sqrt{\frac{L}{{\mu }} \left(s+\frac{d p_m}{p_{[M]}}\right)}\right).\nonumber
\end{align}
By direct calculation, we have $\sum_{q=\tilde{Q}}^Q|\cH_q|/T\le 1$, while
\begin{equation*}
    \sum_{q=0}^{\tilde{Q}-1}|\cH_{q}|=O(2^{\tilde{Q}}H_0)= O(C_{\rm b}(\ln(MT)+d\ln(L\ln(K)/\mu))/(H_0 p_{\tau}) \times H_0)=\widetilde{O}(d/p_{\tau}),
\end{equation*}
and 
\begin{equation*}
    \sum_{q=\tilde{Q}}^Q\sqrt{|\cH_{q}|}=O\left(\sum_{q=\tilde{Q}}^Q2^{q/2}\sqrt{H_0}\right)=O\left(2^{Q/2}\sqrt{H_0}\right)=O\left(\sqrt{T}\right).
\end{equation*}
Therefore, from \eqref{eqn:cxLfvbdsufbsd},
we obtain
\begin{align*}
    \EE[R_T^{(m)}]=&\widetilde{O}\left(\sqrt{L}d p_m/p_{\tau}+\sqrt{\frac{L}{{\mu }} \left(s+\frac{d p_m}{p_{[M]}}\right)T p_m}\right)\\
    =&\widetilde{O}\left(\sqrt{L}d +\sqrt{\frac{L}{{\mu }} \left(s+\frac{d p_m}{p_{[M]}}\right)T p_m}\right),
\end{align*}
where the second equation is due to $p_m/p_{\tau}\le c_{\rm f}=\widetilde{O}(1)$.

Next we bound the estimation errors for $m \notin [\tau]$, and thus also the regret.  Let $Q_m=  \left(\lceil \log_2(C_{\rm b}(\ln(MT)+d\ln(L\ln(K)/\mu))/(H_0 p^{m})\rceil +3\right)\wedge Q$ for each $m\notin [\tau]$.
If $p_m$ is sufficiently small such that $Q_m=Q$, then we have $T p_m\le 4|\cH_{Q-1}|p_m=\widetilde{O}(d)$, which, combined with \eqref{eqn:vncsvnsLdsad}, directly implies 
\begin{align*}
\sum_{t=1}^{T}\EE\left[\max_{a\in[K]}\langle x_{t,a}^{(m)}-x_{t,a_t^{(m)}}^{(m)},\beta^{(m)}\rangle \one(m\in\cS_t)\right]=O\left(\sqrt{L}Tp_m\right)=O\left(\sqrt{L}\cdot d\wedge (Tp_m)\right).
\end{align*}
Otherwise if $Q_m<Q$, $Tp_m=\widetilde{\Omega}(d)$. In this case, we show that  $m\in\cC_{q-1}$ with high probability for $q\ge Q_m$. Using $n^{(m)}_{q-1} \ge \sum_{t\in\cH_{q-1}}\one(m\in\cS_t)$ for any $q\in[Q]$ and Bernstein's inequality \eqref{eqn:jgowmfqww}, we have for each $m\in[\tau]$,
\begin{align*}
    \PP\left(n^{(m)}_{q-1}< \frac{p_m |\cH_{q-1}|}{2}\right)\le &\exp\left(-{|\cH_{q-1}|p_m}/{3}\right) \le \frac{1}{MT}.
\end{align*}
Letting $\cF_{q-1}^{(m)}=\{n_{m,q-1}\ge p_m|\cH_{q-1}|/2\}$, we have $\PP(\cF_{q-1}^{(m)})\ge 1-1/(MT)$ and furthermore $\cF_{q-1}^{(m)}$ implies $m\in\cC_{q-1}$. Following the arguments in \eqref{eqn:bnicdnvxc}, \eqref{eqn:vnidnscxx}, we similarly have that for any $q\ge Q_m$,
$\EE\left[1\wedge\|\widehat{\beta}^{m}_{q-1}-\beta^{(m)}\|_2\right]$
is bounded by \eqref{eqn:vnidnscxx}.
Therefore, using $\sum_{q=0}^{{Q}^{(m)}-1}|\cH_{q}|=\widetilde{O}(d/p^{m})$ and $   \sum_{q=Q_m}^Q\sqrt{|\cH_{q}|}=O(\sqrt{T})$,  we can proceed as in \eqref{eqn:cxLfvbdsufbsd},
and then bound
\begin{align*}
     \sqrt{L}\sum_{q=0}^{Q_m-1}p_m|\cH_q|
    = 
    \widetilde{O}(\sqrt{L}\cdot d\wedge(Tp_m)).
\end{align*}
to reach the desired conclusion.

\end{proof}

\subsection{Proof of Theorem \ref{thm:lb-bandit}}\label{app:lb-bandit}
\begin{proof}
The proof strategy for the term $\Omega(\sqrt{(s+dp_m/p_{[M]})Tp_m})$ is similar to Theorem \ref{thm:lower1}:
we  prove $\Omega(\sqrt{sTp_m})$ and $\Omega(\sqrt{dTp_m/p_{[M]}})$ by considering two cases: 
the \emph{homogeneous} case where $\beta^{(1)}=\cdots =\beta^{(M)}=\beta^\star$ and the \emph{$s$-sparse} case where $\beta^\star =0$ and $\|\beta^{(m)}\|_0\le s$ for all $m\in[M]$.

% \xh{Mention assumptions are satisfied.}

\vspace{2mm}
\noindent
\textbf{The homogeneous case. } The lower bound of $\Omega(\sqrt{dT})$ for a single  linear contextual bandit is proved in  \citep{Han2020SequentialBL,chu2011contextual}. We will follow a similar method to prove the risk is bounded as $\Omega(\sqrt{dTp_m^2/p_{[M]}})$. 
Since $\beta^{(1)}=\cdots =\beta^{(M)}=\beta^\star$ in this case, we omit the superscript $m$ in $\beta^{(m)}$ for simplicity.
% Since the $s$-sparse case is irrespective of data collaboration, we  abbreviate the superscript $m$ for simplicity.

We consider a $2$-armed instance, \ie, $K=2$. 
Denote by $Q$ the uniform distribution over $\{\beta \in \RR^d:\|\beta\|_2= \Delta\}$ where $\Delta\in[0,1]$ will be specified later. 
We let $Q$ be the prior distribution for $\beta$,
and let $D\triangleq\cN(0, L\cdot I_{d})$ be the distribution of contexts.
By \citep[Lemma 1]{Ren2020DynamicBL}, 
this choice of context distribution satisfies Conditions \ref{asp:covariate-subg} and \ref{asp:diverse} with $c_x=\Theta(1)$.

Let $\cS_t$ be the set of activated bandits at the $t$-th round and denote by $P_{\beta ,x,t}$ the distribution of the observed rewards $\{\{y_\ell^{(m)} \one(m \in \cS_\ell) 
\}_{m=1}^M\}_{\ell=1}^{t}$ up to time $t$, conditioned on parameter $\beta$ and contexts $\{x_{\ell,a}^{(m)}:a\in[2], m\in[M]\}_{\ell=1}^{t}$. 
Since the event $\{m\in\cS_t\}$ is independent of the history and of the contexts $\{x_{t,a}^{(m)}:a\in[2]\}$ at the current round, we have
\begin{align}\label{eqn:vgnidsgs}
    \sup_{\beta}\EE[R_T^{(m)}(A)]\ge& \EE_{\beta \sim Q}[R_T^{(m)}(A)]=\sum_{t=1}^T\EE_Q\left[\EE_{D}\left[\EE_{P_{\beta,x,t-1}}\left[p_m\max_{a\in[2]}\langle x_{t,a}^{(m)}-x_{t,a_t^{(m)}}^{(m)},\beta\rangle  \right]\right]\right]
\end{align}
where the factor of $p_m$  in \eqref{eqn:vgnidsgs} is due to the integration over the randomness of $\{m\in\cS_t\}$.
Letting $d_t^{(m)}\triangleq x_{t,2}^{(m)}-x_{t,1}^{(m)}$ for all $m\in[M]$ and $t\ge 1$, we have 
\begin{equation*}
    \max_{a\in[2]}\langle x_{t,a}^{(m)}-x_{t,a_t^{(m)}}^{(m)},\beta\rangle=\one(a_t^{(m)}=1)\langle d_t^{(m)},\beta\rangle_++\one(a_t^{(m)}=2)\langle d_t^{(m)},\beta\rangle_-
\end{equation*}
where the subscripts $+$ and $-$ denote the positive and negative part respectively, \ie, $u_+=\max\{u,0\}$  and  $u_-=\max\{-u,0\}$ for any $u \in\RR$. 
Therefore, from \eqref{eqn:vgnidsgs}, we have
\begin{align}
\sup_{\beta}\EE[R_T^{(m)}(A)]\ge&p_m\sum_{t=1}^T\EE_Q\left[\EE_{D}\left[\EE_{P_{\beta,x,t-1}}\left[\one(a_t^{(m)}=1)\langle d_t^{(m)},\beta\rangle_++\one(a_t^{(m)}=2)\langle d_t^{(m)},\beta\rangle_-\right]\right]\right].\label{eqn:videvbvcx}
\end{align}
For any $1\le t\le T$, conditioned on $d_t^{(m)}$, we define two new measures $Q_t^{(m)+}$ and $Q_t^{(m)-}$ 
over $\RR^d$
via the Radon–Nikodym derivatives $\d Q_t^{(m)+}(\beta)=\langle d_t^{(m)},\beta\rangle_+/Z(d_t^{(m)})\d Q$ and $\d Q_t^{(m)-}(\beta)=\langle d_t^{(m)},\beta\rangle_-/Z(d_t^{(m)})\d Q$,
for all $\beta \in\RR^d$,
where  $Z(d_t^{(m)})=\EE_Q[\langle d_t^{(m)},\beta\rangle_+]=\EE_Q[\langle d_t^{(m)},\beta\rangle_-]$ is a normalization factor. 
Here $\EE_Q[\langle d_t^{(m)},\beta\rangle_+]=\EE_Q[\langle d_t^{(m)},\beta\rangle_-]$ due to the symmetry of $Q$.
Plugging the definitions of $Q_t^{(m)+}$ and $Q_t^{(m)-}$ into \eqref{eqn:videvbvcx}, and changing the integration order, we have
\begin{align}
    \sup_{\beta}\EE[R_T^{(m)}(A)]\ge&p_m\sum_{t=1}^T\EE_{D}\left[Z(d_t^{(m)})\left(\EE_{P_{\beta,x,t-1}\circ Q_t^{(m)+}}[\one(a_t=1)]+\EE_{P_{\beta,x,t-1}\circ Q_t^{(m)-}}[\one(a_t=2)]\right)\right]\nonumber\\
    \ge&p_m\sum_{t=1}^T\EE_{D}\left[Z(d_t^{(m)})\left(1-\texttt{TV}(P_{\beta,x,t-1}\circ Q_t^{(m)+},P_{\beta,x,t-1}\circ Q_t^{(m)-})\right)\right]\nonumber\\
    \ge& p_m\sum_{t=1}^T\EE_{D}\left[Z(d_t^{(m)})\left(1-\sqrt{\frac{1}{2}D_\mathrm{KL}(P_{\beta,x,t-1}\circ Q_t^{(m)+}\,\|\,P_{\beta,x,t-1}\circ Q_t^{(m)-})}\right)\right],\label{eqn:nvicdbve}
\end{align}
where the second inequality follows the definition of the total variation distance: $P_1(A)+P_2(A^c)\ge 1- \texttt{TV}(P_1,P_2)$ with $A=\{a_t=1\}$,  and the third inequality follows from 
Pinsker's inequality  $\texttt{TV}(P_1,P_2)\le\sqrt{D_\mathrm{KL}(P_1\,\|\,P_2)/2}$. 

Due to the distribution of the contexts, ${d}_t^{(m)}\neq 0$ with probability one.
Hence we can let $u_t^{(m)}={d}_t^{(m)}/\|{d}_t^{(m)}\|_2$ if ${d}_t^{(m)}\neq 0$; and the zero-probability set where $d_t^{(m)}=0$ does not affect the result.
Further, let
\begin{equation}\label{eqn:nbvindgde}
    \tilde{\beta}_t^{m} = \beta-2\langle u_t^{(m)},\beta\rangle u_t^{(m)}.
\end{equation}
Since $\langle d_t^{(m)},\tilde{\beta}_t^{m} \rangle=-\langle d_t^{(m)},\tilde{\beta}_t^{m}\rangle$,
we have  $\langle d_t^{(m)},\beta \rangle_-=\langle d_t^{(m)},\tilde{\beta}_t^{m}\rangle_+$. Furthermore, \eqref{eqn:nbvindgde} has the inverse transformation 
\begin{equation}\label{eqn:nbvindgde-inv}
    \beta = \tilde{\beta}_t^{m} -2\langle u_t^{(m)},\tilde{\beta}_t^{m} \rangle u_t^{(m)}.
\end{equation}
Since $\langle d_t^{(m)},\beta \rangle_-=\langle d_t^{(m)},\tilde{\beta}_t^{m} \rangle_+$ and $Q$ is reflection-invariant, one can equivalently obtain $\beta\sim Q_t^{(m)-}$ by first generating  $\tilde{\beta}_t^{(m)}\sim Q_t^{(m)+}$ and then calculating $\beta$ via \eqref{eqn:nbvindgde-inv}. 
Since $P_{\beta,x,t-1}\circ Q_t^{(m)-}$ 
means drawing $\beta$ from $Q_t^{(m)-}$ first and then gaining rewards given such $\beta$ and the  independently sampled contexts, it thus holds that 
\begin{align*}
    P_{\beta,x,t-1}\circ (\beta \sim Q_t^{(m)-})
    %& P_{\beta,x,t-1}\circ(\tilde{\beta}_t^{(m)}\sim Q_t^{(m)+})
    =&P_{\tilde{\beta}_t^{(m)} -2\langle u_t^{(m)},\tilde{\beta}_t^{(m)}\rangle u_t^{(m)} ,x,t-1}\circ(\tilde{\beta}_t^{(m)} \sim Q_t^{(m)+})\\
    \overset{d}{=}&P_{\beta-2\langle u_t^{(m)},\beta \rangle u_t^{(m)} ,x,t-1}\circ(\beta \sim Q_t^{(m)+}).
\end{align*}
In the second equation above  we changed the notation from 
$\tilde{\beta}_t^{(m)}\mapsto \beta$.
From this 
and \eqref{eqn:nvicdbve}, 
we have
\begin{align*}
    &\sup_{\beta}\EE[R_T^{(m)}(A)]\\
    \ge& p_m \sum_{t=1}^T\EE_{D}\left[Z(d_t^{(m)})\left(1-\sqrt{\frac{1}{2}D_\mathrm{KL}(P_{\beta,x,t-1}\circ Q_t^{(m)+}\,\|\,P_{\beta-2\langle u_t^{(m)},\beta \rangle u_t^{(m)} ,x,t-1} \circ Q_t^{(m)+}}\right)\right].
\end{align*}
By Lemma \ref{lem:convexity}, this is lower bounded by
\begin{equation}
    p_m \sum_{t=1}^T\EE_{D}\left[Z(d_t^{(m)})\left(1-\sqrt{\frac{1}{2}\EE_{\beta \sim Q_t^{(m)+}}\left[D_\mathrm{KL}(P_{\beta,x,t-1}\,\|\,P_{\beta-2\langle u_t^{(m)},\beta \rangle u_t^{(m)} ,x,t-1})\right]}\right)\right]. \label{eqn:vnixcnvsdf}
\end{equation}
Since the reward noise follows a $\cN(0,1)$ distribution, conditioned on the activation sets $\{\cS_\ell\}_{\ell=1}^{t-1}$, 
we have $P_{\beta,x,t-1}=\otimes_{\ell=1}^{t-1}\otimes_{r \in \cS_\ell}\cN(\langle \beta,x_{\ell,a_\ell^{(r)}}^{(r)}\rangle, 1)$. Furthermore, by the formula for the Kullback-Leibler divergence between two Gaussian distributions, we have 
\begin{align}
    &D_\mathrm{KL}\left(P_{\beta,x,t-1}\,\|\,P_{\beta-2\langle u_t^{(m)},\beta \rangle u_t^{(m)} ,x,t-1} \mid \{\cS_\ell\}_{\ell=1}^{t-1}\right)\nonumber\\
    =&\frac{1}{2}\sum_{\ell=1}^{t-1}\sum_{r \in \cS_\ell}\left(\langle \beta,x_{\ell,a_\ell^{(r)}}^{(r)}\rangle-\left\langle \beta-2\langle u_t^{(m)},\beta\rangle u_t^{(m)},x_{\ell,a_\ell^{(r)}}^{(r)}\right\rangle\right)^2\nonumber\\
    =&{2\langle u_t^{(m)},\beta\rangle^2}\sum_{\ell=1}^{t-1}\sum_{r\in \cS_{\ell}}\langle u_t^{(m)},x_{\ell,a_\ell^{(r)}}^{(r)}\rangle^2
    ={2\langle u_t^{(m)},\beta\rangle^2}\sum_{\ell=1}^{t-1}\sum_{r=1}^M\left\langle u_t^{(m)},x_{\ell,a_\ell^{(r)}}^{(r)}\right\rangle^2 \one(r\in \cS_{\ell}).\label{eqn:nvidgsd}
\end{align}
Since the events $\{r\in \cS_{\ell}\}$ for $\ell\in[t-1]$ and $r\in[M]$ are independent of contexts and the variable $\beta$, using \eqref{eqn:nvidgsd} and Lemma \ref{lem:convexity}, we obtain 
\begin{align}
    &D_\mathrm{KL}(P_{\beta,x,t-1}\,\|\,P_{\beta-2\langle u_t^{(m)},\beta \rangle u_t^{(m)} ,x,t-1})\nonumber\\
    \le&\EE_{\{\cS_\ell\}_{\ell=1}^{t-1}}\left[D_\mathrm{KL}\left(P_{\beta,x,t-1}\,\|\,P_{\beta-2\langle u_t^{(m)},\beta \rangle u_t^{(m)} ,x,t-1} \mid \{\cS_\ell\}_{\ell=1}^{t-1}\right)\right] \nonumber\\
    =&{2\langle u_t^{(m)},\beta\rangle^2}\sum_{\ell=1}^{t-1}\sum_{r\in[M]}\langle u_t^{(m)},x_{\ell,a_\ell^{(r)}}^{(r)}\rangle^2 \PP(r\in \cS_{\ell})
    ={2\langle u_t^{(m)},\beta\rangle^2}p_{[M]}\sum_{\ell=1}^{t-1}\langle u_t^{(m)},x_{\ell,a_\ell^{(r)}}^{(r)}\rangle^2 .\label{eqn:nvidgsddsad}
\end{align}

Therefore, combining \eqref{eqn:vnixcnvsdf} and \eqref{eqn:nvidgsddsad}, we have 
\begin{align}
    &\sup_{\beta}\EE[R_T^{(m)}(A)]\\
    \ge& p_m \sum_{t=1}^T\EE_{D}\left[Z(d_t^{(m)})\left(1-\sqrt{{\EE_{Q_t^{(m)+}}[\langle u_t^{(m)},\beta\rangle^2]}
    u_t^{m\top} \left(p_{[M]}\sum_{\ell=1}^{t-1}x_{\ell,a_\ell^{(r)}}^{(r)}x_{\ell,a_\ell^{(r)}}^{(r)\top}\right) u_t^{m}}\right)\right]\nonumber\\
    \ge & p_m\sum_{t=1}^T\EE_{D}\left[Z(d_t^{(m)})\left(1-\sqrt{{ p_{[M]}\EE_{Q_t^{(m)+}}[\langle u_t^{(m)},\beta\rangle^2]}
    u_t^{m\top} \sum_{\ell=1}^{t-1}\left(x_{\ell,1}^{(r)}x_{\ell,1}^{(r)\top}+x_{\ell,2}^{(r)}x_{\ell,2}^{(r)\top}\right) u_t^{(m)}}\right)\right].\label{eqn:vxcivsd}
\end{align}
Taking the expectation of \eqref{eqn:vxcivsd} with respect to $\{(x_{\ell,1}^r,x_{\ell,2}^r):r\in[M]\}_{\ell=1}^{t-1}$, 
each of which is distributed i.i.d.~according to $D=\cN(0, L\cdot I_{d\times d})$,
and using that $\|u_t^{(m)}\|_2=1$ with probability one,
we have 
that the above is lower bounded by
\begin{align}
 p_m\sum_{t=1}^T\EE_{(x_{t,1}^{(m)}, x_{t,2}^{(m)})}\left[Z(d_t^{(m)})\left(1-\sqrt{{2(t-1)L}p_{[M]}\EE_{Q_t^{(m)+}}[\langle u_t^{(m)},\beta\rangle^2]
    }\right)\right],\label{eqn:vncixvniefdbgvs}
\end{align}
where the  outer expectation is only over the randomness of $(x_{t,1}^{(m)}, x_{t,2}^{(m)})$.  We next calculate $\EE_{Q_t^+}[\langle u_t,\beta\rangle^2]$ and $Z(d_t^{(m)})$. By the definition of $Q_t^+$, we have
\begin{equation}\label{eqnL:vniefwasw}
    \EE_{Q_t^{(m)+}}[\langle u_t^{(m)},\beta\rangle^2]=\frac{\EE_{Q}\left[|\langle u_t^{(m)},\beta\rangle|^3\right]}{\EE_Q\left[|\langle u_t^{(m)}, \beta\rangle|\right]}.
\end{equation}
By the symmetry of $Q$, the distribution of  $\langle u_t^{(m)},\beta\rangle$, conditioned on any  $u_t^{(m)}$, is identical to that of the first coordinate of $\beta$. 
Therefore, using Lemma \ref{lem:moment}, we have 
\begin{equation*}
    \EE_{Q}\left[|\langle u_t^{(m)},\beta\rangle|^3\right]=\Delta^3\frac{\Gamma(\frac{d}{2})\Gamma(2)}{\Gamma(\frac{d+3}{2})\Gamma(\frac{1}{2})}\quad \text{and}\quad \EE_{Q}\left[|\langle u_t^{(m)},\beta\rangle|\right]=\Delta \frac{\Gamma(\frac{d}{2})\Gamma(1)}{\Gamma(\frac{d+1}{2})\Gamma(\frac{1}{2})},
\end{equation*}
and thus it follows from \eqref{eqnL:vniefwasw} that
\begin{equation}\label{eqn:vncixvniefdbgvs2}
     \EE_{Q_t^{(m)+}}[\langle u_t^{(m)},\beta\rangle^2]=\frac{2\Delta^2}{d+1}.
\end{equation}
Similarly, we have
\begin{align}
    &\EE_{(x_{t,1}^{(m)}, x_{t,2}^{(m)})}[Z(d_t^{(m)})]= \EE_{(x_{t,1}^{(m)}, x_{t,2}^{(m)})}[\EE_Q[\langle d_t^{(m)},\beta\rangle_+]]=\frac{1}{2}\EE_{(x_{t,1}^{(m)}, x_{t,2}^{(m)})}[\EE_Q[|\langle d_t^{(m)},\beta\rangle|]]\nonumber\\
    =& \frac{1}{2}\EE_{(x_{t,1}^{(m)}, x_{t,2}^{(m)})}\left[\|{d}_t^{(m)}\|_2 \right]\frac{\Delta\Gamma(\frac{d}{2})}{\Gamma(\frac{d+1}{2})\sqrt{\pi}}=\frac{1}{2}\EE\left[\|x_{t,1}^{(m)}- x_{t,2}^{(m)}\|_2 \right]\frac{\Delta\Gamma(\frac{d}{2})}{\Gamma(\frac{d+1}{2})\sqrt{\pi}}=\Omega(\sqrt{L}\Delta).\label{eqn:vncixvniefdbgvs3}
\end{align} 
Combining \eqref{eqn:vncixvniefdbgvs}, \eqref{eqn:vncixvniefdbgvs2}, and \eqref{eqn:vncixvniefdbgvs3}, we have 
\begin{align}\label{eqn:Lnvidswns}
    \sup_{\beta}\EE[R_T^{(m)}(A)]
    \ge&\Omega(p_m\sqrt{L}\Delta T)\cdot \left(1-\sqrt{\frac{4(t-1)L
   \Delta^2}{(d+1)}p_{[M]}}\right).
\end{align}
Choosing  $\Delta=\frac{ \sqrt{(d+1)}}{4\sqrt{(T-1)L\sum_{r\in[M]}p_r}}$,
which satisfies $\Delta\le 1$ by assumption, in
\eqref{eqn:Lnvidswns}, we finally establish
\begin{equation*}
    \sup_{\beta}\EE[R_T(A)]
    =\Omega\left(p_m T\sqrt{L}\Delta\right)=\Omega\left( \sqrt{dTp_m^2/p_{[M]}}\right).
\end{equation*}

\vspace{2mm}
\noindent
\textbf{The $s$-sparse case. }
% The proof is similar to the one of the single-bandit setup. 
In this case, we consider $\supp(\beta^{(1)}),\dots, \supp(\beta^{(M)})$ located in the first $s$ coordinates. 
If the supports of $\{\beta^{(m)}\}_{m\in[M]}$ are known to the algorithm,
then the structure of sparse heterogeneity and the common $\beta^\star$ would be non-informative for estimating $\{\beta^{(m)}\}_{m\in[M]}$. 
Therefore, in this case, one can obtain the lower bound $\Omega(\sqrt{sTp_m})$ by simply adapting the proof for the homogeneous case with $M=1$  in $s$ dimensions.

\end{proof}

\newpage

\section{Results on Contextual Bandits under {\sf Model-P}}\label{app:model-p}

Recall that in the single contextual bandit problem under {\sf Model-P}, we have a set of $K$ parameters $\{\beta^{(a)}\}_{a\in[K]}$ where $\beta^{(a)}$ is associated with arm $a$. 
When action $a$ is chosen,
a reward $y_{t,a}=\langle x_t,\beta^{(a)}\rangle +\ep_t$ is earned. 
We extend this to a multitask scenario as follows. 
We consider $M$ bandit instances, and each bandit $ m$ is associated with $K$ arms corresponding to  parameters $\{\beta^{(m,a)}\}_{a\in[K]}\subseteq\RR^d$, and an activation probability $p_m\in[0,1]$.
At any time $t$,
each bandit $m$ is independently activated with  probability $p_m$. 
The analyst observes an independent  $d$-dimensional context $x_{t}^{(m)}$ for $m$ in the set $\cS_t$ of activated bandit instances. 
Given all observed contexts, the analyst can select action $a_t^{(m)}\in[K]$ for each activated bandit instance $m\in\cS_t$ and earn the reward via $y_{t}^{(m)}=\langle x_{t}^{(m)},\beta^{(m, a_t^{(m)})}\rangle +\ep_{t}^{(m)}\in\RR$, 
where $\ep_{t}^{(m)}$ are i.i.d.~noise random variables.

To study the proposed multitask scenario under {\sf Model-P}, we impose the following conditions, which are parallel to Conditions \ref{asp:shb}, \ref{asp:covariate-subg}, and \ref{asp:diverse}.

\begin{assumption}[\sc Sparse heterogeneity \& Boundedness]\label{asp:shb-p}
    There is $s_a$ with $0\le s_a\le d$ such that  for each action $a\in[K]$, there is an unknown global parameter
$\beta^{\star, (a)}\in\RR^d$ with $\|\beta^{(m,a)}-\beta^{\star, (a)}\|_0\le s$ for any $m\in[M]$. Furthermore,  $\|\beta^{(m,a)}\|_2\le 1$ for all $a\in[K]$ and $m\in[M]$.
\end{assumption}

\begin{assumption}[\sc Sub-Gaussianity]\label{asp:covariate-subg-p}
    For each $t\in [T]$, the marginal distribution of $x_{t}$ is $L$-sub-Gaussian.
\end{assumption}

\begin{assumption}[\sc Diverse covariate]\label{asp:diverse-p}
    There are  positive constants $\mu$ and $c_x$, such that for any $\{\beta^{(m,a)}:a\in[K]\}\subseteq\RR^d$, source vector $v\in\RR^d$, and $m\in[M]$ , it holds that $\PP(\langle x^{(m)}_{t}, v\rangle^2\one(a^\star = a)\ge \mu)\ge c_x$ where $a^\star =\argmax_{a\in[K]} \langle x_{t}^{(m)} ,\beta^{(m,a)}\rangle$ and the probability $\PP(\cdot)$ is taken over the  distribution of $x_t^{(m)}$.
\end{assumption}

\begin{remark}
    Condition \ref{asp:diverse-p} ensures sufficient exploration even with a greedy algorithm. \citep[Lemma 14]{Ren2020DynamicBL} proves that Condition \ref{asp:diverse-p} holds when $\EE[x_tx_t^\top]\succeq 2\mu I_d$ and $p(x_t)\ge \nu p(-x_t)$ for some $\nu >0$ where $p(\cdot)$ is the density of $x_t$.
\end{remark}

\subsection{Algorithm \& Regret Analysis}
% \xh{simplify the proofs to make them less repeated.}

\begin{algorithm}[H]
	\caption{MOALRBandit: Collaborative Bandits with MOLAR estimates under {\sf Model-P}}
	\label{alg:bc-bandit-p}
	\begin{algorithmic}
		\STATE \noindent {\bfseries Input:} Time horizon $T$, $\widehat{\beta}^{(m)}_{a,-1}=0$  for $a\in[K]$ and $m\in[M]$, 
  initial batch size $H_0$ and batch $\cH_0=[H_0]$; number of batches $Q= \lceil \log_2(T/H_0)\rceil$,
  $\vX^{(m,a)}_{q}=\emptyset$, and $Y^{(m)}_{a,q}=\emptyset$ for $a\in[K]$, $m\in[M]$, and $0\le q\le Q$
        \FOR{$q=1,\dots, Q $}
        \STATE Define batch $\cH_q = \{t:2^{q-1}H_0<t \le \min\{2^q H_0, T\}\}$
        \ENDFOR
		\FOR{$t=1,\cdots,T$}
		\FOR{each bandit in parallel}
        \vspace{1mm}
        \STATE Bandit instance $m$ is activated with probability $p_m$
        \vspace{1mm}
        \IF{$t\in\cH_q$ \textbf{and} bandit instance $m$ is activated}
        \vspace{1mm}
		\STATE Choose, breaking ties randomly $a_t^{(m)}=\arg\max_{a\in[K]}\langle x_{t}^{(m)},\widehat{\beta}_{q-1}^{(m, a)}\rangle$, and gain reward $y_{t}^{(m)}$
  \vspace{1mm}
		\STATE Augment observations $\vX_{q}^{(m,a_t^{(m)})}\,\leftarrow\,[\vX_{q}^{(m,a_t^{(m)})\top}, x_{t}^{(m)}]^\top$ and $Y_{q}^{(m,a_t^{(m)})}\,\leftarrow\,[Y_{q}^{(m,a_t^{(m)})\top},y_t^{(m)}]^\top$
  \vspace{1mm}
        \ENDIF
		\ENDFOR
  \IF{$t= 2^q H_0$, \ie, batch $\cH_q$ ends}
        % \FOR{$m\in[M]$}
        \vspace{1mm}
        \STATE Let $n_{m,q}=\sum_{a\in[K]}|Y_{q}^{(m,a)}|$ and $\cC_{q}=\{m\in[M]:n_{m,q}\ge {C}_{\rm b}^\prime( \ln(M KT ) + d \ln(L/\mu))\}$ with ${C}_{\rm b}^\prime$ defined in Lemma \ref{lem:eig-lb-p} 
        % \xh{revise later}
        \FOR{$a\in[K]$}
        \vspace{1mm}
        \STATE Call MOLAR$(\{(\vX^{(m,a)}_{q},Y^{(m,a)}_{q})\}_{m\in\cC_{q}})$ to obtain $\{\widehat{\beta}^{(m,a)}_{q}\}_{m\in \cC_{q}}$
        % for all $a\in[K]$
        \vspace{1mm}
        \FOR{$m\in [M] \backslash\cC_{a,q}$}
        \vspace{1mm}
        \STATE Let $\widehat{\beta}^{(m,a)}_{q}=\widehat{\beta}^{(m,a)}_{q-1}$, $\vX_{q+1}^{(m,a)}=\vX_{q}^{(m,a)}$, and $Y_{q+1}^{(m,a)}=Y_{q}^{(m,a)}$
        % for all $a\in[K]$
        \vspace{1mm}
        \ENDFOR
        \ENDFOR
        \ENDIF
  \ENDFOR 
		% \RETURN dsa
	\end{algorithmic}
\end{algorithm}

Algorithm \ref{alg:bc-bandit-p} describes a variant of the \oursb algorithm under {\sf Model-P}. While Algorithm \ref{alg:bc-bandit-p} follows the spirit of \oursb, the difference is that it requires applying \ours to all arms with the same index across all bandits due to the nature of {\sf Model-P}. 

We  consider the following individual regret metric: given a time horizon $T\ge 1$ and a specific algorithm $A$ that produces action trajectories $\{a_t^{(m)}\}_{t\in[T],m\in[M]}$, we define for each $m\in[M]$ that
\begin{equation*}
    R_T^{(m)}(A):=\sum_{t=1}^T\max_{a\in[K]}\langle x_{t}^{(m)}, \beta^{(m,a)}-\beta^{(m,a^{(m)}_t)}\rangle\one(m\in\cS_t).
\end{equation*}

Theorem \ref{thm:bc-bandit-p} establishes a corresponding
regret upper bound under Conditions \ref{asp:freq}, \ref{asp:gsn}, and \ref{asp:shb-p}-\ref{asp:diverse-p}. To this end, we first show Lemma \ref{lem:eig-lb-p}, which guarantees that for any given $a\in[K]$ and $0\le q<Q$, the contexts $\cX_{a,q}^{(m)}$ at the end of each batch for all $m\in[M]$ has lower bounded eigenvalues with high probability. Lemma \ref{lem:eig-lb-p} is similar to \citep[Lemma 18]{Ren2020DynamicBL} in the single-bandit and $s$-sparse regime.

\begin{lemma}\label{lem:eig-lb-p}
    Under Conditions \ref{asp:covariate-subg-p} and \ref{asp:diverse-p}, there is ${C}_{\rm b}^\prime$ only depending on $c_x$, such that for any $0\le q<Q$, it holds with probability at least $1-1/T$ that $\lambda_{\min }(\vX_{q}^{(m,a)\top}\vX_{q}^{(m,a)})\ge n_{m,q}\mu c_x/4$ for all $a\in[K]$ and $m\in [M]$ with $n_{m,q}\ge {C}_{\rm b}^\prime( \ln(M KT ) + d \ln(L/\mu))$.
\end{lemma}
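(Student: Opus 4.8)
The plan is to mirror the $\ep$-net argument of Lemma~\ref{lem:eig-lb}, adapting it to the per-arm context matrices of {\sf Model-P} and to the diversity Condition~\ref{asp:diverse-p}. Fix $0\le q<Q$ and let $\cT_q^m$ be the set of rounds whose observations populate $\vX_{\cdot,q}^m$, so $|\cT_q^m|=n_q^m$ and $\vX_{a,q}^{m\top}\vX_{a,q}^m=\sum_{t\in\cT_q^m}X_t^mX_t^{m\top}\one(a_t^m=a)$, where $a_t^m=\argmax_{a}\langle X_t^m,\widehat\beta_{a,q-1}^m\rangle$ is, within a round, a deterministic function of $X_t^m$. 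I would work conditionally on the batch estimates that drive arm selection and on the counts $\{n_q^m\}$; since contexts across rounds are independent and Condition~\ref{asp:diverse-p} holds for \emph{every} parameter tuple $\{\beta_a\}$, all bounds below hold uniformly and survive taking expectations over those estimates.

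First I would control $\lambda_{\max}$. Since $\vX_{a,q}^{m\top}\vX_{a,q}^m\preceq S_q^m\triangleq\sum_{t\in\cT_q^m}X_t^mX_t^{m\top}$ for every $a$, it suffices to bound $\lambda_{\max}(S_q^m/n_q^m)$. Because each $X_t^m$ is $L$-sub-Gaussian (Condition~\ref{asp:covariate-subg-p}), for a fixed unit $v$ the variables $\langle v,X_t^m\rangle^2$ are sub-exponential, and a Bernstein bound followed by an $\ep$-net over $\BB_d$ gives $\lambda_{\max}(S_q^m/n_q^m)\le\delta/(1-2\ep)$ with failure probability at most $(1+2/\ep)^d e^{-\Omega(n_q^m)}$, exactly as in \eqref{eqn:nmgodsfsd}. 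Crucially, because $S_q^m$ aggregates contexts over \emph{all} activated rounds with no dependence on which arm was played, this step involves no maximum over arms; this is why the threshold carries $d\ln(L/\mu)$ rather than the $d\ln(L\ln(K)/\mu)$ of {\sf Model-C}.

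Next I would lower bound each $\lambda_{\min}(\vX_{a,q}^{m\top}\vX_{a,q}^m)$. For a fixed unit $v$,
\[
v^\top\vX_{a,q}^{m\top}\vX_{a,q}^m v\ge\mu\sum_{t\in\cT_q^m}\one\!\big(\langle v,X_t^m\rangle^2\ge\mu,\ a_t^m=a\big),
\]
and applying Condition~\ref{asp:diverse-p} with $\{\beta_a\}=\{\widehat\beta_{a,q-1}^m\}$ shows each summand has conditional mean at least $c_x$; a Chernoff bound then yields $\sum_t\one(\cdots)\ge c_x n_q^m/2$, hence $v^\top\vX_{a,q}^{m\top}\vX_{a,q}^m v\ge\mu c_x n_q^m/2$, scaled by the \emph{total} count $n_q^m$ rather than the per-arm count. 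An $\ep$-net argument combined with the $\lambda_{\max}$ bound, exactly as in \eqref{eqn:vnosdfsdas}--\eqref{eqnLvndivcxb}, upgrades this to $\lambda_{\min}(\vX_{a,q}^{m\top}\vX_{a,q}^m/n_q^m)\ge\mu c_x/2-2\ep\delta/(1-2\ep)$. Choosing $\delta$ and $\ep$ as in the proof of Lemma~\ref{lem:eig-lb} makes the right-hand side $\mu c_x/4$, and a final union bound over the $K$ arms, the $M$ instances, and the net points produces the $\ln(MKT)+d\ln(L/\mu)$ threshold defining $C_{\rm b}^\prime$ and the $1-1/T$ probability.

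The main obstacle is this lower-bound step: one must argue that the diversity Condition~\ref{asp:diverse-p}---which concerns the greedy-optimal arm under the current estimates---transfers to the adaptively played arm $a_t^m$ and yields a per-arm eigenvalue bound scaled by the \emph{total} activation count $n_q^m$. This is precisely what lets instances with small per-arm sample sizes still have well-conditioned design matrices, and it hinges on the condition holding for every parameter tuple $\{\beta_a\}$ so that it may be instantiated at the random batch estimates, together with the conditional independence of contexts within a batch.
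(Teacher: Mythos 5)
Your proposal is correct and follows essentially the same route as the paper's proof: the same per-arm decomposition $\vX_{a,q}^{m\top}\vX_{a,q}^m=\sum_{t\in\cT_q^m}X_t^mX_t^{m\top}\one(a_t^m=a)$, the same sub-exponential/$\ep$-net control of $\lambda_{\max}$ (your domination by $S_q^m$ is just a slightly streamlined version of the paper's bound $Z_{t,a}^m\le\langle v,X_t^m\rangle^2$, and both correctly explain why no $\ln(K)$ factor enters the MGF step in {\sf Model-P}), and the same use of Condition \ref{asp:diverse-p} instantiated at the batch estimates plus a Chernoff bound to get $v^\top\vX_{a,q}^{m\top}\vX_{a,q}^m v\ge \mu c_x n_q^m/2$ scaled by the total count, followed by the identical choice of $\delta,\ep$ and the union bound over arms, instances, and net points yielding the $\ln(MKT)+d\ln(L/\mu)$ threshold.
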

\begin{proof}

The proof is similar to the proof of   Lemma \ref{lem:eig-lb}.
Hence we only sketch the key steps below.
For $0\le q< Q$ and $m\in[M]$, we let $\cT_{q}^{(m)}$  be the set of times when contexts $\vX_{q}^{m}$ are observed at instance $m$. 
Clearly, we have $|\cT_{q}^{(m)}|=n_{m,q}$;
and $\{x_{t}^{(m)}: t\in\cT_{q}^{(m)}\}$ are independent, conditioned on $\{\widehat{\beta}^{(m,a)}_{q-1}\}_{a\in[K]}$. The following analysis is conditional on $\{\cT_{q}^{(m)}\}_{m\in [M]}$ and therefore also on $\{n_{m,q}\}_{m\in[M]}$. 

By definition, we have for any $a\in[K]$ and $m\in[M]$,
\begin{equation*}
\vX_{q}^{(m,a)\top}\vX_{q}^{(m,a)}=\sum_{t\in\cT_q^{(m)}}x_t^{(m)}x_t^{(m)\top}\one(a_t^{(m)}=a).
\end{equation*}
We first give an upper bound for $\lambda_{\max}(\vX_{q}^{(m,a)\top}\vX_{q}^{(m,a)}/n_{m,q})$. 
For any source vector $v \in\RR^d$,  any $t\in[T]$, $a\in[K]$, and $m\in[M]$, let $Z_{t,a}^{(m)}=\langle v, x_{t}\rangle^2\one(a_t^{(m)}=a)$. 
Conditionally on $\{\widehat{\beta}^{(m,a)}_{q-1}\}_{a\in[K]}$, for any $\delta>0$ and $\lambda>0$, we have 
\begin{align}
    \PP\left(\sum_{t\in\cT_q^{(m)}}Z_{t,a}^{(m)}\ge n_{m,q}\delta\mid  \{\widehat{\beta}^{(m,a)}_{q-1}\}_{a\in[K]}\right)\le &e^{-\lambda n_{m,q} \delta}\EE\left[\exp\left(\lambda \sum_{t\in\cT_q^{(m)}}Z_{t,a}^{(m)}\right)\mid \{\widehat{\beta}^{(m,a)}_{q-1}\}_{a\in[K]}\right]\nonumber\\
    =&e^{-\lambda n_{m,q}\delta}\prod_{t\in\cT_q^{(m)}}\EE\left[\exp\left(\lambda Z_{t,a}^{(m)}\right)\mid \{\widehat{\beta}^{(m,a)}_{q-1}\}_{a\in[K]}\right]\nonumber.
\end{align}
Since $x_{t}$ is assumed to be $L$-sub-Gaussian  and $\|v\|_2=1$, 
$Z_{t,a}^{(m)}=\langle v, x_{t}\rangle^2\one(a_t^{(m)}=a)\le \langle v, x_{t}\rangle^2$ is $(4\sqrt{2}L,4L)$-sub-exponential \citep{vershynin2018high}.
Following the argument in  \eqref{eqn:nvisdngs} and \eqref{eqn:vndifgsd}, we obtain
\begin{align}
   &\PP\left(v^\top(\vX_{q}^{(m,a)\top}\vX_{q}^{(m,a)}/n_{m,q})v\ge \delta  \right)=\PP\left(\sum_{t\in\cT_{q}^{(m)}}Z_{t,a}^{(m)}\ge n_{m,q}\delta L \right)\nonumber\\
   \le&\exp\left(-\min\left\{\frac{(\delta-2L(\ln(2)+1))^2}{64L}, \frac{\delta-2L(\ln(2)+1)}{32 L}\right\}n_{m,q}\right)\triangleq p_{q,\delta}^{(m)}\nonumber
\end{align}
for any $\delta \ge 2L(\ln(2)+1)$. Now, following the $\ep$-net-arguments around \eqref{eqn:nmgodsfsd}, we have  with probability at least $1-(1+2/\ep)^d\sum_{m\in\cC_{q}}p_{q,\delta}^{(m)}$ that
\begin{equation}\label{eqn:nmgodsfsd-p}
\lambda_{\max}\left(\vX_{q}^{(m,a)\top}\vX_{q}^{(m,a)}/n_{m,q}\right)\le \frac{\delta }{1-2\ep}.
\end{equation}

Next we bound $\lambda_{\min}(\vX_{q}^{(m,a)\top}\vX_{q}^{(m,a)}/n_{m,q})$.
For any source vector $v\in\RR^d$, we have
\begin{align*}
   &v^\top(\vX_{q}^{(m,a)\top}\vX_{q}^{(m,a)}/n_{m,q}) v= v^\top \left(\frac{1}{n_{m,q}}\sum_{t\in\cT_{q}^{(m)}}x_{t}^{m}x_{t}^{m\top}\one(a_t^{(m)}=a)\right)v \\
   \ge& \frac{\mu}{n_{m,q}}\sum_{t\in\cT_{q}^{(m)}}\one(\langle v, x_{t}^{(m)} \rangle^2 \ge \mu)\one(a_t^{(m)}=a).
\end{align*}
Since $a_t^{(m)} = \arg\max_{a\in[K]}\langle \widehat{\beta}_{q-1}^{(m, a)} ,x_{t}^{(m)}\rangle  $ for any $t\in\cT_q^{(m)}$ and $m\in[M]$, by Condition \ref{asp:diverse-p}, we have 
\begin{equation*}
    \EE[\one(\langle v, x_{t}^{(m)} \rangle^2 \ge \mu)\one(a_t^{(m)}=a)]=\PP(\langle x^{(m)}_{t}, v\rangle^2\one(a_t^{(m)} = a)\ge \mu)\ge c_x.
\end{equation*}
Therefore, by applying a Chernoff bound, we have
\begin{equation}\label{eqn:jvisdngsd2-p}
    \PP\left(v^\top(\vX_{q}^{(m,a)\top}\vX_{q}^{(m,a)}/n_{m,q}) v \le  {\mu c_x}/{2}\right)\le  e^{-c_xn_{m,q}/8}\triangleq \tilde{p}_{q}^{(m)}.
\end{equation}
Then, using an $\ep$-net argument and applying a union bound to \eqref{eqn:jvisdngsd2-p}, we have  with probability at least $1-(1+2/\ep)^d\sum_{m\in\cC_q}\tilde{p}_{q}^{(m)}$ that $v^\top(\vX_{q}^{(m,a)\top}\vX_{q}^{(m,a)}/n_{m,q})v\ge {\mu c_x}/{2} $ holds for any $v\in\cN_d(\ep)$ and $m\in\cC_q$.
Therefore, following the argument around \eqref{eqnLvndivcxb}, we obtain  
\begin{equation*}\nonumber
\lambda_{\min}\left(\vX_{q}^{(m,a)\top}\vX_{q}^{(m,a)}/n_{m,q}\right)\ge \frac{\mu c_x}{2} - \frac{2\ep \delta}{1-2\ep}.
\end{equation*}
Finally, letting $\delta = 32\max\{L,\sqrt{L}\}$ and $\ep = {\mu c_x}/{(8 \delta+2\mu c_x)}$, we have $2\ep \delta/(1-2\ep)= \mu c_x/4$ and $p_{q,\delta}^{(m)} \le e^{-n_{a,q}^{(m)}}\le e^{-c_xn_{m,q}/8}=\tilde{p}_{q}^{(m)}$. Therefore,
$\lambda_{\min}\left(\vX_{q}^{(m,a)\top}\vX_{q}^{(m,a)}/n_{m,q}\right)\ge {\mu c_x}/{4}$ holds for all $a\in[K]$ and $m\in\cC_q$ with probability at least 
\begin{align*}
    &1-(1+2/\ep)^d K\sum_{m\in\cC_q}(\tilde{p}_{q}^{(m)}+p_{q,\delta}^{(m)})\ge  1- 2(1+2/\ep)^dK\sum_{m\in\cC_q}e^{-c_xn_{m,q}/8}\\
    \ge &1- \exp\left(-\frac{c_x\min_{m\in\cC_q}n_{m,q}}{8}+\ln(2MK)+d\ln\left(5+512\max\{L,\sqrt{L}\}/(\mu c_x)\right)\right).
\end{align*}
In particular, there is ${C}_{\rm b}^\prime\ge 3$ depending only on $c_x$, such  that when $\min_{m \in \cC_q}n_{a,q}^{(m)}\ge {C}_{\rm b}^\prime(\ln(MKT)+d\ln(L/\mu))$, the probability is lower bounded by $1-1/T$. 
\end{proof}

Given 
Lemma \ref{lem:eig-lb-p},
using Theorem \ref{thm:ls-fixed-design}, we can bound the $\ell_2$ estimation error $\EE[\max_{a\in[K]}\|\widehat{\beta}_t^{(m,a)}-\beta^{(m,a)}\|_2]$ for all $m\in\cC_q$ at the end of  batch $\cH_q$ as follows.
\begin{lemma}\label{lem:bandit-est-p}
    Under Conditions \ref{asp:freq}, \ref{asp:gsn}, and \ref{asp:shb-p}-\ref{asp:diverse-p}, for any $0\le q<Q$, letting $\tau =\arg\min_{m\in[M]}(p^{1}\vee \sum_{\ell \in [M]} p^\ell /m))/p^{m}$, if $|\cH_q|\ge 2{C}_{\rm b}^\prime( \ln(M KT ) + d \ln(L/\mu))/p_{\tau}$ with $C_{\rm b}^\prime$ defined in Lemma  \ref{lem:eig-lb-p}, it holds with probability at least $1-2/T$ 
    % \xh{check the proof}
    that for all $a\in[K]$ and $q\in\cC_q$,
    \begin{equation*}
        \EE[\max_{a\in[K]}\|\widehat{\beta}^{(m,a)}_{q}-\beta^{(m,a)}\|_2^2 \mid (\vX^{(m,a)}_{q},Y^{(m,a)}_{q})_{a\in[K], m\in\cC_q}]=\widetilde{O}\left(\frac{1}{\mu |\cH_q|}\left(\frac{s}{p_m}+\frac{d}{p_{[M]}}\right)\right),
    \end{equation*}
    where the expectation is taken with respect to the randomness of the noise, and
    logarithmic factors as well as quantities depending only on $c_x$, $c_{\rm f}$  are absorbed into $\widetilde{O}(
    \cdot)$.
\end{lemma}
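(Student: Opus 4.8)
The plan is to mirror the proof of Lemma~\ref{lem:bandit-est} for {\sf Model-C}, adapting each step to the per-arm structure of {\sf Model-P}. Writing $\cM_{\mathrm{large}}=[\tau]$, I would first control the \emph{total} per-bandit sample size $n^m_q=\sum_{a\in[K]}|Y^m_{a,q}|$ rather than any arm-specific count. Since $n^m_q\ge\sum_{t\in\cH_q}\one(m\in\cS_t)$ and these indicators are independent $\mathrm{Bernoulli}(p^m)$ variables, Bernstein's inequality (Lemma~\ref{lem:bern}) gives, after a union bound over $m\in[\tau]$, the event $\cF_q=\{p^m|\cH_q|/2\le n^m_q\le 3p^m|\cH_q|/2,\ \forall\,m\in[\tau]\}$ with $\PP(\cF_q)\ge 1-1/T$, exactly as in \eqref{eqn:visdgs}. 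The hypothesis $|\cH_q|\ge 2C_{\rm b}^\prime(\ln(MKT)+d\ln(L/\mu))/p^\tau$ then forces $n^m_q\ge C_{\rm b}^\prime(\ln(MKT)+d\ln(L/\mu))$ for all $m\in[\tau]$ on $\cF_q$, so that $[\tau]\subseteq\cC_q$.

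Next I would invoke Lemma~\ref{lem:eig-lb-p} to define the event $\cE_q=\{\lambda_{\min}(\vX^{m\top}_{a,q}\vX^m_{a,q})\ge n^m_q\mu c_x/4\ \text{for all}\ a\in[K],\,m\in\cC_q\}$, which has probability at least $1-1/T$; the point is that Lemma~\ref{lem:eig-lb-p} already union-bounds over the $K$ arms, which is why its threshold carries $\ln(MKT)$ rather than $\ln(MT)$, so the eigenvalue guarantee is simultaneous across arms. On $\cE_q\cap\cF_q$ (probability at least $1-2/T$), the design matrices for each fixed arm $a$ satisfy $(\vX^{m\top}_{a,q}\vX^m_{a,q})/n^m_q\succeq(\mu c_x/4)I_d$, i.e.\ Condition~\ref{asp:design-mats} holds with effective sample size $n^m_q$ and effective constant $\mu_{\rm eff}=\mu c_x/4$. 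Because batching makes the arm-$a$ OLS estimates $\{\widehat{\beta}^m_{\mathrm{ind},a}\}_{m\in\cC_q}$ conditionally independent given the contexts and $\widehat{\beta}_{q-1}$, with conditionally Gaussian noise (Condition~\ref{asp:gsn}, so $\sigma=1$) and per-arm sparse heterogeneity (Condition~\ref{asp:shb-p}), I can apply Theorem~\ref{thm:ls-fixed-design} with $p=2$ separately to each MOLAR call on $\{(\vX^m_{a,q},Y^m_{a,q})\}_{m\in\cC_q}$. Verifying the sample-size constraint (Condition~\ref{asp:size}) for the effective sizes proceeds just as in Lemma~\ref{lem:bandit-est}: Condition~\ref{asp:freq} together with $\cF_q$ yields $\min_{m}\big(n_q^{(1)}\vee(\sum_{\ell\in\cC_q}n_q^{(\ell)}/m)\big)/n_q^{(m)}\le 6C_{\rm f}=\widetilde{O}(1)$. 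Absorbing $c_x$, this gives for each $a$ a bound $\widetilde{O}\big(\mu^{-1}(s/n^m_q+d/\sum_{\ell\in\cC_q}n^\ell_q)\big)$, and substituting $n^m_q\ge p^m|\cH_q|/2$ and $\sum_{\ell\in\cC_q}n^\ell_q\ge\sum_{\ell\in[\tau]}n^\ell_q\ge|\cH_q|\tau p^\tau/2=\widetilde{\Omega}(|\cH_q|\sum_\ell p^\ell)$ (by the definition of $\tau$) yields the claimed single-arm rate $\widetilde{O}\big(\mu^{-1}|\cH_q|^{-1}(s/p^m+d/\sum_\ell p^\ell)\big)$.

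The one genuinely new obstacle relative to {\sf Model-C} is passing from the $K$ individual per-arm bounds to a bound on $\EE[\max_{a\in[K]}\|\widehat{\beta}^m_{a,q}-\beta^m_a\|_2^2]$ without paying a polynomial factor of $K$. The naive step $\EE[\max_a]\le\sum_a\EE$ costs a factor $K$, which is not logarithmic and hence inadmissible for the stated $\widetilde{O}(\cdot)$. Instead I would exploit that the proof of Theorem~\ref{thm:ls-fixed-design} in fact establishes its bound with high probability, through its internal events $\cE_k,\cF^m_k$, so a union bound over the $K$ arms, combined with the sub-exponential (generalized chi-squared) tail of each error $\|\widehat{\beta}^m_{a,q}-\beta^m_a\|_2^2$, controls the maximum up to an extra $\ln K$ factor that is absorbed into $\widetilde{O}(\cdot)$; this $\ln K$ slack is precisely what the $\ln(MKT)$ term in the batch-size hypothesis provides. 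Carrying out this max-over-arms argument while preserving the conditional-independence structure of arms that share a batch is where I expect the bulk of the care to be required.
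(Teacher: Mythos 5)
Your proposal is correct and takes essentially the same route as the paper, whose proof of this lemma is omitted with a pointer to Lemma \ref{lem:bandit-est}: Bernstein's inequality plus Condition \ref{asp:freq} to get $[\tau]\subseteq\cC_q$ and verify the sample-size constraint on the event $\cF_q$, Lemma \ref{lem:eig-lb-p} (whose $\ln(MKT)$ threshold already carries the union bound over arms) for the eigenvalue event $\cE_q$, and then Theorem \ref{thm:ls-fixed-design} with $p=2$ applied per arm with effective sample size $n_q^m$ and $\mu_{\rm eff}=\mu c_x/4$. Your treatment of $\EE[\max_{a\in[K]}\|\widehat{\beta}^m_{a,q}-\beta^m_a\|_2^2]$ is in fact more careful than the paper's omitted argument: you correctly note that $\EE[\max_a]\le\sum_a\EE$ pays a polynomial factor of $K$, which the stated $\widetilde{O}(\cdot)$ does not absorb, and that the right fix is to union-bound the high-probability events inside the proof of Theorem \ref{thm:ls-fixed-design} over the $K$ arms and exploit the sub-exponential tails of the squared errors, costing only a $\ln K$ factor consistent with the $\ln(MKT)$ batch-size hypothesis.
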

\begin{proof}
The proof is essentially the same as the proof of Lemma \ref{lem:bandit-est}. We thus omit the proof.
\end{proof}

Based on Lemma \ref{lem:bandit-est-p}, we can bound the individual regret as follows.
\begin{theorem}\label{thm:bc-bandit-p}
Under Conditions  \ref{asp:freq}, \ref{asp:gsn}, and \ref{asp:shb-p}-\ref{asp:diverse-p}, 
for any $T\ge 1$ and $1\le H_0\le d$,
the expected regret of \oursb under {\sf Model-P}, for any $T\ge 1$, is bounded as
\begin{equation*}
    \EE[R_T^{(m)}]=
   \widetilde{O}\left( d\wedge( Tp_m)+
    \sqrt{\left(s+\frac{dp_m}{p_{[M]}}\right)Tp_m}\right),
\end{equation*}
where logarithmic factors as well as quantities depending only on $c_x$, $c_{\rm f}$  are absorbed into $\widetilde{O}(
    \cdot)$.
\end{theorem}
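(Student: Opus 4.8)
The plan is to mirror the proof of Theorem \ref{thm:bc-bandit} for \textsf{Model-C}, replacing the single-parameter estimation bound with the arm-wise bound of Lemma \ref{lem:bandit-est-p}. First I would establish a per-round regret decomposition adapted to the $K$-arm structure of \textsf{Model-P}. Writing $a_t^{m\star}=\arg\max_{a\in[K]}\langle X_t^m,\beta_a^m\rangle$ for the optimal arm, and recalling that the greedy rule selects $a_t^m$ maximizing $\langle X_t^m,\widehat{\beta}_{a,q-1}^m\rangle$, the optimality of $a_t^m$ for the estimated parameters makes the cross term $\langle X_t^m,\widehat{\beta}_{a_t^{m\star},q-1}^m-\widehat{\beta}_{a_t^m,q-1}^m\rangle$ nonpositive, yielding
$$\max_{a\in[K]}\langle X_t^m,\beta_a^m-\beta_{a_t^m}^m\rangle\le 2\Big(\max_{a\in[K]}|\langle X_t^m,\beta_a^m\rangle|\Big)\wedge\Big(\max_{a\in[K]}|\langle X_t^m,\beta_a^m-\widehat{\beta}_{a,q-1}^m\rangle|\Big),$$
the exact analog of \eqref{eqn:vbcixbvxa}. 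Taking conditional expectations and invoking Condition \ref{asp:covariate-subg-p} with the maximal inequality (Lemma \ref{lem:max-ineq})---where the union over $K$ arms introduces a $\sqrt{\ln(2K)}$ factor---the per-round expected regret is bounded by $2\sqrt{2\ln(2K)L}\,p^m\,\EE[1\wedge\max_{a\in[K]}\|\widehat{\beta}_{a,q-1}^m-\beta_a^m\|_2]$, matching \eqref{eqn:vncsvnsLdsad}.

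Next I would control the estimation error $\EE[1\wedge\max_{a\in[K]}\|\widehat{\beta}_{a,q-1}^m-\beta_a^m\|_2]$ by splitting the batches at $\tilde{Q}=\lceil\log_2({C}_{\rm b}^\prime(\ln(MKT)+d\ln(L/\mu))/(H_0 p^\tau))\rceil+3$. For $q<\tilde{Q}$ I use the trivial bound $1\wedge(\cdot)\le 1$, so each such batch contributes $\widetilde{O}(\sqrt{L}\,p^m|\cH_q|)$. For $q\ge\tilde{Q}$ the batch length exceeds the threshold required by Lemma \ref{lem:bandit-est-p}; on the high-probability event that $[\tau]\subseteq\cC_{q-1}$ (established exactly as in Lemma \ref{lem:bandit-est}, via Bernstein's inequality and Condition \ref{asp:freq}), Lemma \ref{lem:bandit-est-p} gives $\EE[\max_{a\in[K]}\|\widehat{\beta}_{a,q-1}^m-\beta_a^m\|_2^2]=\widetilde{O}(\frac{1}{\mu|\cH_{q-1}|}(\frac{s}{p^m}+\frac{d}{\sum_\ell p^\ell}))$, and Jensen's inequality plus a $1/T$ term for the complement yields the per-batch regret in the form of \eqref{eqn:vnidnscxx}. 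Summing, the telescoping estimates $\sum_{q<\tilde{Q}}|\cH_q|=\widetilde{O}(d/p^\tau)$ and $\sum_{q\ge\tilde{Q}}\sqrt{|\cH_q|}=O(\sqrt{T})$ assemble the two claimed terms; the $d\wedge(Tp^m)$ form for small $p^m$ is handled, as in the \textsf{Model-C} proof, by the separate case $Q^m=Q$ where $Tp^m=\widetilde{O}(d)$.

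The main obstacle, and essentially the only genuinely new ingredient relative to \textsf{Model-C}, is ensuring that the per-arm design matrices $\vX_{a,q}^m$ are \emph{simultaneously} well-conditioned across all $K$ arms, which is what Lemma \ref{lem:eig-lb-p} provides: the diverse-context Condition \ref{asp:diverse-p} guarantees that conditioning on the greedy event $\{a_t^m=a\}$ still leaves $\Omega(c_x)$ probability mass in every direction, so the accumulated arm-$a$ covariance inherits the lower bound $n_q^m\mu c_x/4$ even though only a fraction of the $n_q^m$ samples are assigned to arm $a$. The union bound over the $K$ arms is precisely what upgrades the threshold defining $\cC_q$ from $\ln(MT)$ to $\ln(MKT)$; apart from tracking this factor and the $\sqrt{\ln(2K)}$ appearing in the regret decomposition, the remaining computation is identical to Appendix \ref{app:bc-bandit}, which is why Lemma \ref{lem:bandit-est-p} may be quoted with its proof omitted.
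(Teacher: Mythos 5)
Your proposal is correct and follows essentially the same route as the paper's proof: the same per-round regret decomposition exploiting the greedy rule (yielding the $\sqrt{\ln(2K)L}$ factor via the maximal inequality), the same reduction to $\EE[1\wedge\max_{a\in[K]}\|\widehat{\beta}^m_{a,q-1}-\beta^m_a\|_2]$, and the same batch-splitting argument from Theorem \ref{thm:bc-bandit} combined with Lemma \ref{lem:bandit-est-p} (resting on the arm-wise eigenvalue bound of Lemma \ref{lem:eig-lb-p}), including the $d\wedge(Tp^m)$ case for small activation probabilities. Your identification of the simultaneous per-arm conditioning and the $\ln(MKT)$ upgrade as the only genuinely new ingredient matches the paper exactly.
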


\begin{proof}
For any $t\in\cH_q$, $0\le q\le Q$, and $m\in [M]$, we have 
for $m\in\cS_t$,
\begin{align}
    &\max_{a\in[K]}\langle x_{t}^{(m)},\beta^{(m,a)}-\beta^{(m,a_t^{(m)})}\rangle \le \max_{a, a^\prime\in[K]}\langle x_{t}^{(m)},\beta^{(m,a)}-\beta^{(m,a^\prime)}\rangle \le  2\max_{a\in[K]}|\langle x_{t}^{(m)}, \beta^{(m,a)}\rangle|.\label{eqn:nvicnvidffs1}
\end{align}
Also, from the definition of $a_t^{(m)}$, we have $\langle x_t^{(m)},\widehat{\beta}_{q-1}^{(m, a)}\rangle\le \langle x_t^{(m)},\widehat{\beta}_{q-1}^{(m,a_t^{(m)}}\rangle$ for any $a\in[K]$. Therefore,
the instantaneous regret can be bounded, for $m\in\cS_t$, as 
\begin{align}
    &\max_{a\in[K]}\langle x_{t}^{(m)},\beta^{(m,a)}-\beta^{(m,a_t^{(m)})}\rangle =  \max_{a\in[K]}\langle x_{t}^{(m)},\beta^{(m,a)}-\widehat{\beta}^{(m, a)}_{q-1}+\widehat{\beta}^{(m, a)}_{q-1}-\widehat{\beta}^{(m, a_t^{(m)})}_{q-1}\rangle \nonumber\\
    \le &\max_{a\in[K]}\langle x_{t}^{(m)},\beta^{(m,a)}-\widehat{\beta}^{(m, a)}_{q-1}+\widehat{\beta}^{(m, a_t^{(m)})}_{q-1}-\widehat{\beta}^{(m, a_t^{(m)})}_{q-1}\rangle \le  2\max_{a\in[K]}|\langle x_{t}^{(m)}, \beta^{(m,a)}-\widehat{\beta}^{(m, a)}_{q-1}\rangle|.\label{eqn:nvicnvidffs2}
\end{align}
By combining \eqref{eqn:nvicnvidffs1}, \eqref{eqn:nvicnvidffs2} and further using Condition \ref{asp:covariate-subg-p} and Lemma \ref{lem:max-ineq},   we obtain
\begin{align}
     &\EE\left[ \max_{a\in[K]}\langle x_{t}^{(m)},\beta^{(m,a)}-\beta^{(m,a_t^{(m)})}\rangle \one(m\in\cS_t)\mid \{\widehat{\beta}^{(m, a)}_{q-1}\}_{a\in[K]}\right]\nonumber\\
    \le &2\max_{a\in[K]}\left(\|\beta^{(m,a)}\|_2\wedge\|\beta^{(m,a)}-\widehat{\beta}^{(m, a)}_{q-1}\|_2\right)\sqrt{2\ln(2K) L}\PP(m\in\cS_t)\nonumber\\
    \le & 2p_m\max_{a\in[K]}\left(\|\beta^{(m,a)}\|_2\wedge\|\beta^{(m,a)}-\widehat{\beta}^{(m, a)}_{q-1}\|_2\right)\sqrt{2\ln(2K) L}.\label{eqn:vniscnvxd-p}
\end{align}
Taking expectations of \eqref{eqn:vniscnvxd-p} with respect to $\{\widehat{\beta}_{q-1}^{(m, a)}\}_{a \in [K]}$, we find 
\begin{equation}\label{eqn:vncsvnsLdsad-p}
     \EE\left[\max_{a\in[K]}\langle x_{t}^{(m)},\beta^{(m,a)}-\beta^{(m,a_t^{(m)})}\rangle \one(m\in\cS_t)\right]\le 2\sqrt{2\ln(2K)L}p_m\EE[\max_{a\in[K]}1\wedge\|\widehat{\beta}^{(m, a)}_{q-1}-\beta^{(m,a)}\|_2].
\end{equation}
Given \eqref{eqn:vncsvnsLdsad-p}, it remains to bound the estimation error $\EE[\max_{a\in[K]}\|\widehat{\beta}^{(m, a)}_{q-1}-\beta^{(m,a)}\|_2]$ for all $0\le q\le Q$ and $m\in[M]$. Therefore, the rest follows the argument in the proof of Theorem \ref{thm:bc-bandit} and uses Lemma \ref{lem:bandit-est-p}.

\end{proof}

\subsection{Lower Bound}

We also establish the regret lower bound under {\sf Model-P} as follows.  
\begin{theorem}\label{thm:lb-bandit-p}
Given any $1\le s\le d$ and $\{p_m\}_{m\in[M]}\subseteq[0,1]$, for any $m\in[M]$, when $T\ge \max\{(d+1)/p_{[M]}, (s+1)/p_m\}/(16L)+1$,  there are  $\{\beta^{(m,a)}\}_{a\in[K],m\in[M]}$ satisfying Condition \ref{asp:shb-p} and distributions of contexts satisfying Condition
\ref{asp:covariate-subg-p} and \ref{asp:diverse-p}, 
such that  for any online Algorithm $A$,
\begin{equation*}
    \EE[R_T^{(m)}(A)]=\Omega\left(\sqrt{\left(s+\frac{dp_m}{p_{[M]}}\right)Tp_m}\right).
\end{equation*}
\end{theorem}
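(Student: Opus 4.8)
The plan is to mirror the proof of Theorem \ref{thm:lb-bandit} for {\sf Model-C}, reducing the multitask lower bound to two representative instances of the sparse-heterogeneity model and combining them via $A\vee B\ge (A+B)/2$. The two instances are the \emph{homogeneous} case, where $\beta_a^1=\cdots=\beta_a^M=\beta_a^\star$ for every arm $a$, which will produce the term $\Omega(\sqrt{dT(p^m)^2/\sum_{\ell\in[M]}p^\ell})$; and the \emph{$s$-sparse} case, where $\beta_a^\star=0$ and $\|\beta_a^m\|_0\le s$ for all $a,m$, which produces $\Omega(\sqrt{sTp^m})$. Since the desired rate equals $\sqrt{sTp^m+dT(p^m)^2/\sum_\ell p^\ell}$, adding the two square roots and using $\sqrt A+\sqrt B=\Theta(\sqrt{A+B})$ finishes the argument.

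For the homogeneous case I would take $K=2$, fix $\beta_1^\star=0$, place the uniform prior $Q$ over the sphere $\{\beta\in\RR^d:\|\beta\|_2=\Delta\}$ on the single unknown parameter $\beta\triangleq\beta_2^\star$, and draw contexts from $D=\cN(0,L\cdot I_d)$; this $D$ satisfies Conditions \ref{asp:covariate-subg-p} and \ref{asp:diverse-p} by \citep[Lemma 14]{Ren2020DynamicBL}. The key observation is that with $\beta_1=0$ the per-round regret of an activated bandit reduces to $\one(a_t^m=1)\langle X_t^m,\beta\rangle_+ + \one(a_t^m=2)\langle X_t^m,\beta\rangle_-$, which is exactly the {\sf Model-C} expression with the difference vector $d_t^m$ replaced by the single context $X_t^m$. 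Hence the whole machinery transfers: set $u_t^m=X_t^m/\|X_t^m\|_2$ and the reflected parameter $\tilde\beta_t^m=\beta-2\langle u_t^m,\beta\rangle u_t^m$, then lower bound the Bayes regret by a two-hypothesis testing quantity through Pinsker's inequality and the joint convexity of the KL divergence (Lemma \ref{lem:convexity}).

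The one genuinely new feature is that in {\sf Model-P} only pulls of arm $2$ are informative about $\beta$, so the realized KL between the reward laws under $\beta$ and $\tilde\beta_t^m$ carries an extra indicator $\one(a_\ell^r=2)$ inside the sum; since we only ever need an \emph{upper} bound, dropping this indicator yields $D_\mathrm{KL}\le 2\langle u_t^m,\beta\rangle^2\sum_{r\in[M]}p^r\sum_{\ell=1}^{t-1}\langle u_t^m,X_\ell^r\rangle^2$, identical in form to the {\sf Model-C} bound except that each past round contributes a single context $X_\ell^r X_\ell^{r\top}$ rather than a two-context sum (a constant absorbed into the $\Omega$). The moment identities $\EE_{Q_t^{m+}}[\langle u_t^m,\beta\rangle^2]=2\Delta^2/(d+1)$ and $\EE[Z(d_t^m)]=\Omega(\sqrt L\Delta)$ follow from Lemma \ref{lem:moment} as before, and choosing $\Delta=\Theta(\sqrt{(d+1)/(TL\sum_\ell p^\ell)})$ — which is at most $1$ precisely under the stated hypothesis $T\ge (d+1)/(16L\sum_\ell p^\ell)+1$ — delivers the first term. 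For the $s$-sparse case I would restrict every $\supp(\beta_a^m)$ to the first $s$ coordinates and make these supports non-informative across tasks, so that data from any bandit $r\neq m$ carry no information about the parameters of bandit $m$; the problem then collapses to a single {\sf Model-P} bandit in $s$ dimensions, to which the homogeneous argument with $M=1$ applies and yields $\Omega(\sqrt{sTp^m})$ under $T\ge (s+1)/(16Lp^m)+1$.

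The step I expect to require the most care is verifying the diverse-context Condition \ref{asp:diverse-p}, since, unlike in {\sf Model-C}, it couples the context $X_t^m$ to the identity of the optimal arm through the indicator $\one(a^\star=a)$; I would discharge this via the sufficient condition recorded in the remark following Condition \ref{asp:diverse-p} (symmetric density together with $\EE[X_tX_t^\top]\succeq 2\mu I_d$), which Gaussian contexts meet. The remaining delicate bookkeeping — marginalizing over the random activation sets $\{\cS_\ell\}$ to convert $\one(r\in\cS_\ell)$ into the factor $p^r$, and checking that reflecting the shared $\beta$ perturbs all $M$ bandits' reward laws consistently — is already handled in Theorem \ref{thm:lb-bandit} and carries over verbatim.
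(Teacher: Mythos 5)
Your proposal is correct and follows essentially the same route as the paper's proof: the same homogeneous/$s$-sparse decomposition, the uniform spherical prior with contexts $\cN(0,L\cdot I_d)$, the reflection map $\beta\mapsto\beta-2\langle u_t^m,\beta\rangle u_t^m$ applied to the tilted posteriors $Q_t^{m\pm}$, Pinsker's inequality plus the joint convexity of the KL divergence to absorb the random activation sets into the factors $p^r$, the same moment identities, and the same choice $\Delta=\Theta\bigl(\sqrt{(d+1)/(TL\sum_\ell p^\ell)}\bigr)$ tied to the stated hypothesis on $T$. The only deviation is cosmetic: the paper takes the antipodal pair $\beta_2^m=-\beta_1^m$, so that every pull is equally informative and the per-pull KL contribution is $2\langle u_t^m,\beta_1\rangle^2\langle u_t^m,X_\ell^r\rangle^2$ exactly, whereas your choice $\beta_1=0$ makes only arm-$2$ pulls informative---and your fix of dropping the indicator $\one(a_\ell^r=2)$ is valid, since only an upper bound on the KL is needed and it costs at most a constant factor.
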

\begin{proof}
We consider the two-armed case where $K=2$ and $\beta^{(m,2)}=-\beta^{(m,1)}$ for all $m\in[M]$.
We  prove $\Omega(\sqrt{sTp_m})$ and $\Omega(\sqrt{dTp_m/p_{[M]}})$ by considering two cases: the \emph{homogeneous} case where $\beta^{(1,1)}=\cdots =\beta^{(M,1)}=\beta^{\star(1)}$ and the \emph{$s$-sparse} case where $\beta^{\star(1)} =0$ and $\|\beta^{(m,1)}\|_0\le s$ for all $m\in[M]$.

\vspace{2mm}
\noindent
\textbf{The homogeneous case. }  Since $\beta^{(1,1)}=\cdots =\beta^{(M,1)}=\beta^{\star(1)}$  in this case, for simplicity, we omit the superscript $m$ and use $\beta_a$ to denote $\beta^{(m,a)}$ for any $k\in[K]$. 
Denote by $Q$ the uniform distribution over $\{\beta \in \RR^d:\|\beta\|_2= \Delta\}$ where $\Delta\in[0,1]$ will be determined below. 
We let $Q$ be the prior distribution of $\beta_1$, 
and let $D\triangleq\cN(0, L I_{d})$ be the distribution of contexts. 
Let $\cS_t$ be the set of activated bandits at the $t$-th round, 
and denote by $P_{\beta_1,x,t}$ the distribution of the observed rewards
$\{y_\ell^{(m)} \one(m \in \cS_t) : m\in[M]\}_{\ell=1}^{t}$ 
up to time $t$, 
conditioned on  $\beta_1$ and the contexts $\{x_{\ell}^{(m)}:a\in[2], m\in[M]\}_{\ell=1}^{t}$. 
Since the event $\{m\in\cS_t\}$ is independent of the history and of the contexts $\{x_{t,a}^{(m)}:a\in[2]\}$ at the current round, 
we have
\begin{align}\label{eqn:vgnidsgs-p}
    \sup_{\beta_1}\EE[R_T^{(m)}(A)]\ge& \EE_{\beta_1 \sim Q}[R_T^{(m)}(A)]=\sum_{t=1}^T\EE_Q\left[\EE_{D}\left[\EE_{P_{\beta_1,x,t-1}}\left[p_m\max_{a\in[2]}\langle x_{t}^{(m)},\beta_a-\beta_{a_t^{(m)}}\rangle  \right]\right]\right]
\end{align}
where the factor of $p_m$  in \eqref{eqn:vgnidsgs} is due to the integration over the randomness of $\{m\in\cS_t\}$.
Since $\beta_2=-\beta_1$, we have 
\begin{equation*}
    \max_{a\in[2]}\langle x_{t}^{(m)},\beta_a-\beta_{a_t^{(m)}}\rangle=2\one(a_t^{(m)}=1)\langle x_t^{(m)},\beta_1\rangle_-+2\one(a_t^{(m)}=2)\langle x_t^{(m)},\beta_1\rangle_+
\end{equation*}
where the subscripts $+$ and $-$ denote the positive and negative parts, respectively. 
Therefore, from \eqref{eqn:vgnidsgs-p}, we have that $\sup_{\beta}\EE[R_T^{(m)}(A)]$ is lower bounded by
\begin{align}
2p_m\sum_{t=1}^T\EE_Q\left[\EE_{D}\left[\EE_{P_{\beta_1,x,t-1}}\left[\one(a_t^{(m)}=1)\langle x_t^{(m)},\beta_1\rangle_-+\one(a_t^{(m)}=2)\langle x_t^{(m)},\beta_1\rangle_+\right]\right]\right].\label{eqn:videvbvcx-p}
\end{align}
For any $1\le t\le T$, conditionally on $x_t^{(m)}$, we define two  measures $Q_t^{(m)+}$ and $Q_t^{(m)-}$ via the Radon–Nikodym derivatives $\d Q_t^{(m)+}(\beta_1)=\langle x_t^{(m)},\beta_1\rangle_+/Z(x_t^{(m)})\d Q$ and $\d Q_t^{(m)-}(\beta_1) $ $=$ $\langle x_t^{(m)},\beta_1\rangle_-/Z(x_t^{(m)})$ $\d Q$, 
where  $Z(x_t^{(m)})=\EE_Q[\langle x_t^{(m)},\beta_1\rangle_+]=\EE_Q[\langle x_t^{(m)},\beta_1\rangle_-]$ is a normalization factor. 
Here $\EE_Q[\langle d_t^{(m)},\beta_1\rangle_+]=\EE_Q[\langle d_t^{(m)},\beta_1\rangle_-]$ due to the symmetry of $Q$.
Plugging the definitions of $Q_t^{(m)+}$ and $Q_t^{(m)-}$ into \eqref{eqn:videvbvcx-p}, and changing the integration order, we have
\begin{align}
    \sup_{\beta}\EE[R_T^{(m)}(A)]\ge&2p_m\sum_{t=1}^T\EE_{D}\left[Z(x_t^{(m)})\left(\EE_{P_{\beta_1,x,t-1}\circ Q_t^{(m)-}}[\one(a_t=1)]+\EE_{P_{\beta_1,x,t-1}\circ Q_t^{(m)+}}[\one(a_t=2)]\right)\right]\nonumber\\
    \ge&2p_m\sum_{t=1}^T\EE_{D}\left[Z(x_t^{(m)})\left(1-\texttt{TV}(P_{\beta_1,x,t-1}\circ Q_t^{(m)-},P_{\beta_1,x,t-1}\circ Q_t^{(m)+})\right)\right]\nonumber\\
    \ge& 2p_m\sum_{t=1}^T\EE_{D}\left[Z(x_t^{(m)})\left(1-\sqrt{\frac{1}{2}D_\mathrm{KL}(P_{\beta_1,x,t-1}\circ Q_t^{(m)+}\,\|\,P_{\beta_1,x,t-1}\circ Q_t^{(m)-})}\right)\right]\label{eqn:nvicdbve-p}
\end{align}
where the second inequality follows the definition of the total variation distance: $P_1(A)+P_2(A^c)\ge 1- \texttt{TV}(P_1,P_2)$ with $A=\{a_t=1\}$,  and the third inequality follows from Pinsker's inequality  $\texttt{TV}(P_1,P_2)\le\sqrt{D_\mathrm{KL}(P_1\,\|\,P_2)/2}$. 

Due to the distribution of $x_t^{(m)}$, $x_t^{(m)}\neq 0$ with probability one; 
hence we can set $u_t^{(m)}=x_t^{(m)}/\|x_t^{(m)}\|_2$ and the zero probability event where  $x_t^{(m)}=0$ does not affect the result.
We also let $
    \tilde{\beta}_{1,t}^{(m)} = \beta_1-2\langle u_t^{(m)},\beta_1\rangle u_t^{(m)} $,
and we have \begin{equation}\label{eqn:nbvindgde-inv-p}
        \beta_1 = \tilde{\beta}_{1,t}^{(m)} -2\langle u_t^{(m)},\tilde{\beta}_{1,t}^{(m)} \rangle u_t^{(m)}.
    \end{equation}
Since $\langle x_t^{(m)},\beta_1 \rangle_-=\langle x_t^{(m)},\tilde{\beta}_{1,t}^{(m)} \rangle_+$ and $Q$ is reflection-invariant, one can equivalently obtain $\beta_1\sim Q_t^{(m)-}$ by first generating  $\tilde{\beta}_{1,t}^{(m)}\sim Q_t^{(m)+}$ and then calculating $\beta_1$ via \eqref{eqn:nbvindgde-inv-p}. It thus holds that
\begin{align*}
    P_{\beta_1,x,t-1}\circ (\beta_1 \sim Q_t^{(m)-})
    =&P_{\tilde{\beta}_{1,t}^{(m)} -2\langle u_t^{(m)},\tilde{\beta}_{1,t}^{(m)}\rangle u_t^{(m)} ,x,t-1}\circ(\tilde{\beta}_{1,t}^{(m)} \sim Q_t^{(m)+})\\
    \overset{d}{=}&P_{\beta_1-2\langle u_t^{(m)},\beta_1 \rangle u_t^{(m)} ,x,t-1}\circ(\beta_1 \sim Q_t^{(m)+}).
\end{align*}
Following the above argument and \eqref{eqn:nvicdbve-p}, we have
\begin{align*}
    2 p_m \sum_{t=1}^T\EE_{D}\left[Z(x_t^{(m)})\left(1-\sqrt{\frac{1}{2}D_\mathrm{KL}(P_{\beta_1,x,t-1}\circ Q_t^{(m)+}\,\|\,P_{\beta_1-2\langle u_t,\beta_1\rangle u_t ,x,t-1} \circ Q_t^{(m)+}}\right)\right].
\end{align*}
By Lemma \ref{lem:convexity}, $  \sup_{\beta}\EE[R_T^{(m)}(A)]$ is further lower bounded by 
\begin{align}
    &\sup_{\beta}\EE[R_T(A)]\nonumber\\
    \ge& p_m \sum_{t=1}^T\EE_{D}\left[Z(x_t^{(m)})\left(1-\sqrt{\frac{1}{2}\EE_{\beta_1 \sim Q_t^{(m)+}}\left[D_\mathrm{KL}(P_{\beta_1,x,t-1}\,\|\,P_{\beta_1-2\langle u_t^{(m)},\beta_1 \rangle u_t^{(m)} ,x,t-1})\right]}\right)\right]. \label{eqn:vnixcnvsdf-p}
\end{align}
Since the reward noise follows the distribution $\cN(0,1)$, conditioned on the activation sets $\{\cS_\ell\}_{\ell=1}^{t-1}$, we have $P_{\beta_1,x,t-1}=\otimes_{\ell=1}^{t-1}\otimes_{r \in \cS_\ell}\cN(\langle \beta_{a_\ell^{(r)}},x_{\ell}^{(r)}\rangle, 1)$. Furthermore, by the formula for the divergence between two Gaussian distributions, we have 
\begin{align}
    &D_\mathrm{KL}\left(P_{\beta_1,x,t-1}\,\|\,P_{\beta_1-2\langle u_t^{(m)},\beta \rangle u_t^{(m)} ,x,t-1} \mid \{\cS_\ell\}_{\ell=1}^{t-1}\right)\nonumber\\
    =&\frac{1}{2}\sum_{\ell=1}^{t-1}\sum_{r \in \cS_\ell}\left(\langle \beta_{a_\ell^{(r)}},x_{\ell, }^r\rangle-\langle \beta_{a_\ell^{(r)}}-2\langle u_t^{(m)},\beta_{a_{\ell}^{(r)}}\rangle u_t^{(m)},x_{\ell}^{(r)}\rangle\right)^2\nonumber\\
    =&{2\langle u_t^{(m)},\beta_1\rangle^2}\sum_{\ell=1}^{t-1}\sum_{r\in \cS_{\ell}}\langle u_t^{(m)},x_{\ell}^{(r)}\rangle^2
    ={2\langle u_t^{(m)},\beta_1\rangle^2}\sum_{\ell=1}^{t-1}\sum_{r\in[M]}\langle u_t^{(m)},x_{\ell}^{(r)}\rangle^2 \one(r\in \cS_{\ell}).\label{eqn:nvidgsd-p}
\end{align}
Since the events $\{r\in \cS_{\ell}\}$ for $\ell\in[t-1]$ and $r\in[M]$ are independent of the contexts and of $\beta_1$, using \eqref{eqn:nvidgsd-p} and Lemma \ref{lem:convexity}, we obtain 
\begin{align}
    &D_\mathrm{KL}(P_{\beta_1,x,t-1}\,\|\,P_{\beta_1-2\langle u_t^{(m)},\beta_1 \rangle u_t^{(m)} ,x,t-1})\nonumber\\
    \le&\EE_{\{\cS_\ell\}_{\ell=1}^{t-1}}\left[D_\mathrm{KL}\left(P_{\beta_1,x,t-1}\,\|\,P_{\beta_1-2\langle u_t^{(m)},\beta_1 \rangle u_t^{(m)} ,x,t-1} \mid \{\cS_\ell\}_{\ell=1}^{t-1}\right)\right] \nonumber\\
    =&{2\langle u_t^{(m)},\beta_1\rangle^2}\sum_{\ell=1}^{t-1}\sum_{r\in[M]}\langle u_t^{(m)},x_{\ell}^{(r)}\rangle^2 \PP(r\in \cS_{\ell})
    ={2\langle u_t^{(m)},\beta_1\rangle^2}p_{[M]}\sum_{\ell=1}^{t-1}\langle u_t^{(m)},x_{\ell}^{(r)}\rangle^2 .\label{eqn:nvidgsddsad-p}
\end{align}

Therefore, combining \eqref{eqn:vnixcnvsdf-p} and \eqref{eqn:nvidgsddsad-p}, we have 
\begin{align}
    &\sup_{\beta}\EE[R_T^{(m)}(A)]\nonumber\\
    \ge& 2p_m \sum_{t=1}^T\EE_{D}\left[Z(x_t^{(m)})\left(1-\sqrt{{\EE_{Q_t^{(m)+}}[\langle u_t^{(m)},\beta_1\rangle^2]}
    u_t^{m\top} \left(p_{[M]}\sum_{\ell=1}^{t-1}x_{\ell}^{(r)} x_{\ell}^{r\top}\right) u_t^{m}}\right)\right].\label{eqn:vxcivsd-p}
\end{align}
Taking expectations in \eqref{eqn:vxcivsd-p} with respect to $\{x_{\ell}^{(r)}:r\in[M]\}_{\ell=1}^{t-1}$, 
 $  \sup_{\beta}\EE[R_T^{(m)}(A)]$ is lower bounded by 
\begin{align}
    2p_m\sum_{t=1}^T\EE_{x_{t}^{(m)}}\left[Z(x_t^{(m)})\left(1-\sqrt{{2(t-1)L}p_{[M]}\EE_{Q_t^{(m)+}}[\langle u_t^{(m)},\beta_1\rangle^2]
    }\right)\right],\label{eqn:vncixvniefdbgvs-p}
\end{align}
where the  outer expectation is only over the randomness of $x_{t}^{(m)}$.  We next calculate $\EE_{Q_t^+}[\langle u_t,\beta_1\rangle^2]$ and $Z(x_t^{(m)})$.
By \eqref{eqn:vncixvniefdbgvs2}, we have
\begin{equation}\label{eqn:vncixvniefdbgvs2-p}
     \EE_{Q_t^{(m)+}}[\langle u_t^{(m)},\beta_1\rangle^2]=\frac{2\Delta^2}{d+1}.
\end{equation}
We also have
\begin{align}
    &\EE_{x_{t}^{(m)}}[Z(x_t^{(m)})]= \EE_{x_{t}^{(m)}}[\EE_Q[\langle x_t^{(m)},\beta_1\rangle_+]]=\frac{1}{2}\EE_{x_{t}^{(m)}}[\EE_Q[|\langle u_t^{(m)},\beta\rangle|]]\nonumber\\
    =& \frac{1}{2}\EE_{x_{t}^{(m)}}\left[\|x_t^{(m)}\|_2 \right]\frac{\Delta\Gamma(\frac{d}{2})}{\Gamma(\frac{d+1}{2})\sqrt{\pi}}=\Omega(\sqrt{L}\Delta).\label{eqn:vncixvniefdbgvs3-p}
\end{align} 
Combining \eqref{eqn:vncixvniefdbgvs-p}, \eqref{eqn:vncixvniefdbgvs2-p}, and \eqref{eqn:vncixvniefdbgvs3-p}, we have 
\begin{align}\label{eqn:Lnvidswns-p}
    \sup_{\beta}\EE[R_T^{(m)}(A)]
    \ge&\Omega(p_m\sqrt{L}\Delta)\sum_{t=1}^T\left(1-\sqrt{\frac{4(t-1)L
   \Delta^2}{(d+1)}p_{[M]}}\right).
\end{align}
Plugging  $\Delta=\frac{\sqrt{(d+1)}}{4\sqrt{(T-1)L\sum_{r\in[M]}p_r}}\le 1$ into \eqref{eqn:Lnvidswns-p}, we finally establish
\begin{equation*}
    \sup_{\beta}\EE[R_T(A)]
    =\Omega\left(p_m T\sqrt{L}\Delta\right)=\Omega\left(\sqrt{dTp_m^2/p_{[M]}}\right).
\end{equation*}

\vspace{2mm}
\noindent
\textbf{The $s$-sparse case. }
In this case, we consider $\supp(\beta^{(1,1)}),\dots, \supp(\beta^{(M,1)})$ located in the first $s$ coordinates. 
If the supports of $\{\beta_1^{(m)}\}_{m\in[M]}$ are known by the analyst, then the structure of sparse heterogeneity and the common $\beta^{\star(1)}$ is non-informative for estimating $\{\beta^{(m,1)}\}_{m\in[M]}$. 
Therefore, in this case, we can obtain the lower bound $\Omega(\sqrt{sTp_m})$ by simply adapting the proof for the homogeneous case with $M=1$ in $s$ dimensions.

\end{proof}

\clearpage

\section{Experimental Details}\label{app:experi}

\subsection{Synthetic Experimental Details}\label{sec:experi-detail}
To ensure a fair comparison of methods, 
we set the regularization parameter $\lambda_m$ of LASSO/ LASSOB, RM/RMB, and TNB, and the threshold parameter $\gamma_m$ of \ours/\oursb guided by theoretical results to obtain optimal rates with respect to $n,d$, and $M$ \citep{xu2021multitask,bastani2020online,cella2022multi}. Our simulation setup follows conventions from the most closely related statistical literature \citep{chen2021statistical,Chan2017TheMB,Camerlenghi2019NonparametricBM,xu2021multitask,bastani2020online}. 

Specifically, we set $\lambda_m = c_\lambda \sqrt{\ln(d)/n_m}$ for LASSO/LASSOB and RM/RMB,  $\gamma_m=c_\gamma \ln( (n_{[M]}/n_m)\wedge d)/n_m)$ for \ours/\oursb, $\lambda_m= c_\lambda\sqrt{(M+d)/n_m}$ for TNB.
We only tune the numerical coefficient $c_\lambda$ and $c_\gamma$ on a pre-specified grid $\{0.05\sigma, 0.35\sigma, 0.7\sigma, \sigma, 2\sigma\}$, where $\sigma$ is the standard deviation of noise. 
We tune these numerical coefficients to lead to the best $\ell_1$ estimation errors on independently generated data with $n=5,000$.

Note that $\sigma=0.1$ in our offline linear regression setup. 
After tuning, we take $c_\lambda=0.005$ for LASSO; we take $c_\lambda=0.035$ for RM and set the trimming-related parameters to the default values 
$\zeta =0.1$, $\eta=0.1$ suggested by \cite{xu2021multitask}; we take $c_\gamma=0.1$ for \ours with the option of hard thresholding, respectively.

In contextual bandits, the noise scale is set as $\sigma = 0.5$.
We initialize the first batch size $|H_0|=1$ to use data efficiently.
In the bandit case, the parameters associated with the reported results are $c_\lambda=0.025$ for LASSOB, $c_\lambda =1 $ for TNB, $(\zeta,\eta,c_\lambda)=(0.1,0.1,0.175)$ for RMB, and $c_\gamma=0.5$  for \oursb with the option of hard thresholding, respectively.

\subsection{PISA Experimental Details}\label{app:pisa_more}
The PISA2012  dataset \citep{oecd2019teaching} consists of $485,490$ student records collected from $68$ countries\footnote{It is accessible at \url{https://www.oecd.org/pisa/data/pisa2012database-downloadabledata.htm} (official website) and \url{https://s3.amazonaws.com/udacity-hosted-downloads/ud507/pisa2012.csv.zip} (an exterior csv format).}. 
However, records associated with many of these countries have more than half the data missing and contain constant features.
Thus, we restrict our experiment to the $M=15$ countries with the largest sample sizes. 
The sample sizes in these countries range from $7,038$ to $33,806$, while 
the data contains about $500$ features. 

Since many features are highly correlated or are constant across records, to avoid ill-conditioning,
we pre-process the data as follows.
We
create dummy variables to indicate missing values and then fill missing values with zeroes.
We apply the LASSO globally to select features, with the regularization hyperparameter selected automatically via $10$-fold cross-validation. 
We then filter out features with pairwise correlations higher than $0.6$ among the selected ones, doing this sequentially in the order given by the PISA dictionary. 
This finally leaves us with $57$ features.

\setcounter{figure}{5}
\begin{figure}[H]
    \centering
    \includegraphics[height = 0.25\textwidth]{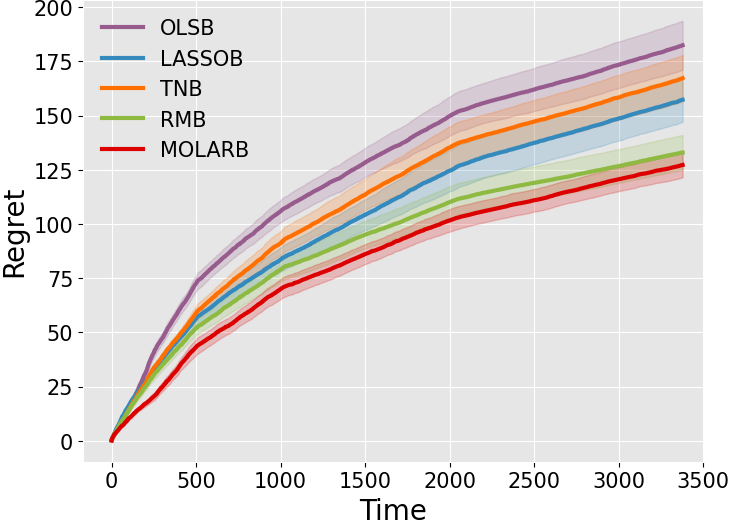}
    \hspace{-3.5mm}
    \includegraphics[height = 0.25\textwidth]{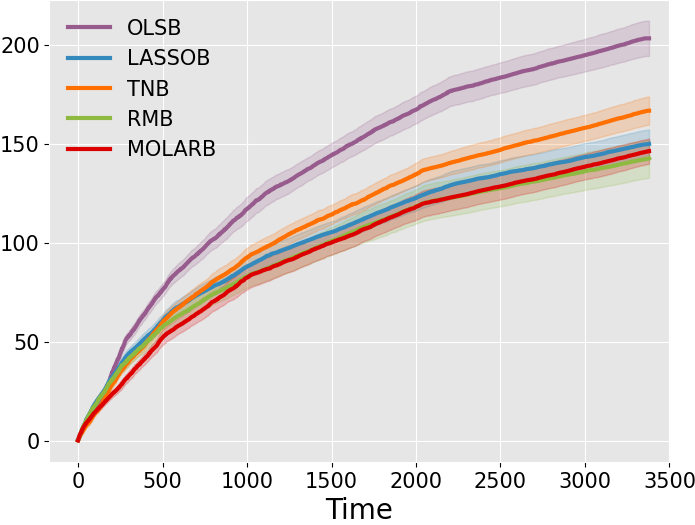}
    \hspace{-3.5mm}
    \includegraphics[height = 0.25\textwidth]{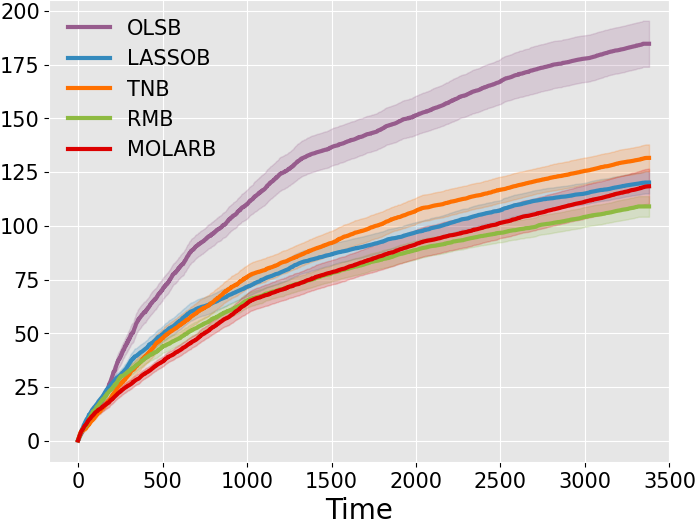}
    \includegraphics[height = 0.25\textwidth]{images/cumreg_PISA_top15_split_auto_m=3.png}
    \hspace{-3.5mm}
    \includegraphics[height = 0.25\textwidth]{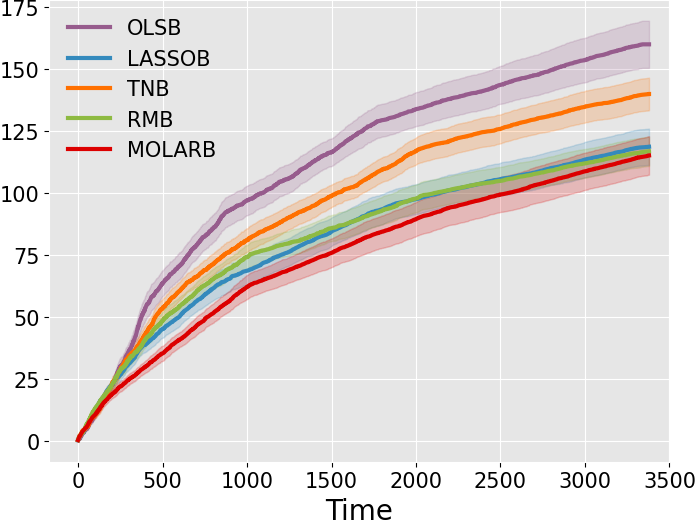}
    \hspace{-3.5mm}
    \includegraphics[height = 0.25\textwidth]{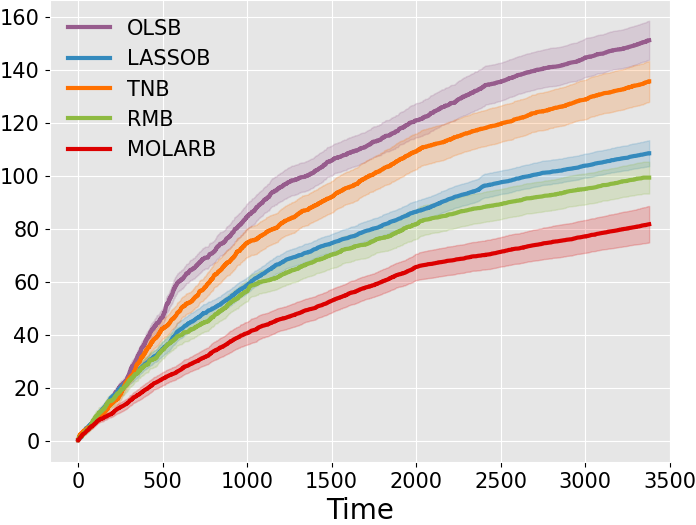}
    \includegraphics[height = 0.25\textwidth]{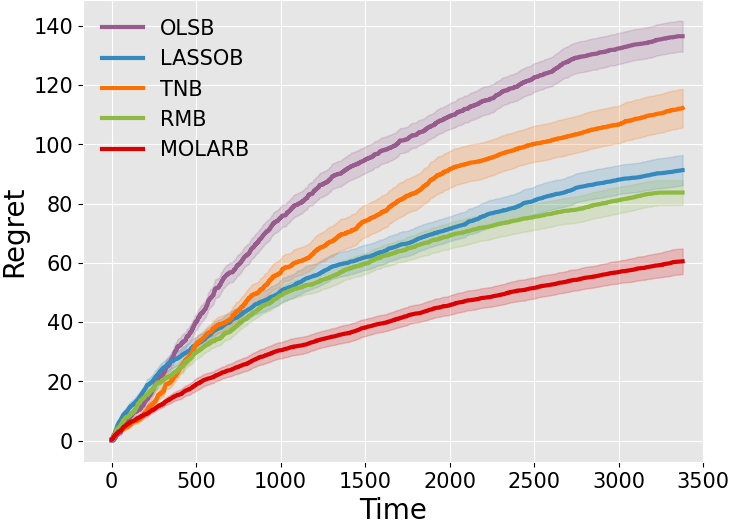}
    \hspace{-3.5mm}
    \includegraphics[height = 0.25\textwidth]{images/cumreg_PISA_top15_split_auto_m=7.png}
    \hspace{-3.5mm}
    \includegraphics[height = 0.25\textwidth]{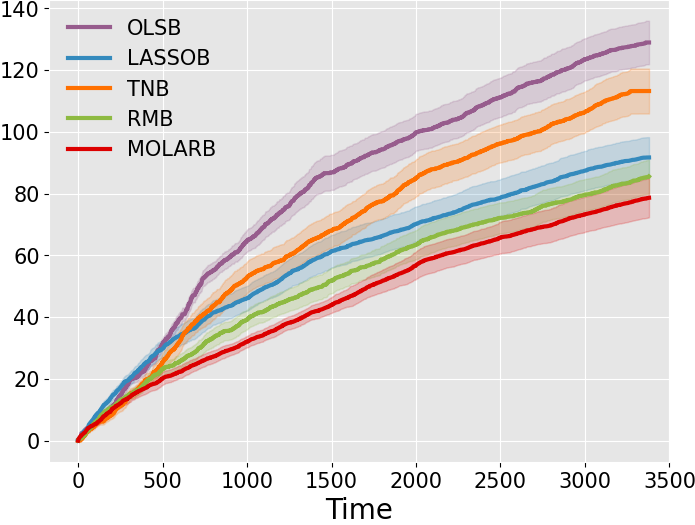}
    \includegraphics[height = 0.25\textwidth]{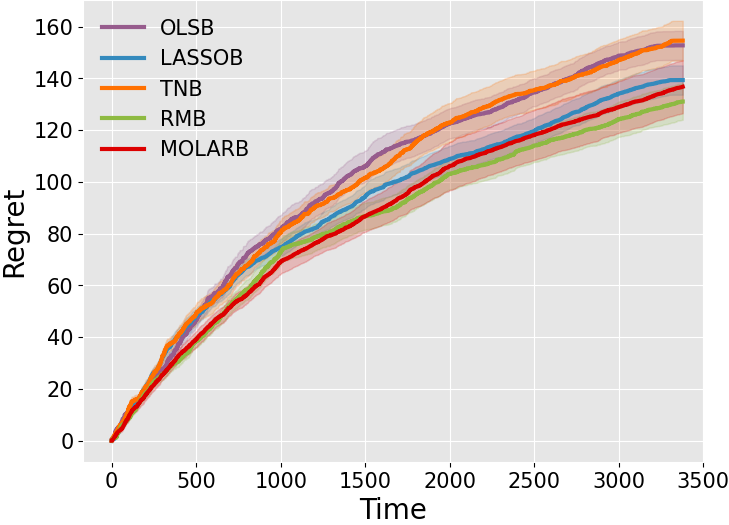}
    \hspace{-3.5mm}
    \includegraphics[height = 0.25\textwidth]{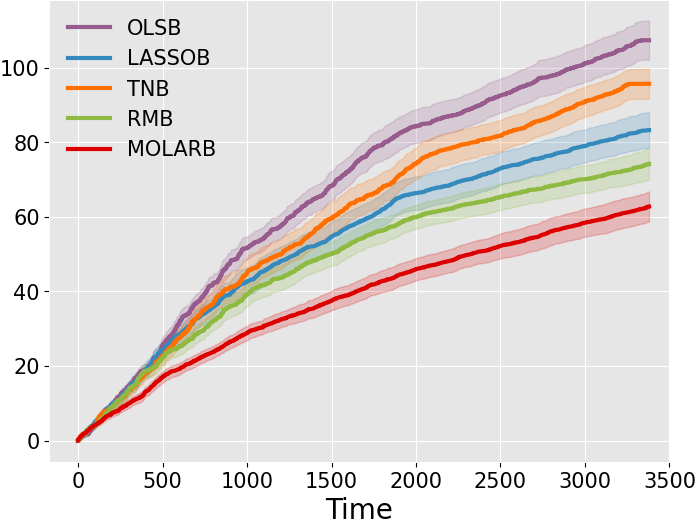}
    \hspace{-3.5mm}
    \includegraphics[height = 0.25\textwidth]{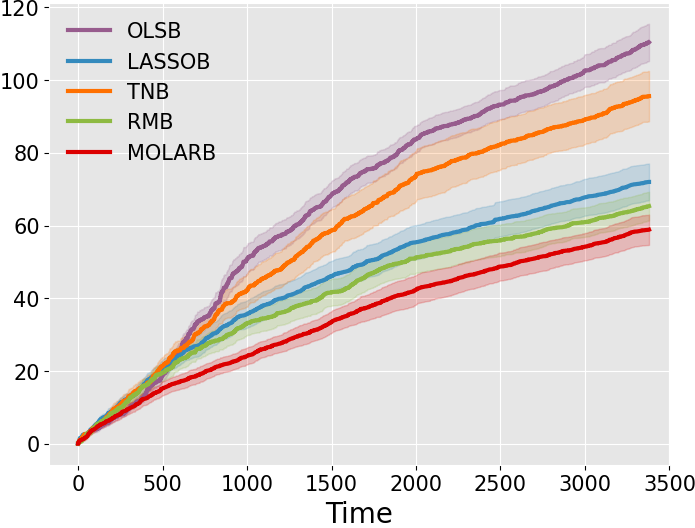}
    \includegraphics[height = 0.25\textwidth]{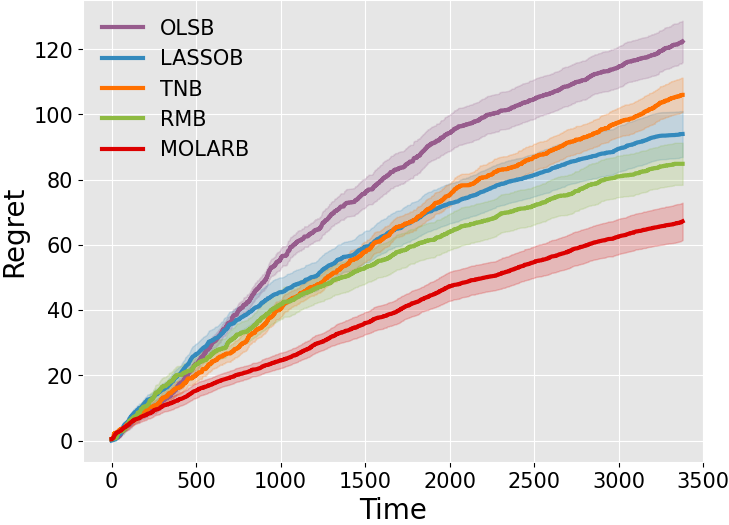}
    \hspace{-3.5mm}
    \includegraphics[height = 0.25\textwidth]{images/cumreg_PISA_top15_split_auto_m=13.png}
    \hspace{-3.5mm}
    \includegraphics[height = 0.25\textwidth]{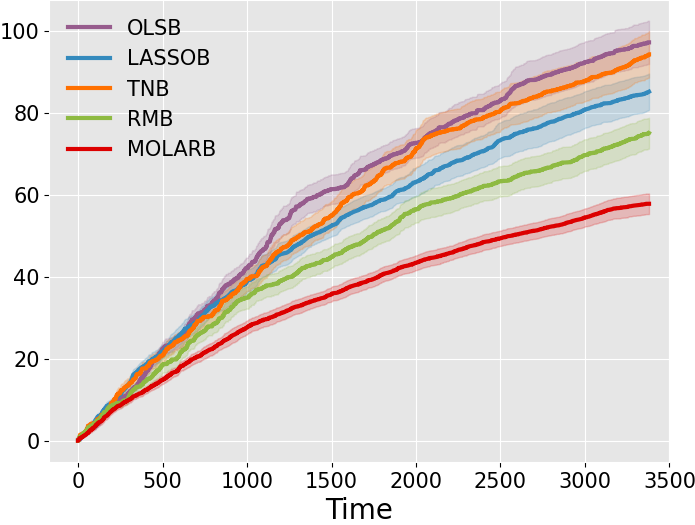}
    % \vspace{4mm}
    \caption{Regret $R_T^{(m)}$ of the top largest $15$ countires of the PISA dataset. The
    shaded regions depict the corresponding
$95\%$ normal confidence intervals based on the standard errors from twenty independent trials.}
    \label{fig:pisa-on-full}
\end{figure}

To visibly judge the heterogeneity of the processed datasets, we compute the  OLS estimates $\{\widehat{\beta}^{(m)}\}_{m=1}^M$ for all countries individually, along with a shared estimate $\widehat{\beta}^\star$ obtained by taking the covariate-wise median over the OLS estimates. 
We then plot the differences $|\delta^{(m)}|=|\widehat{\beta}^{(m)}-\widehat{\beta}^\star|$ with values below $6 \sqrt{[(\vX^{(m)\top}\vX^{(m)})^{-1}]_{k,k}}$ set as zero. Note that $\sqrt{[(\vX^{(m)\top}\vX^{(m)})^{-1}]_{k,k}}$ corresponds to the magnitude of the variation of OLS estimates under standard Gaussian noises.  The final result is shown in Figure \ref{fig:coef_plt}. 
Note that Figure \ref{fig:coef_plt} is plotted before splitting the data for the experiments.
In Figure \ref{fig:coef_plt}, we see that the differences in coefficients seem to be consistent with being sparse.

We take $K=2$ since our arms are to predict the student with a better mathematics score. 
To enable evaluation, we randomly split the data into a large test set, a small training set, and a small validation set with proportions ($90\%$, $5\%$, $5\%$)  and ($80\%$, $15\%$, $5\%$)  for offline and online experiments, respectively. 
We use the individual OLS estimates of the test set as proxies for true parameters and evaluate methods on the training set with parameters tuned based on the validation set. 

Once again, to ensure a fair comparison of methods, 
we set the hyperparameters $\lambda$ and $\gamma$ in the same manner as in the synthetic results, as suggested by theoretical results to achieve optimal rates of convergence. 
We only tune the numerical coefficients over a pre-specified grid $\{0.05, 0.35, 0.7, 1, 2\}$, and report the optimal results.
We run the bandit methods on this processed data in the same manner as in the simulations. Here we run MOLAR and MOLARB with the option of soft thresholding.  

We repeat experiments starting from data splitting for $100$ and $20$ times for offline and online experiments, respectively. 
The regrets for all $15$ countries (Mexico, Italy, Spain, Canada, Brazil, Australia, UK, UAE, Switzerland, Qatar, Colombia, Finland, Belgium, Denmark, and Jordan from top to bottom, left to right) are in Figure \ref{fig:pisa-on-full}.

\subsection{Ablation Studies}

\subsubsection{Robustness Examinations for \ours}\label{app:robust_offline} 
We first examine the robustness of \ours to the choice of $c_\gamma$. To this end, we repeat the experiments in Figure~\ref{fig:synthetic-ls} and simulate \ours with empirically estimated noise variances through
\begin{equation*}
    \hat{\sigma}_m:=\sqrt{\|\vX^{(m)}\widehat \beta_{{\rm ind}}^{(m)}-Y^{(m)}\|_2^2/(n_m-d)}
\end{equation*}
and varying $c_\gamma$.
The results are depicted in Figure~\ref{fig:synthetic-ls-rob}. By comparing \ours with the individual OLS estimates, we observe that \ours exhibits advantages for all values of the threshold. In particular, \ours is robust for slightly large $c_\gamma$).

\begin{figure}[h]
    \centering
    \hspace{-3mm}
    \includegraphics[height = 0.22\textwidth]{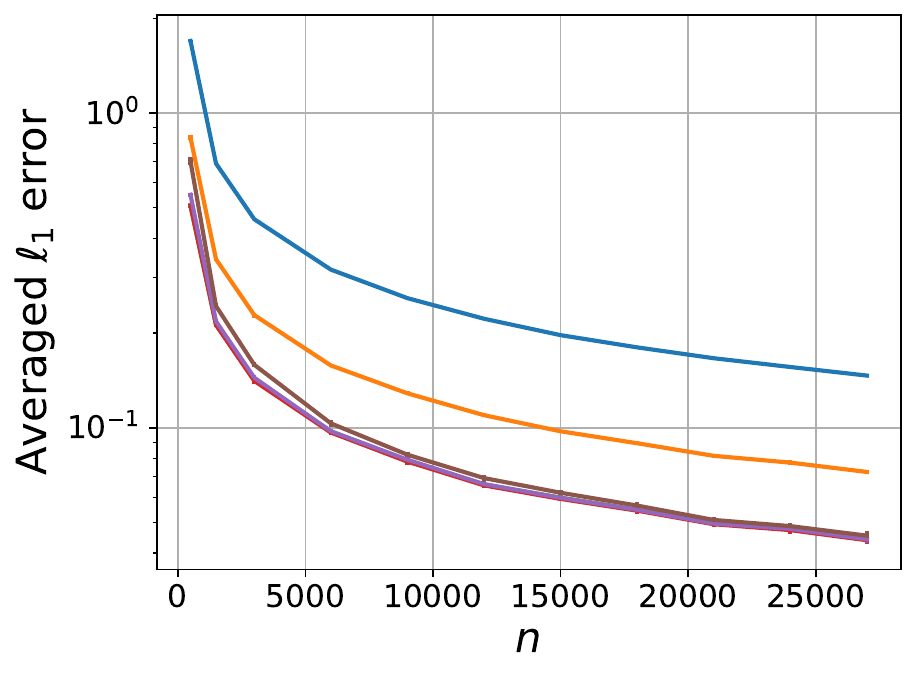}
    \hspace{-3mm}
    \includegraphics[height = 0.22\textwidth]{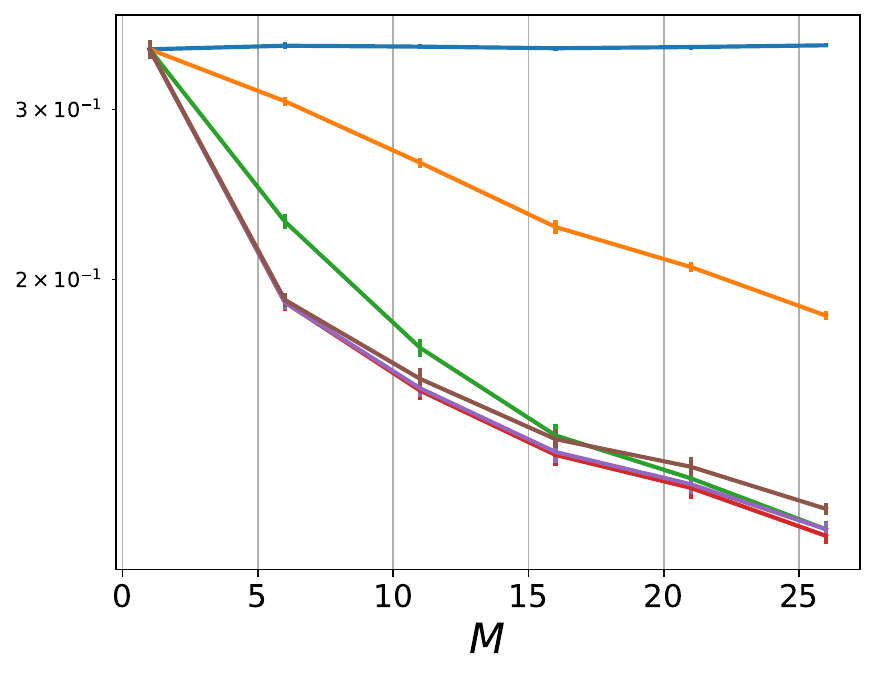}
    \hspace{-3mm}
    \includegraphics[height = 0.22\textwidth]{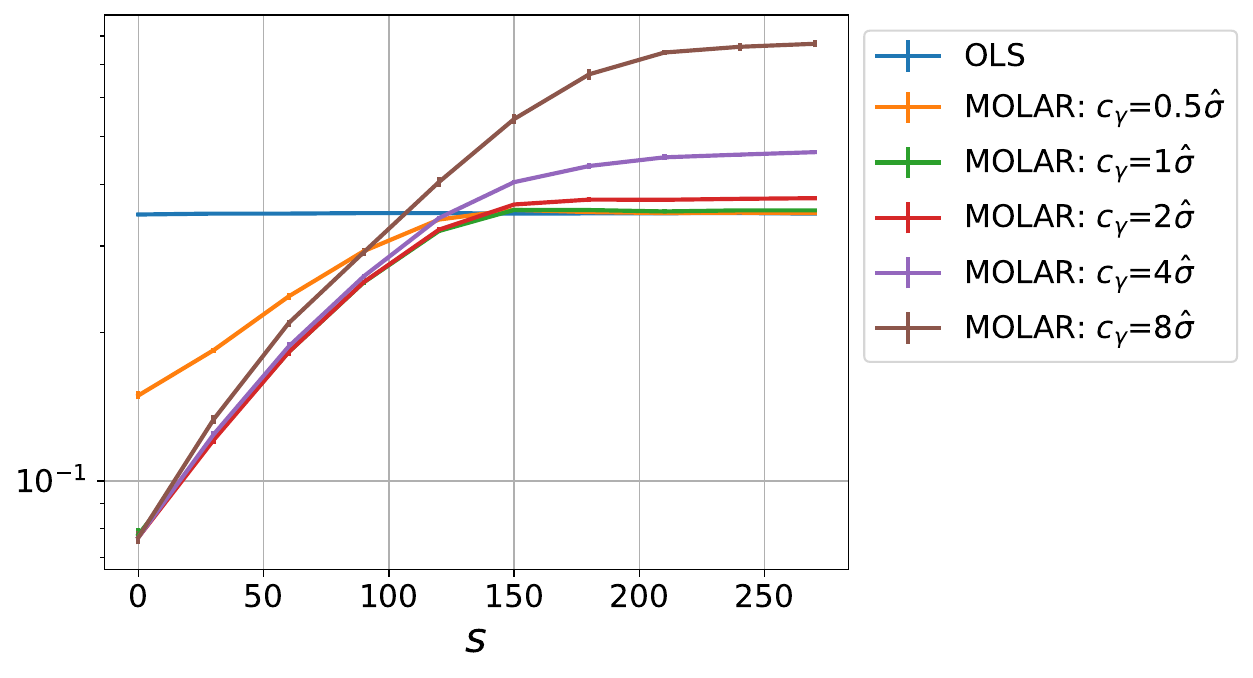}
    \hspace{-3mm}
    \caption{Average $\ell_1$ estimation error for \ours with varying thresholding parameters. (Left):  Fixing $s = 20, \,M = 30$ and varying $n$.
  (Middle): Fixing $s = 20,\,n = 5,000$ and varying $M$.
  (Right): Fixing  $M = 30, \,n = 5,000$ and varying $s$.
  The standard error bars are obtained from ten independent trials. 
  }
    \label{fig:synthetic-ls-rob}
\end{figure}

\subsubsection{Correlated Covariates \& Disparate Sample Sizes}\label{app:correlated-disparse}
To supplement the experiments in Figures~\ref{fig:synthetic-ls} and \ref{fig:synthetic-ls-rob} where $\Sigma^{(m)}=I_d$ and $n_m=n$ for all $m\in[M]$, we also conduct similar experiments for correlated covariates with disparate tasks-wise covariances and sample sizes.  
Here, for each task $m\in[M]$, we select the covariance matrix as $\Sigma^{(m)}=Q^{(m)}\mathrm{diag}((1+4(k-1)/d)_{k\in[d]})Q^{(m)\top}$ where $Q^{(m)}\in\RR^{d\times d}$ is a randomly generated orthonormal matrix. Since $\Sigma^{(m)}\neq I_d$, the covariates are correlated. 

For each pre-specified $n$, we determine the task-wise sample sizes $\{n_m\}_{m=1}^M$ by first drawing a Dirchlet random vector $(z_1,\dots,z_d)$ with $0\leq z_k\leq 1$ and $\sum_{k=1}^dz_k=1$, 
and then round $Mn z_k$ to obtain the sample size $n_m$. By doing this splitting, 
we roughly maintain $n_{[M]}=\sum_{m=1}^Mn_m\approx Mn$ but introduce significant disparity among $\{n_m\}_{m=1}^M$. 
We thus apply the weighted median 
to obtain the global estimate $\widehat\beta^\star$ with $w_m=n_m$ for all $m\in[M]$ and set other hyperparameters the same as in Section~\ref{sec:experi-detail}. The results are shown in Figure~\ref{fig:synthetic-ls-correlated}. Again, we observe a significant advantage of \ours over other baseline approaches.

\begin{figure}[h]
    \centering
    \hspace{-3mm}
    \includegraphics[height = 0.25\textwidth]{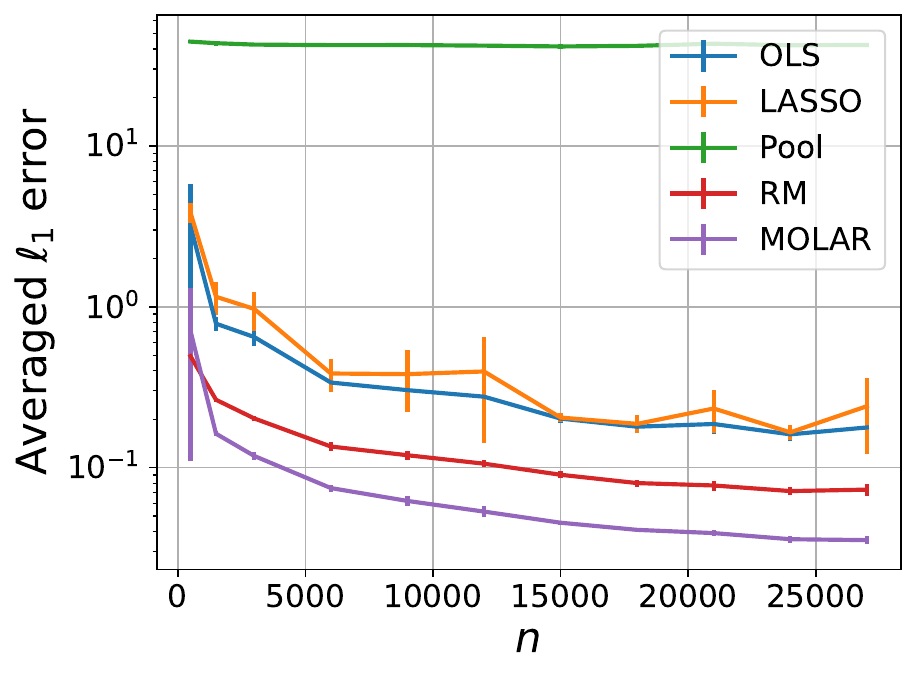}
    \hspace{-3mm}
    \includegraphics[height = 0.25\textwidth]{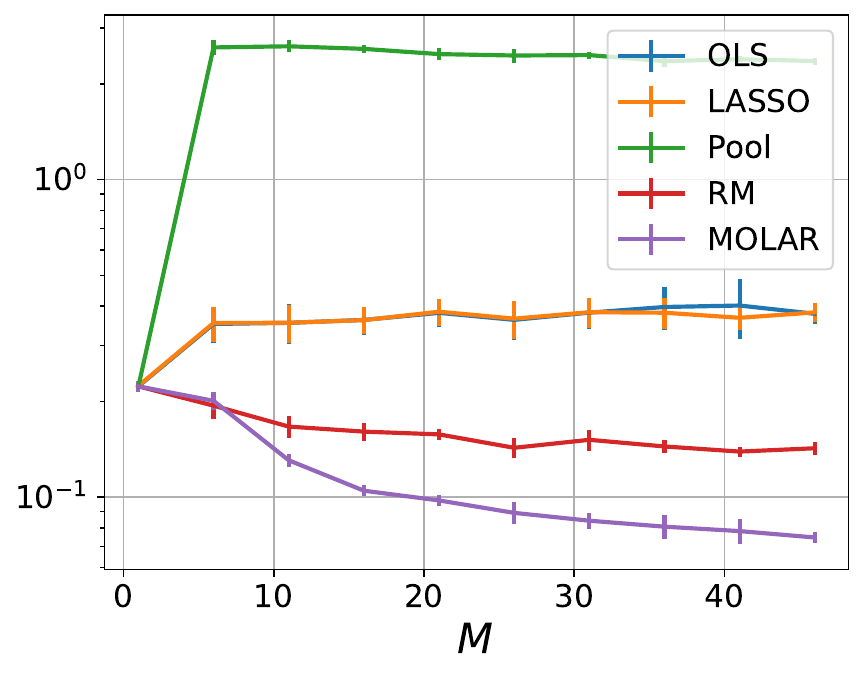}
    \hspace{-3mm}
    \includegraphics[height = 0.25\textwidth]{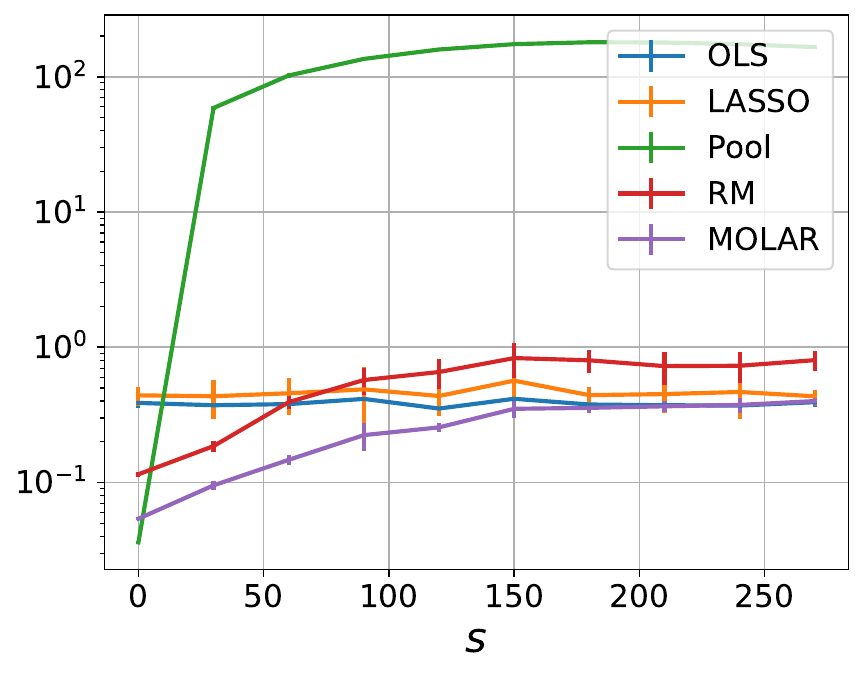}
    \hspace{-3mm}
    \caption{Average $\ell_1$ estimation error for multitask linear regression under correlated covariates with disparate task-wise covariances $\{\Sigma^{(m)}\}_{m=1}^M$ and sample sizes $\{n_m\}_{m=1}^M$. (Left):  Fixing $s = 20, \,M = 30$ and varying $n$.
  (Middle): Fixing $s = 20,\,n = 5,000$ and varying $M$.
  (Right): Fixing  $M = 30, \,n = 5,000$ and varying $s$.
  The standard error bars are obtained from ten independent trials. 
  }
    \label{fig:synthetic-ls-correlated}
\end{figure}

\subsubsection{Comparisong with LASSO Debiasing}\label{sec:compare_LASSO_debias}
The multitask approach proposed by~\cite{xu2021multitask}, while being sub-optimal, shares a similar algorithmic design by utilizing task-wise LASSO to debias a global trimmed-mean estimate. 
Given the optimality of \ours, one might be curious about which part (\ie, the median estimate or the coordinate-wise debiasing) is the key to attaining the optimality.

Experimentally, we compare our MOLAR, the robust multitask approach (RM)~\cite{xu2021multitask} which debiases over the global trimmed mean estimate with LASSO, and the counterpart that debiases over the global median estimate with LASSO, denoted by ML,  as in Figure~\ref{fig:synthetic-ML-compare}. The experimental setup is the same as in Figure 2 in the manuscript, except for the increased parameter $s$ for better visualization. By comparing ML with RM, we see that the median estimate performs consistently better than the trimmed mean estimate. On the other hand,  MOLAR outperforms ML, which suggests the empirical superiority of coordinate-wise debiasing. 
We have added the experiments and the below discussion in Appendix J.3.3 in our latest revision.

\begin{figure}[h]
    \centering
    \hspace{-3mm}
    \includegraphics[height = 0.25\textwidth]{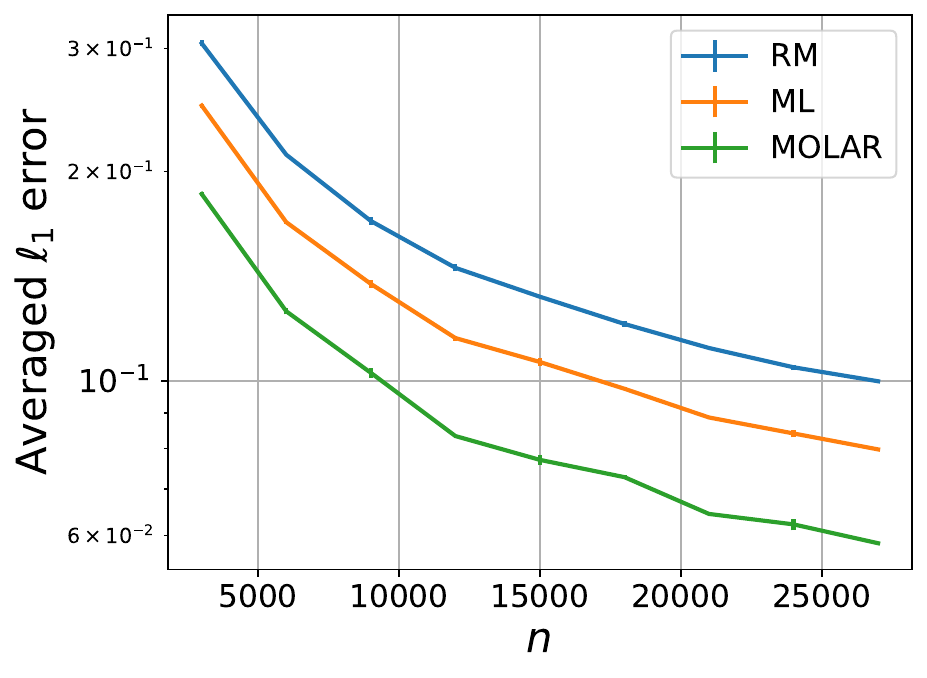}
    \hspace{-3mm}
    \includegraphics[height = 0.25\textwidth]{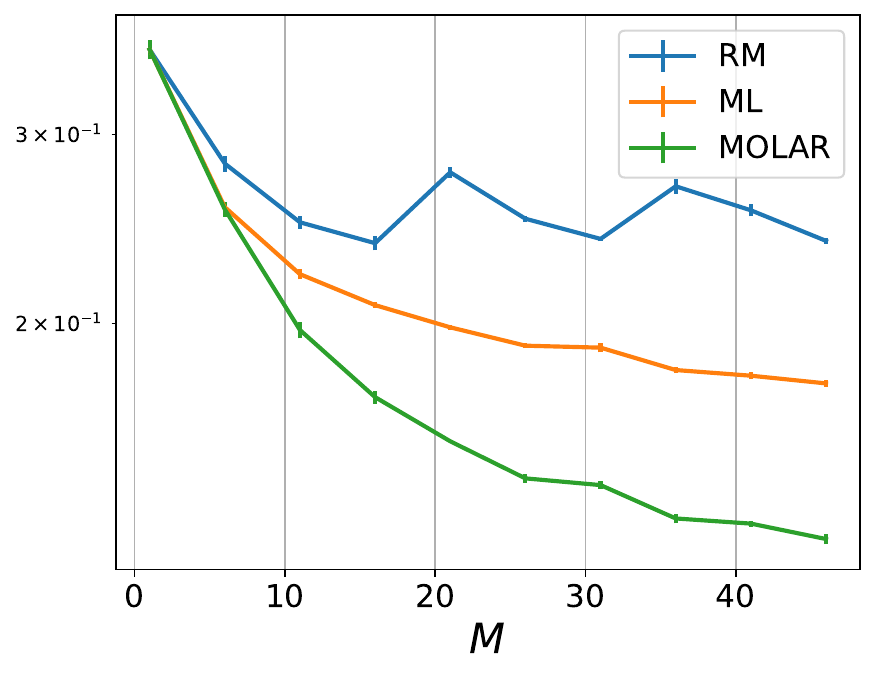}
    \hspace{-3mm}
    \includegraphics[height = 0.25\textwidth]{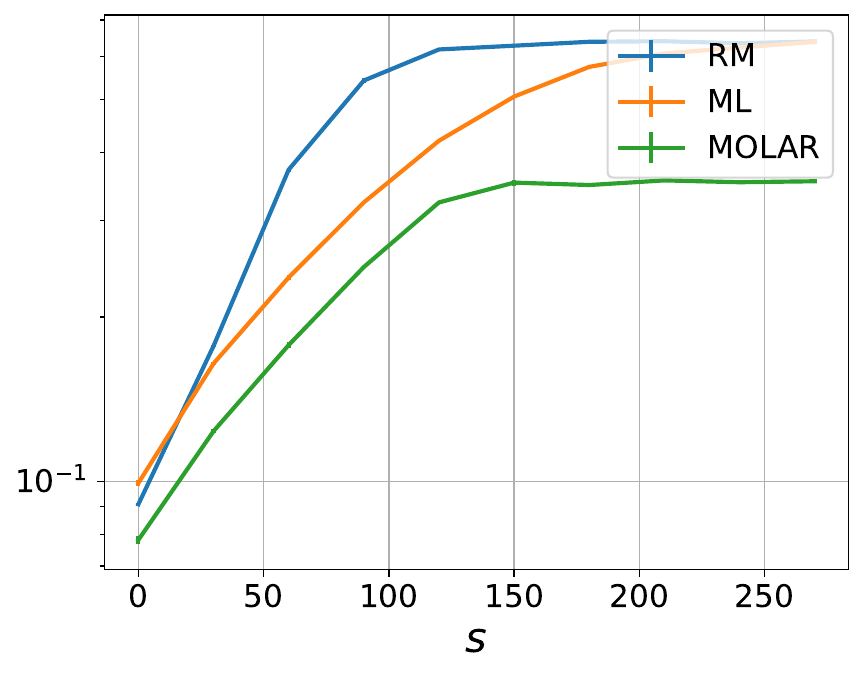}
    \hspace{-3mm}
    \caption{Average $\ell_1$ estimation error for \ours with varying thresholding parameters. (Left):  Fixing $s = 40, \,M = 30$ and varying $n$.
  (Middle): Fixing $s = 40,\,n = 5,000$ and varying $M$.
  (Right): Fixing  $M = 30, \,n = 5,000$ and varying $s$.
  The standard error (barely visible) bars are obtained from ten independent trials. 
  }
    \label{fig:synthetic-ML-compare}
\end{figure}

Theoretically, using LASSO-based debiasing over the global median estimate can reach the optimal rate for the $\ell_1$ error bound but may not do so for the $\ell_2$ error bound.   We take the special case of equal sample sizes and variances (\ie, $n_m=n$ and $\sigma_m = \sigma$ for all $m\in[M]$) as an  example. 

For the $\ell_1$ case, following Proposition 1, we have for all $k\in \cI_{1/5}$ that  $|\widehat{\beta}^{\star}_{k}-{\beta}^{\star}_{k}|=\widetilde{O}_P\left(\frac{|\cB_k|\sigma }{M\sqrt{n}} + \frac{\sigma}{\sqrt{Mn}}\right).$ Therefore, taking $\cG:= \cI_{1/5}$, the condition that $\|\widehat \beta^\star_{\cG}-\beta^\star_{\cG}\|_1=\widetilde{O}_P\left(\frac{s\sigma}{\sqrt{n}}+\frac{d\sigma}{\sqrt{Mn}}\right)$ for Corrolary E.1 is satisfied, where $|\cG^c|\leq 5s=O(s)$ due to the pigeon-hole principle. Therefore,  the optimal $\ell_1$ error $\|\widehat{\beta}^{(m)}-\beta^{(m)}\|_1=\widetilde{O}_P\left(\frac{s \sigma}{\sqrt{n}}+\frac{d\sigma}{\sqrt{Mn}}\right)$ for task-wise parameters can be obtained by using LASSO debiasing. In other words, for the $\ell_1$ case, it is mainly the median that makes optimality achievable.

For the $\ell_2$ error bound, the optimal error may not be achievable via LASSO, even if applied to the median estimate, because it amplifies the estimation error of the global estimate $\widehat \beta^\star$. Specifically, following the proof of Proposition E.1, with probability at least $1-\exp(-c_1\mu n)-(nd)^{-1}$ it holds that if $\|\widehat{\beta}^{\star}_{\cG}-\beta^{\star}_{\cG}\|_1\leq |\cJ_m|^{1/2}\|\widehat \beta^{(m)}-\beta^{(m)}\|_2$ where $\cJ_m\triangleq \cG^c\cup \mbox{supp}(\beta^{(m)}-\beta^\star)$, we have
\begin{equation}
    2\lambda_m|\cJ_m|^{1/2}\|\widehat\beta^{(m)}-\beta^{(m)}\|_2+2\lambda_m \|\widehat{\beta}^{\star}_{\cG}-\beta^{\star}_{\cG}\|_1\geq \frac{c_1}{6}\|\widehat \beta^{(m)}-\beta^{(m)}\|_2^2+\frac{\lambda_m}{2}\|\widehat \beta^{(m)}-\beta^{(m)}\|_1.
\end{equation}
Otherwise, we of course have $\|\widehat{\beta}^{\star}_{\cG}-\beta^{\star}_{\cG}\|_1> |\cJ_m|^{1/2}\|\widehat \beta^{(m)}-\beta^{(m)}\|_2$.
Consequently, we have 
\begin{align}
    \|\widehat \beta^{(m)}-\beta^{(m)}\|_2^2=&O_P\left(\lambda_m^2|\cJ_m|+\lambda_m \|\widehat{\beta}^{\star}_{\cG}-\beta^{\star}_{\cG}\|_1+\frac{\|\widehat{\beta}^{\star}_{\cG}-\beta^{\star}_{\cG}\|_1^2}{|\cJ_m|}\right)\\
    =&\widetilde O_P\left(\frac{(s+|\cG^c|)\sigma^2}{n}+\frac{\|\widehat{\beta}^{\star}_{\cG}-\beta^{\star}_{\cG}\|_1\sigma}{\sqrt{n}}+\frac{\|\widehat{\beta}^{\star}_{\cG}-\beta^{\star}_{\cG}\|_1^2}{s+|\cG^c|}\right).
\end{align}
Under the same condition $\|\widehat \beta^\star_{\cG}-\beta^\star_{\cG}\|_1=\widetilde{O}_P\left(\frac{s\sigma}{\sqrt{n}}+\frac{d\sigma}{\sqrt{Mn}}\right)$ and $|\cG^c|=O(s)$, this bound becomes 
\begin{align}
    \|\widehat \beta^{(m)}-\beta^{(m)}\|_2^2
    =&\widetilde O_P\left(\frac{s\sigma^2}{n}+\frac{(\sqrt{M}+ d/s) d\sigma^2}{Mn}\right),
\end{align}
which turns out to be suboptimal compared to the lower bound $\frac{s\sigma^2}{n}+\frac{d\sigma^2}{Mn}$.
We believe this issue is due to the dense shift $\beta^\star- \widehat \beta^\star$ in the $\ell_1$ penalty $\| \beta - \widehat \beta^\star\|_1=\| \beta -\beta^{\star} +\beta^\star- \widehat \beta^\star\|_1$ where only $\beta -\beta^{\star} $ is expected to be sparse (and the target is to estimate $\beta^{(m)}$). The $\ell_1$ form of the shift term leads to the suboptimality in controlling the $\ell_2$ error for the regularized estimate. 
Our claim is generally consistent with related literature on transfer learning~\cite{bastani2021predicting,li2023estimation} where LASSO debiasing is unable to simultaneously attain optimal rates for both the $\ell_1$ and $\ell_2$ cases. Therefore, for the $\ell_2$ case, both the median and the coordinate-wise debaising play a role in the optimality.

\subsubsection{Robustness Checks for \oursb}
We also provide a robustness check for \oursb by varying  $c_\gamma$ and $|H_0|$.
We investigate the cumulative expected regret of  \oursb while varying the first batch size $|H_0|\in\{1, 5, 10\}$, and the numerical coefficient $c_\gamma\in\{0.175, 0.35, 0.5,1\}$. The results, presented in Figure \ref{fig:synthetic-b-rob}, are computed in the same setup as the synthetic bandit simulations. We find that the cumulative regret performance of \oursb is not substantially impacted by changing the parameters by up to an order of magnitude.
This suggests that  \oursb is quite robust to $|H_0|$ and $c_\gamma$ in this range.

\subsubsection{Usage of Historical Batches}
Since \oursb empties all batch-wise buffers of contexts after using them to update estimates $\{\widehat\beta^{(m)}\}_{m=1}^M$, we also compare \oursb with a variant where all historical contexts in each arm are maintained. 
In this variant, we still use each brand-new batch $\{(\vX_q^{(m)},Y_q^{(m)})\}_{m=1}^M$ to collaboratively learn the global estimate $\widehat \beta^\star_q$,
yet the step of covariate-wise shrinkage in obtaining the task-wise estimates $\widehat\beta^{(m)}$ leverages all previous batches  $(\vX_{[q]}^{(m)},Y_{[q]}^{(m)})$ in the $m$-th bandit. The results of this variant are marked as ``use\_hist'' in Figure \ref{fig:synthetic-b-rob}. We find that \oursb does not lose significant sample efficiency, compared to this variant. 
This finding is consistent with our theoretical results
that \oursb is minimax optimal in this multi-task setup.

\begin{figure}[h]
    \centering
    \hspace{-3mm}
    \includegraphics[height = 0.24\textwidth]{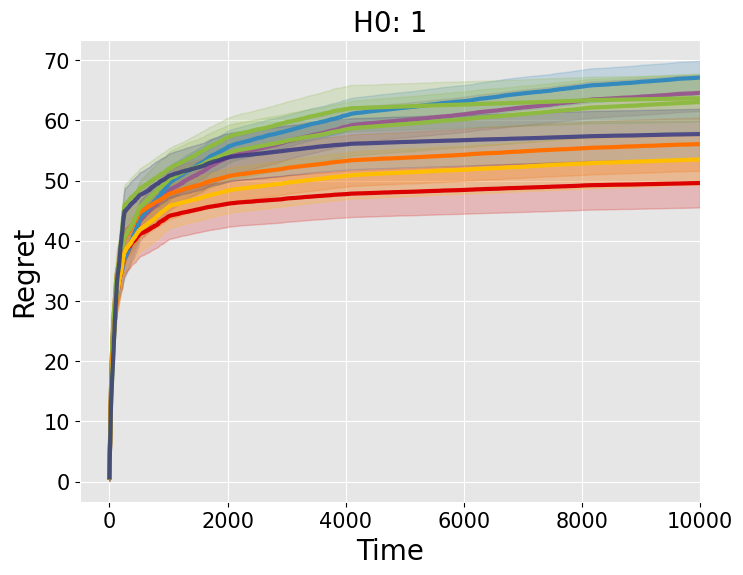}
    \hspace{-3mm}
    \includegraphics[height = 0.24\textwidth]{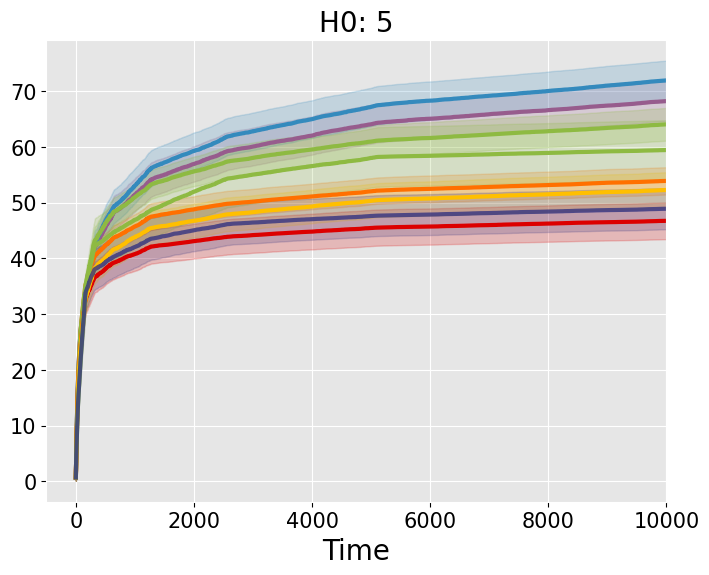}
    \hspace{-3mm}
    \includegraphics[height = 0.24\textwidth]{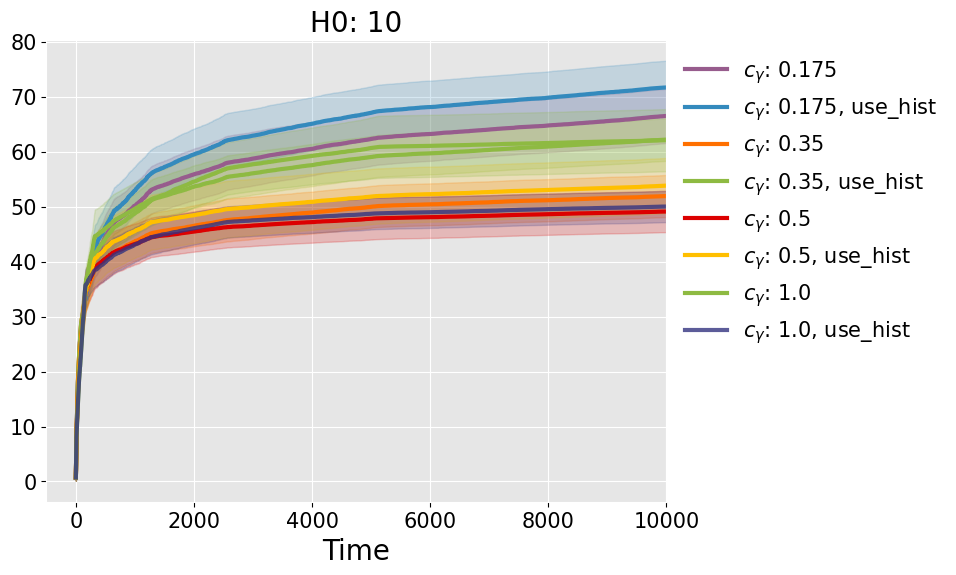}
    \hspace{-3mm}
    \caption{Regret $R_T^{(m)}$ accumulated by \oursb of an instance with activation probability $0.91$  with varying $|H_0|$ and tuning  coefficient $c_\gamma$, where
    shaded regions depict the corresponding
95\% normal confidence intervals based on standard errors calculated over twenty independent trials. 
    }
    \label{fig:synthetic-b-rob}
\end{figure}

\end{document}